\Crefname{assumption}{Assumption}{Assumptions}
\setlist[itemize]{leftmargin=*}
\setlist[enumerate]{leftmargin=*}
\renewcommand{\E}{\mathbb{E}}
\newcommand{\Pbb}{\PP}
\renewcommand{\R}{\mathbb{R}}
\newcommand{\Var}{\mathrm{Var}}
\newcommand{\norm}[1]{\left\lVert #1 \right\rVert}
\newcommand{\cX}{\mathcal{X}}
\newcommand{\AUC}{\mathrm{AUC}}
\newcommand{\Errhatg}{\Delta L(\hat g)}
\newcommand{\piref}{\pi_{\rm ref}}
\newcommand{\pis}{\pi^\star}
\newcommand{\gs}{g^\star}
\newcommand{\tst}{t^\star}
\newcommand{\Psis}{\Psi^\star}
\newcommand{\thetas}{\theta^\star}
\newtheorem{theorem}{Theorem}
\newtheorem{proposition}{Proposition}
\newtheorem{lemma}{Lemma}
\theoremstyle{definition}
\newtheorem{assumption}{Assumption}
\newtheorem{definition}{Definition}
\newtheorem{remark}{Remark}
\title{
{\bf Semiparametric Preference Optimization:}\\
Your Language Model is Secretly a \emph{Single-Index Model}}
\author{Nathan Kallus\\\small Netflix \& Cornell University}
\date{}
\begin{document}
\maketitle

\begin{abstract}
Policy alignment to preference data typically assumes a known link function between observed preferences and latent rewards (e.g., Bradley-Terry model / logistic link). Misspecification of this link can bias inferred rewards and misalign learned policies. We study policy alignment under an unknown and unrestricted link function. We formulate an $f$-divergence-constrained reward maximization problem and show that realizability in a policy class induces a semiparametric single-index binary choice model, where a scalar policy-induced index captures all dependence on demonstrations and the remaining preference distribution is unrestricted. Rather than impose identifiability of structural parameters of such a model and estimate them, as in econometrics, we develop methods that directly learn policies, with the reward function implicit, analyzing error to the optimal policy and allowing for unidentifiable and nonparametric indices. We prove link-agnostic convergence guarantees in terms of generic function complexity measures and validate the methods and theory empirically. Code is available at \url{https://github.com/causalml/spo/}.
\end{abstract}

\section{Introduction}
\label{sec:intro}
Modern large language models (LLMs) are tuned using human or AI feedback (RLHF/RLAIF) to better align outputs with user preferences and/or safety desiderata \citep{Christiano2017,Ziegler2019,Stiennon2020,Ouyang2022,Bai2022,bai2022training,nakano2021webgpt,wu2021recursively,Rafailov2023,zhao2023slic,yuan2023rrhf,gheshlaghiAzar2024general,hong2024orpo,meng2024simpo}. A common setup interprets pairwise preferences as discrete choice under a latent reward, which is inferred and optimized, while constraining or penalizing the deviation from a reference model, balancing quality improvements with preservation of language and other abilities.

Linking preferences to rewards is usually done by assuming a particular choice model, such as logistic (Bradley-Terry) or probit, so that, given demonstrations (query and response) and a reward function, the distribution of preferences is fully specified \citep{Rafailov2023,zhan2023provable,glaese2022improving,Ziegler2019,ibarz2018reward}.
A prominent example is Direct Preference Optimization (DPO) \citep{Rafailov2023}, which uses the Bradley-Terry choice model.
This, however, imposes a lot of structure on choice behavior; recent work has studied both the consequences of misspecifying the link, preference model, or reference policy and methods robust to such misspecification \citep{hong2025robustness,xu2024strong,sun2024rethinking,golz2025distortion,chidambaram2025direct,zhang2025provableunknownlink,xu2025doublyrobust}.
Alternative approaches to alignment depart altogether from a structural/generative modeling of observed choice by using prospect-theoretic losses, robustifying against corrupted labels, or optimizing a game-theoretic equilibrium \citep{ethayarajh2024kto,liang2024ropo,kong2024perplexity,munos2024nash}.

Specifying a known reward-preference link is, however, not actually necessary for a structural/generative interpretation, meaning one that assumes preferences optimize a random utility or equivalently that preferences are generated by some conditional probability distribution given demonstrations and reward. Letting the link be arbitrary and unknown gives rise to \emph{semiparametric} discrete choice models, which have been studied extensively in the econometrics literature \citep{McFadden1974,Cosslett1983,KleinSpady1993,Manski1975,Manski1985,Horowitz1992,KimPollard1990,Han1987,Sherman1993}. In that literature, however, the focus is typically structural estimation of finite-dimensional utility parameters under stringent conditions that make those parameters identifiable.

In this paper, we bring this classical econometric idea to bear on the modern RLHF problem by shifting the target from structural estimation to policy learning. We study policy alignment from preference data generated by a completely unknown link function and unknown reward function, where the target is the optimal reward-maximizing divergence-constrained policy. A key insight is that mere realizability of this optimal policy in our policy class implies a semiparametric \emph{single-index model} for preferences, with an arbitrary unknown link function. The policy class induces a class of indices, which we let be an arbitrary nonparametric class, and the true index (corresponding to the true reward) is generally not identifiable. Methodologically, this requires extending econometric semiparametric ideas from low-dimensional structural models to policy classes rich enough to include neural policies and transformer LLMs, and developing guarantees that depend on generic functional complexity rather than parametric dimension. We develop the Semiparametric Preference Optimization (SPO) approach that optimizes the policy by fitting its implied index and develop and analyze various instantiations.

\subsubsection*{Summary of contributions}
\begin{itemize}
    \item \textbf{From econometric binary choice to RLHF policy learning.} We show that divergence-constrained preference alignment naturally induces a semiparametric single-index binary choice model, connecting RLHF to classical econometric approaches while shifting the target from structural estimation to policy learning.
    \item \textbf{Semiparametric policy learners.} We develop Profiled SPO (PSPO), Orthogonalized SPO (OSPO), and Ranking SPO (RSPO) as three ways to optimize over policies without specifying the link or explicitly fitting a reward model.
    \item \textbf{Policy-learning theory.} We establish policy convergence guarantees for our methods under generic functional complexity conditions that admit flexible nonparametric policy and reward-index classes.
    \item \textbf{Empirical validation.} We use both synthetic and Qwen3/UltraFeedback experiments as proof-of-concept demonstrations of the proposed learners, illustrating the robustness predicted by the theory under link misspecification.
\end{itemize}

\subsubsection*{Roadmap}
\emph{Section~\ref{sec:reduction}} formalizes the divergence-constrained policy target, the preference-data model, and the resulting semiparametric single-index reduction, including the policy induced by an index and the identification condition used throughout. \emph{Sections~\ref{sec:pspo}--\ref{sec:rspo}} develop the three SPO learners: PSPO via profiled likelihood and consistency, OSPO via an orthogonalized quasi-likelihood with plug-in regression and rates, and RSPO via bipartite ranking and rates. \emph{Section~\ref{sec:empirical}} presents experiments both synthetic and using Qwen3/UltraFeedback illustrating link robustness. \emph{Section~\ref{sec:optimization}} gives first-order implementations of the learners. \emph{Section~\ref{sec:empirical-calib}} analyzes empirical recalibration to the divergence budget. \emph{Section~\ref{sec:extensions}} discusses extensions around localized OSPO, max-score estimators, and approximate realizability. \emph{Section~\ref{sec:related}} surveys related work, and \emph{Section~\ref{sec:conclusion}} concludes. Full proofs and additional experimental details appear in \cref{sec:proofs,sec:additional_detail_on_experiments}, respectively.

\section{Preference Alignment as a Semiparametric Single-Index Model}
\label{sec:reduction}
This section formalizes the policy objective and preference data, derives the single-index model, and defines an index-error metric that controls error of the index-induced divergence-calibrated policy. 

\subsection{The policy optimization target}
\label{sec:target}

We target the policy maximizing average reward subject to an $f$-divergence constraint from a reference policy. This captures alignment while staying close to a pre-trained model.

We let $x\in\Xcal$ denote context and $y\in\Ycal$ action, with unknown mean reward $r^\star(x,y)$. For simplicity we assume $|\Ycal|<\infty$. Given convex $f:\RR^+\to\RR$ with $f(1)=0$, define $D_f(p\|q)=\infty$ if $\mathrm{supp}(p)\nsubseteq \mathrm{supp}(q)$ and otherwise $D_f(p\|q)=\sum_{y:q(y)>0} q(y)\, f(p(y)/q(y))$. We assume $f$ is twice continuously differentiable, strictly convex, and $f'(0^+)=-\infty$; KL corresponds to $f(u)=u\log u$.

Given a reference policy $\piref:\Xcal\to\Delta^{\Ycal}$, we are interested in the divergence-constrained reward-maximizing policy: with expectations taken with respect to a context distribution $x\sim P_x$, define
\begin{equation}
\label{eq:target}
\pis\in\argmax_{\pi:\Xcal\to\Delta^{\Ycal}}\E_x
\sum_{y\in\Ycal}\pi(y\mid x) r^\star(x,y)
\quad\text{s.t.}\quad\E_x D_f(\pi(\cdot\mid x)\|\piref(\cdot\mid x))\leq\kappa,
\end{equation}

A convexity argument and the results of \citet{wang2023beyond} together yield a closed form for $\pis$. \Cref{asmp:div} below rules out the trivial case where a pure reward-maximizing policy already satisfies the divergence budget.

\begin{assumption}\label{asmp:div}
Let $\omega(x)=\sum_{y\in\argmax_y r^\star(x,y)}\piref(y\mid x)$. Suppose $\E_x\omega(x)f(1/\omega(x))>\kappa$.
\end{assumption}
\begin{theorem}[Closed form for $\pis$]
\label{thm:closed-form}
Under \cref{asmp:div},
there exist $\beta^\star>0$ and $\lambda^\star:\cX\to\R$ such that
\[
\pi^\star(y\mid x)=\pi_{\rm ref}(y\mid x)\,(f')^{-1}\!\left(\beta^{\star-1}\big(r^\star(x,y)-\lambda^\star(x)\big)\right),\quad \E_x D_f(\pis(\cdot\mid x)\|\piref(\cdot\mid x))=\kappa.
\]
\end{theorem}

\begin{remark}[The case of KL] The solution simplifies a lot for $f(u)=u\log u$, in which case $\pi^\star(y\mid x)\propto \piref(y\mid x)\exp(\beta^{\star-1}r^\star(x,y))$ for some $\beta^\star$.\end{remark}

\subsection{Preference data and their distribution}
\label{sec:data}

To optimize \cref{eq:target}, we observe iid $(w_i,z_i)\sim P$ with $w_i=(x_i,y_{i0},y_{i1})$ and $z_i\in\{0,1\}$ indicating whether $y_{i1}\succ y_{i0}$. 
We assume the $x$-marginal of $P$ matches $P_x$ (otherwise we may use importance weighting) %
and 
that preferences are generated according to
\begin{equation}
\label{eq:pairwise}
z\mid x,y_0,y_1 \sim \mathrm{Bern}\prns{\Phi^\star\prns{r^\star(x,y_1)-r^\star(x,y_0)}},
\end{equation}
where $\Phi^\star$ is an unknown cumulative distribution function (CDF). We treat $\Phi^\star$ as completely \textit{unknown} (rather than fixing it to logistic/probit), since our goal is policy optimization rather than fully modeling preference noise. For simplicity, we assume $P$ is symmetric in $(y_0,y_1)$ (\eg, shuffle before scoring) so that $\E[h(x,y_1)-h(x,y_0)]=0$ for any function $h(x,y)$.

\begin{remark}[Random utility interpretation] If $\epsilon_0,\epsilon_1$ are independent of $x,y_0,y_1$ with $\PP(\epsilon_0-\epsilon_1\leq t)=\Phi^\star(t)$, then choosing $z=\argmax_{i=0,1}\{r^\star(x,y_i)+\epsilon_i\}$ yields \cref{eq:pairwise}. The shocks $\epsilon_0,\epsilon_1$ capture idiosyncratic noise in individuals' valuations of responses around a common mean $r^\star(x,y)$.
\end{remark}

\subsection{Reformulation as a single-index model}
\label{sec:single-index}

We search over a policy class $\{\pi_\theta:\theta\in\Theta\}$, allowing $\theta$ to be an abstract (possibly infinite-dimensional) parameter. We will now show that assuming realizability of this class for our learning task in \cref{eq:target} induces what is known as a \textit{single-index model} \citep{ichimura1987estimation} on the preference data.

Let us posit the following model for preference data: for $\theta\in\Theta$ and $\Psi$ a CDF,
\begin{align}
z\mid x,y_0,y_1 &\sim \mathrm{Bern}\prns{\Psi\prns{t_\theta(x,y_0,y_1)}},\label{eq:sim}
\\\notag%
t_\theta(x,y_0,y_1)&=h_\theta(x,y_1)-h_\theta(x,y_0),~~h_\theta(x,y)=f'\prns{{\pi_\theta(y\mid x)}/{\pi_{\rm ref}(y\mid x)}}.
\end{align}

Combining \Cref{thm:closed-form} with \cref{eq:pairwise} and rearranging shows that, under policy realizability of $\pis$, this model is correctly specified, i.e., it contains the true data-generating process (DGP) for some $\theta,\Psi$.
\begin{assumption}[Realizability]\label{asmp:realizability}$\pi^\star=\pi_{\theta^\star}$ for $\theta^\star\in\Theta$.\end{assumption}
\begin{proposition}[Policy realizability implies a single-index model]\label{prop:sim}
Under \Cref{asmp:div,asmp:realizability} we have that \cref{eq:sim} holds for $\theta=\theta^\star$, $\Psi=\Psis$, where we set $\Psi^\star(u)=\Phi^\star(\beta^\star u)$ with $\beta^\star$ as in \cref{thm:closed-form}.
\end{proposition}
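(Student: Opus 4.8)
The plan is to chain together \Cref{thm:closed-form}, \Cref{asmp:realizability}, and the preference model \cref{eq:pairwise}, the crux being that the context-dependent multiplier $\lambda^\star(x)$ cancels when forming the pairwise index. First I would invoke \Cref{thm:closed-form} --- valid since \Cref{asmp:div} is assumed --- to write $\pis(y\mid x)/\piref(y\mid x) = (f')^{-1}(\beta^{\star-1}(r^\star(x,y)-\lambda^\star(x)))$ for every $(x,y)$ with $\piref(y\mid x)>0$. Because $f$ is $C^2$ and strictly convex, $f'$ is strictly increasing and hence invertible on its range, so applying $f'$ to both sides gives $f'(\pis(y\mid x)/\piref(y\mid x)) = \beta^{\star-1}(r^\star(x,y)-\lambda^\star(x))$. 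Under \Cref{asmp:realizability}, $\pis = \pi_{\thetas}$, so the left-hand side is precisely $h_{\thetas}(x,y)$ from \cref{eq:h-def}, i.e., $h_{\thetas}(x,y) = \beta^{\star-1}(r^\star(x,y)-\lambda^\star(x))$.

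Next I would take pairwise differences: for a triple $(x,y_0,y_1)$, the offset $\lambda^\star(x)$ enters $h_{\thetas}(x,y_1)$ and $h_{\thetas}(x,y_0)$ identically and therefore cancels, leaving $t_{\thetas}(x,y_0,y_1) = h_{\thetas}(x,y_1) - h_{\thetas}(x,y_0) = \beta^{\star-1}(r^\star(x,y_1)-r^\star(x,y_0))$; equivalently, $r^\star(x,y_1)-r^\star(x,y_0) = \beta^\star\, t_{\thetas}(x,y_0,y_1)$. Substituting this into \cref{eq:pairwise} gives $z\mid x,y_0,y_1 \sim \mathrm{Bernoulli}(\Phi^\star(\beta^\star t_{\thetas}(x,y_0,y_1)))$, which is exactly \cref{eq:sim} with $\theta=\thetas$ and $\Psi = \Psis$ for $\Psis(u) := \Phi^\star(\beta^\star u)$. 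To close, I would observe that since $\beta^\star>0$, the map $u\mapsto\Phi^\star(\beta^\star u)$ is increasing and right-continuous with limits $0$ and $1$ at $\mp\infty$, hence a valid CDF, so the posited model \cref{eq:sim} is well-formed at $(\thetas,\Psis)$.

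I do not expect a serious obstacle: once the closed form of \Cref{thm:closed-form} is in hand, the argument is essentially a one-line substitution. The only points requiring care are (i) the invertibility of $f'$ on the domain it is applied to, which is precisely why the standing regularity assumptions on $f$ (strict convexity, $C^2$, $f' \to -\infty$ near $0$) are imposed; and (ii) the handling of $(x,y)$ outside the reference support, where $\pis(y\mid x)=\pi_{\thetas}(y\mid x)=0$ and $h_{\thetas}$ is taken by convention --- this is immaterial here because the preference comparisons concern actions in the support of $\piref(\cdot\mid x)$. The conceptual content of the proposition --- that the single-index structure emerges \emph{for free} from realizability --- is entirely contained in the cancellation of $\lambda^\star(x)$ when forming $t_{\thetas}$.
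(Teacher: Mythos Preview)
Your proposal is correct and follows essentially the same approach as the paper: rearrange the closed form from \Cref{thm:closed-form} to express $r^\star(x,y)=\beta^\star h_{\theta^\star}(x,y)+\lambda^\star(x)$, difference across $y_0,y_1$ to cancel $\lambda^\star(x)$, and absorb $\beta^\star$ into the unrestricted CDF. The paper's proof is the one-line summary immediately following the proposition; you have simply written out the same steps in more detail.
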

Specifically, \cref{thm:closed-form} can be rearranged as $r^\star(x,y)=\beta^\star h_{\theta^\star}(x,y)+\lambda^\star(x)$. Then $\lambda^\star$ is canceled by differencing the rewards of $y_1$ and $y_0$, and $\beta^\star$ is swallowed into the CDF, as it is unrestricted. The implications of the failure of \cref{asmp:realizability} and of \emph{approximate} realizability are discussed in \cref{sec:approx-realizability}.

\Cref{eq:sim} is a single-index model because it posits that all dependence on $(x,y_0,y_1)$ flows through the scalar index $t_\theta(w)$. The remaining dependence is a completely unrestricted nonparametric on this univariate quantity via $\Psi$. We refer to $h_\theta$ as a potential and $t_\theta$ as an index.
We henceforth impose \Cref{asmp:div,asmp:realizability} so that, by \cref{prop:sim}, $\theta^\star$, $\tst$, and $\Psis$ are well defined and characterize the DGP.

Since $\theta$ is just an abstract parameter, the class of indices it induces,
\[
\Tcal=\braces{
f'\prns{\pi_\theta(y_1\mid x)/\pi_{\rm ref}(y_1\mid x)}
-
f'\prns{\pi_\theta(y_0\mid x)/\pi_{\rm ref}(y_0\mid x)}
:\theta\in\Theta},
\]
may be parametric or nonparametric and is as rich as the policy class $\{\pi_\theta:\theta\in\Theta\}$. As such, we will characterize $\Tcal$ in terms of its functional complexity, rather than some finite parametric dimension.

\subsection{From an index to a policy}
\label{sec:policy-from-index}

While policies are the ones parameterizing the index, via the parameterization by $\theta$, not every policy is feasible in the divergence constraint. Policies are invariant to additive transforms of rewards but not scaling, as that determines the temperature or divergence. Nonetheless, once we fix the divergence constraint we obtain scaling invariance as well.

Given $\theta$, which gives a potential function $h_\theta$, define the corresponding $\beta$-regularized policy
$$
\pi_{\beta,\theta}(y\mid x)=\pi_{\rm ref}(y\mid x)\,(f')^{-1}\!\left(\beta^{-1}\big(h_\theta(x,y)-\lambda_{\beta,\theta}(x)\big)\right),
$$
where $\lambda_{\beta,\theta}(x)$ is chosen so that $\pi_{\beta,\theta}(y\mid x)$ is a valid conditional probability mass function (sums to 1). Overloading notation let us further define the population-divergence-calibrated policy as
$$
\pi_{\kappa,\theta}(y\mid x)\;=\;
\pi_{\beta_{\kappa,\theta},\theta}(y\mid x),\quad
\text{where $\beta_{\kappa,\theta}>0$ is s.t. $\E_x D_f(\pi_{\kappa,\theta}(\cdot\mid x)\|\piref(\cdot\mid x))=\kappa$}.$$ 
Under \cref{asmp:div,asmp:realizability}, $\pis=\pi_{\kappa,\theta^\star}$. 
We will later use an empirical version of this calibration (\cref{sec:empirical-calib}) and show it only incurs an additive error, so we can focus on the convergence of $\pi_{\kappa,\hat\theta}$ for a learned $\hat\theta$. 

Since $\pi_{\kappa,\theta}$ is unchanged when $h_\theta$ undergoes an $x$-dependent shift and a global positive scaling, we only need to learn $h_{\theta^\star}$ up to this equivalence to learn good policies:
\begin{definition}[Scale- and $x$-location-invariant metric]
\label{def:rho}
Define
\[
\rho(h)\;=\;\inf_{a>0} \bigl(\E_{x}\inf_{b(x)}\sum_{y}\pi_{\rm ref}(y\mid x)\,\big(a\,h(x,y)-b(x)-h_{\theta^\star}(x,y)\big)^2\bigr)^{1/2}.
\]
\end{definition}

Clearly, if $\rho(h_\theta)=0$ then $\pi_{\kappa,\theta}=\pi_{\kappa,\thetas}$ as we just need to scale $\lambda_{\kappa,\thetas}(x),\beta_{\kappa,\thetas}$, and $\pi_{\kappa,\thetas}=\pis$ under \cref{asmp:div,asmp:realizability}. We would further like nonzero but small $\rho(h_\theta)$ to translate to small policy error. Before establishing this, we collect some of the basic regularity conditions we will use throughout.

\begin{definition}[Norms]\label{def:norms}
Throughout, for $w$-functions $f(w)=f(x,y_0,y_1)$ we define $\|f\|_p$ as the $L_p$ norm over $w\sim P$, and for $(x,y)$-functions $g(x,y)$ we define $\|g\|_p$ as $L_p$ norm over $P_x\times\piref$. For $p=2$ we omit the subscript.
\end{definition}

\begin{assumption}[Boundedness, smoothness, and coverage]
\label{asmp:basic}
$M=\sup_{\theta\in\Theta}\|t_\theta\|_\infty<\infty$, $\sup_{\theta\in\Theta}\|\log f''(\pi_\theta(y\mid x)/\piref(y\mid x))\|_\infty<\infty$, and $\operatorname{esssup}_{\substack{x\sim P_x\\y_0,y_1\sim\piref}}\frac{\pi_{\rm ref}(y_0\mid x)\pi_{\rm ref}(y_1\mid x)}{P(y_0,y_1\mid x)}<\infty$.
\end{assumption}

The first two conditions ensure boundedness and smoothness; the last is a coverage condition allowing us to translate fit under $P$ to small $\rho$ defined under $\piref$. It is trivially satisfied when demonstrations are generated by drawing twice independently from $\piref$. 
That is, provided we are satisfied with regularizing divergence to $\piref$ as defined in \cref{eq:target} (which forces the optimal solution to be covered by $\piref$), there is no concern regarding coverage as long as our data is \emph{also} generated by $\piref$.

\begin{theorem}[Affine-invariant index error controls policy error]\label{thm:policy-stability-metric}
Suppose \cref{asmp:div,asmp:realizability,asmp:basic} hold, that $\pi_{\rm ref}(y\mid x)$ takes values in $\{0\}\cup[\epsilon,1]$ for $\epsilon>0$, and that, for each $\theta\in\Theta$ and $x\in\mathcal X$, define $
\omega_\theta(x)=\sum_{y\in \argmax_{y\in\mathcal Y} h_\theta(x,y)}\pi_{\mathrm{ref}}(y\mid x),
$
and assume $\mathbb E_x\Bigl[\omega_\theta(x)\,f\!\bigl(1/\omega_\theta(x)\bigr)\Bigr]>\kappa$ for every $\theta\in\Theta$.
Then there exists a constant $C<\infty$
such that
\[
\|\pi_{\kappa,\theta}-\pis\|_1\ \le\ C\,\rho(h_\theta)
\qquad\text{for all $\theta\in\Theta$}.
\]
\end{theorem}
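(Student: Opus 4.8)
The plan is to go from $\rho(h_\theta)$ small to $\|h_{\kappa,\theta}-h_{\theta^\star}\|$ small (for the calibrated potentials) to $\|\pi_{\kappa,\theta}-\pi^\star\|_1$ small, using the closed form for the calibrated policy and the smoothness/strong-convexity of $f$. First I would unpack the definition of $\rho$: there exist $a>0$ and $b(x)$ (achieving or nearly achieving the infimum, which exists by compactness of the relevant ranges under \cref{asmp:basic}) such that $\|a\,h_\theta(x,\cdot)-b(x)-h_{\theta^\star}(x,\cdot)\|\le \rho(h_\theta)$ in the $P_x\times\piref$ norm. The key observation is that $\pi_{\kappa,\theta}$ depends on $h_\theta$ only through its affine-equivalence class: replacing $h_\theta$ by $a\,h_\theta-b(x)$ changes $\beta_{\kappa,\theta}$ to $\beta_{\kappa,\theta}/a$ and $\lambda_{\kappa,\theta}(x)$ correspondingly, so without loss of generality I may assume the representative $\tilde h_\theta:=a\,h_\theta-b(x)$ with $\|\tilde h_\theta-h_{\theta^\star}\|\le\rho(h_\theta)$, and it suffices to bound $\|\pi_{\kappa,\tilde\theta}-\pi^\star\|_1$ in terms of $\|\tilde h_\theta-h_{\theta^\star}\|$.

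Next I would control the calibration scalar. Define the per-context divergence of the $\beta$-regularized policy built from a potential $h$; by \cref{asmp:div} and the extra hypothesis $\mathbb E_x[\omega_\theta(x)f(1/\omega_\theta(x))]>\kappa$, the map $\beta\mapsto \mathbb E_x D_f(\pi_{\beta,h}(\cdot\mid x)\|\piref(\cdot\mid x))$ is continuous, strictly monotone, and crosses $\kappa$ at a unique $\beta_{\kappa,h}>0$ (this is the same convexity argument behind \cref{thm:closed-form}). I would show this crossing point is Lipschitz in $h$ in $\|\cdot\|$: the divergence is a smooth function of $h$ and $\beta$ with derivative in $\beta$ bounded away from zero uniformly over $\Theta$ (using the second condition in \cref{asmp:basic}, which bounds $\log f''$ of the density ratios, hence bounds $f''$ above and below, hence makes $(f')^{-1}$ bi-Lipschitz on the relevant range, and using $\piref\ge\epsilon$ on its support). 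An implicit-function / monotone-crossing argument then gives $|\beta_{\kappa,\tilde\theta}-\beta^\star|\lesssim \|\tilde h_\theta-h_{\theta^\star}\|$, with $\beta^\star=\beta_{\kappa,\theta^\star}$ bounded away from $0$ and $\infty$ uniformly.

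Finally I would combine: $\pi_{\kappa,\theta}(y\mid x)=\piref(y\mid x)(f')^{-1}(\beta_{\kappa,\theta}^{-1}(\tilde h_\theta(x,y)-\lambda(x)))$ and likewise for $\pi^\star$ with $(\beta^\star,h_{\theta^\star},\lambda^\star)$. Since $(f')^{-1}$ is Lipschitz on the relevant compact range (again via the lower bound on $f''$) and $\beta_{\kappa,\theta}^{-1}$ is close to $\beta^{\star-1}$, the pointwise difference of the arguments is $\lesssim \|\tilde h_\theta-h_{\theta^\star}\|_\infty$ plus a normalization term $|\lambda(x)-\lambda^\star(x)|$; but $\lambda$ is itself determined by the summation-to-one constraint, so a short argument bounds $|\lambda(x)-\lambda^\star(x)|$ by the same order (the normalization map is Lipschitz because the summands are, and because $|\mathcal Y|<\infty$). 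Passing from the $\infty$-norm control back to the $L_2$-based $\rho$ requires a little care — one route is to argue directly in $L_2$ throughout using the $\piref\ge\epsilon$ lower bound to pass between $\|\cdot\|_1$ over $\pi$ and $\|\cdot\|$ over $P_x\times\piref$, picking up a factor $1/\sqrt\epsilon$ — and then summing/integrating over $y$ and $x$ gives $\|\pi_{\kappa,\theta}-\pi^\star\|_1\le C\rho(h_\theta)$ with $C$ depending on $\epsilon$, $M$, the $f''$ bounds, $\beta^\star$, and $|\mathcal Y|$.

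The main obstacle I anticipate is the uniform (over $\Theta$) control of the calibration scalar $\beta_{\kappa,\theta}$ and the Lagrange multiplier $\lambda_{\kappa,\theta}(x)$: one must rule out $\beta_{\kappa,\theta}\to 0$ or $\infty$ along $\Theta$ and establish the Lipschitz dependence with constants not blowing up, which is exactly what the hypothesis $\mathbb E_x[\omega_\theta(x)f(1/\omega_\theta(x))]>\kappa$ for all $\theta$ (together with boundedness of $t_\theta$ and the curvature bounds) is there to buy. Everything else is a chain of Lipschitz estimates, but assembling the constants cleanly — and handling the $\inf$ over $a$ and $b(x)$ in the definition of $\rho$ so that a near-optimal affine representative can be used throughout — is where the real work lies.
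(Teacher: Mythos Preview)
Your outline is correct and shares the same building blocks as the paper: both use the affine invariance $\pi_{\kappa,ah_\theta+b}=\pi_{\kappa,h_\theta}$ (the paper isolates this as a separate lemma), both convert the policy $L_1$ gap to a canonical-score $L_2$ gap via the curvature lower bound $f''\ge m_f$, and both use a fixed-$\beta$ Lipschitz estimate $\|u_{\beta,h_1}-u_{\beta,h_2}\|_2\lesssim\beta^{-1}\sqrt{M_f/m_f}\,\|h_1-h_2\|_2$, established by differentiating the normalization constraint and observing that $\Delta u=\beta^{-1}(I-P_w)\Delta h$ for a suitable weighted projection onto constants.

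The genuine difference is in how the affine freedom is spent. You pick $(a,b)$ to (nearly) realize the infimum in $\rho$, so that $\|\tilde h_\theta-h_{\theta^\star}\|\le\rho(h_\theta)$ by construction, and then you must control $|\beta_{\kappa,\tilde h_\theta}-\beta^\star|$ by an implicit-function/root-comparison argument (the divergence map $\beta\mapsto\Phi_h(\beta)$ is strictly decreasing with derivative bounded away from zero, and Lipschitz in $h$) before chaining the Lipschitz bounds. The paper instead sets $a_\theta=\beta_{\kappa,h_\theta}/\beta^\star$, which forces $\beta_{\kappa,a_\theta h_\theta}=\beta^\star$ exactly, so the fixed-$\beta$ Lipschitz lemma applies immediately and no separate control of $\beta$ is needed. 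What the paper's route buys is brevity: the whole implicit-function step you describe as ``the main obstacle'' is bypassed. What your route buys is that the link to $\rho$ is automatic, whereas the paper must still argue that $\|a_\theta h_\theta-h_{\theta^\star}\|$ is controlled by $\rho(h_\theta)$; as printed, the paper only records the reverse inequality $\rho(h_\theta)\le\|a_\theta h_\theta-h_{\theta^\star}\|_2$, so completing that step cleanly in fact requires exactly the kind of $\beta$-Lipschitz-in-$h$ estimate you propose (and which the paper develops anyway in the proof of \cref{thm:empirical-beta-primitive}). Your approach is therefore somewhat longer but self-contained on this point.
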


Thus, bounding $\rho(h_{\hat\theta})$ bounds policy error for $\pi_{\kappa,\hat\theta}$. Henceforth we focus on bounding $\rho(h_{\hat\theta})$.

\subsection{Identification of the target policy}
\label{sec:aip-id}

Our policy target is invariant to affine transforms of $h_\theta$, while the single-index model is invariant to monotone transforms of $t_\theta$. To connect fitting preferences to learning policies, we assume that observational equivalence of indices implies policy equivalence.

\begin{assumption}[Observational equivalence implies policy equivalence]
\label{asm:aip-id}
For any $\theta\in\Theta$,
if there exists a monotonic non-decreasing $m:\R\to\R$ with
\[
t_{\theta^\star}(x;y_1,y_0)=m\!\left(t_{\theta}(x;y_1,y_0)\right)\quad \text{$P$-a.s.},
\]
then there exist $a\geq0$ and measurable $b:\cX\to\R$ such that
\[
h_{\theta^\star}(x,y)=a\,h_{\theta}(x,y)+b(x)\qquad \text{$P_x\times\piref$-a.s.}
\]
 \end{assumption}
In fact, it holds naturally when $|\Ycal|\ge3$ and generations are sufficiently rich because the additivity of indices imposes Cauchy's functional equation on $m$ above (see \cref{app:aip-proof}).

\begin{theorem}\label{thm:aip-id-sufficient}
Suppose \cref{asmp:basic} holds and there exists $\delta>0$ such that for every $\theta\in\Theta$, $\abs u<\|t_{\theta}\|_\infty$, $\abs v<\delta$ with $\abs{u+v}<\|t_{\theta}\|_\infty$, there exist $x,y_1,y_2,y_3$ with $(x,y_1,y_2),(x,y_2,y_3),(x,y_1,y_3)\in \mathrm{supp}(P)$ and $u=t_{\theta}(x;y_1,y_2)$, $v=t_{\theta}(x;y_2,y_3)$. Then \cref{asm:aip-id} holds.
\end{theorem}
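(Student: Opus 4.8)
The plan is to verify \cref{asm:aip-id} for an arbitrary fixed $\theta\in\Theta$. Suppose $m:\R\to\R$ is monotone non-decreasing with $t_{\theta^\star}=m\circ t_\theta$ holding $P$-a.s.; the goal is to show $m$ must be affine with non-negative slope on the range of $t_\theta$, and then transport this back through the potentials. The engine is exactly the computation displayed just before the theorem: since every index is a difference of potentials, the telescoping identity $t_\psi(x;y_1,y_3)=t_\psi(x;y_1,y_2)+t_\psi(x;y_2,y_3)$ holds identically for $\psi\in\{\theta,\theta^\star\}$. Fix any $u,v$ with $|u|<\|t_\theta\|_\infty$, $|v|<\delta$, and $|u+v|<\|t_\theta\|_\infty$; the richness hypothesis supplies $x,y_1,y_2,y_3$ with the three relevant pairs in $\mathrm{Support}(P)$ and $u=t_\theta(x;y_1,y_2)$, $v=t_\theta(x;y_2,y_3)$, whence $u+v=t_\theta(x;y_1,y_3)$. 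Substituting $t_{\theta^\star}=m\circ t_\theta$ into the three pairs and then applying the telescoping identity on the $t_{\theta^\star}$ side gives $m(u)+m(v)=m(u+v)$ — Cauchy's equation restricted to the region $\{|u|<\|t_\theta\|_\infty,\ |v|<\delta,\ |u+v|<\|t_\theta\|_\infty\}$.

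I would then solve this restricted Cauchy equation. A monotone function is Lebesgue measurable, so a monotone solution of Cauchy's equation on a small interval around the origin is linear \citep{Aczel1966}; applying this on a sufficiently small symmetric interval around $0$ (which lies in the feasible region whenever $\|t_\theta\|_\infty>0$) yields $m(u)=a\,u$ there. The localization $|v|<\delta$ is then removed by the standard bootstrap: for $u$ in the full interval and small $v$ the relation reads $m(u+v)-a(u+v)=m(u)-a\,u$, so $u\mapsto m(u)-a\,u$ is locally constant on $(-\|t_\theta\|_\infty,\|t_\theta\|_\infty)$, hence constant and equal to its value $0$ at the origin. Thus $m(u)=a\,u$ for $|u|<\|t_\theta\|_\infty$, with $a\ge 0$ because $m$ is non-decreasing, so $t_{\theta^\star}(w)=a\,t_\theta(w)$ for $P$-a.e.\ $w$. (The degenerate case $t_\theta\equiv 0$ reduces, by the action-symmetry of $P$, to $t_{\theta^\star}\equiv 0$ and is covered by $a=0$.)

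To transport this to the potentials, set $g(x,y)=h_{\theta^\star}(x,y)-a\,h_\theta(x,y)$; the above says $g(x,y_1)=g(x,y_0)$ for $P$-a.e.\ $(x,y_0,y_1)$. Since $\Ycal$ is finite, I decompose this over the finitely many action pairs, and the coverage condition in \cref{asmp:basic} gives $P(y_0,y_1\mid x)>0$ whenever $y_0,y_1\in\mathrm{Support}(\piref(\cdot\mid x))$, for $P_x$-a.e.\ $x$. Hence for $P_x$-a.e.\ $x$ the map $g(x,\cdot)$ is constant on $\mathrm{Support}(\piref(\cdot\mid x))$; taking $b(x)=\sum_y\piref(y\mid x)\,g(x,y)$ as a measurable selection of that common value gives $h_{\theta^\star}(x,y)=a\,h_\theta(x,y)+b(x)$ for $P_x\times\piref$-a.e.\ $(x,y)$ with $a\ge 0$, which is the conclusion of \cref{asm:aip-id}.

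The step I expect to be the main obstacle is the measure-theoretic mismatch in the first paragraph: the hypothesis $t_{\theta^\star}=m\circ t_\theta$ holds only $P$-a.s., whereas the richness hypothesis hands us witnesses at prescribed points of $\mathrm{Support}(P)$, and an a.s.\ relation need not hold at a given support point. When $P_x$ is atomic — the relevant case for token-sequence contexts in LLMs — this is vacuous, since a.s.\ then coincides with holding everywhere on $\mathrm{Support}(P)$. In general one closes the gap under a mild continuity assumption on $h_\theta(\cdot,y)$: the relation extends from a full-measure set, which is dense in $\mathrm{Support}(P)$, to all of $\mathrm{Support}(P)$ at continuity points of $m$, of which there are co-countably many; this still yields Cauchy's equation on a dense subset of the feasible region, whence, again using monotonicity, $m$ is linear. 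The remaining ingredients — the extension of the restricted Cauchy equation to the full interval, and the Fubini/coverage argument of the third paragraph — are routine.
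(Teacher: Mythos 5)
Your proposal follows essentially the same route as the paper: derive the restricted Cauchy functional equation from the telescoping identity and the richness hypothesis, solve it on $(-\|t_\theta\|_\infty,\|t_\theta\|_\infty)$ using monotonicity, and transport from indices to potentials via the coverage condition in \cref{asmp:basic}. The only mechanical difference is how you extend Cauchy's equation beyond $|v|<\delta$: you establish linearity on a small interval around $0$ and then argue $u\mapsto m(u)-au$ is locally constant, whereas the paper telescopes directly by writing $v=k\epsilon+\iota$ with $0<|\iota|<|\epsilon|<\delta$; these are interchangeable. Your flagged concern about the $P$-a.s.\ versus $\mathrm{Support}(P)$-pointwise mismatch is fair and correctly identified: the paper passes through this step with the remark ``the support is the smallest measure-1 set,'' implicitly appealing to $\mathrm{Support}(P)\subseteq\overline{S}$, and making this rigorous does require either atomicity of $P_x$ (as you note, the case of interest for LLM contexts, where it is vacuous) or a continuity assumption on $h_\theta(\cdot,y)$ along the lines you sketch. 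You have surfaced an informal step in the paper's argument rather than found a different proof.
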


\section{Semiparametric Policy Optimization by Profiling the Link (PSPO)}
\label{sec:pspo}

We now consider policy learners inspired by semiparametric estimation of single-index models. We begin by adapting the ideas of \citet{Cosslett1983}, which \emph{profiles} the link by maximizing over monotone $\Psi$ for each candidate $\theta$. We then establish existence and (rate-free) policy consistency.

\subsection{The PSPO policy learner}
\label{sec:pspo-estimator}
Define the binary cross-entropy given both an index $t$ and a candidate link function $\Psi$:
\[
\ell(t,\Psi;w,z)= z\log \Psi\big(t(w)\big)+(1-z)\log\big(1-\Psi\big(t(w)\big)\big).
\]
Recall that $w=(x,y_{0},y_{1})$.

If we knew $\Psi^\star$ we would plug it in and optimize the empirical average of the above likelihood over $t_\theta$; this is the approach of DPO using the logistic link function. Instead, we optimize over both $t_\theta$ and monotone $\Psi$, profiling out the link:
\begin{equation}
\label{eq:pspo-obj}
L_n^{\rm PSPO}(\theta)=\sup_{\Psi\uparrow}\;
\frac{1}{n}\sum_{i=1}^n\ell(t_\theta,\Psi;w_i,z_i)
,\qquad
\hat\theta_{\rm PSPO}\in\argmax_{\theta\in\Theta} L_n^{\rm PSPO}(\theta),
\end{equation}
where $\Psi\uparrow$ denotes non-decreasing functions $\R\to[0,1]$.

Similar to DPO, the approach here is to align the policy by maximizing the likelihood of the implicit reward function implied by the aligned policy. Recall, the policy is wrapped inside $t_\theta(w)=f'\prns{\frac{\pi_\theta(y_1\mid x)}{\pi_{\rm ref}(y_1\mid x)}}
-
f'\prns{\frac{\pi_\theta(y_0\mid x)}{\pi_{\rm ref}(y_0\mid x)}}$. Here the likelihood is \cref{eq:pspo-obj}, which profiles the link rather than assuming it is known and plugging it in.

\subsection{Policy consistency of PSPO}
\label{sec:pspo-theory}
We now establish rate-free consistency of PSPO, allowing for a nonparametric index class $\Tcal$.

\begin{theorem}[PSPO policy consistency]
\label{thm:pspo}
Suppose \cref{asmp:div,asmp:realizability,asmp:basic,asm:aip-id} hold so that $M=\sup_{t\in\Tcal}\|t\|_\infty<\infty$ and suppose that the class $\{\mathbb I[t(w)\leq\tau]:t\in\mathcal T,~\tau\in[-M,M]\}$ is $P$-Glivenko-Cantelli. Moreover, suppose that $\Var(\E[z\mid w])\neq0$, that $\{t/\|t\|:t\in\Tcal\backslash\{0\}\}$ is $L_2$-compact, and that $t(w)$ has no atoms for each $t\in\Tcal\backslash\{0\}$.
Then any sequence of profiled maximizers $\hat\theta_{\rm PSPO}$ satisfies
\[
\rho(h_{\hat\theta_{\rm PSPO}})\to0\quad\text{in probability.}
\]
\end{theorem}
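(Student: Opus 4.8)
The plan is to run the classical argmax‑consistency scheme for a profiled semiparametric likelihood, in the style of \citet{Cosslett1983}: identify the population version of the profiled objective, show it is maximized -- up to the affine equivalence measured by $\rho$ -- at $\theta^\star$ with a \emph{quantitative} separation, prove uniform convergence of the empirical profiled objective, and combine. Throughout write $\bar z(w)=\E[z\mid w]=\Psi^\star(t^\star(w))$, let $H$ be the binary entropy, and set $L(\theta)=\sup_{\Psi\uparrow}\E[\ell(t_\theta,\Psi;w,z)]$. Note first that for a fixed sample $L_n^{\rm PSPO}(\theta)$ depends on $\theta$ only through the order structure of $(t_\theta(w_i))_{i=1}^n$, so it takes finitely many values and a maximizer $\hat\theta_{\rm PSPO}$ exists.

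For the population step, fix $\theta$; since $\ell(t_\theta,\Psi;\cdot)$ depends on $\Psi$ only through its values on the range of $t_\theta(w)$, maximizing the Bernoulli log‑likelihood over monotone $\Psi$ returns the isotonic (monotone‑$L_2$) projection $\Psi_\theta^\dagger$ of $u\mapsto\E[z\mid t_\theta(w)=u]$, and the Gibbs inequality yields
\[
L(\theta^\star)-L(\theta)=\E\!\left[\mathrm{KL}\!\left(\mathrm{Bern}(\bar z(w))\,\big\|\,\mathrm{Bern}\!\left(\Psi_\theta^\dagger(t_\theta(w))\right)\right)\right]\ \ge\ 0,
\]
with equality iff $\bar z(w)=\Psi_\theta^\dagger(t_\theta(w))$ $P$-a.s.; with the true index this is automatic, $\Psi_{\theta^\star}^\dagger=\Psi^\star$, so $L(\theta^\star)=-\E[H(\bar z(w))]$. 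Hence equality forces $\bar z(w)$ to be $P$-a.s. a monotone function of $t_\theta(w)$; composing $\Psi^\star$ with a generalized inverse of that function yields a non-decreasing $m$ with $t_{\theta^\star}(w)=m(t_\theta(w))$ $P$-a.s. (flat stretches of $\Psi^\star$ are immaterial since $\bar z$, hence the policy, does not vary across them), and \Cref{asm:aip-id} upgrades this to $h_{\theta^\star}=a\,h_\theta+b(x)$. Via the identity $\rho(h_\theta)^2=\tfrac12\inf_{a\ge0}\|a\,t_\theta-t^\star\|_{L_2(P_x\times\pi_{\rm ref}\times\pi_{\rm ref})}^2$ (a pointwise variance equals half the mean squared pairwise difference), together with the coverage bound in \Cref{asmp:basic} comparing that norm with $\|\cdot\|$, this gives $\rho(h_\theta)=0$; the same identity plus coverage make $\rho(\cdot)$ continuous on $\{t_\theta:\theta\in\Theta\}$ in $L_2(P)$, and atomlessness of each $t(w)$ makes $L(\cdot)$ likewise continuous (the monotone projection varies continuously).

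For uniform convergence, the profiling link is the pool‑adjacent‑violators / greatest‑convex‑minorant solution, so $L_n^{\rm PSPO}(\theta)=\Lambda(F_n^\theta,G_n^\theta)$ where $F_n^\theta(\tau)=\tfrac1n\sum_i\mathbb{I}[t_\theta(w_i)\le\tau]$ and $G_n^\theta(\tau)=\tfrac1n\sum_i z_i\,\mathbb{I}[t_\theta(w_i)\le\tau]$, with $\Lambda$ a sup‑norm‑continuous functional (continuity of the greatest‑convex‑minorant map and of the block‑entropy sum; the log never diverges because a pooled block of size $n_b$ contributes $-n_b H(\bar z_b)$ with $H(0)=H(1)=0$). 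The Glivenko--Cantelli hypothesis on $\{\mathbb{I}[t(w)\le\tau]\}$, and hence on $\{z\,\mathbb{I}[t(w)\le\tau]\}$ since $z$ is bounded, gives $\sup_\theta\|F_n^\theta-F^\theta\|_\infty\to0$ and $\sup_\theta\|G_n^\theta-G^\theta\|_\infty\to0$ a.s.; atomlessness makes each limit $F^\theta$ continuous, so continuity of $\Lambda$ transfers to $\sup_{\theta\in\Theta}|L_n^{\rm PSPO}(\theta)-L(\theta)|\to0$ a.s.

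It remains to upgrade the zero separation at $\theta^\star$ to a uniform one, $\delta(\varepsilon):=\inf\{L(\theta^\star)-L(\theta):\rho(h_\theta)\ge\varepsilon\}>0$ for all $\varepsilon>0$; this is the main obstacle, since it must substitute for compactness of the possibly nonparametric, non-compact $\Theta$ and, more delicately, cope with the fact that an $L_2$-limit index need not lie in $\Tcal$. Arguing by contradiction: given $\theta_k$ with $\rho(h_{\theta_k})\ge\varepsilon$ and $L(\theta^\star)-L(\theta_k)\to0$, Pinsker's inequality yields $\Psi_{\theta_k}^\dagger(t_{\theta_k}(w))\to\bar z(w)$ in $L_2(P)$; by $L_2$-compactness of $\{t/\|t\|:t\in\Tcal\setminus\{0\}\}$, boundedness $\|t_{\theta_k}\|_\infty\le M$, and Helly selection on the monotone profiles, pass to a subsequence along which $t_{\theta_k}\to t_\infty$ in $L_2(P)$ and the profiles converge to a monotone $g$ (the case $\|t_{\theta_k}\|\to0$ is ruled out since it forces $L(\theta_k)\to-\log2<L(\theta^\star)$ when $\Var(\bar z)\ne0$). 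A sandwiching argument gives $\bar z(w)=g(t_\infty(w))$ $P$-a.s., and since the difference structure $t(x,y_0,y_1)=h(x,y_1)-h(x,y_0)$ and the richness behind \Cref{thm:aip-id-sufficient} are stable under these limits, the population argument applies to $t_\infty$ and yields $\inf_{a\ge0}\|a\,t_\infty-t^\star\|=0$, hence $\rho(h_{\theta_k})\to0$ -- contradicting $\rho(h_{\theta_k})\ge\varepsilon$. Given $\delta(\varepsilon)>0$ the conclusion follows: $L_n^{\rm PSPO}(\hat\theta_{\rm PSPO})\ge L_n^{\rm PSPO}(\theta^\star)$ and uniform convergence give $L(\theta^\star)-L(\hat\theta_{\rm PSPO})\le 2\sup_\theta|L_n^{\rm PSPO}(\theta)-L(\theta)|=o_p(1)$, whence $\rho(h_{\hat\theta_{\rm PSPO}})\to0$ in probability. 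Verifying the attendant continuity and stability properties -- atomlessness, the coverage bound, stability of the additive/richness structure used for \Cref{asm:aip-id} -- is where most of the real work concentrates; the uniform convergence is comparatively routine once the greatest‑convex‑minorant representation of the profiled objective is in hand.
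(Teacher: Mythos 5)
Your overall plan -- profiled-objective argmax consistency with a population separation argument, a ULLN, and compactness of the normalized index class -- is the paper's approach, and your population step (Gibbs inequality, isotonic projection, invoking \Cref{asm:aip-id}, and the identity $\rho(h_\theta)^2=\tfrac12\inf_{a}\|a\,t_\theta-t^\star\|^2_{L_2(P_x\times\piref^2)}$) matches the paper's Lemma proving maximizers are scalings of $t^\star$. Where you genuinely diverge is the ULLN: you derive it from the greatest-convex-minorant representation $L_n^{\rm PSPO}(\theta)=\Lambda(F_n^\theta,G_n^\theta)$ and sup-norm continuity of $\Lambda$, whereas the paper represents monotone $\Psi$ by their Lebesgue--Stieltjes measures to reduce directly to the threshold GC class, then truncates $\Psi$ to $[\varepsilon,1-\varepsilon]$ and uses Lipschitzness of the log-loss there. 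The GCM route is plausible, but the sup-norm continuity of the map from the empirical cumulatives to the profiled log-likelihood is genuinely delicate (the log is unbounded as block means approach $0,1$, and you have not shown the GCM map is uniformly continuous near the boundary); the paper's reduction is more elementary and avoids engaging with the PAVA solution at all.

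The gap that actually breaks your argument is in the separation step. To make your compactness-by-contradiction argument close, you must rule out $\|t_{\theta_k}\|\to 0$, and you do so by asserting that this would force $L(\theta_k)\to-\log 2$. That is false: the profiled objective $L^{\rm PSPO}$ is \emph{scale-invariant} -- $L^{\rm PSPO}(ct)=L^{\rm PSPO}(t)$ for $c>0$, because rescaling $t$ can be absorbed into the freely varying monotone $\Psi$ -- and the information content $\E[z\mid t_{\theta_k}(w)]=\E[z\mid \bar t_{\theta_k}(w)]$ depends only on the direction $\bar t_{\theta_k}=t_{\theta_k}/\|t_{\theta_k}\|$, not the scale. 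So one can have $\|t_{\theta_k}\|\to 0$ with $L(\theta_k)\to L(\theta^\star)$. Related, you flag that ``an $L_2$-limit index need not lie in $\Tcal$'' and patch it with a hand-waving appeal to stability of the richness behind \Cref{thm:aip-id-sufficient}; but the hypothesis that $\bar\Tcal=\{t/\|t\|:t\in\Tcal\setminus\{0\}\}$ is $L_2$-compact already implies $\bar\Tcal$ is closed, so normalized limits stay in $\bar\Tcal$. Both problems disappear if you exploit scale-invariance and work directly on $\bar\Tcal$: pass to $\bar t_{\theta_k}\to\bar t_\infty\in\bar\Tcal$, use $L^{\rm PSPO}(\bar t_{\theta_k})=L^{\rm PSPO}(t_{\theta_k})$, upper semicontinuity, and uniqueness of the maximizer on the compact $\bar\Tcal$; then bound $\inf_{a\ge0}\|a\,t_{\theta_k}-t^\star\|\le\|t^\star\|\,\|\bar t_{\theta_k}-\bar t^\star\|$. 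That is exactly how the paper sidesteps the degenerate-scale case (handling $t_n=0$ only, not $\|t_n\|\to0$), and it is the needed fix to your argument.
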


The functional complexity condition here is that thresholded indices form a $P$-Glivenko--Cantelli class (\eg, $\Tcal$ is VC-subgraph). The assumption $\Var(\E[z\mid w])\neq0$ is equivalent to $\tst\neq0$. And, the $L_2$-compactness assumption ensures we cannot take $t\to0$ while making it overly complex at small scales, since we must be invariant to scale.

\section{Semiparametric Policy Optimization by Orthogonalizing the Link (OSPO)}
\label{sec:ospo}

PSPO yields consistency but can be ill-behaved because profiling can induce a rough loss surface. Following \citet{KleinSpady1993}, rather than profile the link, we consider how perturbing $\theta$ changes the likelihood, \ie, the score. The score has two orthogonal components: the distribution of the index and the distribution of $z$ given the index. At $\thetas$, the latter is the true link $\Psi^\star$, but away from it this is some conditional probability, which we can estimate by univariate regression and plug in. Orthogonality yields quadratic sensitivity to plug-in error, akin to Double/Debiased Machine Learning \citep{chernozhukov2024applied}. 

\subsection{The idealized quasi-likelihood and OSPO policy learner with nuisance plug-in}
\label{sec:ospo-est}

Solving the score equation in $\theta$ is equivalent to maximizing a quasi-likelihood:
\begin{equation}\label{eq:ospo-ideal}
L_n^{\rm ideal}(\theta)
= \frac{1}{n}\sum_{i=1}^n \ell(t_\theta,g^\star_\theta;w_i,z_i),
\qquad
g^\star_\theta(u)=P(z=1\mid t_\theta(w)=u).
\end{equation}
At $\theta=\thetas$, $g^\star_{\thetas}=\Psi^\star$ is the true link, but for $\theta\neq\thetas$, $g^\star_\theta$ need not even be monotone.

Unfortunately, we do not know $\gs_\theta$. We therefore estimate it by $\hat g_\theta$ as a univariate regression of $z$ on $t_\theta(w)$ and optimize the plug-in quasi-likelihood
\begin{equation}
\label{eq:ospo-obj}
L_n^{\rm OSPO}(\theta)=\frac{1}{n}
\sum_{i=1}^n \ell(t_\theta,\hat g_\theta;w_i,z_i)
,\qquad
\hat\theta_{\rm OSPO}\in\argmax_{\theta\in\Theta} L_n^{\rm OSPO}(\theta).
\end{equation}

One simple choice for $\hat g_\theta$ is kernel regression on a dataset $\Dcal$:
\begin{equation}\label{eq:nwreg}
\hat g_\theta(u)=
\sum_{(w,z)\in\Dcal}K\big((u-t_\theta(w))/(h\,\hat\sigma(t_\theta(w)))\big)\,z
\Big/
\sum_{(w,z)\in\Dcal}K\big((u-t_\theta(w))/(h\,\hat\sigma(t_\theta(w)))\big),
\end{equation}
where $K$ is a kernel, $h$ is a bandwidth, and $\hat\sigma(t_\theta(w))$ is the empirical standard deviation of $t_\theta$ on $\Dcal$, scaling by which preserves the scale invariance of $\gs$. We recommend leave-one-out estimation ($\Dcal=\{(w_j,z_j):j\neq i\}$ when plugging into the $i\thh$ term). For rates, we analyze an independent $\Dcal$, since data splitting only affects constants. See \cref{sub:local_polynomial_regression_plug_in_error} for examples and analyses of $\hat g_\theta$ estimators.

\subsubsection{Identification up to scale and sign}

The ideal OSPO objective only sees the conditional mean $g_t^\star(t(w))$. Thus, if $\tst$ is a function of $t$, the ideal objective cannot distinguish the two indices. Unlike classical semiparametric choice work, we do not need to identify the underlying parameter; we need only identify the calibrated policy, so identifying the index up to scale and sign is enough. The MSE margin assumption in the main text turns near observational equivalence into a quantitative bound on distance to the best affine projection.

\begin{lemma}[Curvature up to scale and sign]\label{lem:margin}
Under \cref{asmp:div,asmp:realizability,asmp:basic,asmp:ospo-margin,asm:ospo-regularity}, 
there exists $\lambda_0>0$ such that for all $\theta\in\Theta$,
\[
L^{\rm ideal}(\tst)-L^{\rm ideal}(t_\theta)\geq \lambda_0
\inf_{f\in\Fcal_{\rm lin}}\|\tst-f(t_\theta)\|^2.
\]
\end{lemma}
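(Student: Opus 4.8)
The plan is to reduce the claimed curvature bound to an anti-concentration argument through a non-degenerate link. First I would unwind $\E L_n^{\rm ideal}$ into a conditional Kullback--Leibler gap. Write $p^\star(w)=\Psis(\tst(w))=\E[z\mid w]$ and, for any $\theta\in\Theta$, $U=t_\theta(w)$; by the tower rule $g^\star_\theta(u)=P(z=1\mid t_\theta(w)=u)=\E[p^\star(w)\mid t_\theta(w)=u]$, so conditioning the binary cross-entropy first on $w$ and then on $U$ gives $\E L_n^{\rm ideal}(\theta)=-\E[H(g^\star_\theta(U))]$, where $H$ is binary entropy, and likewise $\E L_n^{\rm ideal}(\thetas)=-\E[H(p^\star(w))]$ since $g^\star_{\thetas}=\Psis$ by \cref{prop:sim} (under \cref{asmp:div,asmp:realizability}). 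A one-line rearrangement then yields
\[
\E L_n^{\rm ideal}(\thetas)-\E L_n^{\rm ideal}(\theta)=\E\!\left[\mathrm{KL}\!\left(\mathrm{Bernoulli}(p^\star(w))\,\big\|\,\mathrm{Bernoulli}(g^\star_\theta(U))\right)\right].
\]
(The $\log 0$ cases are harmless with the convention $0\log 0=0$, or simply avoided since the next step only uses Pinsker.)

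Second, I would apply the two-point Pinsker inequality $\mathrm{KL}(\mathrm{Bernoulli}(p)\|\mathrm{Bernoulli}(q))\ge 2(p-q)^2$ to get
\[
\E L_n^{\rm ideal}(\thetas)-\E L_n^{\rm ideal}(\theta)\ \ge\ 2\,\E\!\left[(p^\star(w)-g^\star_\theta(U))^2\right]=2\,\E\!\left[\Var\!\big(\Psis(\tst(w))\,\big|\,t_\theta(w)\big)\right],
\]
the last equality because $g^\star_\theta(U)=\E[\Psis(\tst(w))\mid t_\theta(w)]$ is exactly the conditional mean, so $p^\star(w)-g^\star_\theta(U)$ is the residual of $\Psis(\tst(w))$ onto $\sigma(t_\theta(w))$.

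Third --- the crux --- I would pass from the variance of $\Psis(\tst(w))$ to that of $\tst(w)$ using that $\Psis$ is bi-Lipschitz on the range of the index. Since $\Psis'$ is bounded below by some $c_0>0$ on $[-M,M]$ (\cref{asm:ospo-regularity}) and $|\tst(w)|\le M$ (\cref{asmp:basic}), the mean value theorem gives $|\Psis(s)-\Psis(s')|\ge c_0|s-s'|$ for $s,s'\in[-M,M]$. Representing a conditional variance through an independent copy, $\Var(Y\mid\mathcal G)=\tfrac12\E[(Y_1-Y_2)^2\mid\mathcal G]$ with $Y_1,Y_2$ conditionally i.i.d., this yields $\Var(\Psis(\tst(w))\mid t_\theta(w))\ge c_0^2\,\Var(\tst(w)\mid t_\theta(w))$ pointwise, hence
\[
\E L_n^{\rm ideal}(\thetas)-\E L_n^{\rm ideal}(\theta)\ \ge\ 2c_0^2\,\E\!\left[\Var\!\big(\tst(w)\,\big|\,t_\theta(w)\big)\right]=2c_0^2\inf_{f:\R\to\R}\|\tst-f(t_\theta)\|^2,
\]
since the conditional mean is the $L_2$-optimal univariate transform of $t_\theta$. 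Finally, invoking the MSE identification margin (\cref{asmp:ospo-margin}), $\inf_{f:\R\to\R}\|\tst-f(t_\theta)\|^2\ge c_{\rm MSE}^2\inf_{f\in\Fcal_{\rm lin}}\|\tst-f(t_\theta)\|^2$ uniformly in $\theta$, so the claim holds with $\lambda_0=2c_0^2c_{\rm MSE}^2$.

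The main obstacle, such as it is, lies in the third step: it carries the semiparametric content, and it requires the link to be genuinely non-degenerate on the relevant range ($\Psis'\ge c_0>0$) so that contraction through $\Psis$ does not destroy information about the index --- this is precisely the role of \cref{asm:ospo-regularity}. The rest is bookkeeping: checking the conditional-expectation identities (everything in sight is bounded, the only delicacy being $\log$ near $0/1$, which the Pinsker route sidesteps) and keeping straight that \cref{asmp:ospo-margin} supplies exactly the needed direction of inequality (lower-bounding the nonparametric residual by the linear one). Note that the smoothness of $g^\star_\theta$ in \cref{asm:ospo-regularity} beyond the derivative lower bound on $\Psis$ is not used here; it is needed later for the plug-in and empirical-process analysis.
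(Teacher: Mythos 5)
Your proof is correct and follows essentially the same route as the paper's: both reduce the quasi-likelihood gap to an expected conditional variance of $\Psis(\tst)$ given $t_\theta$, pass to $\Var(\tst\mid t_\theta)$ via the lower bound on $\Psis'$, and close with \cref{asmp:ospo-margin}. The only surface difference is that you invoke the two-point Pinsker inequality where the paper does a second-order Taylor expansion of $\varpi(p)=p\log p+(1-p)\log(1-p)$ using $\varpi''\ge 4$; these are the same strong-convexity step in two guises and even yield the identical constant $\lambda_0=2c_0^2c_{\rm MSE}^2$.
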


This still leaves the sign ambiguous. If the optimizer returns something close to $-\tst$, the induced policy would point in the reward-minimizing direction. The next lemma shows that once an index is sufficiently close to either signed scaling of $\tst$, an independent sample can recover the sign by estimating the correlation between $t_\theta(w)$ and $z$.

\begin{lemma}[Fixing index sign]\label{lem:fix-sign} 
Suppose \cref{asmp:div,asmp:realizability,asmp:basic,asmp:ospo-margin,asm:ospo-regularity} hold and that $\tst\neq0$. 
Then $\E[t^\star(w) z]>0$ is positive.
Fix any $\gamma\in(0,1)$ and any $\theta\in\Theta$ with $t_\theta\neq0$. Let
\[
s=\E[t_\theta(w)\tst(w)],
\qquad
r=\inf_{a\in\R}\|\tst-a t_\theta\|.
\]
Suppose $r\leq\gamma\mu$.
Given a dataset $\Dcal$ of iid draws of $(w,z)$,
let $\hat s=\frac1{\abs{\Dcal}}\sum_{(w,z)\in\Dcal} t_\theta(w)z$.
Then
\[
\mathbb P\left(
\hat s\,s\;\leq\;0
\right)\leq
\frac{1}{(1-\gamma)^2\abs{\Dcal}}\,
\frac{\mathbb E[\tst(w)^2]}{\E[t^\star(w) z]^2}.
\]
\end{lemma}

\subsection{Policy error guarantees with a generic plug-in}
\label{sec:ospo-guarantees}

To obtain learning rates,
we will need a slight strengthening of our up-to-scaling identification condition (\cref{asm:aip-id}) so that \emph{near}-observational-equivalence translates to \emph{near} index alignment.
After rescaling by $\Var(t^\star(w))$, \cref{asmp:ospo-margin} compares nonparametric and linear $R^2$: if $t_\theta$ nearly explains $\tst$ nonparametrically, then it must also nearly explain it linearly (possibly with a negative slope).
To ensure we get the right sign, we do a post-hoc adjustment to ensure positive correlation with $z$.
\begin{assumption}[MSE margin]
\label{asmp:ospo-margin}
With $\Fcal_{\rm lin}=\{u\mapsto \alpha+\beta u:\alpha,\beta\in\Rl\}$,
exists $c_{\rm MSE}>0$ such that
\[
\inf_{f:\Rl\to\Rl}\|\tst-f(t_\theta)\|\geq
c_{\rm MSE}\inf_{f\in\Fcal_{\rm lin}}\|\tst-f(t_\theta)\|\quad\text{for all $\theta\in\Theta$}.
\]
\end{assumption}

We control the complexity of the index class $\Tcal$ in terms of its covering number $N_2(\epsilon,\mathcal T)$, the minimal number of $\epsilon$-radius $L_2$-balls needed to cover $\Tcal$. \cref{asm:indexentropy} covers both the nonparametric case such as H\"older classes ($p>0$) and the ``simple'' case such as VC classes ($p=0$) \citep[thm. 2.6.7]{vdVaartWellner1996}.

\begin{assumption}[Regularity]\label{asm:ospo-regularity}
$\Psis$ has a derivative bounded away from $0$ on $[-M,M]$.
$g_\theta^\star$ has continuous first and second derivatives on $[-M,M]$ that are uniformly bounded 
over $\theta\in\Theta$.
\end{assumption}

\begin{assumption}[Complexity of indices]\label{asm:indexentropy}
Exist constants $p\geq0,A>0$ such that, if $p>0$ we have $\log N_2(\epsilon,\mathcal T)\leq A\epsilon^{-p}$ and if $p=0$ we have $N_2(\epsilon,\mathcal T)\leq (A/\epsilon)^{v}$ for some constant $v>0$.
\end{assumption}

Next we need to control the plug-in $\hat g_\theta$. First we define the error terms that enter the bound. Second, we ensure the plug-in has a similar range to $\gs_\theta$.

\begin{definition}[Plug-in errors]\label{def:ospo-plugin-rate}
With $\Delta q_\theta(w)=\hat g_\theta(t_\theta(w))-g^\star_\theta(t_\theta(w))$, define
$\zeta=\sup_{\theta\in\Theta}\|\Delta q_\theta\|_4$ and $\Gamma=\sup_{\theta\in\Theta:\inf_{a\in\RR}\|\tst-at_\theta\|\leq\iota}\|\Delta q_\theta-\Delta q_{\thetas}\|_2/\inf_{a\in\RR}\|\tst-at_\theta\|$ for some $\iota>0$.
\end{definition}

Here $\zeta$ controls the size of the regression error, while $\Gamma$ controls how quickly that error changes as the index moves locally around $\tst$, using a scale invariant distance $\inf_{a\in\RR}\|\tst-a t_\theta\|$ because OSPO only identifies the index up to scale. This matches the nuisance: for any nonzero scalar $a$, $g^\star_{a t}(a u)=g^\star_t(u)$, and our plug-in estimators (such as \cref{sec:ospo-est}) are similarly scale invariant. 

\begin{assumption}[Plug-in range]\label{asm:ospo-plugin-range} 
$\frac{\inf_{u\in[-M,M]}\min\{\hat g_\theta(u),1-\hat g_\theta(u)\}}{\inf_{u\in[-M,M]}\min\{g^\star_\theta(u),1-g^\star_\theta(u)\}}\geq c_{\rm range}$ for some $c_{\rm range}>0$.
\end{assumption}

\begin{theorem}[Nonparametric rate for OSPO]
\label{thm:ospo-rate}
Suppose \cref{asmp:div,asmp:realizability,asmp:basic,asmp:ospo-margin,asm:ospo-regularity,asm:indexentropy,asm:ospo-plugin-range} hold. Let $\hat s=\frac1{\abs{\Dcal}}\sum_{(w,z)\in\Dcal} t_{\hat\theta_{\rm OSPO}}(w)z$ on independent data.
Then there exists a constant $c>0$ such that, for all $\delta\in(0,0.5)$ and all sufficiently large $n$ and $\abs{\Dcal}$, with probability at least $1-\delta$,
\begin{align*}
\rho(\hat s\,h_{\hat\theta_{\rm OSPO}})
&\leq c\prns{(1+\Gamma)\prns{\varrho_n+\sqrt{\frac{\log(1/\delta)}{n}}}
+\Gamma\zeta+\zeta^2},~~~
\varrho_n=\begin{cases}
\sqrt{{v\log(n)}/{n}}, & p=0, \\
n^{-1/(2+p)}, & 0<p<2, \\
n^{-1/4}\sqrt{\log n}, & p=2, \\
n^{-1/(2p)}, & p>2 .
\end{cases}
\end{align*}
\end{theorem}

Crucially, the dependence on the error in $\hat g_\theta$ is \textit{orthogonal}: the contribution is quadratic in its convergence rates. \Cref{thm:local-poly-plugin} gives an instantiation: with $m\asymp n$ 
a local-polynomial plug-in can give $\Gamma\zeta+\zeta^2=\tilde o(n^{-1/2})$, which is always $o(\varrho_n)$ so plug-in error has no first-order effect on OSPO rates.

\subsection{Local-polynomial regression as plug-in for OSPO}\label{sub:local_polynomial_regression_plug_in_error}

Local-polynomial regression is a direct generalization of the kernel regression plug-in in \cref{eq:nwreg}: using $0$-order polynomials (constants) recovers kernel regression, while higher orders can reduce bias when the regression functions being estimated are smoother. This lets us treat kernel regression and its smoother variants in one framework.

Let $\eta(w)=\E[z\mid w]$ and, for any bounded index $t$, define $g_t^\star(u)=\E[\eta(w)\mid t(w)=u]$. Fix an integer order $r\geq0$, let $R_r(v)=(1,v,\dots,v^r)^\top$, and set the scale-free bandwidth $b_t=h\sigma(t)$. The untruncated local-polynomial fit is
\[
\tilde a_t(u)\in\argmin_{a\in\RR^{r+1}}
\sum_{(w_i,z_i)\in\Dcal}
K\!\left(\frac{t(w_i)-u}{b_t}\right)
\left\{z_i-a^\top R_r\!\left(\frac{t(w_i)-u}{b_t}\right)\right\}^2,
\qquad
\tilde g_t(u)=e_1^\top\tilde a_t(u),
\]
where $e_1=(1,0,\dots,0)^\top$. Let $\varepsilon_g=\min\{\Psis(-M),1-\Psis(M),1/4\}$ and fix $\varepsilon_0\in(0,\varepsilon_g)$. The proposed plug-in estimator is the following clipped estimator
\[
\hat g_t(u)=\bigl(\varepsilon_0\vee \tilde g_t(u)\bigr)\wedge(1-\varepsilon_0).
\]

We next show that this estimator satisfies the conditions of \cref{thm:ospo-rate} and establish rates that are generally of negligible order in the final bound. The key condition is that $g_t^\star(u)$ is smooth enough in that it is $s$-H\"older: it has $r$ derivatives and its $r\thh$ derivative is $(s-r)$-H\"older continuous. We then set $r=\lceil s\rceil-1$ in the estimator above. We defer the specific technical conditions to \cref{sec:analysis_of_local_polynomial_regression_as_plug_in_for_ospo} as they involve standard conditions of local-polynomial regression \citep[Ch.~3]{fan1996local} with an added uniformity (because OSPO optimizes over $\theta$ and we have one local polynomial regression per $\theta$).

\begin{theorem}[Local-polynomial plug-in error and stability]\label{thm:local-poly-plugin}
Suppose Assumptions~\ref{asmp:basic}, \ref{asm:local-poly-regression}, and \ref{asm:local-poly-uniform} hold and let
$
A_m(\delta)=v\log(c m/h)+\log(1/\delta),
$
for a large enough constant $c$. There are constants $C_0,C<\infty$ and $c_0>0$ such that, if $mh^3\ge C_0 A_m(\delta)$ and $\zeta_m\leq c_0$, then with probability at least $1-\delta$, the plug-in $\hat g_\theta=\hat g_{t_\theta}$ satisfies \cref{asm:ospo-plugin-range}, and the error terms in \cref{def:ospo-plugin-rate} obey
\[
\zeta\le C\prns{h^s+\sqrt{\frac{A_m(\delta)}{mh}}+\frac{A_m(\delta)}{mh}},\qquad \Gamma\le C\prns{h^{s-1}+\sqrt{\frac{A_m(\delta)}{mh^3}}+\frac{A_m(\delta)}{mh^2}}.
\]
\end{theorem}

If $h\asymp(A_m/m)^{1/(2s+1)}$, then $\zeta_m\asymp(A_m/m)^{s/(2s+1)}$ and $\gamma_m\asymp(A_m/m)^{(s-1)/(2s+1)}$, so
\[
\Gamma\zeta+\zeta^2
\lesssim
\gamma_m\zeta_m+\zeta_m^2
\asymp
\left(\frac{A_m}{m}\right)^{(2s-1)/(2s+1)}.
\]
At the local-constant endpoint, corresponding to kernel regression with only Lipschitz smoothness ($s=1$), this calculation gives the familiar $m^{-1/3}$ contribution. With $m\asymp n$, that can be too slow to be negligible in \cref{thm:ospo-rate} when $p<1$. In contrast, any $s>3/2$ gives a nuisance contribution $o(m^{-1/2})$, hence with $m\asymp n$ it is faster than every critical radius $\varrho_n$ in \cref{thm:ospo-rate}. Also, $\gamma_m\to0$ whenever $s>1$, so the multiplier $(1+\Gamma)$ in \cref{thm:ospo-rate} stays asymptotically constant for these plug-ins.

\section{Semiparametric Policy Optimization via Bipartite Ranking (RSPO)}
\label{sec:rspo}

Both PSPO and OSPO were based on maximizing something akin to a likelihood. We now consider an alternative based on optimizing a ranking objective. \citet{Han1987} proposed a max rank correlation approach for learning an identifiable \emph{linear} index in a family of observation models that includes binary choice, in which case the estimator maximizes the area under the curve (AUC). \citet{Sherman1993} showed the estimated linear coefficients converge at a $n^{-1/2}$ rate and are asymptotically normally distributed. However, in the single-index binary choice setting, the asymptotic variance is generally \emph{larger} than the efficiency bound derived by \citet{Cosslett1983} and achieved by \citet{KleinSpady1993}, meaning the max rank correlation estimator is not efficient and does not behave asymptotically like maximum likelihood with the true link function, as if it were actually known.

Nonetheless, a ranking objective for RLHF is appealing as it involves no additional nuisances (like PSPO) while still attaining a rate (like OSPO). In this section, we study how to leverage it for our alignment problem with \emph{nonlinear} index classes implied by flexible policies, characterizing \emph{policy error} in terms of \emph{general function complexity}. To do so, we first establish an identification margin similar to \cref{lem:margin}, showing that small excess AUC translates to learning the index up to scale. Then we can leverage empirical risk minimization results of $U$-statistics \citep{clemenccon2008ranking} to establish a bound on the excess AUC for nonparametric index classes. Chaining the two together with \cref{thm:policy-stability-metric}, we obtain policy convergence.

\subsection{From AUC maximization to a policy}

Define the AUC of an index as 
\begin{align}\label{eq:auc}
\mathrm{AUC}(t)=&~\PP_{(w_0,z_0),(w_1,z_1)\sim P^2}(t(w_1)> t(w_0)\mid z_0=0,z_1=1)\\&+\frac12\PP_{(w_0,z_0),(w_1,z_1)\sim P^2}(t(w_1)=t(w_0)\mid z_0=0,z_1=1),\notag
\end{align}
where $(w_0,z_0),(w_1,z_1)$ are two independent draws from $P$. This is another way to express the area under the receiver operating characteristic curve given by thresholding $t(w)$ to classify $z$ \citep{menon2016bipartite}. 

Notice this is \emph{distinct} from averaging over single demonstration pairs $(x,y_0,y_1)$ how often the ranking between $h_\theta(x,y_1)$ and $h_\theta(x,y_0)$ matches the label $z$. This amounts to a 0-1 loss for the sign of $t_\theta(w)$ in classifying the label $z$, maximizing which corresponds to max-score estimators (discussed in \cref{sec:max_score_estimators}). 
Importantly here we average over \emph{pairs of pairs}.

In particular, taking the pairs-of-pair point of view further, notice that, since $P$ is symmetric in $y_0,y_1$, if $\Phi^\star$ were also symmetric ($\Phi^\star(u)=1-\Phi^\star(-u)$), then we can rewrite \cref{eq:auc} for our alignment-from-preferences setting \emph{without any conditioning}:
\begin{align}\label{eq:aucsym}
\mathrm{AUC}(t)=&~\PP_{(x_0,y_{00},y_{01},z_0),(x_1,y_{10},y_{11},z_1)\sim P^2}(t(x_1,y_{1,(1-z_1)},y_{1,z_1})> t(x_0,y_{0,z_0},y_{0,1-z_0}))\\&+\frac12\PP_{(x_0,y_{00},y_{01},z_0),(x_1,y_{10},y_{11},z_1)\sim P^2}(t(x_1,y_{1,(1-z_1)},y_{1,z_1})= t(x_0,y_{0,z_0},y_{0,1-z_0})).\notag
\end{align}
Essentially, to get a draw from the conditional on $z=1$, take any observation $(x,y_0,y_1,z)$ and just make sure that the winning demonstration of the two per the observed label $z$ is placed in the $y_1$ position, and vice versa for $z=0$.

We next show that, in our setting, the set of maximizers of $\mathrm{AUC}(t)$ over $t\in\Tcal$ are exactly $\tst$, up to scale.

\begin{proposition}[Maximizers of AUC are $\tst$ up to scale]\label{prop:auc-max}
$\mathrm{AUC}(\tst)\geq\mathrm{AUC}(t)$ for all indices $t:\Wcal\to\RR$, with equality if and only if $t^\star(w)=m(t(w))$ almost surely for a monotone nondecreasing function $m:\RR\to\RR$. Consequently, under \cref{asm:aip-id}, for any $t\in\Tcal$, we have equality if and only if $t^\star(w)=a t(w)$ almost surely for some $a\geq0$.
\end{proposition}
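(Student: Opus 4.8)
The plan is to express $\mathrm{AUC}(t)$ as a symmetrized expectation over pairs of draws, optimize the integrand pointwise over the three possible orderings of $t$-values, and then read off both the inequality and the equality condition; the ``consequently'' clause will then follow immediately from \cref{asm:aip-id}. Throughout I write $\eta(w)=\PP(z=1\mid w)=\Psis(\tst(w))$ (using \cref{prop:sim}) and $p=\PP(z=1)$.

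First I would condition out the labels. Since the two draws $(w,z),(w',z')$ from $P$ are independent, \cref{eq:auc} rewrites as
\[
p(1-p)\,\mathrm{AUC}(t)=\E_{W,W'}\Bigl[\eta(W)\bigl(1-\eta(W')\bigr)\bigl(\mathbb I[t(W)>t(W')]+\tfrac12\mathbb I[t(W)=t(W')]\bigr)\Bigr].
\]
Symmetrizing in $(W,W')$ turns the right-hand side into $\tfrac12\,\E_{W,W'}[\Phi_t(W,W')]$, where for fixed $w,w'$ the integrand $\Phi_t(w,w')$ equals $\eta(w)(1-\eta(w'))$, $\eta(w')(1-\eta(w))$, or the average of the two, according to whether $t(w)>t(w')$, $t(w)<t(w')$, or $t(w)=t(w')$. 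Because $\eta(w)(1-\eta(w'))-\eta(w')(1-\eta(w))=\eta(w)-\eta(w')$, for each fixed pair $\Phi_t(w,w')$ is maximized over the three orderings by any $t$ that orders $w,w'$ the same way $\eta$ does, and strictly so when $\eta(w)\neq\eta(w')$. Since $\Psis$ is non-decreasing, $\tst$ itself orders consistently with $\eta=\Psis\circ\tst$, so $t=\tst$ attains the pointwise maximum of $\Phi_t$ at every pair; integrating gives $\mathrm{AUC}(\tst)\geq\mathrm{AUC}(t)$ for every index $t$.

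For the equality characterization, note that $\mathrm{AUC}(t)=\mathrm{AUC}(\tst)$ holds iff $\Phi_t(W,W')$ equals the pointwise maximum $P\times P$-a.s., i.e.\ iff $\eta(w)>\eta(w')\Rightarrow t(w)>t(w')$ for a.e.\ $(w,w')$; equivalently, $t(w)\le t(w')\Rightarrow\eta(w)\le\eta(w')$ a.e. Relabeling $(w,w')$ then yields, a.e., $t(w)=t(w')\Rightarrow\eta(w)=\eta(w')$ and $t(w)<t(w')\Rightarrow\eta(w)\le\eta(w')$ — that is, $(t(W),\eta(W))$ are comonotone and $\eta(W)$ is a.s.\ a function of $t(W)$. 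The ``if'' direction is the same computation reversed: if $\tst=m\circ t$ with $m$ non-decreasing then $\eta=(\Psis\circ m)\circ t$ is a non-decreasing function of $t$, so $t$ orders consistently with $\eta$ and maximizes AUC. To upgrade the a.e.\ comonotonicity of $(t(W),\eta(W))$ into a genuine representation $\tst=m(t)$ with $m$ non-decreasing, I would (i) use that $\Psis$ is strictly increasing on $[-M,M]$ — the standing non-degeneracy condition (full-support utility shocks) — so that ``$\le$'' and ``$=$'' for $\eta$ coincide with those for $\tst$, and (ii) apply a disintegration/Fubini argument: writing $\nu$ for the law of $(t(W),\tst(W))$ and $\nu_u$ for its conditional law given $t(W)=u$, the a.e.\ implication $t(w)=t(w')\Rightarrow\tst(w)=\tst(w')$ forces $\nu_u=\delta_{m(u)}$ for $\mu$-a.e.\ $u$ (with $\mu$ the law of $t(W)$), and comonotonicity forces $u\mapsto m(u)$ to be non-decreasing off a null set; redefining $m$ monotonically on all of $\R$ changes $m\circ t$ only $P$-a.s. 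Finally, for the ``consequently'' clause: under \cref{asm:aip-id}, $\tst=m\circ t$ with $m$ monotone non-decreasing forces $h_{\thetas}=a\,h_\theta+b(x)$ for some $a\ge 0$, whence differencing gives $\tst=a\,t$; the converse is immediate since $m(u)=au$ is non-decreasing.

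The routine parts are the symmetrization and the pointwise optimization, which give the inequality and the ``if'' direction with little fuss. The main obstacle is the ``only if'' direction — turning the pairwise, a.e.\ ordering-consistency condition into an honest monotone functional relation $\tst=m(t)$. This is precisely where strict monotonicity of $\Psis$ is needed (without it, an index $t$ could permute $\tst$-values inside a flat stretch of $\Psis$ without changing the AUC, breaking the ``iff''), and where the disintegration argument over the level sets of $t$ must be executed carefully, including when the law of $t(W)$ has atoms.
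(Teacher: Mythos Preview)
Your proposal is correct and follows essentially the same approach as the paper: both condition on the pair $(w,w')$, optimize the resulting integrand pointwise over the three possible $t$-orderings to obtain the inequality, read off the equality condition as ordering-consistency with $\eta=\Psis\circ\tst$, and then upgrade this pairwise a.e.\ comonotonicity to a genuine representation $\tst=m(t)$ with $m$ nondecreasing (the paper via a Doob--Dynkin step plus a ``monotone version'' lemma, you via an equivalent disintegration sketch), invoking strict monotonicity of $\Psis$ at exactly the same place. You also correctly flag the upgrade step as the main technical obstacle and the reason strict monotonicity of $\Psis$ is needed.
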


In order to obtain rates, however, we need to show that small AUC suboptimality corresponds to being \emph{nearly} equal to $\tst$ up to scale, which correspondingly controls policy error, based on our results. We will again leverage \cref{asmp:ospo-margin} to quantify near-observational equivalence. We will also need some additional regularity similar but distinct from \cref{asm:ospo-regularity}.

\begin{assumption}[Regularity]\label{asm:rspo-regularity}
$\Psis$ has a derivative bounded away from $0$ and $\infty$ on $[-M,M]$, and $\tst$ is supported on an interval, on which it has a density bounded away from 0 and $\infty$.
\end{assumption}

The latter part of \cref{asm:rspo-regularity} regarding the density of $\tst$ will be used both to establish the next result this section as well as to establish a noise (margin) condition in the next section (for which only the upper bound is needed).

\begin{theorem}[AUC suboptimality of policy-implied index controls policy error]\label{thm:auc-margin}
Suppose \cref{asmp:div,asmp:realizability,asmp:basic,asmp:ospo-margin,asm:rspo-regularity} hold. Then there exist constants $\iota>0,\lambda_0>0$ such that for every $\theta\in\Theta$ satisfying $\mathrm{AUC}(\tst)-\mathrm{AUC}(t_\theta)\leq\iota$, we have
\[
\mathrm{AUC}(\tst)-\mathrm{AUC}(t_\theta)\geq\lambda_0
\inf_{a\geq0}\|\tst-a t_\theta\|^2.
\]
\end{theorem}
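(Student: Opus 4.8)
The plan is to follow the blueprint of \cref{lem:margin}, replacing the quasi-likelihood by the AUC: first put the excess AUC into an explicit bipartite-ranking excess-risk form, then lower bound it by a weighted pairwise discordance functional, then convert that functional into a squared $L_2$ distance using the density regularity of $\tst$ together with \cref{asmp:ospo-margin}, and finally pin down the sign using the strict margin $\AUC(\tst)-\tfrac12>0$.

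\emph{Steps 1--2 (excess-AUC identity and a discordance lower bound).} Write $\eta(w)=\Psis(\tst(w))=\PP(z=1\mid w)$, $p=\E[\eta(w)]$, and for an index $t$ let $s_t(w,w')=\mathbb I[t(w)>t(w')]+\tfrac12\mathbb I[t(w)=t(w')]$. From \cref{eq:auc} and independence, $\AUC(t)=\tfrac1{p(1-p)}\E[s_t(w,w')\,\eta(w)(1-\eta(w'))]$ over two independent draws $w,w'\sim P$; symmetrizing via $s_t(w,w')+s_t(w',w)=1$ gives the bipartite-ranking excess-risk formula \citep{clemenccon2008ranking}
\[
p(1-p)\bigl(\AUC(\tst)-\AUC(t)\bigr)=\tfrac12\,\E\bigl[(s_{\tst}(w,w')-s_t(w,w'))\,(\eta(w)-\eta(w'))\bigr],
\]
with $p\in(0,1)$ (hence a positive prefactor) because $\tst$ has a density on a nondegenerate interval and $\Psis$ is strictly increasing on $[-M,M]$. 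Strict monotonicity of $\Psis$ gives $\operatorname{sign}(\eta(w)-\eta(w'))=\operatorname{sign}(\tst(w)-\tst(w'))$, so $s_{\tst}$ is the Bayes-optimal ranker, the integrand is pointwise nonnegative, and on any pair with $\tst(w)\neq\tst(w')$ that $t_\theta$ orders inconsistently with $\tst$ (ties in $t_\theta$ included) one has $|s_{\tst}-s_t|\ge\tfrac12$ with matching sign, while $|\eta(w)-\eta(w')|\ge c_\Psi|\tst(w)-\tst(w')|$ by the derivative lower bound in \cref{asm:rspo-regularity}. This yields, for a positive constant $c_1$,
\[
\AUC(\tst)-\AUC(t_\theta)\ \ge\ c_1\,D(t_\theta),\quad D(t_\theta):=\E\bigl[|\tst(w)-\tst(w')|\,\mathbb I\{(\tst(w)-\tst(w'))(t_\theta(w)-t_\theta(w'))\le0,\ \tst(w)\neq\tst(w')\}\bigr].
\]

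\emph{Step 3 (discordance controls the squared $L_2$ distance --- the main obstacle).} The core claim is
\[
D(t_\theta)\ \ge\ c_2\,\E\bigl[\Var(\tst(w)\mid t_\theta(w))\bigr]\ =\ c_2\,\inf_{f:\R\to\R}\|\tst-f(t_\theta)\|^2
\]
for a constant $c_2>0$ depending only on $M$ and the density bounds of $\tst$. Writing $\tst=a_\theta t_\theta+e_\theta$ for the population linear fit, a pair is discordant-or-tied exactly when $e_\theta(w)-e_\theta(w')$ has the same sign as $\tst(w)-\tst(w')$ but larger magnitude, so $|\tst(w)-\tst(w')|\le|e_\theta(w)|+|e_\theta(w')|$ there; the nontrivial direction is that these pairs also carry enough \emph{weighted} mass to dominate the conditional variance. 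This is where I would spend the most effort: relate $\Var(\tst\mid t_\theta)$ to expected squared $\tst$-gaps among $(w,w')$ with $t_\theta(w)\approx t_\theta(w')$, and use the density bounds on $\tst$ for the anti-concentration that turns those near-ties into a controlled contribution to $D(t_\theta)$ (the lower bound preventing $\tst$ from concentrating so as to keep $D$ small while $\Var(\tst\mid t_\theta)$ stays large). Given the claim, \cref{asmp:ospo-margin} upgrades $\inf_f\|\tst-f(t_\theta)\|^2\ge c_{\rm MSE}^2\inf_{f\in\mathcal F_{\rm lin}}\|\tst-f(t_\theta)\|^2=c_{\rm MSE}^2\inf_{a\in\R}\|\tst-at_\theta\|^2$, the last equality because action-symmetry of $P$ forces $\E[\tst]=\E[t_\theta]=0$ so the optimal intercept vanishes. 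All constants are uniform over $\Theta$ since \cref{asmp:basic,asmp:ospo-margin,asm:rspo-regularity} are, so no restriction on $\theta$ is needed yet: combining with Steps~1--2, $\AUC(\tst)-\AUC(t_\theta)\ge c_1c_2c_{\rm MSE}^2\,\inf_{a\in\R}\|\tst-at_\theta\|^2$ for all $\theta\in\Theta$.

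\emph{Step 4 (fixing the sign and concluding).} It remains to replace $\inf_{a\in\R}$ by $\inf_{a\ge0}$ under the hypothesis $\AUC(\tst)-\AUC(t_\theta)\le\iota$. If the optimal slope $a_\theta$ were negative, then $t_\theta$ ranks like $e_\theta-\tst$ up to a positive scaling, so $\AUC(t_\theta)=1-\AUC(\tst-e_\theta)$; since reordering $\tst$ into $\tst-e_\theta$ requires $|\tst(w)-\tst(w')|\le|e_\theta(w)|+|e_\theta(w')|$, an event of probability $O(\|e_\theta\|)$ by the (upper) density bound on $\tst$, AUC is Lipschitz at $\tst$ and $\AUC(\tst-e_\theta)\ge\AUC(\tst)-C\|e_\theta\|$, whence $\AUC(\tst)-\AUC(t_\theta)\ge2(\AUC(\tst)-\tfrac12)-C\|e_\theta\|$. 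But Step~3 also gives $\AUC(\tst)-\AUC(t_\theta)\ge c_1c_2c_{\rm MSE}^2\|e_\theta\|^2$, and $\AUC(\tst)>\tfrac12$ strictly (\cref{prop:auc-max} with the density of $\tst$), so these two bounds are incompatible once $\iota$ is below a constant depending only on $\AUC(\tst)-\tfrac12$ and the constants above; hence for such $\iota$ the optimal slope is nonnegative, $\inf_{a\in\R}\|\tst-at_\theta\|=\inf_{a\ge0}\|\tst-at_\theta\|$, and $\lambda_0=c_1c_2c_{\rm MSE}^2$ works. The one genuine difficulty is Step~3 --- turning a scale-insensitive pairwise discordance functional into a quadratic $L_2$ object while keeping the constants uniform over the possibly-rich class $\mathcal T$ --- which is precisely what the density regularity of $\tst$ and \cref{asmp:ospo-margin} are there to supply.
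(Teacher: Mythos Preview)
Your overall architecture matches the paper's: reduce the excess AUC to a pairwise functional, lower bound it by $\inf_f\|\tst-f(t_\theta)\|^2$, invoke \cref{asmp:ospo-margin} to pass to $\inf_{a\in\R}\|\tst-at_\theta\|^2$, and finally use $\AUC(\tst)>\tfrac12$ together with continuity of AUC to rule out $a<0$ when the regret is small. But Step~3, which you correctly flag as ``the main obstacle,'' is left as a sketch, and the route you propose (show your weighted discordance $D(t_\theta)$ dominates $\E[\Var(\tst\mid t_\theta)]$ by anti-concentration on near-ties) is not the one the paper takes and would require substantial additional work. The paper bypasses the discordance functional entirely: it first observes $\AUC(t_\theta)\le\AUC(\bar\eta)$ with $\bar\eta(w)=\E[\eta(w)\mid t_\theta(w)]$, so that
\[
\AUC(\tst)-\AUC(t_\theta)\ \ge\ \AUC(\tst)-\AUC(\bar\eta)\ =\ \frac{1}{4\pi_0\pi_1}\bigl(\E|\eta_1-\eta_0|-\E|\bar\eta_1-\bar\eta_0|\bigr),
\]
a \emph{Gini mean-difference gap}. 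The key lemma is then purely about this gap: if $U=\eta(w)$ has density bounded below by $\underline q$ on its support and $V=\E[U\mid\mathcal F]$, then $\E|U-U'|-\E|V-V'|\ge\underline q\,\E[(U-V)^2]$. This follows from the $2\underline q$-strong convexity of $r\mapsto\E|U-r|$ on the support (its second derivative is twice the density), combined with $\E[U\mid V]=V$ killing the linear term. After this, converting $\E[(\eta-\bar\eta)^2]$ to $\inf_f\|\tst-f(t_\theta)\|^2$ is immediate via the bi-Lipschitz $\Psis$. So the missing idea in your Step~3 is to pass first to the best $\sigma(t_\theta)$-measurable ranker $\bar\eta$ and then exploit strong convexity of the absolute-value functional, rather than attacking the raw discordance.

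Two smaller points on Step~4. First, your Lipschitz claim $\AUC(\tst)-\AUC(\tst-e_\theta)\le C\|e_\theta\|$ is too strong as stated: since $e_\theta$ is not independent of $\tst$, you cannot simply condition on $e_\theta(w),e_\theta(w')$ and use the density of $\tst$. The paper instead proves a H\"older bound $|\AUC(r)-\AUC(s)|\le C_H\|r-s\|^{2/3}$ by splitting on a threshold $u$ (margin bound $\PP(|\tst(w^+)-\tst(w^-)|\le 2u)\lesssim u$ versus Markov on $\{|e_\theta|>u\}$) and optimizing $u$. This weaker exponent is still enough for the contradiction argument, since any vanishing modulus suffices. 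Second, the paper also supplies an explicit lower bound $\AUC(\tst)-\tfrac12\ge\gamma_0$ purely from the density bounds on $\tst$ and $(\Psis)'$, so that the threshold $\iota$ can be made to depend only on the constants in the assumptions; your invocation of \cref{prop:auc-max} gives strict positivity but not a quantitative constant.
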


Note that unlike \cref{lem:margin}, here we can restrict to \emph{nonnegative} scalings. Indeed, AUC is not invariant to flipping the sign -- that will yield the complement of the AUC. Our regularity conditions above ensure that the optimal AUC (attained by $\tst$) is strictly above $0.5$ and that the AUC is $\frac{2}{3}$-H\"older continuous in the score function. Since AUC \emph{is} invariant to positive scalings, this means that for $\iota$ small enough, the covariance of $\tst$ and $t_\theta$ cannot be negative.

\subsection{The RSPO policy learner}

Given \cref{thm:auc-margin}, to align the policy to be quite close to the optimal policy, it suffices to find a policy whose implied index function attains near-optimal AUC. A reasonable approach to doing so is to maximize the empirical AUC, giving rise to the Ranking SPO (RSPO) learner:
\begin{equation}\label{eq:rspo}
\widehat{AUC}(\theta)=\frac1{n_1n_0}\sum_{i,j:z_i=0,z_j=1}\phi(t_\theta(w_j)-t_\theta(w_i)),
\qquad
\hat\theta_{\rm RSPO}\in\argmax_{\theta\in\Theta} \widehat{AUC}(\theta).
\end{equation}
where $\phi(u)=\indic{u>0}+\tfrac12\indic{u=0}$ and $n_1=\sum_{i=1}^n z_i$, $n_0=n-n_1$.

Under the assumption that $\Phi^\star$ is symmetric, we can alternatively use an empirical version of \cref{eq:aucsym}:
\begin{equation}\label{eq:rsposym}
\widetilde{AUC}(\theta)=\frac1{n^2}\sum_{i,j}\phi(t_\theta(x_j,y_{j,1-z_j},y_{j,z_j})-t_\theta(x_i,y_{i,z_i},y_{i,1-z_i})),
\qquad
\tilde\theta_{\rm RSPO}\in\argmax_{\theta\in\Theta} \widetilde{AUC}(\theta).
\end{equation}
Note that while \cref{eq:auc} and \cref{eq:aucsym} are exactly equal under the assumption that $\Phi^\star$ is symmetric, their sample versions (\cref{eq:rspo} and \cref{eq:rsposym}, respectively) are generally distinct. In practice, since the symmetric assumption is quite reasonable, we suggest to use \cref{eq:rsposym} as it essentially gets double the data for free. In theory, in order to easily leverage existing results \citep{clemenccon2008ranking} and since the rates would be unaffected, we analyze \cref{eq:rspo}.

\begin{remark}[Pairs-of-Pairs DPO]\label{remark:popdpo}
Using the symmetric-link AUC objective in \cref{eq:rsposym}, replacing the 0-1 loss $\phi$ with a logistic surrogate loss ($\log(1+e^{-u})$), and instantiating with KL divergence ($f(u)=u\log u$) yields a \emph{pairs-of-pairs} variant of DPO (PoP-DPO):
\[
\hat\theta_{\rm PoPDPO}\in\argmin_{\theta\in\Theta}\sum_{i,j}\log(1+e^{-s_i(\theta)-s_j(\theta)}),~~
s_i(\theta)=\log\prns{\frac{\pi_\theta(y_{i,z_i}\mid x_i)}{\piref(y_{i,z_i}\mid x_i)}
\frac{\piref(y_{i,1-z_i}\mid x_i)}{\pi_\theta(y_{i,1-z_i}\mid x_i)}}.
\]
This is similar to DPO's objective, which can be written as minimizing $\sum_i\log(1+\exp(-s_i(\theta)))$. However, PoP-DPO compares a pair against another pair, rather than only within-pair comparisons.
\end{remark}

\subsection{Policy error guarantees}

Since the AUC objective involves a step function, to appropriately control the complexity we need to consider the sets of pairs that ranked in order or anti-order. To measure the complexity of a class $\mathcal S$ of subsets of $\mathcal W\times\mathcal W$, define $N_\triangle(\epsilon,\mathcal S)$ as the smallest number of subsets $S_1,\dots,S_N\subset\mathcal W\times\mathcal W$ such that for every $S\in\mathcal S$ there exists $S_i$ with $\PP_{w,w'\sim P^2}((w,w')\in S\triangle S_i)\le\epsilon$, where $A\triangle B=(A\setminus B)\cup(B\setminus A)$ is the symmetric difference.
\begin{assumption}[Complexity of ranking sets]\label{asm:symmdiffentropy}
Define $\mathcal R_{\mathcal T}=\{\{(w,w'):t(w)-t(w')\geq0\}:t\in\mathcal T\}$. Suppose there exist constants $p\geq0,A>0$ such that, if $p>0$ we have $\log N_\triangle(\epsilon,\mathcal R_{\mathcal T})\leq A\epsilon^{-p}$ and if $p=0$ we have $N_\triangle(\epsilon,\mathcal R_{\mathcal T})\leq (A/\epsilon)^{v}$ for some constant $v>0$.
\end{assumption}
This again neatly packages two cases: the truly nonparametric case $p>0$ and the ``simple" case $p=0$. In particular, if $\mathcal R_{\mathcal T}$ were a VC class of dimension $v$ then we would have \cref{asm:symmdiffentropy} with $p=0$ and the given $v$ and some universal constant $A$ \citep[theorem 2.6.4]{vdVaartWellner1996}.

We can now establish our main result for RSPO, where we permit flexible indices under \cref{asm:symmdiffentropy}.

\begin{theorem}\label{thm:rspo-rate}
Suppose \cref{asmp:div,asmp:realizability,asmp:basic,asmp:ospo-margin,asm:rspo-regularity,asm:symmdiffentropy} hold. Then for a constant $c>0$, for all $\delta\in(0,0.5)$ and sufficiently large $n$, with probability at least $1-\delta$,
\[
\rho(h_{\hat\theta_{\rm RSPO}})\,\leq\,
c\;\sqrt{\log(1/\delta)}\;\left\{\begin{array}{lr}
\sqrt{{v\log(n)}/{n}} & p=0 \\
n^{-1/(2+2p)} & 0<p<1 \\
n^{-1/4}\sqrt{\log n} & p=1 \\
n^{-1/4p} & p>1
\end{array}\right.
\]
\end{theorem}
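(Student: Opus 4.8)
The plan is to chain together three ingredients already assembled in the paper: (i) the identification margin of \Cref{thm:auc-margin}, which converts small AUC suboptimality into smallness of $\inf_{a\ge0}\|\tst - a t_{\hat\theta_{\rm RSPO}}\|^2$; (ii) an empirical-process bound on the excess AUC of the empirical maximizer $\hat\theta_{\rm RSPO}$, obtained by treating $\widehat{AUC}$ as a $U$-statistic and invoking the localized ERM theory of \citet{clemenccon2008ranking}; and (iii) \Cref{thm:policy-stability-metric} together with the definition of $\rho$, to pass from index error to policy error. Since \Cref{thm:auc-margin} only holds once $\mathrm{AUC}(\tst)-\mathrm{AUC}(t_{\hat\theta})\le\iota$, the first thing to do is to establish the fast-rate excess-AUC bound, then note that for $n$ large the (high-probability) bound is below $\iota$, so the margin inequality kicks in.

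First I would set up the excess-risk problem. Writing the AUC as a bipartite ranking risk, $\mathrm{AUC}(t) = 1 - L(t)$ where $L(t)=\E[\,\phi(t(w_0)-t(w_1))\mid z_0=1,z_1=0\,]$ is a pairwise ranking loss, and $\widehat{AUC}(\theta)=1-\widehat L(t_\theta)$ with $\widehat L$ a two-sample $U$-statistic over the $z=1$ and $z=0$ subsamples. By \Cref{prop:auc-max}, $\tst$ minimizes $L$ over $\Tcal$ (up to scale), so $\mathrm{AUC}(\tst)-\mathrm{AUC}(t_{\hat\theta_{\rm RSPO}}) = L(t_{\hat\theta_{\rm RSPO}})-L(\tst)$ is exactly the excess ranking risk of the empirical minimizer. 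The complexity of the relevant loss class is governed by the class of ranking sets $\mathcal R_{\mathcal T}$ — indeed $\phi(t(w)-t(w'))$ is (up to the tie term) the indicator of $\{(w,w'):t(w)\ge t(w')\}$ — so \Cref{asm:symmdiffentropy} is precisely what controls the $U$-process. To get a \emph{fast} rate I would invoke the noise/margin condition: the latter part of \Cref{asm:rspo-regularity} (density of $\tst$ bounded above on an interval) gives the standard low-noise condition for bipartite ranking (near the decision boundary the conditional ranking probability $\eta(w,w')=\PP(z\succ z'\mid w,w')$ stays away from $1/2$ in a controlled way), which by the Clémençon–Lugosi–Vayatis localization yields, with probability $\ge 1-\delta$,
\[
L(t_{\hat\theta_{\rm RSPO}})-L(\tst)\;\le\; c\,\log(1/\delta)\cdot\varrho_n^2,
\]
where $\varrho_n^2$ is $v\log n/n$ for $p=0$, $n^{-1/(1+p)}$ for $0<p<1$, $n^{-1/2}\log n$ for $p=1$, and $n^{-1/2p}$ for $p>1$ — the square of the rates claimed. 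The exponent transitions are the usual entropy-integral versus global-Dudley tradeoff: for $p<1$ the local Rademacher complexity of the $U$-process scales like $\delta^{1-p}$, balancing against the variance term $\delta/n$ (here $\delta$ is the localization radius) gives $n^{-1/(1+p)}$; for $p\ge1$ the chaining integral diverges and one instead bounds uniformly, yielding the slower rates.

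Next I would invoke \Cref{thm:auc-margin}: for $n$ large enough that $c\,\log(1/\delta)\varrho_n^2\le\iota$ on the high-probability event, we get
\[
\inf_{a\ge0}\|\tst - a\,t_{\hat\theta_{\rm RSPO}}\|^2\;\le\;\lambda_0^{-1}\bigl(\mathrm{AUC}(\tst)-\mathrm{AUC}(t_{\hat\theta_{\rm RSPO}})\bigr)\;\le\;\lambda_0^{-1}c\,\log(1/\delta)\,\varrho_n^2,
\]
hence $\inf_{a\ge0}\|\tst - a\,t_{\hat\theta_{\rm RSPO}}\| = O(\sqrt{\log(1/\delta)}\,\varrho_n)$, which matches the stated rate. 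It then remains to convert this $L_2(P)$-distance-in-indices bound into a bound on $\rho(h_{\hat\theta_{\rm RSPO}})$. Here I would use the coverage part of \Cref{asmp:basic} — the boundedness of $\pi_{\rm ref}(y_0\mid x)\pi_{\rm ref}(y_1\mid x)/P(y_0,y_1\mid x)$ — to pass from the $L_2(P)$ norm on $w$-functions to an $L_2(P_x\times\piref)$ norm; and the identity $t_\theta(x,y_0,y_1)=h_\theta(x,y_1)-h_\theta(x,y_0)$ lets me recover (after the optimal affine adjustment $a>0$, $b(x)$) control of $\E_x\inf_{b(x)}\sum_y\pi_{\rm ref}(y\mid x)(a\,h_{\hat\theta}(x,y)-b(x)-h_{\thetas}(x,y))^2$, i.e.\ $\rho(h_{\hat\theta_{\rm RSPO}})^2$. (Concretely: for fixed $x$, centering $h$ over $\piref(\cdot\mid x)$ and using symmetry of $P$ in $(y_0,y_1)$, the difference $a h_{\hat\theta}(x,y_1)-a h_{\hat\theta}(x,y_0) - (h_{\thetas}(x,y_1)-h_{\thetas}(x,y_0))$ is exactly $a t_{\hat\theta}(w) - t_{\thetas}(w)$ up to the $x$-dependent centering, and the variance of a difference of two independent-ish draws lower-bounds the $\piref$-variance of the summand by a coverage constant.) Finally, \Cref{thm:policy-stability-metric} gives $\|\pi_{\kappa,\hat\theta_{\rm RSPO}}-\pis\|_1\le C\rho(h_{\hat\theta_{\rm RSPO}})$, but since the theorem statement is phrased directly in terms of $\rho$, this last step is not even needed for the statement as written — I would just present the $\rho$ bound.

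The main obstacle I anticipate is the fast-rate $U$-process bound in step two: one must verify that the low-noise condition implied by the bounded density of $\tst$ in \Cref{asm:rspo-regularity} really does give the variance-to-risk linkage $\Var(\phi(t(w)-t(w'))-\phi(\tst(w)-\tst(w'))) \lesssim (L(t)-L(\tst))$ needed to localize, and that the entropy hypothesis \Cref{asm:symmdiffentropy} on $\mathcal R_{\mathcal T}$ (a class of \emph{subsets of $\mathcal W\times\mathcal W$} under symmetric-difference pseudometric) can be fed directly into the chaining for the degenerate part of the Hoeffding/Hájek decomposition of the $U$-statistic. The non-i.i.d.\ conditioning on $z$ (two subsamples of random sizes $n_1,n_0$) is a minor technical nuisance handled by conditioning on the labels and using $n_1,n_0 \asymp n$ with high probability (since $\Var(\E[z\mid w])\neq0$-type non-degeneracy, or just $\E z\in(0,1)$, follows from \Cref{asmp:div,asmp:ospo-margin}). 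Everything else — the margin conversion and the coverage-based transfer to $\rho$ — is routine given the results already proved.
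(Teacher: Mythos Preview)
Your proposal is correct and follows essentially the same route as the paper: bound the excess AUC of $\hat\theta_{\rm RSPO}$ via the localized $U$-process theory of \citet{clemenccon2008ranking} under the symmetric-difference entropy condition, use the density assumption on $\tst$ in \Cref{asm:rspo-regularity} to obtain the Tsybakov-style margin (noise) condition with exponent $\alpha=1$ that yields fast rates, then apply \Cref{thm:auc-margin} to convert excess AUC to $\inf_{a\ge0}\|\tst-a t_{\hat\theta}\|^2$, and finally invoke the coverage part of \Cref{asmp:basic} to pass to $\rho$. The two technical obstacles you flag --- that the bounded-density margin condition implies the variance-to-risk linkage (CLV's Assumption~4), and that the symmetric-difference entropy of $\mathcal R_{\mathcal T}$ controls the degenerate $U$-process complexity terms $Z_\varepsilon,U_\varepsilon,M$ --- are exactly the two auxiliary lemmas the paper develops to instantiate CLV's Theorem~5, so your anticipated difficulties are real but surmountable in precisely the way you suggest.
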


RSPO avoids nuisance estimation but can suffer slower rates. While \cref{asm:indexentropy,asm:symmdiffentropy} are generally incomparable, the former implies the latter under certain conditions (\cref{lemma:entropy}): with bounded densities, \cref{asm:indexentropy} with $p=2$ implies a $\tilde O(n^{-1/4})$ rate for OSPO but only a $\tilde O(n^{-1/12})$ rate for RSPO.

\subsubsection{Rate comparison}

Importantly, we are able to attain policy error rates for RSPO without having to estimate any nuisance parameters as we did in OSPO, while PSPO had no nuisances but also no rates. The rates we obtain for RSPO and OSPO are generally incomparable since \cref{asm:indexentropy,asm:symmdiffentropy} are generally incomparable. The next lemma establishes on situation in which the former does imply the latter so that we can compare rates implied by the former.

\begin{lemma}\label{lemma:entropy}
Suppose for some $c_{\rm margin}>0$ and $\alpha\in(0,1]$
we have $\PP_{w,w'\sim P^2}(\abs{t(w)-t(w')}\leq u)\leq c_{\rm margin} u^\alpha$ for each $t\in\Tcal$. Then for each $q\in[1,\infty]$, for a constant $c_q>0$, we have $N_\triangle(\epsilon,\mathcal R_{\mathcal T})\leq N_q((\epsilon/c_q)^{1/\beta},\mathcal T)$ with $\beta=\frac{\alpha q}{\alpha+q}$ for $q<\infty$ and otherwise $\beta=\alpha$.
\end{lemma}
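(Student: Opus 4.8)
The plan is to construct a symmetric-difference cover of $\mathcal R_{\mathcal T}$ \emph{directly} from an $L_q$-cover of $\mathcal T$. Fix $q$ and let $t_1,\dots,t_N$ be functions achieving $\sup_{t\in\mathcal T}\min_i\|t-t_i\|_q\le\eta$ for a radius $\eta$ to be chosen at the end, so $N=N_q(\eta,\mathcal T)$. Write $R_t=\{(w,w'):t(w)-t(w')\ge0\}$ and take as the candidate cover of $\mathcal R_{\mathcal T}$ the sets $S_i=R_{t_i}$. (The subsets used in the definition of $N_\triangle$ need not themselves be ranking sets, but here it is convenient that they are.) It then suffices to show that for every $t\in\mathcal T$, its nearest cover element $t_i$ satisfies $\PP_{w,w'\sim P^2}\big((w,w')\in R_t\triangle S_i\big)\le c_q\,\eta^{\beta}$, since setting $c_q\eta^\beta=\epsilon$, i.e.\ $\eta=(\epsilon/c_q)^{1/\beta}$, yields $N_\triangle(\epsilon,\mathcal R_{\mathcal T})\le N_q((\epsilon/c_q)^{1/\beta},\mathcal T)$.

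The geometric heart is a pointwise sign-agreement estimate. Fix $t\in\mathcal T$, let $t_i$ be its cover element, and put $g=t-t_i$ with $\|g\|_q\le\eta$. For a pair $(w,w')$ write $b=t(w)-t(w')$, $a=t_i(w)-t_i(w')$, $d=g(w)-g(w')=b-a$. If $(w,w')\in R_t\triangle S_i$ then exactly one of $b\ge0$, $a\ge0$ holds; a two-line case check (handling the boundary cases $a=0$ or $b=0$) shows that in either case $|b|\le|d|\le|g(w)|+|g(w')|$. Hence
\[
R_t\triangle S_i\ \subseteq\ \bigl\{(w,w'):\ |t(w)-t(w')|\ \le\ |g(w)|+|g(w')|\bigr\}.
\]

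It remains to bound the $P^2$-measure of the right-hand set. When $q=\infty$, $|g(w)|+|g(w')|\le2\eta$ almost surely, so the set lies in $\{|t(w)-t(w')|\le2\eta\}$, whose measure is at most $c_{\rm margin}(2\eta)^\alpha$ by the margin hypothesis; this is the case $\beta=\alpha$. When $q<\infty$, split on a level $u>0$: on $\{|g(w)|+|g(w')|\le u\}$ the set is contained in $\{|t(w)-t(w')|\le u\}$, of measure $\le c_{\rm margin}u^\alpha$; on the complement, a union bound plus Markov applied to $|g|^q$, using $\E_{w\sim P}|g(w)|^q=\|g\|_q^q\le\eta^q$, gives measure $\le 2(2/u)^q\eta^q$. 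Choosing $u\asymp\eta^{q/(\alpha+q)}$ equalizes the two terms and produces the bound $c_q\,\eta^{\alpha q/(\alpha+q)}=c_q\eta^{\beta}$, with $c_q$ depending only on $c_{\rm margin},\alpha,q$.

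The argument is essentially routine; the only genuinely delicate point is the case analysis behind the containment display — in particular getting the ties ($a=0$ or $b=0$) right, which is also why the final $c_q$ absorbs the harmless constant factors $2^\alpha$ and $2^{q+1}$. The other thing to verify is a consistency check rather than an obstacle: the norm $\|\cdot\|_q$ appearing in $N_q(\cdot,\mathcal T)$ is the $w$-space $L_q(P)$ norm of \cref{def:rho}'s companion \emph{Norms} definition, which is exactly the norm under which $\E_{w\sim P}|g(w)|^q=\|g\|_q^q$ in the Markov step, so the two uses match.
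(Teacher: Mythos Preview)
Your proposal is correct and follows essentially the same route as the paper: both argue that on $R_t\triangle R_{t_i}$ one has $|t(w)-t(w')|\le|\,(t-t_i)(w)-(t-t_i)(w')\,|$, then split on a threshold $u$ (or $\lambda$), apply the margin bound on one side and Markov on the other, and optimize. The only cosmetic differences are that the paper keeps $|\delta(w)-\delta(w')|$ and bounds its $q$th moment via $|a-b|^q\le 2^{q-1}(|a|^q+|b|^q)$, whereas you first pass to $|g(w)|+|g(w')|$ and then use a union bound plus Markov on each term; and you treat $q=\infty$ separately while the paper leaves that implicit.
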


If, for example, indices have bounded densities then the margin condition above would hold with $\alpha=1$. This would then mean that \cref{asm:indexentropy} with $p>0$ implies \cref{asm:symmdiffentropy} with larger exponent $3p/2$, and \cref{asm:indexentropy} with $p=0$ and constant $v$ implies \cref{asm:symmdiffentropy} with $p=0$ and the larger constant $3v/2$. For example, if $p=2$, OSPO would attain a rate of $\tilde O(n^{-1/4})$ compared to a much worse rate of $\tilde O(n^{-1/12})$ for RSPO.

\section{Empirical Illustration of Link-Robustness}
\label{sec:empirical}

We next demonstrate how SPO performs when the link is unknown and varies. We focus on settings with known rewards so we can vary only the preference-reward link and evaluate using the same reward, isolating the effect of the link. We use KL divergence ($f(u)=u\log u$) throughout. Note that while our theory analyzes exact empirical optimization, here we use first-order implementations as in \cref{sec:optimization} to apply to neural policies: PSPO alternates gradient steps in $\theta$ with PAVA updates to $\Psi$, OSPO uses gradient steps with kernel-regression plug-ins (leave-one-out in the synthetic experiment and a rolling memory buffer in the Qwen3 experiment), and RSPO is implemented as PoP-DPO (\cref{remark:popdpo}).
Replication code is provided at \url{https://github.com/causalml/spo/}.

\begin{figure}[t!]
\centering%
\begin{subfigure}[t]{0.32\textwidth}
\centering
\includegraphics[width=\linewidth]{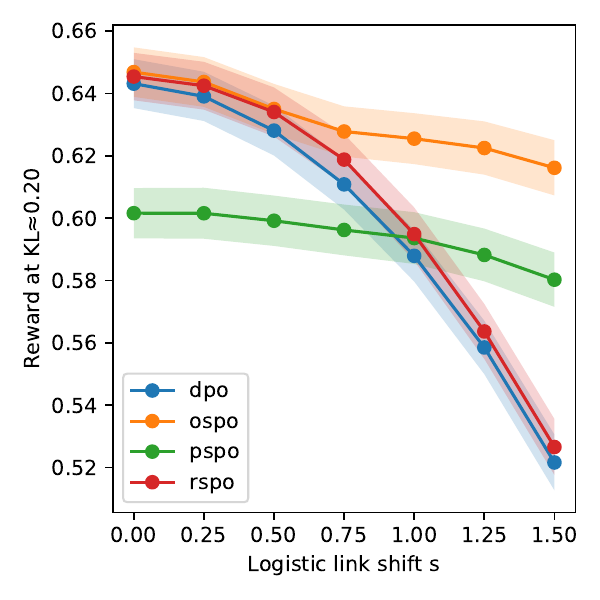}
\caption{Reward at a given KL budget.}
\label{fig:threepanel:a}
\end{subfigure}\hfill%
\begin{subfigure}[t]{0.32\textwidth}
\centering
\includegraphics[width=\linewidth]{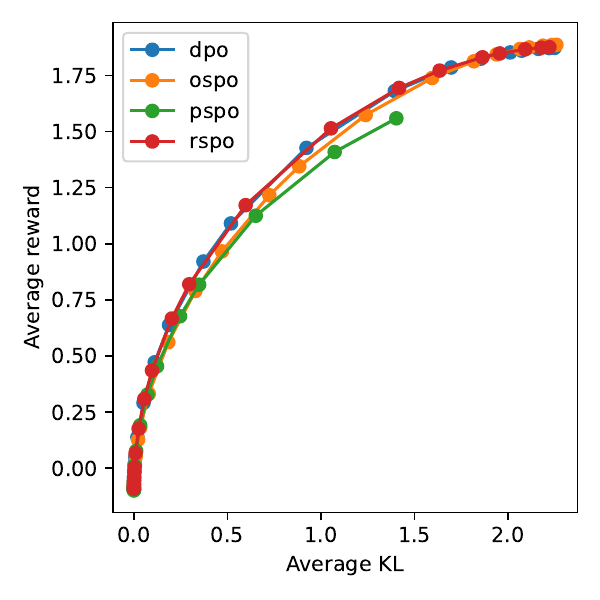}
\caption{Achievable reward-KL, $s=0$.}
\label{fig:threepanel:b}
\end{subfigure}\hfill%
\begin{subfigure}[t]{0.32\textwidth}
\centering
\includegraphics[width=\linewidth]{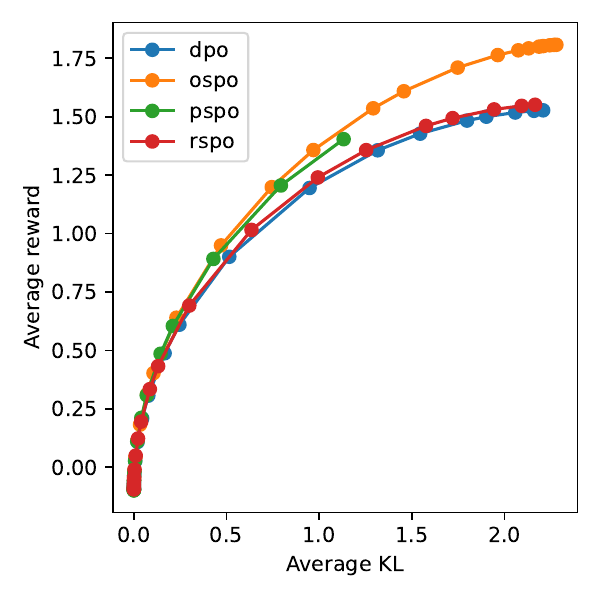}
\caption{Achievable reward-KL, $s=1.5$.}
\label{fig:threepanel:c}
\end{subfigure}%
\caption{Synthetic preference optimization experiment.}
\label{fig:synth_exp}
\end{figure}

\subsection{Synthetic preference optimization experiment}\label{sub:synthetic}

We simulate preferences with 20-dimensional Gaussian contexts $x$ and $|\Ycal|=10$ actions, using a two-hidden-layer MLP policy class and a uniform reference policy. We fix $\theta^\star$ and set $r^\star(x,y)=10 \log(\pi_{\theta^\star}(y\mid x)/\pi_{\mathrm{ref}}(y\mid x))$ and generate preferences via a mixture of shifted logistic links $0.5\,\sigma(4(u-s))+0.5\,\sigma(4(u+s))$, where $s$ controls misspecification ($s=0$ is logistic). We train on $n=1000$ preference pairs and replicate 1000 times. Additional detail is given in \cref{apx:synth_exp}.

\Cref{fig:synth_exp} shows the average reward (and 90\%-confidence intervals over runs) at KL budget $\kappa=0.2$ as $s\in[0,1.5]$ varies and the average reward-divergence curves for $\pi_{\beta,\hat\theta}$ as $\beta$ varies, at $s=0$ and $s=1.5$. All methods perform well when logistic is well-specified ($s=0$). DPO performance deteriorates as $s$ grows, and so does RSPO due to its logistic surrogate loss, albeit providing some robustness at low misspecification. PSPO performs less well, consistent with its rough loss surface, but is robust to misspecified link, with performance roughly flat as $s$ varies. There is some light deterioration due to the lower SNR given the link is very flat around 0 when $s$ is large (rather than due to misspecification). OSPO consistently performs the best across all $s$, matching the link-agnostic efficiency intuition behind OSPO in this controlled setting.

\subsection{Aligning Qwen3 on UltraFeedback}\label{sub:qwen}

We next run a controlled LLM experiment with Qwen3 \citep{yang2025qwen3}, UltraFeedback \citep{cuiultrafeedback}, and Skywork-Reward-V2 \citep{liu2025skywork}. We take as our reference model Qwen3-0.6B after light supervised fine-tuning on UltraFeedback responses to ensure format adherence, and treat Skywork-Reward-V2-Qwen3-1.7B as a proxy reward function. For $n=5000$ prompts, we generate two responses from the reference policy, score them with the reward model, and take the difference, normalizing so differences have variance 1. We generate preference observations using the same shifted-logistic family with varying shift. Thus the experiment tests link misspecification and optimization in an LLM pipeline, not the validity of reward-model feedback itself or human-alignment quality. We run each method on preference observations alone (no reward) to train a rank-16 LoRA. We then vary $\beta$, use the approximate token-level generation described in \cref{sec:realizing-calibrated-policy} to generate responses from the aligned and scaled autoregressive policy, and score using the reward model. Additional detail is given in \cref{apx:llm_exp}.

\begin{figure}[t!]
\centering%
\includegraphics[width=0.6\linewidth]{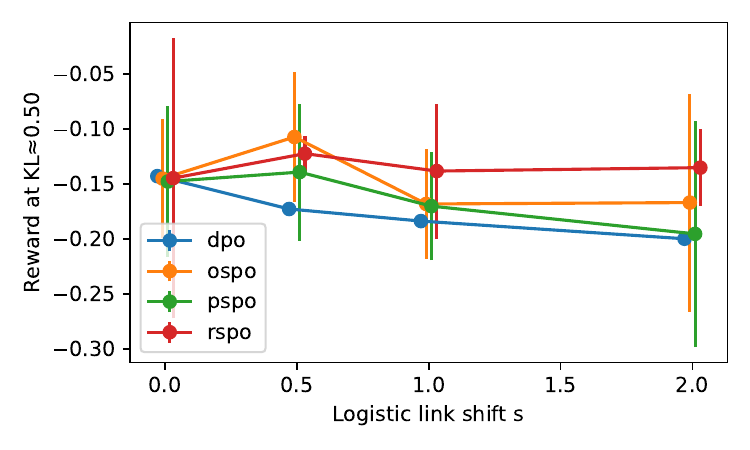}%
\caption{Aligning Qwen3 on UltraFeedback.}%
\label{fig:llm_exp}%
\end{figure}

\Cref{fig:llm_exp} depicts the resulting average reward over 4 replications at $\kappa=0.5$ (mean rewards, with 90\% confidence intervals for the paired differences from DPO shifted to absolute scale). The experiment is small, so the figure should be read qualitatively. It nevertheless mirrors the synthetic experiment results: methods are similar under a well-specified logistic link, DPO deteriorates as the link becomes more misspecified, and SPO methods remain more robust to non-logistic links. In particular, in this more large-scale setting, OSPO and PSPO are more unstable noisy, consistent with their more intricate training procedures. RSPO (as PoP-DPO) is the most stable SPO variant here, using extra pairwise rank information while retaining a familiar logistic surrogate that is easy to optimize.

\section{Implementation of Learners Using First-Order Optimization}
\label{sec:optimization}

The learners in \cref{sec:pspo,sec:ospo,sec:rspo} are defined as fully optimizing a sample-average objective over policies. To use them with neural policies, including LLMs, we need differentiable objectives or alternating updates that can be optimized with standard first-order methods and minibatches. This section provides suggested recipes for practical implementations amenable to such first-order optimization, which we also use in our proof-of-concept experiments. For each method, after fitting $\hat\theta$, we empirically calibrate the divergence to the given constraint as in \cref{sec:empirical-calib}.

The theory should be read as analyzing the statistical -- rather than computational -- behavior of the ideal procedures that fully optimize the empirical objective. The practical algorithms below replace these ideal optimizers with first-order algorithms and, in a few places, smooth or clipped approximations. These choices make the methods implementable in practice and in particular with neural networks.

\subsection{PSPO}\label{sec:alg-pspo}

For fixed $\theta$, the inner optimization over monotone links $\Psi$ in \cref{eq:pspo-obj} is a one-dimensional isotonic Bernoulli likelihood problem. Sorting the observations by $t_\theta(w_i)$, the maximizing $\Psi$ is the isotonic fit of $z_i$ on this sorted index and can be computed by the Pool Adjacent Violators Algorithm (PAVA). This allows us to efficiently compute the objective function in practice. This suggests a practical alternating scheme, akin to the Isotron algorithm \citep{KalaiSastry2009,KakadeKalaiKanade2011}: compute current indices $t_\theta(w_i)$, fit $\Psi$ by PAVA, hold this fitted link fixed (with interpolation and clipping for numerical stability), take several minibatch gradient steps in $\theta$ on the resulting cross-entropy loss, and refresh the isotonic fit periodically.

The PAVA step is exact for the current $\theta$, so the method is best viewed as a block-coordinate approximation to empirical PSPO. If the fitted link were refreshed after every infinitesimal update and the outer problem were globally optimized, it would recover the fully optimized empirical profiled objective analyzed by \cref{thm:pspo}. The interpolation or smoothing, clipping, finite refresh schedule, and finite SGD horizon are of course a departure from the full empirical optimization analyzed in the theory. Differentiating through PAVA or replacing $\Psi$ by a monotone neural network with shape constraints are alternative implementations of the same profiled-link idea.

In the synthetic experiments in (\cref{sub:synthetic}), we use the clipped PAVA fit with linear interpolation and no extra logistic smoothing. In the Qwen3 experiments (\cref{sub:qwen}), where a piecewise-constant link gives little useful gradient, we use soft nearest-neighbor interpolation over the PAVA knots, mix in a 5\% logistic link, clip probabilities, and refresh the PAVA fit every 50 gradient steps.

\subsection{OSPO}\label{sec:ospo_optimization}

For first-order optimization, the issue is how to evaluate and update the plug-in objective in \cref{eq:ospo-obj} as $\theta$ changes. At a current iterate, we compute the indices, fit $\hat g_\theta$ by one-dimensional regression of $z$ on $t_\theta(w)$, and take minibatch gradient steps in $\theta$ using the resulting plug-in loss. Kernel regression is the default recommendation: standardize the stored index values, use a bandwidth on this normalized scale, and clip the estimated probabilities away from $0$ and $1$ to keep the log loss finite. The kernel weights themselves need not be clipped.

When the same observations are used for regression and policy updates, we recommend leave-one-out estimates for the current point. For LLM minibatches, a rolling memory buffer of recent $(t_\theta(w),z)$ values is a practical substitute for all-data leave-one-out. In automatic differentiation, we detach the stored index values but allow gradients through the current query index in the kernel regression; this keeps the regression fit from becoming a second model to optimize through while preserving useful gradients for $\theta$. A small mixture with a smooth baseline link can also be used as numerical regularization when the kernel fit gives very flat gradients.

In the synthetic experiments (\cref{sub:synthetic}), OSPO uses the full training set as the kernel memory, masks each observation out of its own regression, uses a Gaussian kernel with bandwidth scaled as $n^{-1/5}$ after standardizing by the index standard deviation, detaches the stored values between refreshes, and clips $\hat g_\theta$ to $[10^{-6},1-10^{-6}]$ without any sigmoid mixture. In the Qwen3 experiments (\cref{sub:qwen}), OSPO uses a Gaussian kernel with bandwidth $0.5$ over a FIFO memory buffer of at most 4096 detached past index-label pairs, normalizes by the memory standard deviation, clips $\hat g_\theta$ to $[10^{-4},1-10^{-4}]$, mixes in 5\% of the logistic sigmoid $\sigma(t_\theta)$, and updates the memory after each minibatch step.

Finally, the OSPO loss is invariant to flipping the sign of the learned index. In implementation we align the sign after training using the empirical covariance between $t_\theta(w)$ and $z$.

\subsection{RSPO}\label{sec:rspo_optimization}

The empirical objective in \cref{eq:rspo} (or, preferably, \cref{eq:rsposym}) is hard to optimize since $\phi$ is a step function. A practical implementation replaces $\phi$ by a nonincreasing margin surrogate $\ell:\RR\to\RR_+$, such as the logistic loss $\ell(u)=\log(1+\exp(-u))$, the exponential loss, or a squared hinge loss. For the conditional AUC objective in \cref{eq:rspo}, this gives
\[
\hat L_{\ell,\rm rank}(\theta)
=\frac1{n_1n_0}\sum_{i:z_i=0,j:z_j=1}
\ell\prns{t_\theta(w_j)-t_\theta(w_i)} ,
\]
which we minimize over $\theta$. For the symmetric objective in \cref{eq:rsposym}, write
\[
a_i(\theta)=t_\theta(x_i,y_{i,1-z_i},y_{i,z_i}),
\]
the policy-implied index with the preferred response placed second. Since our indices are antisymmetric in the two responses, the corresponding surrogate objective is
\[
\tilde L_{\ell,\rm sym}(\theta)
=\frac1{n^2}\sum_{i,j}
\ell\prns{a_i(\theta)+a_j(\theta)} .
\]
With KL divergence and logistic $\ell$, this is exactly the PoP-DPO objective in \cref{remark:popdpo}. It is differentiable and can be optimized by standard stochastic first-order methods. In minibatch training, we take a batch of $B$ preference observations and form all $B^2$ within-batch terms; this estimates the full $U$-statistic gradient while reusing each sampled example many times. In all experiments (across \cref{sub:synthetic,sub:qwen}), we use this symmetric PoP-DPO implementation directly with minibatch gradients.

Using a surrogate changes the object being optimized, so the exact RSPO theorem in the main text should not be read as a theorem for PoP-DPO. The theorem analyzes empirical AUC maximization with the step loss. Surrogate losses are justified by classification-calibration results: \citet{bartlett2006convexity} show that classification-calibrated losses control 0--1 classification regret via a calibration function, and for convex margin losses calibration is equivalent to differentiability at zero with negative derivative. The logistic loss satisfies this condition. For bipartite ranking specifically, \citet{agarwal2014surrogate} show that strongly proper composite losses control AUC/ranking regret; logistic, exponential, squared, and squared-hinge losses are examples, with logistic yielding an explicit square-root ranking-regret bound in terms of logistic surrogate regret. These results can be chained with ours by using surrogate regret to bound AUC regret, which per our \cref{thm:auc-margin} controls policy error. This chain generally incurs a calibration penalty: using the generic strongly-proper-loss bound, surrogate excess $\epsilon$ gives AUC excess of order $\sqrt{\epsilon}$, and hence policy-index error of order $\epsilon^{1/4}$ after \cref{thm:auc-margin}. This is a conservative worst-case translation, however; low-noise conditions can improve the calibration exponent \citep{agarwal2014surrogate}. The exact-AUC RSPO theorem avoids this surrogate-calibration penalty but is not directly optimized by gradient methods.

\section{Empirical Recalibration to the Divergence Budget}
\label{sec:empirical-calib}

After training $\hat\theta$, we must choose a temperature $\beta$ that enforces the divergence budget.
So far we wrapped this into the population-calibrated policy $\pi_{\kappa,\theta}$, but this cannot be done in practice. We now discuss how to calibrate the divergence using a validation sample of contexts $x$, and we show how this leads to an additive policy error that we can simply add on top of our previous bounds on the error in $\pi_{\kappa,\hat\theta}$ for a learner $\hat\theta$.

The procedure solves a one-dimensional root-finding problem on held-out contexts. We show a uniform law of large numbers in the temperature, a root-$m$ calibration rate, and the ensuing policy error guarantee. While the results require a separate validation set, so that we can treat $\hat\theta$ as a fixed input, in practice we may calibrate on the same data (using only the contexts).

Fix a compact interval $\mathcal B=[\underline\beta,\overline\beta]\subset(0,\infty)$.
Let $x_1,\dots,x_m\stackrel{\mathrm{iid}}{\sim}P_x$ and define
\[
\hat\Phi_m(\beta,\theta)=\frac1m\sum_{i=1}^m D_f\!\bigl(\pi_{\beta,\theta}(\cdot\mid x_i)\,\big\|\,\pi_{\mathrm{ref}}(\cdot\mid x_i)\bigr),
\qquad
\hat\beta_\theta\in\argmin_{\beta\in\mathcal B}\bigl|\hat\Phi_m(\beta,\theta)-\kappa\bigr|.
\]

\begin{theorem}[Empirical $\beta$-calibration]\label{thm:empirical-beta-primitive}
Suppose the conditions of \cref{thm:policy-stability-metric} hold.
Then for some constant $c>0$, we have that for any $\theta\in\Theta$ and $\delta\in(0,0.5)$, with probability at least $1-\delta$,
\[
\|\pi_{\hat\beta_\theta,\theta}-\pi_{\kappa,\theta}\|_1\leq \frac{c}{v}\sqrt{\frac{\log(m/\delta)}{m}},
\]
provided $v=\mathbb E_x\operatorname{Var}_{y\sim\pi_{\mathrm{ref}}(\cdot\mid x)}(h_\theta(x,y))>0$ (which implies $\beta_{\kappa,\theta}$ is positive and unique) and $\beta_{\kappa,\theta}\in\Bcal$.
\end{theorem}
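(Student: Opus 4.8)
The plan is to reduce the policy-error bound to the one-dimensional calibration error $|\hat\beta_\theta-\beta_{\kappa,\theta}|$ via Lipschitz continuity of $\beta\mapsto\pi_{\beta,h_\theta}$, and then control that scalar error through a uniform law of large numbers for $\hat\Phi_m$ combined with a curvature (strict monotonicity) bound on $\beta\mapsto\Phi(\beta,\theta):=\mathbb E_x D_f(\pi_{\beta,h_\theta}(\cdot\mid x)\|\piref(\cdot\mid x))$. Concretely, I would first show that on the compact interval $\mathcal B$, the map $\beta\mapsto\pi_{\beta,h_\theta}(y\mid x)$ is Lipschitz in $\|\cdot\|_1$ with a constant depending only on the bounds in \cref{asmp:basic} and on $\underline\beta$: differentiating the formula $\pi_{\beta,h_\theta}(y\mid x)=\piref(y\mid x)(f')^{-1}(\beta^{-1}(h_\theta(x,y)-\lambda_{\beta,h_\theta}(x)))$ in $\beta$, using the inverse-function theorem on $(f')^{-1}$ (whose derivative is $1/f''$, bounded via the $\log f''$ condition), and bounding $\partial_\beta\lambda_{\beta,h_\theta}$ by implicit differentiation of the normalization $\sum_y\pi_{\beta,h_\theta}(y\mid x)=1$. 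This gives $\|\pi_{\hat\beta_\theta,\theta}-\pi_{\kappa,\theta}\|_1\le L\,|\hat\beta_\theta-\beta_{\kappa,\theta}|$ for an explicit $L$.

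Second, I would establish the curvature lower bound: $\tfrac{d}{d\beta}\Phi(\beta,\theta)$ is negative and bounded away from $0$ uniformly on $\mathcal B$, with the bound proportional to $v=\mathbb E_x\operatorname{Var}_{y\sim\piref(\cdot\mid x)}(h_\theta(x,y))$. The natural route is to note that $\pi_{\beta,h_\theta}$ is the solution of the $\beta$-penalized problem $\max_\pi \mathbb E_x\sum_y\pi(y\mid x)h_\theta(x,y)-\beta\,\mathbb E_xD_f(\pi\|\piref)$, so by the envelope theorem $\tfrac{d}{d\beta}[\text{value}]=-\mathbb E_xD_f(\pi_{\beta,h_\theta}\|\piref)=-\Phi(\beta,\theta)$; differentiating once more and using strong convexity of $D_f$ (from $f''>0$, lower-bounded on the relevant range via \cref{asmp:basic}) against the spread of $h_\theta$ gives $\partial_\beta\Phi(\beta,\theta)\le -c_0 v/\beta^2$ for a constant $c_0$, hence $|\partial_\beta\Phi|\ge c_0 v/\overline\beta^{\,2}$ on $\mathcal B$. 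This also yields uniqueness of $\beta_{\kappa,\theta}$ when $v>0$, and positivity follows from \cref{asmp:div}-type reasoning as in \cref{thm:closed-form}. Therefore $|\hat\beta_\theta-\beta_{\kappa,\theta}|\le (c_0 v/\overline\beta^{\,2})^{-1}\,|\Phi(\hat\beta_\theta,\theta)-\kappa|\le (c_0 v/\overline\beta^{\,2})^{-1}\,(|\Phi(\hat\beta_\theta,\theta)-\hat\Phi_m(\hat\beta_\theta,\theta)|+|\hat\Phi_m(\hat\beta_\theta,\theta)-\kappa|)$, and since $\hat\beta_\theta$ is the empirical minimizer and $\beta_{\kappa,\theta}\in\mathcal B$ makes $\Phi=\kappa$, the second term is at most $|\hat\Phi_m(\beta_{\kappa,\theta},\theta)-\kappa|$, so both terms are dominated by $\sup_{\beta\in\mathcal B}|\hat\Phi_m(\beta,\theta)-\Phi(\beta,\theta)|$.

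Third, I would bound this supremum. For fixed $\theta$, $\hat\Phi_m(\beta,\theta)$ is an average of i.i.d.\ terms $D_f(\pi_{\beta,h_\theta}(\cdot\mid x_i)\|\piref(\cdot\mid x_i))$, which are uniformly bounded on $\mathcal X\times\mathcal B$ (the integrand is a continuous function of the bounded quantity $\beta^{-1}(h_\theta-\lambda_{\beta,h_\theta})$, using $\piref(y\mid x)\in\{0\}\cup[\epsilon,1]$ to keep ratios controlled and $f$ finite there). Since the class is indexed only by the scalar $\beta$ ranging over a compact interval and $\beta\mapsto D_f(\pi_{\beta,h_\theta}(\cdot\mid x)\|\piref(\cdot\mid x))$ is Lipschitz in $\beta$ (same computation as Step 1), a standard bracketing/discretization argument gives $\sup_{\beta\in\mathcal B}|\hat\Phi_m(\beta,\theta)-\Phi(\beta,\theta)|\lesssim\sqrt{\log(m/\delta)/m}$ with probability $1-\delta$ — discretize $\mathcal B$ at scale $1/m$, apply Hoeffding plus a union bound over $O(m)$ points, and pay a $1/m$ Lipschitz correction. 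Chaining Steps 1–3 yields $\|\pi_{\hat\beta_\theta,\theta}-\pi_{\kappa,\theta}\|_1\le L\cdot(\overline\beta^{\,2}/c_0 v)\cdot O(\sqrt{\log(m/\delta)/m})=\tfrac{c}{v}\sqrt{\log(m/\delta)/m}$ after absorbing $L$, $\overline\beta$, $c_0$ into $c$.

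\textbf{Main obstacle.} The delicate step is Step 2: getting the curvature constant to come out proportional to $v$ (and not, say, $v^2$ or something degenerate) requires carefully tracking how strong convexity of $D_f$ in the policy simplex translates through the map $\theta\mapsto\pi_{\beta,h_\theta}$ into a second-derivative bound in $\beta$, including handling the implicit normalizer $\lambda_{\beta,h_\theta}(x)$ and the boundary case where $\piref(y\mid x)=0$ for some $y$. A clean way to sidestep much of the messy implicit differentiation is to work directly from the variational (envelope) characterization and bound $\partial_\beta\Phi$ by the $x$-conditional variance of $h_\theta$ under $\pi_{\beta,h_\theta}$, then lower-bound that conditional variance by a constant multiple of the variance under $\piref$ using that $\pi_{\beta,h_\theta}/\piref=(f')^{-1}(\cdot)$ is bounded above and below on the relevant range (again via \cref{asmp:basic} and $\piref\ge\epsilon$). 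This is the crux; the rest is routine.
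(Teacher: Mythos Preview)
Your proposal is correct and follows essentially the same three-step structure as the paper's proof: Lipschitzness of $\beta\mapsto\pi_{\beta,h_\theta}$ via implicit differentiation of the normalizer, a curvature bound $-\Phi'(\beta)\gtrsim v$ obtained by expressing $\partial_\beta\Phi$ as a (weighted) conditional variance of $h_\theta$ and comparing it to the $\piref$-variance through bounded density ratios, and uniform concentration of $\hat\Phi_m$ over $\mathcal B$ via a grid of size $O(m)$ plus Hoeffding. The only cosmetic difference is that the paper reaches the variance formula by direct differentiation of $D_x(\beta)$ (yielding $D_x'(\beta)=-A(\beta)\beta^{-3}\operatorname{Var}_{p_\beta}(h(x,\cdot))$) rather than via the envelope theorem, but your ``main obstacle'' paragraph already lands on exactly that computation.
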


Therefore, we can simply add this additional $\tilde O(1/\sqrt{m})$ term to whatever policy error we derived on the population-calibrated policy by bounding $\rho(\theta)$ via \cref{thm:policy-stability-metric}. In theory, if we split the $n$ data points and use a constant fraction for calibration, we will never deteriorate the rate. In practice, we can use the same data since calibration should be rather stable and only ever uses the $x$-data.

\subsection{Realizing the Calibrated Policy in the Original Policy Class}
\label{sec:realizing-calibrated-policy}

The divergence-calibrated policy returned by the procedure in \cref{sec:empirical-calib} may or may not remain in the original policy $\{\pi_\theta:\theta\in\Theta\}$.
In some applications we may want it to, for example when the class is a family of transformer LLMs and the final output should be another transformer LLM that can be sampled autoregressively. The combination of the results in the paper, namely \cref{thm:policy-stability-metric}, \cref{thm:pspo,thm:ospo-rate,thm:rspo-rate} (depending on the learner chosen), and \cref{thm:empirical-beta-primitive}, together establish that empirical calibration on top of an SPO-learned $\hat\theta$ has small policy error to $\pis$. However, the calibrated policy returned, $\pi_{\hat\beta_{\hat\theta},\hat\theta}$, need not automatically be an element of $\{\pi_\theta:\theta\in\Theta\}$. Nonetheless, since the target $\pis$ is assumed to be in this class under \cref{asmp:realizability} (or approximately so; see \cref{sec:approx-realizability}), it is reasonable to desire that the returned policy, which is approximating $\pis$, to be in the class as well.

\paragraph*{Realizability under index scaling closure.}
One clean sufficient condition is that the policy class can realize the bounded index rescalings used by calibration. Let $\mathcal C=\{\beta^{-1}:\beta\in\mathcal B\}$ and assume $\mathcal C\mathcal T=\{c\,t:c\in\mathcal C,t\in\mathcal T\}\subseteq\mathcal T$. Then $\pi_{\beta,\theta}$ is already realizable in the policy class: its induced potential is $\beta^{-1}(h_\theta(x,y)-\lambda_{\beta,\theta}(x))$, hence its index is $\beta^{-1}t_\theta$, and the closure assumption gives a $\theta'\in\Theta$ inducing this same scaled index, and hence the same policy. 

\paragraph*{Approximate sampling for autoregressive policies.}
Even when exact scaling closure is not assumed, calibration in the KL case has a simple autoregressive approximation. For $f(u)=u\log u$,
\[
\pi_{\beta,\theta}(y\mid x)\propto \piref(y\mid x)^{1-1/\beta}\pi_\theta(y\mid x)^{1/\beta}.
\]
If $\piref$ and $\pi_\theta$ are autoregressive models with next-token logits $\ell_{\rm ref}(\cdot\mid x,y_{<t})$ and $\ell_\theta(\cdot\mid x,y_{<t})$, a practical sampler draws each token from the softmax of $(1-\beta^{-1})\ell_{\rm ref}+\beta^{-1}\ell_\theta$. This exactly samples the locally normalized geometric mixture at each prefix, but not generally the globally normalized sequence-level policy above. The exact conditional would also include a continuation-normalization factor measuring the total geometric-mixture mass of all future completions after each candidate next token, while this token-level mixing drops that factor. Thus the approximation is good exactly when these continuation factors vary little across plausible next tokens, which is the case when $\pi_\theta$ stays close to $\piref$ along the relevant prefixes, as in small alignment adjustments around a reference model.

\paragraph*{Calibrating a priori, instead of a posteriori.}
Another way to return an in-class policy is to build the divergence budget directly into the learner by limiting it only to indices corresponding to policies with calibrated divergences. Essentially, rather than calibrate the index after learning one, we restrict to only already-calibrated indices, moving the calibration step to be before (and uniform over indices) rather than after (and for the single given index). 

Using a validation sample $x_1,\ldots,x_m$ independent of the preference sample, define
\[
\overline\Theta_m
=
\left\{
\theta\in\Theta:
\left|
\frac1m\sum_{i=1}^m D_f\!\bigl(\pi_{\theta}(\cdot\mid x_i)\,\big\|\,\pi_{\mathrm{ref}}(\cdot\mid x_i)\bigr)
-\kappa
\right|
\le \varpi
\right\}.
\]
We can then run any learner in the paper over $\overline\Theta_m$ instead of $\Theta$. Since the validation sample is separate, $\overline\Theta_m$ is fixed relative to the preference sample; conditional on $\theta^\star\in\overline\Theta_m$, the existing learner theory applies unchanged to this smaller class, with no larger complexity and with any uniform-in-$\theta$ assumptions inherited from $\Theta$. 

In particular, we can choose $\varpi$ so that, with high probability,
\begin{equation}\label{eq:div_ulln}
\sup_{\theta\in\Theta}
\left|
\frac1m\sum_{i=1}^m D_f\!\bigl(\pi_{\theta}(\cdot\mid x_i)\,\big\|\,\pi_{\mathrm{ref}}(\cdot\mid x_i)\bigr)
-
\mathbb E_xD_f\!\bigl(\pi_{\theta}(\cdot\mid x)\,\big\|\,\pi_{\mathrm{ref}}(\cdot\mid x)\bigr)
\right|
\le \varpi .
\end{equation}
Then \cref{asmp:realizability} implies $\theta^\star\in\overline\Theta_m$, and every $\theta\in\overline\Theta_m$ has population divergence within $2\varpi$ of $\kappa$. For such a learned $\theta$, small $\rho(h_\theta)$ and small divergence error together imply small policy error. Indeed, \cref{thm:policy-stability-metric} gives $\|\pi_{\kappa,\theta}-\pi^\star\|_1\lesssim \rho(h_\theta)$. Also, $\pi_\theta=\pi_{1,\theta}$, and if its population divergence is within $2\varpi$ of $\kappa$, the calibration-stability argument underlying \cref{thm:empirical-beta-primitive} gives $\|\pi_\theta-\pi_{\kappa,\theta}\|_1\lesssim\varpi$ under the same nondegeneracy conditions. Thus, for $\theta\in\overline\Theta_m$, $\|\pi_\theta-\pi^\star\|_1\lesssim \rho(h_\theta)+\varpi$.

Compared to the argument in \cref{thm:empirical-beta-primitive}, however, this approach suffers the complexity of the policy class in $\varpi$ in order for \cref{eq:div_ulln} to hold with high probability. For example, let
\[
\mathcal D_\Pi
=
\left\{
x\mapsto D_f\!\bigl(\pi_\theta(\cdot\mid x)\,\big\|\,\pi_{\mathrm{ref}}(\cdot\mid x)\bigr)
:\theta\in\Theta
\right\},
\]
If $\mathfrak R_m(\mathcal D_\Pi)$ denotes its expected Rademacher complexity on $m$ contexts and $\mathcal D_\Pi$ is uniformly bounded $B_D$, then a standard uniform deviation bound gives \cref{eq:div_ulln} with probability at least $1-1/m$ for
\[
\varpi
=
2\mathfrak R_m(\mathcal D_\Pi)
+B_D\sqrt{{\log(2m)}/{2m}}.
\]
Thus the price of this a-priori calibration approach is suffering the complexity of the divergence functions induced by the full policy class. In contrast, in \cref{thm:empirical-beta-primitive}, $\hat\theta$ is already fixed and calibration is only a univariate search over the compact interval $\mathcal B$, so this policy-class complexity does not enter the calibration term.

\section{Extensions}\label{sec:extensions}

\subsection{Localizing $\hat g_\theta$ in OSPO}

OSPO as posited in \cref{eq:ospo-obj} requires we estimate $\hat g_\theta$ for each $\theta\in\Theta$, and moreover the bound in \cref{thm:ospo-rate} depends on its uniform convergence over $\theta\in\Theta$. One way to avoid this would be to localize the link estimate around an initial $\theta$ estimator in the style of \citet{kallus2019localized}. That is, first estimate an initial $\hat\theta$ (perhaps using PSPO), then estimate $\hat\Psi$ as the univariate regression of $z$ on $t_{\hat\theta}(w)$, then maximize the plug-in quasi-likelihood $\frac{1}{n}
\sum_{i=1}^n \ell(t_\theta,\hat\Psi;w_i,z_i)$.

This has the benefit that in the end what we optimize looks like a standard classification loss with a fixed link function, as in DPO. While $\hat\Psi$ may not be monotonic, we only get a better estimate via rearrangement \citep{chernozhukov2009improving}, ensuring monotonicity. It may, however, not be so smooth. Additionally, it may be challenging to ensure fast enough convergence of the initial $\hat\theta$ so that we get a fast enough rate on $\hat\Psi$, since our current analysis for PSPO gives no rate. Finally, the analysis will require adapting the approach of \citet{kallus2019localized}, which is developed for point estimation and inference, to orthogonal statistical learning in the vein of \citet{foster2023orthogonal}. This therefore remains a promising direction of future work.

\subsection{Max-score estimators}\label{sec:max_score_estimators}

In the single-index model, the distribution of $z$ only depends on $w$ via the index $\tst(w)$. This means that in the random utility interpretation, $z=\indic{\tst(w)\geq\epsilon}$, the idiosyncrasy $\epsilon$ is homoskedastic, $\epsilon\indep w$. Namely, $\Psi^\star$ is the $w$-independent CDF of $\epsilon$. A strict generalization is to require only that $\mathrm{Median}(\epsilon\mid w)=0$ is a constant function of $w$. This means that $z\mid w\sim\mathrm{Bernoulli}(\Psi^\star_w(\tst(w)))$ for an unknown $w$-dependent CDF $\Psi^\star_w$ under the only constraint $\Psi^\star_w(0)=1/2$. Note that this is not a single-index model since the distribution of $z$ would depend on $w$ not only via a single index $\tst(w)$.

This is the model studied by \citet{Manski1975,Manski1985,KimPollard1990,Horowitz1992} for linear indices $\Tcal=\{w\mapsto\beta\tr w\}$. Indeed, in this model, any $t$ with $\indic{t(w)\geq0}=\indic{\tst(w)\geq0}$ is observationally equivalent to $\tst$. Therefore, it is generally impossible to identify up to scale anything much more complex than a linear index. That is, requiring that sign agreement of $t,t'$ implies $t\propto t'$ is highly restrictive. It, for example, prevents any class that admits ``bump" functions, such as H\"older or Sobolev classes, since they would maintain sign while changing shape and scale. In contrast, the identification \cref{asm:aip-id} is much more lax and can admit such nonparametric classes. For example, for any atomless CDF $F_0$, any subset of $\{af:a>0,f(w)\sim F_0\}$ (such as further restricting $f$ in some nonparametric function class) satisfies \cref{asm:aip-id}. Fixing the marginal distribution thus is exactly what is done with normalizing flows / invertible networks.

For learning \emph{linear} indices up to scaling, \citet{KimPollard1990} establish $n^{-1/3}$ rates for the normalized coefficients that define the half-space that minimizes the binary 0-1 classification loss (the so-called max-score estimator). \citet{Horowitz1992} replaces the 0-1 step function with a smooth CDF with a vanishing bandwidth (the so-called smoothed max-score estimator) and establishes a $n^{-s/(2s+1)}$ rate when $\Psi_w^\star$ is $s$-times H\"older-continuously differentiable. 

These results can be directly applied to learning indices implied by realizability via \cref{prop:sim} up to scale (and chained with \cref{thm:policy-stability-metric} for policy error), provided they are linear. Although identification up to scale (and therefore of divergence-constrained policies) for more complex classes is doubtful, these methods may still be practical ways to align LLMs, that is, by aligning the \emph{sign} of $t_\theta(w)$ with that of $z-0.5$. In particular, the smoothed objective of \citet{Horowitz1992}, while non-convex and potentially very rough, is differentiable so that first-order updates can be applied. This essentially ensures the LLM aligns (in the sense of divergence-constrained max-reward) with some reward whose differences align with the signs of true reward differences, even if the rewards are in fact different and lead to different policies -- this is of course still valuable. Notice that if instead of smoothing, we used the logistic loss as a surrogate for the 0-1 loss we would have the DPO objective.

Notice that, while max score is equivalent to maximizing the frequency that $h_\theta(x,y_z)$ is ranked higher than $h_\theta(x,y_{1-z})$, this is \emph{distinct} from the ranking approach in RSPO. In RSPO we ranked \emph{pairs of pairs} of demonstrations. Crucially, by doing so we probed $\tst(w)$ at a \emph{variety of thresholds} $\tst(w')$, rather than only probing it at $0$. This ensures we were sensitive to information about all level sets, rather than just at $0$ (\ie, the sign). This is also the source of the better asymptotics seen in the economics literature on structural estimation of linear indices (\eg, $n^{-1/2}$ in \citep{Sherman1993} vs $n^{-1/3}$ in \citep{KimPollard1990} for \emph{linear} indices): averaging over the many thresholds smoothes out the step function, while always thresholding at $0$ gives no opportunity for such smoothing. On the other hand, for this to work requires a  single-index model (\ie, a common link function, equivalently $\epsilon\indep w$), while if all we know is $\mathrm{Median}(\epsilon\mid w)=0$ we cannot do much more than threshold at $0$ and learn only the sign. In this paper, of course, we focused on single-index models, in which case the \emph{pairs of pairs} ranking appears preferable, giving both a better chance for identifying the reward shape and thus the optimal policy as well as better asymptotic behavior.

\subsection{Approximate Realizability and Misspecified Index Classes}
\label{sec:approx-realizability}

\Cref{asmp:realizability} says that the divergence-constrained optimal policy $\pi^\star$ is contained in the policy class. This assumption is not needed for the preference distribution itself to have a single-index form. By \cref{thm:closed-form}, the target policy induces the canonical potential
\[
h^\star(x,y)=f'\!\left(\frac{\pi^\star(y\mid x)}{\pi_{\rm ref}(y\mid x)}\right),
\]
and therefore the canonical index $t^\star(w)=h^\star(x,y_1)-h^\star(x,y_0)$. The preference model remains
\[
P(z=1\mid w)=\Psi^\star(t^\star(w))
\]
for an unknown monotone link. What can fail is simply that this canonical index is not contained in the class $\Tcal$ induced by the policies $\{\pi_\theta:\theta\in\Theta\}$ that we optimize over.

Thus, when \cref{asmp:realizability} fails, the natural interpretation is not that the single-index reduction disappears. Rather, the induced index class is misspecified. Each learner should then be expected to target its own population best approximation within $\Tcal$: PSPO targets the best fit under the population profiled likelihood, OSPO under the population ideal quasi-likelihood, and RSPO under population AUC. These pseudo-targets need not coincide with one another, and if a population objective has multiple maximizers the estimator can only be expected to approach that set unless additional tie-breaking or regularization is used.

The theorems in the paper do not formally analyze this misspecified case. They prove that, under exact realizability and the corresponding identification or margin assumptions, optimizing the empirical objectives learns an index close enough to $t^\star$ to control policy error. Without realizability, the same empirical process arguments may still control convergence to a method-specific pseudo-target (but this may still require some small extensions of the proofs), but this would no longer by itself imply small error to $\pi^\star$.

Under \emph{approximate} realizability, the expected form of a result would be an oracle-type decomposition. The estimation term would measure how close the learned index is to the relevant pseudo-target. The approximation term would measure how far that pseudo-target is from the canonical potential $h^\star$, up to the same scale and $x$-dependent location transformations used in \cref{def:rho}. This approximation term is the right irreducible error for the method being used. It is not automatically equal to the best possible distance from $h^\star$ to the policy class, because the pseudo-target is chosen by preference fit, not directly by distance to $h^\star$.

Bounding the method-specific approximation term by the best policy-class approximation error requires an additional population calibration condition: roughly, the population objective must lose value in proportion to the scale- and location-invariant distance from $h^\star$. With such a condition, any policy class element close to $h^\star$ would have nearly optimal population objective value, forcing the pseudo-target also to be close to $h^\star$. Without such curvature or separation, a misspecified objective may be flat, have multiple far-apart maximizers, or prefer an index that fits preferences well but induces a poorer calibrated policy.

A full misspecification theory would therefore require new assumptions and proofs: uniqueness or set-valued control of pseudo-targets, population calibration of each objective to policy error, and oracle inequalities that add this approximation term to the statistical rates. Developing these results is outside the scope of the present paper, which focuses on establishing that link-robust policy learning is possible under exact policy realizability.

\section{Related Literature}
\label{sec:related}

\paragraph*{RLHF/RLAIF.}
From the first demonstrations of reward learning from preferences \citep{Christiano2017,Ziegler2019,Stiennon2020} to instruction-following models \citep{Ouyang2022} and safety alignments \citep{Bai2022}, RLHF/RLAIF is a primary ingredient in modern LLM practice. \citet{zhan2023provable} prove regret bounds for offline RL from preference data. While RLHF often employs KL constraints or penalties, \citet{wang2023beyond,huang2024correcting} study the use of other $f$-divergences.
Our analysis is complementary, allowing general $f$ while addressing misspecification of the preference link.

DPO \citep{Rafailov2023} avoids explicit reward modeling by recognizing the optimal policy encodes the reward and fitting the implied reward function to the data, assuming a known logistic link function. We follow the idea of implied reward function but drop the known link function (and generalize KL to any $f$-divergence). Many direct-alignment variants change the surrogate, regularization, or reference-policy treatment rather than leaving the link unspecified, including SLiC-HF and RRHF ranking/calibration losses, $\Psi$PO/IPO, and reference-free objectives such as ORPO and SimPO \citep{zhao2023slic,yuan2023rrhf,gheshlaghiAzar2024general,hong2024orpo,meng2024simpo}. Another line changes the observation model itself from binary pairwise preferences to rankwise, listwise, or multiple-positives feedback \citep{liu2025lipo,chen2024softmaxdpo,zhu2026mult}.
Recent work has examined limitations of this fixed-link or representative-preference setup from several angles: robustness of reward models and alignment under preference-model misspecification \citep{hong2025robustness,xu2024strong}, alternatives to Bradley-Terry modeling \citep{sun2024rethinking}, distortion between optimized preference objectives and underlying preferences \citep{golz2025distortion}, and preference heterogeneity \citep{chidambaram2025direct}.
Two closely related robustness papers take different routes: \citet{zhang2025provableunknownlink} handle unknown links in general MDPs through zeroth-order sign policy optimization from batched trajectory comparisons, and \citet{xu2025doublyrobust} use doubly robust preference optimization that is consistent if either a full pairwise preference model or the reference policy is correct. Our route is complementary: in contextual, divergence-constrained alignment, we retain a reward-difference random-utility structure but make the link nonparametric and prove direct policy-error guarantees.

A variety of alignment methods depart from the structural/generative framework, where preferences are assumed to optimize a random utility or equivalently that preferences are generated by some conditional probability distribution given demonstrations and reward. \citet{ethayarajh2024kto} consider loss-aversion in the spirit of the prospect theory of human decision making (as compared to utility-maximizing). \citet{kong2024perplexity,liang2024ropo} consider non-stochastic corruptions to labels.
\citet{munos2024nash} instead optimize a game-theoretic equilibrium induced by pairwise preferences, avoiding reduction to a scalar reward model.
These directions address important failures of standard DPO-style modeling; our complementary goal is to retain a random-utility interpretation while making the link nonparametric.

\paragraph*{Econometric choice and semiparametrics.}
Discrete choice modeling in econometrics was pioneered by \citet{McFadden1974}, who formalized random utility models and introduced the multinomial logit as a tractable framework for estimating preferences and welfare from choice data. Subsequent work generalized the logit to allow richer substitution patterns and unobserved heterogeneity, including nested logit \citep{McFadden1978}, mixed logit and random coefficients models \citep{Berry1995,Train2009}, and more general GEV formulations \citep{McFadden1981}. Addressing price endogeneity and unobserved product characteristics led to instrumental-variables and inversion-based approaches such as the BLP framework \citep{BerryLevinsohnPakes1995,BerryLevinsohnPakes2004}, along with related control-function methods \citep{Newey1999,BlundellPowell2004}. Discrete choice models were later extended to dynamic settings to study intertemporal decision-making, beginning with \citet{Rust1987} and followed by extensive work on identification, estimation, and computation in dynamic discrete choice models \citep{HotzMiller1993,AguirregabiriaMira2002,ArcidiaconoMiller2011}.

Semiparametric binary choice with an unknown link function originates with \citet{Cosslett1983}, with efficient estimation established by \citet{KleinSpady1993}. Maximum score estimators \citep{Manski1975,Manski1985} and smoothed variants \citep{Horowitz1992} provide robustness to link misspecification, with cube-root asymptotics characterized by \citet{KimPollard1990}. (See \cref{sec:max_score_estimators} for and expanded discussion on implications for alignment.) Related order-restricted and isotonic likelihood methods appear in \citet{Barlow1972,Robertson1988}. These models belong to the broader class of single-index models, which extend beyond binary choice. General estimators include semiparametric least squares \citep{Ichimura1993}, average derivative \citep{PowellStockStoker1989}, and rank correlation estimator \citep{Han1987,Sherman1993}. 
Unlike this literature, our statistical target is not a finite-dimensional structural parameter but the optimal policy itself. This changes both the statistical object and the admissible complexity: the index may be induced by a flexible policy class, need not be finite-dimensional, and need only be learned up to transformations that preserve the calibrated policy.

\section{Conclusion}
\label{sec:conclusion}

Divergence-constrained alignment of LLMs induces a semiparametric single-index model with the policy class parametrizing the index. This relaxes fixed-link DPO while retaining a structural random-utility interpretation. Overall, the contribution is primarily theoretical: showing link-robust preference alignment is possible with nonparametric policy learning, even when structural indices are unidentified. We instantiate this idea via profiled likelihood, orthogonalized quasi-likelihood, and bipartite ranking learners, with finite-sample bounds targeting policy error directly. Rather than a comprehensive alignment benchmark, proof-of-concept experiments illustrated the link-robustness predicted by theory and demonstrate the possible tradeoffs between DPO, PSPO, OSPO, and RSPO.

\bibliographystyle{plainnat}
\bibliography{sip}

\appendix

\setcounter{secnumdepth}{4}
\let\subsubsubsection\paragraph

\section{Proofs}\label{sec:proofs}

\subsection{Proof of \Cref{thm:policy-stability-metric}}

For each $\theta\in\Theta$, write $\beta_\theta=\beta_{\kappa,h_\theta}$ and $\pi_\theta=\pi_{\kappa,h_\theta}$.
Define the canonical scores
\[
u_\theta(x,y)=f'\bigl(r_\theta(x,y)\bigr),
\qquad
u_{\theta^\star}(x,y)=f'\bigl(r_{\theta^\star}(x,y)\bigr),
\]
where $r_\theta=\pi_\theta/\pi_{\mathrm{ref}}$. And for brevity, write $D(\theta)=\|\pi_{\kappa,\theta}-\pis\|_1$.

\subsection*{Step 1: Existence and positivity of $\beta_\theta$}

For a fixed $h$ and $x$, the set of policies maximizing $y\mapsto h(x,y)$ consists of all distributions supported on $S_h(x)=\arg\max_y h(x,y)$.
Among these, the unique minimizer of $D_f(\,\cdot\,\|\,\pi_{\mathrm{ref}}(\cdot\mid x))$ is given by
\[
\pi_{\mathrm{greedy},h}(y\mid x)
=
\begin{cases}
\dfrac{\pi_{\mathrm{ref}}(y\mid x)}{\omega_h(x)}, & y\in S_h(x),\\[0.5em]
0,&\text{otherwise},
\end{cases}
\]
where $\omega_h(x)=\sum_{y\in S_h(x)}\pi_{\mathrm{ref}}(y\mid x)$. We have
$
D_f\bigl(\pi_{\mathrm{greedy},h}(\cdot\mid x)\,\big\|\,\pi_{\mathrm{ref}}(\cdot\mid x)\bigr)
=\omega_h(x)\,f\!\bigl(1/\omega_h(x)\bigr).
$
Thus the minimal divergence among $h$-greedy policies is
$
\Psi_h
=\mathbb E_x\Bigl[\omega_h(x)\,f\!\bigl(1/\omega_h(x)\bigr)\Bigr].
$

For each $\theta$, denote $\Psi_\theta=\Psi_{h_\theta}$.
Under the assumptions on $f$, $
\Phi_\theta(\beta)
=\mathbb E_x D_f\bigl(\pi_{\beta,h_\theta}(\cdot\mid x)\,\big\|\,\pi_{\mathrm{ref}}(\cdot\mid x)\bigr)
$
is continuous and strictly decreasing on $(0,\infty)$, with
$
\lim_{\beta\downarrow 0}\Phi_\theta(\beta)=\Psi_\theta>\kappa,
\qquad
\lim_{\beta\to\infty}\Phi_\theta(\beta)=0.
$
So, there exists a unique $\beta_\theta\in(0,\infty)$ such that $\Phi_\theta(\beta_\theta)=\kappa$. In particular, $\beta_\star=\beta_{\theta^\star}>0$.

\subsection*{Step 2: Relating policy discrepancy to canonical scores}

\begin{lemma}\label{lem:D-vs-u-brief}
Under the curvature lower bound $f''\ge m_f>0$ along the ratios $r_\theta$ and $r_{\theta^\star}$,
$
D(\theta)\ \le\ \frac{1}{m_f}\,\|u_\theta-u_{\theta^\star}\|_2$
for all $\theta\in\Theta$.
\end{lemma}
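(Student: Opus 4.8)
The plan is to compare the two policies through the one-dimensional strictly convex map $f'$, whose inverse $(f')^{-1}$ shows up in the closed form of both $\pi_{\kappa,\theta}$ and $\pi^\star$. Writing $r_\theta=\pi_{\kappa,\theta}/\pi_{\mathrm{ref}}$ and $r_{\theta^\star}=\pi^\star/\pi_{\mathrm{ref}}$, the canonical scores are $u_\theta=f'(r_\theta)$ and $u_{\theta^\star}=f'(r_{\theta^\star})$. First I would observe that $\|\pi_{\kappa,\theta}-\pi^\star\|_1 = \mathbb E_x\sum_y \pi_{\mathrm{ref}}(y\mid x)\,|r_\theta(x,y)-r_{\theta^\star}(x,y)|$, so it suffices to bound the $\pi_{\mathrm{ref}}$-weighted $L_1$ distance between $r_\theta$ and $r_{\theta^\star}$. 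Next, since $f$ is twice continuously differentiable and strictly convex with $f''\ge m_f>0$ along the relevant ratios, $f'$ is a bi-Lipschitz-from-below bijection: by the mean value theorem, $|r_\theta(x,y)-r_{\theta^\star}(x,y)|\le \frac{1}{m_f}\,|f'(r_\theta(x,y))-f'(r_{\theta^\star}(x,y))| = \frac1{m_f}|u_\theta(x,y)-u_{\theta^\star}(x,y)|$ pointwise.

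Combining these two observations gives $D(\theta)\le \frac1{m_f}\,\mathbb E_x\sum_y \pi_{\mathrm{ref}}(y\mid x)\,|u_\theta(x,y)-u_{\theta^\star}(x,y)| = \frac1{m_f}\|u_\theta-u_{\theta^\star}\|_1$, and then Jensen / Cauchy--Schwarz under the probability measure $P_x\times\pi_{\mathrm{ref}}$ upgrades this to $\frac1{m_f}\|u_\theta-u_{\theta^\star}\|_2$, which is the claimed bound in the stated $\|\cdot\|$ (the $L_2$ norm over $P_x\times\pi_{\mathrm{ref}}$). The only genuine subtlety is making the curvature lower bound $f''\ge m_f$ legitimate on exactly the set of ratios encountered: $r_\theta$ and $r_{\theta^\star}$ range over the image of $(f')^{-1}$ on the bounded set $\{\beta_\theta^{-1}(h_\theta(x,y)-\lambda_{\beta_\theta,\theta}(x))\}$, which is bounded above and below away from $0$ and $\infty$ using Step~1 (positivity and finiteness of $\beta_\theta$, uniform over $\Theta$ by the assumptions of Theorem~\ref{thm:policy-stability-metric}), the boundedness of $h_\theta$ from \cref{asmp:basic}, and the lower bound $\pi_{\mathrm{ref}}\ge\epsilon$; on any such compact sub-interval of $(0,\infty)$ the continuous function $f''$ attains a positive minimum $m_f$.

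The main obstacle is therefore not the chain of inequalities itself but verifying that $m_f$ can be taken \emph{uniform over $\theta\in\Theta$}: one must check that $\beta_\theta$ is bounded away from $0$ and $\infty$ uniformly (using $\mathbb E_x[\omega_\theta(x)f(1/\omega_\theta(x))]>\kappa$ uniformly, as assumed, for the lower bound on $\beta_\theta$, and a uniform lower bound on the variance of $h_\theta$ under $\pi_{\mathrm{ref}}$ together with $M=\sup_\theta\|t_\theta\|_\infty<\infty$ for the upper bound) and that the Lagrange multipliers $\lambda_{\beta_\theta,\theta}(x)$ stay in a bounded range. Granting the uniform two-sided control on the ratios $r_\theta,r_{\theta^\star}$, the constant $m_f=\min_{u\in[\underline r,\overline r]}f''(u)>0$ does the job and the lemma follows; in the statement as written, the hypothesis ``$f''\ge m_f$ along the ratios'' is taken as given, so the proof reduces to the two displayed pointwise and integrated inequalities above.
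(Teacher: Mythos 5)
Your argument is essentially the paper's proof: a pointwise mean-value-theorem bound $|r_\theta-r_{\theta^\star}|\le m_f^{-1}|u_\theta-u_{\theta^\star}|$, weighting by $\pi_{\mathrm{ref}}$, then Cauchy--Schwarz to pass to the $L_2$ norm. One minor slip worth noting: under the paper's definition of $\|\cdot\|_1$ as the $L_1$ norm over $P_x\times\pi_{\mathrm{ref}}$, the relation $\|\pi_{\kappa,\theta}-\pi^\star\|_1 = \mathbb E_x\sum_y\pi_{\mathrm{ref}}(y\mid x)\,|r_\theta-r_{\theta^\star}|$ is actually only an inequality ($\le$), since $\pi_{\kappa,\theta}-\pi^\star = \pi_{\mathrm{ref}}\,(r_\theta-r_{\theta^\star})$ makes the left side carry a $\pi_{\mathrm{ref}}^2$ weight; the paper handles this by introducing $\widetilde D(\theta)$ and invoking $\pi_{\mathrm{ref}}\le 1$, and since $\le$ is exactly the direction you need, your conclusion still holds.
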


\begin{proof}
Fix $(x,y)$.
Let $r_1=r_\theta(x,y)$, $r_2=r_{\theta^\star}(x,y)$ and $u_j=f'(r_j)$, $j=1,2$.
By the mean value theorem there exists $\xi$ between $r_1$ and $r_2$ such that
$
u_1-u_2=f''(\xi)(r_1-r_2).
$
Since $f''(\xi)\ge m_f$, we get $|r_1-r_2|\le m_f^{-1}|u_1-u_2|$.

Define
$
\widetilde D(\theta)
=\mathbb E_x\sum_{y}\pi_{\mathrm{ref}}(y\mid x)\,
   \bigl|r_\theta(x,y)-r_{\theta^\star}(x,y)\bigr|.
$
Then
$
\widetilde D(\theta)
\le \frac{1}{m_f}
    \mathbb E_x\sum_{y}\pi_{\mathrm{ref}}(y\mid x)\,
         \bigl|u_\theta(x,y)-u_{\theta^\star}(x,y)\bigr|
\le \frac{1}{m_f}\,\|u_\theta-u_{\theta^\star}\|_2,
$
using Cauchy-Schwarz in the last step.

Since $\pi_\theta(y\mid x)=\pi_{\mathrm{ref}}(y\mid x)r_\theta(x,y)$ and $\pi_{\mathrm{ref}}\le 1$,
$
D(\theta)
=\mathbb E_x\sum_{y}\pi_{\mathrm{ref}}(y\mid x)^2
  \bigl|r_\theta(x,y)-r_{\theta^\star}(x,y)\bigr|
\le \widetilde D(\theta),
$
where the extra factor of $\pi_{\mathrm{ref}}(y\mid x)$ appears because our $L_1$ norm for $(x,y)$-functions is taken with respect to $P_x\times\pi_{\mathrm{ref}}$ (see \cref{def:norms}). This yields the claim.
\end{proof}

\subsection*{Step 3: Lipschitz dependence of $u_{\beta,h}$ on $h$ for fixed $\beta$}

\begin{lemma}\label{lem:u-Lip-fixed-beta-brief}
Fix $\beta>0$. Under the bounds $m_f\le f''\le M_f$ along the relevant ratios, for any two bounded $h_1,h_2$,
\[
\|u_{\beta,h_1}-u_{\beta,h_2}\|_2
\ \le\ \frac{1}{\beta}\sqrt{\frac{M_f}{m_f}}\,
      \|h_1-h_2\|_2,
\]
where $u_{\beta,h}(x,y)=f'(r_{\beta,h}(x,y))$ and $r_{\beta,h}=\pi_{\beta,h}/\pi_{\mathrm{ref}}$.
\end{lemma}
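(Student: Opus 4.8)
The plan is to reduce to a fiberwise (fixed-$x$) estimate, pin down the only way in which $u_{\beta,h_1}$ and $u_{\beta,h_2}$ can differ in $y$, and then recognize the fiberwise quantity as a variance-plus-squared-covariance expression that a bounded likelihood ratio controls.

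First I would use the closed form of $\pi_{\beta,h}$: the canonical score is an affine-in-$y$ image of the potential, $u_{\beta,h}(x,y)=f'(r_{\beta,h}(x,y))=\beta^{-1}\bigl(h(x,y)-\lambda_{\beta,h}(x)\bigr)$ (existence and finiteness of $\lambda_{\beta,h}(x)$ follows from the assumed properties of $f$). Writing $g:=h_1-h_2$ and $\delta(x):=\lambda_{\beta,h_1}(x)-\lambda_{\beta,h_2}(x)$, we get $u_{\beta,h_1}(x,y)-u_{\beta,h_2}(x,y)=\beta^{-1}g(x,y)-\delta(x)$, i.e.\ $\beta^{-1}g(x,\cdot)$ shifted by a $y$-constant. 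Since $\norm{\cdot}^2=\E_x$ of the conditional $\piref(\cdot\mid x)$-average, it suffices to show, for each fixed $x$,
\[
\sum_y\piref(y\mid x)\bigl(\beta^{-1}g(x,y)-\delta(x)\bigr)^2\ \le\ \tfrac{M_f}{m_f}\,\beta^{-2}\sum_y\piref(y\mid x)\,g(x,y)^2,
\]
and then integrate over $x\sim P_x$.

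Next I would identify the shift $\delta(x)$. Both $\pi_{\beta,h_1}(\cdot\mid x)$ and $\pi_{\beta,h_2}(\cdot\mid x)$ are probability vectors, so $\sum_y\piref(y\mid x)(f')^{-1}(u_{\beta,h_j}(x,y))=1$ for $j=1,2$; subtracting and applying the mean value theorem coordinatewise to $(f')^{-1}$ yields $\sum_y\piref(y\mid x)\,a_y\,\bigl(u_{\beta,h_1}(x,y)-u_{\beta,h_2}(x,y)\bigr)=0$, where $a_y\in[1/M_f,1/m_f]$ is $1/f''$ evaluated at an intermediate point whose $(f')^{-1}$-image lies between the two ratios, on which $f''$ is controlled by hypothesis. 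Solving shows $\delta(x)=\sum_y q_x(y)\,\beta^{-1}g(x,y)$ with probability weights $q_x(y)\propto\piref(y\mid x)\,a_y$. Let $G$ denote $\beta^{-1}g(x,\cdot)$ and let $\E,\Var$ be taken under $\piref(\cdot\mid x)$; then the left side of the display equals $\E[(G-\bar G)^2]=\Var(G)+(\bar G-\E G)^2$ with $\bar G=\sum_y q_x(y)G(y)$, and $\bar G-\E G=\mathrm{Cov}(W,G)$ for $W(y)=q_x(y)/\piref(y\mid x)$ (using $\E W=1$). By Cauchy--Schwarz this is at most $\Var(G)\bigl(1+\Var(W)\bigr)\le\E[G^2]\bigl(1+\Var(W)\bigr)$.

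The crux is getting the \emph{sharp} constant $\sqrt{M_f/m_f}$ rather than merely some constant in $M_f/m_f$: the naive range bound $\Var(W)\le\tfrac14(\text{range of }W)^2$ only gives the weaker factor $\sqrt{1+\tfrac14(M_f/m_f-1)^2}$, which exceeds $\sqrt{M_f/m_f}$ once $M_f/m_f>5$. The fix is to exploit that $W$ is a likelihood ratio with $\E W=1$ bounded above by $(1/m_f)/(1/M_f)=M_f/m_f=:\rho$, so $\E[W^2]\le\rho\,\E W=\rho$ and hence $\Var(W)\le\rho-1$. Then $\E[(G-\bar G)^2]\le\rho\,\E[G^2]$, which is exactly the fiberwise bound; integrating over $x$ and taking square roots completes the proof.
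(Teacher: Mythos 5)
Your proof is correct, and the first half (reduce to a fiberwise bound, use the normalization constraint to identify the $y$-constant shift as a weighted average with weights proportional to $\pi_{\mathrm{ref}}\cdot(1/f'')$) mirrors the paper's. Where you diverge is (i) you work with finite differences directly via the mean value theorem for $(f')^{-1}$, whereas the paper differentiates the normalization constraint in $h$ (an implicit-function/infinitesimal argument that then implicitly underlies the Lipschitz claim), and (ii) the way you extract the constant $\sqrt{M_f/m_f}$ is different. The paper introduces the weighted norm $\|v\|_w^2=\sum_y w(y)v(y)^2$ with $w=\pi_{\mathrm{ref}}\cdot(1/f'')$, observes that the $y$-constant subtracted is the $\|\cdot\|_w$-orthogonal projection onto constants (hence a contraction in $\|\cdot\|_w$), and factors the constant as $\sqrt{M_f}\cdot\tfrac{1}{\sqrt{m_f}}$ by transferring $\|\cdot\|_\pi\leftrightarrow\|\cdot\|_w$. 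You instead stay in the $\pi_{\mathrm{ref}}$-norm, decompose $\E_\pi[(G-\bar G)^2]=\Var_\pi(G)+\mathrm{Cov}_\pi(W,G)^2$, and bound $\Var_\pi(W)\le\rho-1$ using that $W$ is a nonnegative random variable bounded by $\rho=M_f/m_f$ with $\E_\pi W=1$ — a likelihood-ratio trick. Both yield exactly the same constant; the paper's route is a cleaner functional-analytic argument (one contraction plus two norm comparisons, no Cauchy--Schwarz), while yours is more probabilistic and makes explicit \emph{why} the constant is sharp, correctly noting that the naive Popoviciu range bound on $\Var(W)$ is not enough. One small notational wrinkle: with $\delta(x):=\lambda_{\beta,h_1}(x)-\lambda_{\beta,h_2}(x)$, the difference is $u_{\beta,h_1}-u_{\beta,h_2}=\beta^{-1}g-\beta^{-1}\delta$, so the shift carries a $\beta^{-1}$; you silently absorb this later by setting $\bar G$ equal to the $q_x$-average of $\beta^{-1}g$, so the conclusion is unaffected. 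Also note that, like the paper, you need $m_f\le f''\le M_f$ on the segment between the two ratios (MVT puts the evaluation point there), not just at the endpoints; the hypothesis ``along the relevant ratios'' should be read accordingly.
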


\begin{proof}
Fix $x$ and for brevity omit it from arguments.
For a given $h$, write $u(y)=u_{\beta,h}(y)=\beta^{-1}(h(y)-\lambda)$, where $\lambda=\lambda_{\beta,h}(x)$.
Let $g=(f')^{-1}$, so $r(y)=g(u(y))$ and the normalization constraint is
$
\sum_{y\in\mathcal Y}\pi_{\mathrm{ref}}(y\mid x)\,g(u(y))=1.
$

Consider a perturbation $\Delta h$, with corresponding $\Delta u$ and $\Delta\lambda$ at fixed $\beta$.
From $u(y)=\beta^{-1}(h(y)-\lambda)$ we get
$
\Delta u(y)=\beta^{-1}\bigl(\Delta h(y)-\Delta\lambda\bigr).
$
Differentiating the normalization constraint yields
$
0=\sum_{y}\pi_{\mathrm{ref}}(y\mid x)\,g'(u(y))\,\Delta u(y)
=\sum_{y}w(y)\Delta u(y),
$
where $w(y)=\pi_{\mathrm{ref}}(y\mid x)g'(u(y))$.
Using the bounds $m_f\le f''\le M_f$ and $\pi_{\mathrm{ref}}\in[\varepsilon,1]$, we have
$
\frac{\varepsilon}{M_f}\ \le\ w(y)\ \le\ \frac{1}{m_f}.
$

Substituting $\Delta u(y)=eta^{-1}(\Delta h(y)-\Delta\lambda)$ gives
$
0=\frac{1}{\beta}\Bigl(\sum_y w(y)\Delta h(y)-\Delta\lambda\sum_y w(y)\Bigr),
$
hence
$
\Delta\lambda
=\frac{\sum_y w(y)\Delta h(y)}{\sum_y w(y)}.
$
Thus
$
\Delta u(y)
=\frac{1}{\beta}\Bigl(\Delta h(y)-\overline{\Delta h}\Bigr),
\qquad
\overline{\Delta h}=\frac{\sum_y w(y)\Delta h(y)}{\sum_y w(y)}.
$

Define weighted norms
$
\|v\|_w^2=\sum_y w(y)v(y)^2,
~
\|v\|_\pi^2=\sum_y\pi_{\mathrm{ref}}(y\mid x)v(y)^2.
$
The map $P_w:\Delta h\mapsto\overline{\Delta h}$ is the orthogonal projection in $\|\cdot\|_w$ onto the constants, so
$
\|\Delta u\|_w
=\frac{1}{\beta}\|(I-P_w)\Delta h\|_w
\le \frac{1}{\beta}\|\Delta h\|_w.
$
Since $w$ and $\pi_{\mathrm{ref}}$ are comparable, for any $v$,
$
\|v\|_\pi^2\le M_f\|v\|_w^2,
~
\|v\|_w^2\le \frac{1}{m_f}\|v\|_\pi^2.
$
Therefore
$
\|\Delta u\|_\pi
\le \sqrt{M_f}\|\Delta u\|_w
\le \frac{\sqrt{M_f}}{\beta}\|\Delta h\|_w
\le \frac{1}{\beta}\sqrt{\frac{M_f}{m_f}}\|\Delta h\|_\pi.
$
This bound holds pointwise in $x$. Squaring and integrating over $x$ yields the result.
\end{proof}

\subsection*{Step 4: Affine invariance and rescaling}

We next use the affine invariance of the construction.

\begin{lemma}\label{lem:affine-rescale-brief}
Let $h$, $a>0$, and $b:\mathcal X\to\mathbb R$ be given.
Define $\tilde h(x,y)=a\,h(x,y)+b(x)$.
Then
$\pi_{a\beta,\tilde h}=\pi_{\beta,h}$ for all $\beta>0$.
Consequently, $
\beta_{\kappa,\tilde h}=a\beta_{\kappa,h}$ and $\pi_{\kappa,\tilde h}=\pi_{\kappa,h}.$
\end{lemma}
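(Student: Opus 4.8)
The plan is to verify the claim $\pi_{\beta,\tilde h}=\pi_{a\beta,h}$ directly from the defining formula for $\pi_{\beta,h}$, and then deduce the two consequences by matching the divergence constraint. Recall that $\pi_{\beta,h}(y\mid x)=\pi_{\rm ref}(y\mid x)\,(f')^{-1}\!\bigl(\beta^{-1}(h(x,y)-\lambda_{\beta,h}(x))\bigr)$, where $\lambda_{\beta,h}(x)$ is the unique scalar making $\sum_y \pi_{\beta,h}(y\mid x)=1$. First I would plug $\tilde h=a\,h+b$ into the formula: $\beta^{-1}(\tilde h(x,y)-\lambda)=\beta^{-1}(a\,h(x,y)+b(x)-\lambda)=(a\beta)^{-1}\bigl(h(x,y)-\tfrac{\lambda-b(x)}{a}\bigr)$. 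So setting $\mu:=\tfrac{\lambda-b(x)}{a}$, the argument of $(f')^{-1}$ becomes exactly $(a\beta)^{-1}(h(x,y)-\mu)$, which is the argument appearing in $\pi_{a\beta,h}$ with normalizing constant $\mu$. Since the normalization constraint $\sum_y\pi_{\rm ref}(y\mid x)(f')^{-1}((a\beta)^{-1}(h(x,y)-\mu))=1$ has a unique solution in $\mu$ (by strict monotonicity of $(f')^{-1}$ in $\mu$, using that $f$ is strictly convex so $f'$ is strictly increasing), we get $\mu=\lambda_{a\beta,h}(x)$ and hence $\lambda_{\beta,\tilde h}(x)=a\,\lambda_{a\beta,h}(x)+b(x)$ and $\pi_{\beta,\tilde h}(\cdot\mid x)=\pi_{a\beta,h}(\cdot\mid x)$. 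This holds for every $x$ and every $\beta>0$.

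For the consequences: the divergence-calibration map is defined by $\Phi_{\tilde h}(\beta):=\mathbb E_x D_f(\pi_{\beta,\tilde h}(\cdot\mid x)\|\pi_{\rm ref}(\cdot\mid x))=\mathbb E_x D_f(\pi_{a\beta,h}(\cdot\mid x)\|\pi_{\rm ref}(\cdot\mid x))=\Phi_h(a\beta)$ by the identity just proved. From Step~1 (applied with potential $\tilde h$, whose greedy set $S_{\tilde h}(x)=S_h(x)$ since affine increasing transforms preserve argmaxes, so $\omega_{\tilde h}=\omega_h$ and the hypothesis $\mathbb E_x[\omega_{\tilde h}(x)f(1/\omega_{\tilde h}(x))]>\kappa$ holds), $\Phi_{\tilde h}$ is continuous and strictly decreasing with the same limits, so $\beta_{\kappa,\tilde h}$ is the unique root of $\Phi_{\tilde h}(\beta)=\kappa$. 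Since $\Phi_{\tilde h}(\beta)=\Phi_h(a\beta)$ and $\beta_{\kappa,h}$ is the unique root of $\Phi_h(\cdot)=\kappa$, we get $a\,\beta_{\kappa,\tilde h}=\beta_{\kappa,h}$, i.e.\ $\beta_{\kappa,\tilde h}=\beta_{\kappa,h}/a$. Finally $\pi_{\kappa,\tilde h}=\pi_{\beta_{\kappa,\tilde h},\tilde h}=\pi_{a\beta_{\kappa,\tilde h},h}=\pi_{\beta_{\kappa,h},h}=\pi_{\kappa,h}$.

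This lemma is essentially a bookkeeping exercise, so I do not expect a genuine obstacle; the one point requiring minor care is the uniqueness of the normalizing constant $\mu$ (equivalently $\lambda$), which is where strict convexity of $f$ is used, and the observation that an affine-increasing reparametrization of the potential preserves greedy sets so that Assumption-type hypotheses transfer verbatim to $\tilde h$. One should also note $a>0$ is essential throughout (it is what makes $a\beta>0$ a valid temperature and preserves the direction of monotonicity).
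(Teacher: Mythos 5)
Your approach mirrors the paper's proof (direct substitution plus uniqueness of the normalizer, then matching the divergence constraint), but there is an algebraic error in the key substitution step, one that is in fact also present in the paper's lemma statement and proof. Expand the right-hand side of your claimed identity: $(a\beta)^{-1}\bigl(h(x,y)-\tfrac{\lambda-b(x)}{a}\bigr)=\tfrac{1}{a\beta}\cdot\tfrac{a\,h(x,y)-\lambda+b(x)}{a}=\tfrac{a\,h(x,y)+b(x)-\lambda}{a^2\beta}$, which equals $\beta^{-1}(a\,h(x,y)+b(x)-\lambda)$ only when $a^2=1$. The correct factoring is
\[
\beta^{-1}\bigl(a\,h(x,y)+b(x)-\lambda\bigr)=\frac{a\bigl(h(x,y)-\tfrac{\lambda-b(x)}{a}\bigr)}{\beta}=\Bigl(\tfrac{\beta}{a}\Bigr)^{-1}\Bigl(h(x,y)-\tfrac{\lambda-b(x)}{a}\Bigr),
\]
so the correct identity is $\pi_{\beta,\tilde h}=\pi_{\beta/a,\,h}$, not $\pi_{a\beta,h}$. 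Correspondingly, $\Phi_{\tilde h}(\beta)=\Phi_h(\beta/a)$, which yields $\beta_{\kappa,\tilde h}=a\,\beta_{\kappa,h}$ rather than $\beta_{\kappa,h}/a$. (A quick sanity check in the KL case confirms this: with $f(u)=u\log u$, $\pi_{\beta,h}\propto\piref\exp(h/\beta)$, so scaling $h$ by $a$ is absorbed by scaling the temperature down to $\beta/a$.)

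Since both the intermediate identity and the $\beta$-consequence are flipped in the same way, they cancel when chained together, and the final and only substantively used conclusion $\pi_{\kappa,\tilde h}=\pi_{\kappa,h}$ remains true; your derivation of that final step from the (incorrect) intermediate claims is internally consistent. However, the step-by-step algebra you wrote does not actually verify the identity you assert, and if the intermediate claims were used on their own (for example in a setting where the scaling constant $a$ matters), they would give the wrong answer. You should rewrite the substitution with $\tilde\beta=\beta/a$ and check the downstream scaling choice (in the paper's Step 5, $a_\theta=\beta_\theta/\beta_\star$) accordingly; with the corrected lemma, the matching choice is $a_\theta=\beta_\star/\beta_\theta$.
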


\begin{proof}
Set $\tilde\beta=a\beta$ and
$
\lambda_{\tilde\beta,\tilde h}(x)
=a\lambda_{\beta,h}(x)+b(x).
$
Then
$
\tilde\beta^{-1}\bigl(\tilde h(x,y)-\lambda_{\tilde\beta,\tilde h}(x)\bigr)
=\beta^{-1}\bigl(h(x,y)-\lambda_{\beta,h}(x)\bigr),
$
so
$
\pi_{\tilde\beta,\tilde h}(y\mid x)
=\pi_{\mathrm{ref}}(y\mid x)\,
 (f')^{-1}\!\Bigl(\beta^{-1}\bigl(h(x,y)-\lambda_{\beta,h}(x)\bigr)\Bigr)
=\pi_{\beta,h}(y\mid x).
$

For the divergence constraint,
$
\mathbb E_x D_f\bigl(\pi_{a\beta,\tilde h}(\cdot\mid x)\,\big\|\,\pi_{\mathrm{ref}}(\cdot\mid x)\bigr)
=\mathbb E_x D_f\bigl(\pi_{\beta,h}(\cdot\mid x)\,\big\|\,\pi_{\mathrm{ref}}(\cdot\mid x)\bigr).
$
If $\beta_{\kappa,h}$ is the unique $\beta$ such that the right-hand side equals $\kappa$, then the unique $\tilde\beta$ such that the left-hand side equals $\kappa$ is $\beta_{\kappa,\tilde h}=a\beta_{\kappa,h}$, and the induced policies coincide at the constraint:
$
\pi_{\kappa,\tilde h}
=\pi_{\beta_{\kappa,\tilde h},\tilde h}
=\pi_{\beta_{\kappa,h},h}
=\pi_{\kappa,h}.
$
\end{proof}

\subsection*{Step 5: Stability of the calibration equation}

It remains to justify that solving the calibration equation is locally stable in the potential. For any potential $h$, write
\[
\Phi_h(\beta)
=\mathbb E_xD_f\!\bigl(\pi_{\beta,h}(\cdot\mid x)\,\big\|\,\pi_{\mathrm{ref}}(\cdot\mid x)\bigr),
\qquad
u_{\beta,h}=f'\!\left(\frac{\pi_{\beta,h}}{\pi_{\mathrm{ref}}}\right).
\]
Let $h^\star=h_{\theta^\star}$ and $\beta^\star=\beta_{\kappa,h^\star}$.

\begin{lemma}\label{lem:calibration-stability-local}
There are constants $r_0>0$ and $C_0<\infty$ such that, for every affine representative $\bar h=a h_\theta+b$ with $a>0$ and measurable $b:\mathcal X\to\mathbb R$,
\[
\|\bar h-h^\star\|_2\le r_0
\quad\Longrightarrow\quad
\|u_{\kappa,\bar h}-u_{\kappa,h^\star}\|_2
\le C_0\|\bar h-h^\star\|_2 .
\]
\end{lemma}

\begin{proof}
Fix $x$ and suppress it from notation. Let $g=(f')^{-1}$, so
$r_{\beta,h}(y)=g(u_{\beta,h}(y))$ and
\[
\sum_y\pi_{\mathrm{ref}}(y\mid x)g(u_{\beta,h}(y))=1.
\]
As in Step~3, differentiating the normalization equation with respect to $\beta$ gives
\begin{equation}\label{eq:local-calib-derivative}
\frac{d}{d\beta}
D_f\!\bigl(\pi_{\beta,h}(\cdot\mid x)\,\big\|\,\pi_{\mathrm{ref}}(\cdot\mid x)\bigr)
=-\frac{A_{\beta,h}(x)}{\beta^3}
\Var_{p_{\beta,h}(\cdot\mid x)}\!\bigl(h(x,\cdot)\bigr),
\end{equation}
where
\[
A_{\beta,h}(x)=\sum_y\pi_{\mathrm{ref}}(y\mid x)g'(u_{\beta,h}(x,y)),
\qquad
p_{\beta,h}(y\mid x)
=\frac{\pi_{\mathrm{ref}}(y\mid x)g'(u_{\beta,h}(x,y))}{A_{\beta,h}(x)} .
\]
Indeed, if $w(y)=\pi_{\mathrm{ref}}(y\mid x)g'(u_{\beta,h}(y))$, then
\[
\frac{d}{d\beta}
D_f\!\bigl(\pi_{\beta,h}(\cdot\mid x)\,\big\|\,\pi_{\mathrm{ref}}(\cdot\mid x)\bigr)
=\sum_y w(y)u_{\beta,h}(y)\partial_\beta u_{\beta,h}(y),
\]
and substituting the derivative of the normalization constraint gives \cref{eq:local-calib-derivative}.

Set
\[
v^\star=\mathbb E_x\Var_{\pi_{\mathrm{ref}}(\cdot\mid x)}\!\bigl(h^\star(x,\cdot)\bigr).
\]
We have $v^\star>0$: otherwise $h^\star(x,y)$ is an $x$-dependent constant on the support of $\pi_{\mathrm{ref}}(\cdot\mid x)$, so $\pi_{\beta,h^\star}=\pi_{\mathrm{ref}}$ for every $\beta$, contradicting the active constraint $\Phi_{h^\star}(\beta^\star)=\kappa>0$.

Let $I=[\beta^\star/2,2\beta^\star]$. For $r>0$, define the local ratio set
\[
\mathcal R(r)=
\left\{
r_{\beta,\bar h}(x,y):
\beta\in I,\ \bar h=a h_\theta+b,\ a>0,\ \|\bar h-h^\star\|_2\le r
\right\}.
\]
Choose $r_0$ small enough that the local curvature constants
\[
m_0=\inf_{r\in\mathcal R(r_0)}f''(r),
\qquad
M_0=\sup_{r\in\mathcal R(r_0)}f''(r)
\]
satisfy $0<m_0\le M_0<\infty$; this is the local bounded-smoothness condition from \cref{asmp:basic}, and $x$-dependent shifts do not affect the ratios. Thus, for every $\beta\in I$ and every $\bar h$ with $\|\bar h-h^\star\|_2\le r_0$,
\[
m_0\le f''(r_{\beta,\bar h}(x,y))\le M_0
\qquad\text{for all }(x,y).
\]
Hence $g'(u_{\beta,\bar h})\in[1/M_0,1/m_0]$, so
\[
p_{\beta,\bar h}(y\mid x)\ge\frac{m_0}{M_0}\pi_{\mathrm{ref}}(y\mid x),
\qquad
A_{\beta,\bar h}(x)\ge\frac{1}{M_0}.
\]
Also, the map $h\mapsto\{\mathbb E_x\Var_{\pi_{\mathrm{ref}}(\cdot\mid x)}(h(x,\cdot))\}^{1/2}$ is $1$-Lipschitz in $\|\cdot\|_2$, since conditional centering is an orthogonal projection. Thus, if $\|\bar h-h^\star\|_2\le\sqrt{v^\star}/2$, then
\[
\mathbb E_x\Var_{\pi_{\mathrm{ref}}(\cdot\mid x)}\!\bigl(\bar h(x,\cdot)\bigr)
\ge \frac{v^\star}{4}.
\]
Combining this with \cref{eq:local-calib-derivative}, for all $\beta\in I$,
\begin{equation}\label{eq:local-calib-derivative-lower}
-\Phi'_{\bar h}(\beta)
\ge
\frac{m_0v^\star}{4M_0^2(2\beta^\star)^3}
=:c_0>0 .
\end{equation}

Next, Step~3 implies the fixed-$\beta$ bound
\[
\|u_{\beta,\bar h}-u_{\beta,h^\star}\|_2
\le C_1\|\bar h-h^\star\|_2
\qquad(\beta\in I),
\]
with a constant depending only on the same local curvature bounds and $I$. Since $u\mapsto f(g(u))$ has bounded derivative on the corresponding compact range,
\begin{equation}\label{eq:local-calib-phi-lip}
\sup_{\beta\in I}\abs{\Phi_{\bar h}(\beta)-\Phi_{h^\star}(\beta)}
\le C_2\|\bar h-h^\star\|_2 .
\end{equation}
Because $\Phi_{h^\star}$ is strictly decreasing, $\Phi_{h^\star}(\beta^\star/2)>\kappa>\Phi_{h^\star}(2\beta^\star)$. Shrinking $r_0$ if necessary, \cref{eq:local-calib-phi-lip} ensures that the root $\beta_{\kappa,\bar h}$ lies in $I$. Since $\Phi_{\bar h}(\beta_{\kappa,\bar h})=\kappa=\Phi_{h^\star}(\beta^\star)$, \cref{eq:local-calib-derivative-lower,eq:local-calib-phi-lip} give
\[
\abs{\beta_{\kappa,\bar h}-\beta^\star}
\le c_0^{-1}\abs{\Phi_{\bar h}(\beta^\star)-\Phi_{h^\star}(\beta^\star)}
\le \frac{C_2}{c_0}\|\bar h-h^\star\|_2 .
\]
Finally, $u_{\beta,\bar h}$ is Lipschitz in $\beta$ on $I$ under the same local bounds, and Step~3 gives
\[
\begin{aligned}
\|u_{\kappa,\bar h}-u_{\kappa,h^\star}\|_2
&\le
\|u_{\beta_{\kappa,\bar h},\bar h}-u_{\beta^\star,\bar h}\|_2
+\|u_{\beta^\star,\bar h}-u_{\beta^\star,h^\star}\|_2\\
&\le C_3\abs{\beta_{\kappa,\bar h}-\beta^\star}
+C_1\|\bar h-h^\star\|_2
\le C_0\|\bar h-h^\star\|_2 ,
\end{aligned}
\]
which proves the lemma.
\end{proof}

\subsection*{Step 6: Completing the proof of the theorem}

\begin{proof}[Proof of \Cref{thm:policy-stability-metric}]
Fix $\theta\in\Theta$ and let $h=h_\theta$. If $\rho(h_\theta)<r_0/2$, choose $a>0$ and measurable $b:\mathcal X\to\mathbb R$ such that, with $\bar h=a h+b$,
$\|\bar h-h^\star\|_2\le 2\rho(h_\theta)$. By Lemma~\ref{lem:affine-rescale-brief}, $\pi_{\kappa,\bar h}=\pi_{\kappa,h}$, and by Lemma~\ref{lem:calibration-stability-local},
\[
\|u_\theta-u_{\theta^\star}\|_2
=\|u_{\kappa,\bar h}-u_{\kappa,h^\star}\|_2
\le 2C_0\rho(h_\theta).
\]
Combining this with Lemma~\ref{lem:D-vs-u-brief} gives
$
D(\theta)
\ \le\ \frac{1}{m_f}\|u_\theta-u_{\theta^\star}\|_2
\ \le\ \frac{2C_0}{m_f}\rho(h_\theta)
$.
If instead $\rho(h_\theta)\ge r_0/2$, then the total-variation bound gives
$D(\theta)\le2\le (4/r_0)\rho(h_\theta)$.
Combining the two cases proves the result after increasing the constant.
\qedhere
\end{proof}

\subsection{Proof of \Cref{thm:aip-id-sufficient}}
\label{app:aip-proof}

The key observation is that whenever we have $x,y_1,y_2,y_3$ with $t_{\thetas}(x;y_i,y_j)=m(t_{\theta}(x;y_i,y_j))$ for $(i,j)\in\{(1,2),(2,3),(1,3)\}$ then
\begin{align*}
m(t_{\theta}(x;y_1,y_2))+m(t_{\theta}(x;y_2,y_3))&=
t_{\thetas}(x;y_1,y_2)+t_{\thetas}(x;y_2,y_3)
\\&=
t_{\thetas}(x;y_1,y_3)
\\&=
m(t_{\theta}(x;y_1,y_3))
\\&=
m(t_{\theta}(x;y_1,y_2)+t_{\theta}(x;y_2,y_3)).
\end{align*}
This imposes Cauchy's functional equation (additivity) on a monotonic function. When the domain spanned by $t_{\theta}(x;y_i,y_j)$ is sufficiently rich, the only feasible solution for $m$ is a linear function \citep{Aczel1966}. Note that we need $\abs{\Ycal}\geq3$ to be able to make this rich enough, else all we can possibly get is the oddness of $m$.

We now proceed with the proof.

\begin{proof}
Let the premise of \cref{asm:aip-id} hold: for some $\theta\in\Theta$ and monotone nondecreasing $m$, we have
$t_{\thetas}(x;y_1,y_0)=m(t_{\theta}(x;y_1,y_0))$ $P$-almost surely. Let $M_\theta=\|t_\theta\|_\infty$.
If $M_\theta=0$, then $t_\theta=0$ $P$-a.s.; the premise and symmetry imply $t_{\thetas}=0$ $P$-a.s., so the conclusion below holds with $a=0$. We therefore assume $M_\theta>0$.

The support-richness condition and the algebra in the display above imply that, for every $u\in(-M_\theta,M_\theta)$ and every $v\in(-\delta,\delta)$ with $u+v\in(-M_\theta,M_\theta)$,
\[
m(u+v)=m(u)+m(v).
\]
In particular, $m(0)=0$. Now take any $u,v\in(-M_\theta,M_\theta)$ with $u+v\in(-M_\theta,M_\theta)$, and choose $N$ large enough that $|v|/N<\delta$. Since $(-M_\theta,M_\theta)$ is convex, $u+k v/N\in(-M_\theta,M_\theta)$ for $k=0,\ldots,N$. Iterating the local additivity gives
\[
m(u+v)=m(u)+N\,m(v/N),
\qquad
m(v)=N\,m(v/N),
\]
and hence $m(u+v)=m(u)+m(v)$.

Thus $m$ is monotone and additive on an interval containing zero, so by the standard monotone solution to Cauchy's equation it is linear on this interval: $m(r)=a r$ for some $a\ge0$. Since $t_\theta(w)\in[-M_\theta,M_\theta]$ almost surely, we have $t_{\thetas}=a t_\theta$ $P$-almost surely.

Finally, for $P_x$-almost every $x$, fix $y_x\in\mathrm{Support}(\piref(\cdot\mid x))$ and set $b(x)=h_{\theta^\star}(x,y_x)-a h_\theta(x,y_x)$. For any $y\in\mathrm{Support}(\piref(\cdot\mid x))$, the coverage condition in \cref{asmp:basic} gives the pair $(y_x,y)$ positive $P(\cdot\mid x)$ support, so $t_{\thetas}(x;y,y_x)=a t_\theta(x;y,y_x)$. Therefore $h_{\theta^\star}(x,y)=a h_\theta(x,y)+b(x)$ for $P_x\times\piref$-almost every $(x,y)$.
\end{proof}

\subsection{Theoretical Properties of PSPO}
\label{app:pspo}

We first prove a uniform law of large numbers. Then, we prove upper semi-continuity. Then, we prove all maximizers are scalings of $\tst$. Finally, we leverage compactness of the rescaled indices to establish the convergence.

In the following, define
\begin{align*}
L_n^{\rm PSPO}(t)&=\sup_{\Psi\uparrow}\;
\frac{1}{n}\sum_{i=1}^n\ell(t,\Psi;w_i,z_i),& L^{\rm PSPO}(t)&=\sup_{\Psi\uparrow}\E\big[\ell(t,\Psi;w,z)\big],\\
L_{n,\alpha}^{\rm PSPO}(t)&=\sup_{\substack{\Psi\uparrow\\ \Psi:\R\to[\alpha,1-\alpha]}}\;
\frac{1}{n}\sum_{i=1}^n\ell(t,\Psi;w_i,z_i),& L_{\alpha}^{\rm PSPO}(t)&=\sup_{\substack{\Psi\uparrow\\ \Psi:\R\to[\alpha,1-\alpha]}}\E\big[\ell(t,\Psi;w,z)\big],\\
\bar{\Tcal}&=\{t/\|t\|:t\in\Tcal\backslash\{0\}\},&\bar{\Tcal}_0&=\bar{\Tcal}\cup(\Tcal\cap\{0\}),\\
\omega(p,q)&=p\log q+(1-p)\log(1-q),&\varpi(p)&=\omega(p,p).\end{align*}

\subsubsection{Uniform Law of Large Numbers}

\begin{lemma}[Isotonic profiling ULLN for $L_n^{\rm PSPO}$]
\label{lem:isotonic-cont}
Suppose \cref{asmp:basic} holds so that $M=\sup_{t\in\Tcal}\|t\|_\infty<\infty$. Moreover, suppose that the class $\{\mathbb I[t(w)\leq\tau]:t\in\mathcal T,~\tau\in[-M,M]\}$ is 
$P$-Glivenko-Cantelli.  
Then,
\[
\sup_{t\in\Tcal}\Big|L_n^{\rm PSPO}(t)-L^{\rm PSPO}(t)\Big|\ \xrightarrow{p}\ 0.
\]
\end{lemma}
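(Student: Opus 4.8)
The plan is to establish the uniform law of large numbers for the profiled objective by a bracketing/entropy argument on the isotonic-regression-induced function class. The key observation is that for a \emph{fixed} index $t$, the profiled quantity $\sup_{\Psi\uparrow}\frac1n\sum_i \ell(t,\Psi;w_i,z_i)$ is itself an empirical process over the class of monotone functions, and the crucial structural fact is that the pushforward index $t(w)$ is one-dimensional, so the profiling reduces to isotonic regression of $z$ on the scalar $t(w)\in[-M,M]$. I would first record this reduction: by the Pool Adjacent Violators characterization, the profiled empirical value depends on $t$ only through the ordering induced by $t(w_1),\dots,t(w_n)$ and the associated $z$-values, and more usefully, it can be written as a supremum over a class of functions of the form $w\mapsto \varphi(t(w))$ with $\varphi$ monotone, evaluated against $(w,z)$-integrands $\ell$.

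Next I would linearize the problem by passing to the class $\mathcal G = \{(w,z)\mapsto \ell(t,\Psi;w,z) : t\in\mathcal T,\ \Psi\uparrow\}$ and aim to show $\mathcal G$ is $P$-Glivenko–Cantelli, which immediately gives $\sup_{t}|L_n^{\rm PSPO}(t) - L^{\rm PSPO}(t)|\to_p 0$ since taking suprema over $\Psi$ on both the empirical and population sides only shrinks the gap (uniform convergence of an integrand class is inherited by its partial suprema). To control $\mathcal G$, I would decompose: the composition $\Psi\circ t$ ranges over indicators-of-sublevel-sets composed into monotone shapes, so membership in the subgraph of $w\mapsto\Psi(t(w))$ at level $\tau$ is exactly $\{w: t(w) \le \Psi^{-1}(\tau)\}$ (up to ties), i.e. it lies in the class $\{\mathbb I[t(w)\le \tau]:t\in\mathcal T,\tau\in[-M,M]\}$, which is assumed $P$-Glivenko–Cantelli. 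Hence $\{w\mapsto \Psi(t(w))\}$ is a GC class of uniformly bounded $[0,1]$-valued functions. The logarithmic transforms $q\mapsto \log q$ and $q\mapsto\log(1-q)$ are unbounded near the endpoints, so I would either (i) restrict the profiled $\Psi$ to $[\eta,1-\eta]$ and argue the truncation is harmless in the limit, or (ii) note that the profiled optimum automatically stays bounded away from $0,1$ with high probability because $\Var(\E[z\mid w])$ — wait, that is not assumed here; so truncation plus a monotone/continuity argument that $\sup_{\Psi:\,\mathrm{range}\subset[\eta,1-\eta]}$ converges to the unrestricted $\sup$ as $\eta\to 0$ is the cleaner route. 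With the integrand class bounded and the relevant indicator class GC, standard preservation results (products, sums, Lipschitz composition with the bounded-range log) upgrade $\mathcal G$ to GC, and we conclude.

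The main obstacle I anticipate is the interchange of the profiling supremum over $\Psi$ with the uniform limit, combined with the unboundedness of the cross-entropy at the simplex boundary. Specifically: the population-profiled $\Psi$ and the empirical-profiled $\Psi$ are different monotone functions, and one must ensure the supremum over $\Psi$ does not push the objective toward $-\infty$ (degenerate monotone fits placing mass at $0$ or $1$) in a way that breaks uniformity in $t$. I expect the resolution is to observe that the optimal profiled $\Psi$ is the isotonic regression of $z$ on $t(w)$, whose fitted values are averages of $z_i$'s and hence lie in $[\hat p_{\min}, \hat p_{\max}]$ where extremes occur only on vanishingly thin level sets; formally, one truncates to $[\eta_n,1-\eta_n]$ with $\eta_n\to 0$ slowly and shows the truncated profiled value differs from the true profiled value by $o_p(1)$ uniformly in $t$, using $M<\infty$ and the GC property of the sublevel-set class to control how much mass can concentrate where the isotonic fit is extreme. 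Once that truncation step is in hand, everything else is routine Glivenko–Cantelli preservation.
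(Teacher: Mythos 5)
Your approach is essentially the same as the paper's. Both reduce to showing $\{\Psi\circ t: \Psi\uparrow,\ t\in\Tcal\}$ is a $P$-Glivenko--Cantelli class of $[0,1]$-valued functions via the thresholded-index assumption --- the paper does this through a Lebesgue--Stieltjes representation $\Psi(u)=\Psi(-M)+\int\mathbb I[u>v]\,\mu_\Psi(dv)$ so that $\sup_{\Psi,t}|(P_n-P)\Psi(t(w))|\le\sup_{t,v}|(P_n-P)\mathbb I[t(w)>v]|$, while you do it through level sets of $\Psi\circ t$, which is equivalent --- and both then truncate the range of $\Psi$ away from $\{0,1\}$ so that the log-loss becomes Lipschitz in $\Psi(t(w))$ and GC preservation under Lipschitz composition applies. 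The paper truncates at a fixed $\varepsilon=\min\{\Psi^\star(-M),1-\Psi^\star(M),0.25\}$, using $\E[z\mid w]\in[\varepsilon,1-\varepsilon]$ and the shape of $q\mapsto p\log q+(1-p)\log(1-q)$ to conclude the population supremum is unchanged by the truncation, and then is quite terse about the empirical side. Your explicit attention to the empirical direction is the right instinct: the empirical-profiled $\hat\Psi_n$ can take values $0$ or $1$ on pools with uniform labels, and this is the genuinely delicate step. However, your proposed fix (controlling the \emph{mass} on which the isotonic fit is extreme, with $\eta_n\to0$ slowly) is more complicated than needed: clipping any $\Psi$ into $[\eta,1-\eta]$ costs at most $-\log(1-\eta)$ \emph{per sample} in the cross-entropy, uniformly over the data, so $L_n^{\rm PSPO}(t)\le L_n^\eta(t)-\log(1-\eta)$ deterministically, and $L^\eta(t)=L^{\rm PSPO}(t)$ once $\eta\le\varepsilon$; a fixed-$\eta$ GC bound plus $\eta\downarrow0$ (a diagonal argument, or your $\eta_n$ schedule) then closes it without any mass control.
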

\begin{proof}
Let any monotone $\Psi:[-M,M]\to[0,1]$ be given. It has variation bounded by $1$ so it has a Lebesgue-Stieltjes representation as $\Psi(u)=\Psi(-M)+\int_{(-M,M]}\indic{u>v}\mu_\Psi(dv)$ for a positive measure $\mu_\Psi$ on $[-M,M]$ with mass at most 1. For any $t\in\Tcal$, we can therefore write $\Psi(t(w))=\Psi(-M)+\int_{(-M,M]}\indic{t(w)>v}\mu_\Psi(dv)$. Therefore,
\begin{align*}
\sup_{\Psi\uparrow,t\in\Tcal}\abs{(P_n-P)\Psi(t(w))}&\leq
\sup_{\Psi\uparrow,t\in\Tcal}\abs{\int_{(-M,M]}(P_n-P)\indic{t(w)>v}\mu_\Psi(dv)}\\
&\leq
\sup_{t\in\Tcal,v\in[-M,M]}\abs{(P_n-P)\indic{t(w)>v}}.
\end{align*}
Since the class of thresholded indices is $P$-Glivenko-Cantelli, we conclude that $\{\Psi(t(w)):\Psi\uparrow,\,t\in\Tcal\}$ is $P$-Glivenko-Cantelli.

Fix $\alpha\in(0,1/2)$. Since $q\mapsto\log q$ and $q\mapsto\log(1-q)$ are Lipschitz on $[\alpha,1-\alpha]$, the class
\[
\{(w,z)\mapsto \ell(t,\Psi;w,z):t\in\Tcal,\ \Psi\uparrow,\ \Psi:\R\to[\alpha,1-\alpha]\}
\]
is $P$-Glivenko-Cantelli: an $L_1(P_w)$ bracket for $\Psi(t(w))$ gives $L_1(P)$ brackets for $z\log\Psi(t(w))$ and $(1-z)\log(1-\Psi(t(w)))$ with widths enlarged by at most $\alpha^{-1}$. Hence
\[
e_{n,\alpha}:=\sup_{t\in\Tcal}\abs{L_{n,\alpha}^{\rm PSPO}(t)-L_{\alpha}^{\rm PSPO}(t)}\xrightarrow{p}0.
\]

Let $\eta(w)=\E[z\mid w]$. Since $\Psi^\star$ is strictly increasing and $\|t^\star\|_\infty\le M$, $\eta(w)$ takes values in $[\varepsilon,1-\varepsilon]$ for $\varepsilon=\min\{\Psi^\star(-M),1-\Psi^\star(M),0.25\}>0$. If $\alpha\le\varepsilon$, then $L_{\alpha}^{\rm PSPO}(t)=L^{\rm PSPO}(t)$ for every $t\in\Tcal$: for any $q\in[0,1]$, clipping $q$ to $[\alpha,1-\alpha]$ moves it weakly closer to $\eta(w)$, and $q\mapsto\omega(\eta(w),q)$ is increasing on $[0,\eta(w)]$ and decreasing on $[\eta(w),1]$.

It remains to compare the unconstrained and clipped empirical profiles. Let $q_\alpha=\min\{\max\{q,\alpha\},1-\alpha\}$ and $\delta_\alpha=-\log(1-\alpha)$. For every $q\in[0,1]$ and $z\in\{0,1\}$,
\[
z\log q+(1-z)\log(1-q)
\le
z\log q_\alpha+(1-z)\log(1-q_\alpha)+\delta_\alpha.
\]
Clipping a nondecreasing $\Psi$ preserves monotonicity, so for every $t$,
\[
L_{n,\alpha}^{\rm PSPO}(t)\le L_n^{\rm PSPO}(t)\le L_{n,\alpha}^{\rm PSPO}(t)+\delta_\alpha.
\]
Therefore, for every fixed $\alpha\le\varepsilon$,
\[
\sup_{t\in\Tcal}\abs{L_n^{\rm PSPO}(t)-L^{\rm PSPO}(t)}
\le e_{n,\alpha}+\delta_\alpha.
\]
Given any $\xi>0$, choose $\alpha\le\varepsilon$ with $\delta_\alpha<\xi/2$ and then take $n$ large enough that $e_{n,\alpha}<\xi/2$ with probability tending to one. This proves the claim.
\end{proof}

Since both $L^{\rm PSPO}$ and $L_n^{\rm PSPO}$ are scale invariant, an immediate corollary of \cref{lem:isotonic-cont} is that
\[
\sup_{t\in\bar{\Tcal}}\Big|L_n^{\rm PSPO}(t)-L^{\rm PSPO}(t)\Big|\ \xrightarrow{p}\ 0.
\]

\subsubsection{Upper Semi-Continuity}
\begin{lemma}
\label{lem:pspo-usc}
Suppose \cref{asmp:basic} holds and that $t(w)$ has no atoms for each $t\in\Tcal\backslash\{0\}$. 
Then, $L^{\rm PSPO}(t)$ is upper semi-continuous on $\bar\Tcal$.
\end{lemma}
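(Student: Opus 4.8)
The plan is to combine Helly's selection theorem (compactness of the class of monotone links) with the dominated convergence theorem, using the atomlessness hypothesis to pass a possibly-discontinuous limiting link through an $L^2$-convergent sequence of indices. Throughout write $F(t,\Psi)=\mathbb E[\ell(t,\Psi;w,z)]$, so that $L^{\rm PSPO}(t)=\sup_{\Psi\uparrow}F(t,\Psi)$.

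First I would record the range restriction established inside the proof of \cref{lem:isotonic-cont}: since $\mathbb E[z\mid w]$ takes values in $[\varepsilon,1-\varepsilon]$ for some $\varepsilon>0$, and $q\mapsto p\log q+(1-p)\log(1-q)$ is unimodal with mode $p$, clipping any candidate link to $[\varepsilon,1-\varepsilon]$ preserves monotonicity and never decreases $F(t,\cdot)$. Hence $L^{\rm PSPO}(t)=\sup_{\Psi\in\mathcal G}F(t,\Psi)$, where $\mathcal G$ is the set of nondecreasing $\Psi:[-M,M]\to[\varepsilon,1-\varepsilon]$, and the integrand defining $F(t,\Psi)$ is bounded by $|\log\varepsilon|$ uniformly over $\Psi\in\mathcal G$ and $t\in\bar{\mathcal T}$.

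Since $\bar{\mathcal T}$ carries the $L^2$ metric, it suffices to show: if $t_k\to t$ in $L^2$ with $t,t_k\in\bar{\mathcal T}$, then $\limsup_k L^{\rm PSPO}(t_k)\le L^{\rm PSPO}(t)$. Choose near-maximizers $\Psi_k\in\mathcal G$ with $F(t_k,\Psi_k)\ge L^{\rm PSPO}(t_k)-1/k$, and suppose toward a contradiction that along some subsequence $L^{\rm PSPO}(t_k)\to\ell>L^{\rm PSPO}(t)$, so $F(t_k,\Psi_k)\to\ell$ along it. Pass to a further subsequence on which (i) Helly's selection theorem produces $\Psi_\infty\in\mathcal G$ with $\Psi_k(u)\to\Psi_\infty(u)$ at every continuity point $u$ of $\Psi_\infty$, and (ii) $t_k(w)\to t(w)$ $P$-a.s.\ (possible since $L^2$ convergence implies a.s.\ convergence along a subsequence). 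Because every element of $\bar{\mathcal T}$ is a positive rescaling of a nonzero element of $\mathcal T$ and atomlessness is scale-invariant, $t(w)$ is atomless, so $P\big(t(w)\in\mathrm{Disc}(\Psi_\infty)\big)=0$ as $\mathrm{Disc}(\Psi_\infty)$ is countable.

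On the a.s.\ event that $u:=t(w)$ is a continuity point of $\Psi_\infty$ and $u_k:=t_k(w)\to u$, a squeeze argument gives $\Psi_k(u_k)\to\Psi_\infty(u)$: pick continuity points $a<u<b$ of $\Psi_\infty$ with $\Psi_\infty(b)-\Psi_\infty(a)$ arbitrarily small, use $\Psi_k(a)\le\Psi_k(u_k)\le\Psi_k(b)$ for $k$ large, and let $k\to\infty$ using $\Psi_k(a)\to\Psi_\infty(a)$ and $\Psi_k(b)\to\Psi_\infty(b)$. Therefore the integrand $z\log\Psi_k(t_k(w))+(1-z)\log(1-\Psi_k(t_k(w)))$ converges $P$-a.s.\ to $z\log\Psi_\infty(t(w))+(1-z)\log(1-\Psi_\infty(t(w)))$, is bounded by $|\log\varepsilon|$, so by bounded convergence $F(t_k,\Psi_k)\to F(t,\Psi_\infty)\le\sup_{\Psi\in\mathcal G}F(t,\Psi)=L^{\rm PSPO}(t)$, contradicting $F(t_k,\Psi_k)\to\ell>L^{\rm PSPO}(t)$. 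Hence $L^{\rm PSPO}$ is upper semi-continuous on $\bar{\mathcal T}$. The crux is this last step — transporting the discontinuous limit link $\Psi_\infty$ across the perturbed index — which is exactly where atomlessness of $t(w)$ on $\bar{\mathcal T}$ is indispensable; the nested subsequence extractions and the clipping step are routine.
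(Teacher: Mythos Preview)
Your proof is correct and follows essentially the same route as the paper's: extract near-maximizers, pass to an a.s.-convergent subsequence of indices, apply Helly's selection theorem to the links, use atomlessness plus a squeeze with continuity points to get $\Psi_k(t_k(w))\to\Psi_\infty(t(w))$ a.s., and finish with dominated convergence. Your explicit clipping of the link to $[\varepsilon,1-\varepsilon]$ is a clean addition that justifies the domination step, which the paper invokes without spelling out.
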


\begin{proof}
Let $t_m$ be a sequence with $\|t_m-t\|\to0$.
Let $\varepsilon=\min\{\Psi^\star(-M),1-\Psi^\star(M),0.25\}>0$. As in the proof of \cref{lem:isotonic-cont}, the population profile is unchanged by restricting $\Psi$ to $[\varepsilon,1-\varepsilon]$. For each $m$, pick monotonic $\Psi_m:[-M,M]\to[\varepsilon,1-\varepsilon]$ such that $\E[\ell(t_m,\Psi_m;w,z)]\ge L^{\rm PSPO}(t_m)-1/m$.
Since $t_m\to t$ in $L_2$, along some subsequence we have $t_{m_i}\to t$ a.s.
Since each $\Psi_{m_i}$ is nondecreasing and takes values in $[\varepsilon,1-\varepsilon]$, Helly's selection theorem yields a further subsequence such that $\Psi_{m_{i_j}}(u)\to \Psi(u)$ for all continuity points $u$ of $\Psi$ for some nondecreasing $\Psi:[-M,M]\to[\varepsilon,1-\varepsilon]$.
Fix $w_0$ such that $t_{m_{i_j}}(w_0)\to t(w_0)=u$ and $u$ is a continuity point of $\Psi$.
Since the set of discontinuities of a monotone function is countable and hence has dense complement, choose continuity points $u_k\uparrow u$ and $u'_k\downarrow u$ of $\Psi$.
For each $k$, for all sufficiently large $j$, $u_k<t_{m_{i_j}}(w_0)<u'_k$, so by monotonicity $\Psi_{m_{i_j}}(u_k)\le \Psi_{m_{i_j}}(t_{m_{i_j}}(w_0))\le \Psi_{m_{i_j}}(u'_k)$.
Letting $j\to\infty$ gives $\Psi(u_k)\le \liminf_j \Psi_{m_{i_j}}(t_{m_{i_j}}(w_0))\le \limsup_j \Psi_{m_{i_j}}(t_{m_{i_j}}(w_0))\le \Psi(u'_k)$, and then taking $k\to\infty$ yields $\Psi_{m_{i_j}}(t_{m_{i_j}}(w_0))\to\Psi(u)$ by continuity of $\Psi$ at $u$.
Because $\Psi$ has only countably many discontinuities and $t(w)$ has no atoms, $\PP(t(w)\ \text{is a continuity point of }\Psi)=1$, hence $\Psi_{m_{i_j}}(t_{m_{i_j}}(w))\to \Psi(t(w))$ a.s.
Since the log loss is bounded on $[\varepsilon,1-\varepsilon]$, dominated convergence gives
$
\lim_{j\to\infty}\E[\ell(t_{m_{i_j}},\Psi_{m_{i_j}};w,z)] \;=\; \E[\ell(t,\Psi;w,z)] \;\le\; L^{\rm PSPO}(t),
$
which proves that $\limsup_m L^{\rm PSPO}(t_m)\le L^{\rm PSPO}(t)$.
\end{proof}

\subsubsection{Maximizers are scalings of $\tst$}

\begin{lemma}\label{lem:pspo-scaling} Under the conditions of \cref{thm:pspo}, $t$ is a maximizer of $L^{\rm PSPO}(t)$ if and only if $t\neq0$ has $\tst(w)=a t(w)$ almost surely for some $a\geq0$.\end{lemma}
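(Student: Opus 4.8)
The plan is to reduce the population profiled objective to a one-dimensional problem---the KL projection of the regression $g_t(u):=P(z=1\mid t(w)=u)=\E[z\mid t(w)=u]$ onto monotone functions---and then chain two variational inequalities whose equality cases pin $t$ down up to a nonnegative scale. First I would rewrite $L^{\rm PSPO}$ in conditional form: since $\omega(p,q)$ is affine in $p$ and $\Psi(t(w))$ is $\sigma(t(w))$-measurable, conditioning first on $w$ and then on $t(w)$ gives $\E[\ell(t,\Psi;w,z)]=\E[\omega(g_t(t(w)),\Psi(t(w)))]$, and $g_t(t(w))=\E[z\mid t(w)]$ lies in $[\varepsilon,1-\varepsilon]$ a.s.\ for the same $\varepsilon>0$ as in the proof of \cref{lem:isotonic-cont}, being a conditional average of $\E[z\mid w]=\Psis(\tst(w))$. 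Writing $\omega(p,q)=\varpi(p)-d(p\|q)$ with $d$ the binary KL divergence, this gives $L^{\rm PSPO}(t)=\E[\varpi(g_t(t(w)))]-\inf_{\Psi\uparrow}\E[d(g_t(t(w))\|\Psi(t(w)))]\le\E[\varpi(g_t(t(w)))]$, and I would show the inequality is tight iff $g_t$ agrees $P_{t(w)}$-a.s.\ with a nondecreasing map $[-M,M]\to[0,1]$. The ``if'' direction is immediate by plugging that map in as $\Psi$; the ``only if'' direction---the main obstacle---I would argue by taking $\Psi_n\uparrow$ with $\E[d(g_t(t(w))\|\Psi_n(t(w)))]\to0$, using Pinsker ($d(p\|q)\ge2(p-q)^2$) to get $\Psi_n(t(w))\to g_t(t(w))$ in $L_2$, passing to an a.s.-convergent subsequence, and observing that a pointwise limit of nondecreasing functions is nondecreasing, so $g_t$ equals such a limit on a set of full $P_{t(w)}$-measure and extends to a genuine nondecreasing $[-M,M]\to[0,1]$ map.

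Next I would layer a global bound. Since $\varpi$ is strictly convex on $[\varepsilon,1-\varepsilon]$, conditional Jensen gives $\E[\varpi(g_t(t(w)))]=\E[\varpi(\E[z\mid t(w)])]\le\E[\varpi(\E[z\mid w])]$, with equality iff $\E[z\mid w]$ is $\sigma(t(w))$-measurable, i.e.\ $g_t(t(w))=\Psis(\tst(w))$ a.s. Because $\tst\in\Tcal$ by \cref{asmp:realizability} and $g_{\tst}(\tst(w))=\E[z\mid \tst(w)]=\Psis(\tst(w))$ is nondecreasing in $\tst(w)$, the index $\tst$ attains the value $\E[\varpi(\E[z\mid w])]$, so this is the global maximum of $L^{\rm PSPO}$ over $\Tcal$. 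Combining the two steps, $t$ maximizes $L^{\rm PSPO}$ iff both equality cases hold: there is a nondecreasing $\Psi$ with $\Psi(t(w))=g_t(t(w))=\Psis(\tst(w))$ a.s. Since $\Psis$ is continuous and strictly increasing (as used in the proof of \cref{lem:isotonic-cont}), so that $(\Psis)^{-1}$ is defined on the essential range of $\Psis(\tst(w))$, a subset of $[\varepsilon,1-\varepsilon]$, applying it to both sides gives $\tst(w)=m(t(w))$ a.s.\ with $m=(\Psis)^{-1}\circ\Psi$ nondecreasing.

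Finally I would convert monotone observational equivalence into a scaling. For a maximizer $t=t_\theta$, the relation $\tst=m(t_\theta)$ a.s.\ is precisely the premise of \cref{asm:aip-id}, which yields $h_{\thetas}(x,y)=a\,h_\theta(x,y)+b(x)$ a.s.\ for some $a\ge0$ and measurable $b$; differencing over $y_1,y_0$ cancels $b(x)$, so $\tst=a\,t$ a.s. As $\Var(\E[z\mid w])=\Var(\Psis(\tst(w)))\neq0$ under \cref{thm:pspo}, $\tst\neq0$, which forces $a>0$ and $t\neq0$. Conversely, if $t\neq0$ and $\tst=a\,t$ a.s.\ with $a\ge0$, then $a>0$ (else $\tst=0$), so $g_t(u)=\Psis(au)$ for $P_{t(w)}$-a.e.\ $u$ is nondecreasing (tightness in the first step) and $g_t(t(w))=\Psis(\tst(w))=\E[z\mid w]$ a.s.\ (tightness in the second step), hence $t$ attains the global maximum and is a maximizer. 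The delicate part throughout is the equality characterization of the profiling step---that exactly attaining the unconstrained pointwise log-loss optimum forces a.s.\ monotonicity of $g_t$---handled by the Pinsker-plus-subsequence device; everything else is bookkeeping with Jensen's inequality and the invertibility of $\Psis$.
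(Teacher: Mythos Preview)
Your proposal is correct and follows essentially the same route as the paper: chain the inequality $L^{\rm PSPO}(t)\le \E[\varpi(g_t(t(w)))]$ (profiling vs.\ unconstrained link) with the Jensen bound $\E[\varpi(g_t(t(w)))]\le \E[\varpi(\eta(w))]$, read off the equality cases to get $\tst=m(t)$ a.s.\ for nondecreasing $m$, and invoke \cref{asm:aip-id}. The only cosmetic difference is how the equality case of the profiling step is argued---the paper appeals to strict concavity of $q\mapsto\E[\omega(z,q)\mid t(w)]$ and the fact that the monotone links form a closed convex subset, whereas you use Pinsker plus an a.s.-convergent subsequence; both are expressions of the same strong-concavity fact and neither buys anything the other does not.
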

\begin{proof}
Let $\eta(w)=\E[z\mid w]=\Psis(\tst(w))$ and $g^\star_t(u)=\E[z\mid t(w)=u]=\E[\eta(w)\mid t(w)=u]$. 
Then, $\sup_{t\in\Tcal}L^{\rm PSPO}(t)=L^{\rm PSPO}(\tst)=\E\omega(z,\eta(w))=\E\varpi(\eta(w))$. 
Let $J_t=\sup_{q:\mathbb R\to(0,1)}\E\omega(z,q(t(w)))$ where $q$ is unrestricted. 
Note that $\mathbb E[\omega(z,q)\mid t(w)]=\omega(g^\star_t(t(w)),q)$, which is strictly concave in $q$ with unique maximizer $q=g^\star_t(t(w))$. 
Therefore, $J_t=\E\omega(z,g^\star_t(t(w)))=\E\varpi(g^\star_t(t(w)))$, where the latter equality is by iterating expectations on $t(w)$. 
Since $L^{\rm PSPO}(t)$ is defined by restricting to monotone $\Psi$ while $J_t$ is defined with $q$ unrestricted, we have $L^{\rm PSPO}(t)\leq J_t$. 
By Jensen's, $\E[\varpi(\eta(w))\mid t(w)]\geq\varpi(\E[\eta(w)\mid t(w)])=\varpi(g^\star_t(t(w)))$. 
Taking expectations, $L^{\rm PSPO}(\tst)\geq J_t$. 
In all, $L^{\rm PSPO}(t)\leq J_t\leq L^{\rm PSPO}(\tst)$.

Suppose $L^{\rm PSPO}(t)=L^{\rm PSPO}(\tst)$. 
Then $L^{\rm PSPO}(t)= J_t= L^{\rm PSPO}(\tst)$. 
We have $L^{\rm PSPO}(t)=\sup_{q\uparrow:\mathbb R\to(0,1)}\E\omega(z,q(t(w)))=\sup_{q:\mathbb R\to(0,1)}\E\omega(z,q(t(w)))=J_t$, while $q\mapsto\E[\omega(z,q)\mid t(w)=u]$ is strictly concave with unique maximizer $q=g^\star_t(u)$. 
Hence, any maximizer in the definition of $J_t$ must be almost everywhere equal to $g^\star_t$. 
If $g^\star_t$ were not equal to a monotonic function almost everywhere, we would have $L^{\rm PSPO}(t)<J_t$ because $L^{\rm PSPO}(t)$ is restricted to the closed convex subset of monotonic functions. 
We conclude that $\tst(w)=m(t(w))$ almost surely for a monotonic function $m$ such that $m(t(w))={\Psis}^{-1}(g^\star_t(t(w)))$ almost surely.
Moreover, $\Var(\eta(w))\neq0$, so $\tst$ is not constant, and therefore $t(w)$ cannot be constant. Since $\E t(w)=0$, this means $t\neq0$. \Cref{asm:aip-id} completes the proof.
\end{proof}

\subsubsection{Proof of \cref{thm:pspo}}
\begin{proof}[Proof of Theorem~\ref{thm:pspo}]
Let $\bar t^\star=t^\star/\|t^\star\|_2\in\bar \Tcal$ and note that $L^{\rm PSPO}(\bar t^\star)=L^{\rm PSPO}(t^\star)$ by scale invariance.
By \cref{lem:pspo-usc}, $L^{\rm PSPO}$ is upper semicontinuous on $\bar \Tcal$.
By \cref{lem:pspo-scaling}, $\bar t^\star$ is the unique maximizer of $L^{\rm PSPO}$ over $\bar \Tcal$: if $\bar t\in\bar \Tcal$ and $L^{\rm PSPO}(\bar t)=L^{\rm PSPO}(\bar t^\star)$, write $\bar t=u/\|u\|_2$ with $u\in \Tcal\setminus\{0\}$; then $L^{\rm PSPO}(u)=L^{\rm PSPO}(\bar t)=L^{\rm PSPO}(\bar t^\star)=L^{\rm PSPO}(t^\star)$, hence $u=a t^\star$ for some $a>0$, so $\bar t=\bar t^\star$.

Fix $\zeta>0$ and set $A_\zeta=\{\bar t\in\bar \Tcal:\ \|\bar t-\bar t^\star\|_2\ge \zeta\}$, which is compact.
Upper semicontinuity implies $m_\zeta=\max_{\bar t\in A_\zeta}L^{\rm PSPO}(\bar t)$ is attained, and uniqueness gives $\Delta_\zeta=L^{\rm PSPO}(\bar t^\star)-m_\zeta>0$.
On the event $E_n=\{\sup_{\bar t\in\bar \Tcal}|L^{\rm PSPO}_n(\bar t)-L^{\rm PSPO}(\bar t)|\le \Delta_\zeta/3\}$ we have
$L^{\rm PSPO}_n(\bar t)\le L^{\rm PSPO}(\bar t)+\Delta_\zeta/3\le m_\zeta+\Delta_\zeta/3=L^{\rm PSPO}(\bar t^\star)-2\Delta_\zeta/3\le L^{\rm PSPO}_n(\bar t^\star)-\Delta_\zeta/3$
for all $\bar t\in A_\zeta$, hence every maximizer of $L^{\rm PSPO}_n$ over $\bar \Tcal$ lies in $\bar \Tcal\setminus A_\zeta$, i.e.\ within $\zeta$ of $\bar t^\star$.

If $0\notin \Tcal$, then $t_n\neq 0$ a.s. and $\bar t_n=t_n/\|t_n\|_2\in\bar \Tcal$ is well-defined; scale invariance yields $L^{\rm PSPO}_n(\bar t_n)=L^{\rm PSPO}_n(t_n)=\max_{t\in \Tcal}L^{\rm PSPO}_n(t)=\max_{\bar t\in\bar \Tcal}L^{\rm PSPO}_n(\bar t)$, so $\bar t_n$ is a maximizer of $L^{\rm PSPO}_n$ over $\bar \Tcal$.
If $0\in \Tcal$, let $\gamma=\tfrac12(L^{\rm PSPO}(t^\star)-L^{\rm PSPO}(0))>0$; on the event $\{|L^{\rm PSPO}_n(\bar t^\star)-L^{\rm PSPO}(\bar t^\star)|\le \gamma\}\cap\{|L^{\rm PSPO}_n(0)-L^{\rm PSPO}(0)|\le \gamma\}$ we have $L^{\rm PSPO}_n(\bar t^\star)>L^{\rm PSPO}_n(0)$ and thus $t_n\neq 0$, so the same conclusion holds with probability tending to $1$.
Therefore, for all large $n$ on an event of probability tending to $1$, $\bar t_n$ is a maximizer of $L^{\rm PSPO}_n$ over $\bar \Tcal$ and hence $\|\bar t_n-\bar t^\star\|_2<\zeta$ on $E_n$.
Thus $\PP(\|\bar t_n-\bar t^\star\|_2\ge \zeta)\le \PP(E_n^c)+o(1)\to 0$.

Finally, on $\{t_n\neq 0\}$,
\[
\inf_{a\ge 0}\|a t_n-t^\star\|_2\ \le\ \Big\|\frac{\|t^\star\|_2}{\|t_n\|_2}t_n-t^\star\Big\|_2
=\|t^\star\|_2\,\|\bar t_n-\bar t^\star\|_2.
\]
Since $\zeta>0$ was arbitrary, $\|\bar t_n-\bar t^\star\|_2\to 0$ in probability implies $\inf_{a\ge 0}\|a t_n-t^\star\|_2\to 0$ in probability, which dominates $\rho(h_\theta)$ up to the coverage constant in \cref{asmp:basic}.
\end{proof}

\subsection{Theoretical Properties of OSPO}
\label{sec:ospo-theory}

In this section, we define
\begin{align*}
g^\star_t(u)&=P(z=1\mid t(w)=u),&
\eta(w)&=\E[z\mid w]=\Psis(\tst(w))=g^\star_{\tst}(\tst(w)),
\\
L_n(t,g)
&= \frac{1}{n}\sum_{i=1}^n \ell(t,g_t;w_i,z_i),&
L(t,g)
&= \E\ell(t,g_t;w,z)
\\
L_n^{\rm ideal}(t)&=L_n(t,\gs),~
L_n^{\rm OSPO}(t)=L_n(t,\hat g),&
L^{\rm ideal}(t)&=L(t,\gs),~
L^{\rm OSPO}(t)=L(t,\hat g),\\
\omega(p,q)&=p\log q+(1-p)\log(1-q),&\varpi(p)&=\omega(p,p),\\
\varepsilon&=\min\{\Psis(-M),1-\Psis(M),0.25\}>0,&\tilde\varepsilon&=(1\wedge c_{\rm range})\varepsilon.
\end{align*}

\subsubsection{Proof of \cref{lem:margin}}

\begin{proof}
Since $\Psis$ is strictly increasing, $\E[z\mid w]$ only takes values in $[\varepsilon,1-\varepsilon]$. 
Note that we also have $\varpi''(u)\geq \varpi''_{\min}=4$.

We have that $L^{\rm ideal}(\tst)-L^{\rm ideal}(t)=\E[\varpi(\E[z\mid \tst])-\varpi(\E[\E[z\mid \tst]\mid t])]=\E[\E[\varpi(\E[z\mid \tst])-\varpi(\E[\E[z\mid \tst]\mid t])\mid t]]$. By Taylor's theorem $\E[\varpi(\E[z\mid \tst])-\varpi(\E[\E[z\mid \tst]\mid t])\mid t]\geq\frac{\varpi''_{\min}}{2}\mathrm{Var}(\E[z\mid \tst]\mid t)$.

Since $\Psis$ has derivative bounded below by some $c>0$ on $[-M,M]$, mean value theorem implies that $\abs{\Psis(u)-\Psis(u')}\geq c\abs{u-u'}$ for $u,u'\in[-M,M]$. Therefore, $\mathrm{Var}(\E[z\mid \tst]\mid t)=\mathrm{Var}(\Psis(\tst)\mid t)\geq c^2\mathrm{Var}(\tst\mid t)$.

The proof is therefore completed by noting that $\E\mathrm{Var}(\tst\mid t)=\inf_f\|\tst-f(t_\theta)\|^2$ and setting $\lambda_0=c^2_{\rm MSE}c^2\varpi''_{\min}/2$.
\end{proof}

\subsubsection{Proof of \cref{lem:fix-sign}}

\begin{proof}
Define $\mu=\E[t^\star(w) z]$.
Write $t=t_\theta$. Recall $\E[t]=\E[\tst]=0$ since $P$ is symmetric in actions.
Since $z=\Psi^\star(t^\star)$ with $\Psi^\star$ strictly increasing and $\E[t^\star]=0$, we have
\[
\mu=\E[t^\star z]=\E[t^\star \Psi^\star(t^\star)]>0 .
\]

Let $a_0=s/\E[t^2]$ be the unique minimizer of $\E[(t^\star-a t)^2]$, and set $R=t^\star-a_0 t$ so that $r^2=\E[R^2]$, $\E[tR]=0$, and $\E[(t^\star)^2] = a_0^2\E[t^2] + r^2$.
The condition $r\leq\gamma\mu$ with $\gamma<1$ implies $a_0\neq0$: if $a_0=0$, then $r=\|t^\star\|_2\ge \mu$, since $\|z\|_2\le1$.
Without loss of generality, $a_0>0$: if $a_0<0$, apply the argument below to $\tilde t=-t$, for which $\tilde a_0=-a_0>0$, $\tilde s=-s$, and $\tilde{\hat s}=-\hat s$, so $\{\hat s\,s\le0\}=\{\tilde{\hat s}\,\tilde s\le0\}$.

Using $t = (t^\star-R)/a_0$ and $z=\Psi^\star(t^\star)$,
we have $
\E[t z]
= \frac{1}{a_0}\E[(t^\star-R)z]
= \frac{1}{a_0}\bigl(\E[t^\star z] - \E[R z]\bigr).
$
Since $0\le z\le1$, Cauchy-Schwarz gives
$
|\E[R z]|\le \|R\|_2\|z\|_2\le r.
$
We obtain
$
\E[t^\star z] - \E[R z] \;\ge\; \mu - |\E[R z]|
\;\ge\; \mu-r \;\ge\; (1-\gamma)\mu.
$
Hence
$\E[t z]^2 = \frac{(\E[t^\star z]-\E[R z])^2}{a_0^2}
\;\ge\; \frac{(1-\gamma)^2\mu^2}{a_0^2}.$
From $\E[(t^\star)^2] = a_0^2\E[t^2] + r^2$ we also have
$
\E[t^2] = \frac{\E[(t^\star)^2]-r^2}{a_0^2}
\;\le\; \frac{\E[(t^\star)^2]}{a_0^2}.
$

By Chebyshev's inequality,
$
\mathbb P(\hat s\,s\le 0)
=\mathbb P(\hat s\le 0)
= \mathbb P\bigl(\hat s-\E[t z] \le -\E[t z]\bigr)
\le \mathbb P\bigl(|\hat s-\E[t z]|\ge|\E[t z]|\bigr)
\le \frac{\Var(tz)}{\abs{\Dcal}\E[t z]^2}
\le \frac{\E[(tz)^2]}{\abs{\Dcal}\E[t z]^2}.
$
Since $t^2 z^2 \le t^2$, we get
$
\mathbb P(\hat s\,s\le 0)
\le \frac{\E[t^2]}{\abs{\Dcal}\E[t z]^2}
\;\le\; \frac{\E[(t^\star)^2]/a_0^2}{\abs{\Dcal}\, ((1-\gamma)^2\mu^2/a_0^2)}
= \frac{1}{(1-\gamma)^2\abs{\Dcal}}\,
\frac{\E[(t^\star)^2]}{\mu^2}.
$
\end{proof}

\subsubsection{Proof of \cref{thm:ospo-rate}}

Define the $\hat g$-quasi-likelihood difference, scale-equivalence error, and logistic log-loss, respectively:
\[
\Errhatg
=L(t^\star,\hat g)-L(\hat t,\hat g)
\]
\[
\mathcal E(t)=\inf_{a\in\mathbb R}\norm{t^\star-a t}_2,
\]
\[
\phi_z(p)= - z\log p - (1-z)\log(1-p)=-\omega(z,p),
\]
so that $\ell(t,g;w,z)=-\phi_z\big(g_t(t(w))\big)$.

Further, we let $\hat g_t(u)=\hat g_\theta(u)$ when $t=t_\theta$. We also use the shorthand $q_g(t)(w)=g_t(t(w))$, $\hat q(t)(w)=\hat g_t(t(w))$, $q^\star(t)(w)=g^\star_t(t(w))$. We then define the errors
\[
\Delta q(t)=\hat q(t)-q^\star(t),
\qquad
\zeta=\sup_{t\in\Tcal}\norm{\Delta q(t)}_4.
\]
Lastly, let $\hat t=t_{\hat\theta^{\rm OSPO}}$.

\subsubsubsection{Supporting lemmas}

\begin{lemma}[Derivative bounds for the log-loss]\label{lem:phi-bounds}
For $p\in[{\tilde\varepsilon},1-{\tilde\varepsilon}]$ and $z\in\{0,1\}$,
\[
|\phi_z'(p)|\le \frac{1}{{\tilde\varepsilon}},\qquad
|\phi_z''(p)|\le \frac{1}{{\tilde\varepsilon}^2},\qquad
|\phi_z'''(p)|\le \frac{2}{{\tilde\varepsilon}^3}.
\]
In particular, $\phi_z''$ is $(2/{\tilde\varepsilon}^3)$-Lipschitz on $[{\tilde\varepsilon},1-{\tilde\varepsilon}]$.
\end{lemma}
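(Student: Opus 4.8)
The plan is to treat the two cases $z=0$ and $z=1$ separately, since $\phi_z$ reduces to a single logarithm in each case, and then read off the derivative bounds directly. First I would note that for $z=1$ we have $\phi_1(p)=-\log p$, so $\phi_1'(p)=-1/p$, $\phi_1''(p)=1/p^2$, and $\phi_1'''(p)=-2/p^3$; for $z=0$ we have $\phi_0(p)=-\log(1-p)$, so $\phi_0'(p)=1/(1-p)$, $\phi_0''(p)=1/(1-p)^2$, and $\phi_0'''(p)=2/(1-p)^3$. The point of restricting to $p\in[\tilde\varepsilon,1-\tilde\varepsilon]$ is that both $p$ and $1-p$ are bounded below by $\tilde\varepsilon$ there, so each of these expressions is bounded in absolute value by the corresponding power of $1/\tilde\varepsilon$: $|\phi_z'(p)|\le 1/\tilde\varepsilon$, $|\phi_z''(p)|\le 1/\tilde\varepsilon^2$, $|\phi_z'''(p)|\le 2/\tilde\varepsilon^3$, uniformly over $z\in\{0,1\}$.

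For the final Lipschitz claim I would apply the mean value theorem to $\phi_z''$ on the interval $[\tilde\varepsilon,1-\tilde\varepsilon]$: for any $p,p'$ in this interval, $|\phi_z''(p)-\phi_z''(p')|\le \bigl(\sup_{\xi\in[\tilde\varepsilon,1-\tilde\varepsilon]}|\phi_z'''(\xi)|\bigr)\,|p-p'|\le (2/\tilde\varepsilon^3)\,|p-p'|$, using the third-derivative bound just established. This gives the stated $(2/\tilde\varepsilon^3)$-Lipschitz property.

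There is no real obstacle here: the lemma is an elementary calculus fact, and the only thing worth stating explicitly is that the bounds hold with the \emph{same} constants for both values of $z$ (by the symmetry $p\leftrightarrow 1-p$ exchanging the $z=1$ and $z=0$ formulas), so that they apply to $\phi_z$ for arbitrary $z\in\{0,1\}$ as used later in the OSPO analysis. One could alternatively note $\phi_z''(p)=z/p^2+(1-z)/(1-p)^2>0$ and $\phi_z'''(p)=-2z/p^3+2(1-z)/(1-p)^3$ directly from the combined formula without splitting cases, but the case split is cleanest.
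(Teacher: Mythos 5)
Your proof is correct and follows essentially the same elementary-calculus route as the paper (direct differentiation plus the lower bounds $p,1-p\ge\tilde\varepsilon$, then the mean value theorem for the Lipschitz claim); the paper simply writes the combined formulas $\phi_z'(p)=-(z/p)+(1-z)/(1-p)$, etc., rather than splitting into $z=0,1$, which you yourself note as an alternative.
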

\begin{proof}
Compute $\phi_z'(p)=-(z/p)+(1-z)/(1-p)$, $\phi_z''(p)=z/p^2+(1-z)/(1-p)^2$, and
$\phi_z'''(p)=-2z/p^3+2(1-z)/(1-p)^3$. The bounds follow since $p,(1-p)\ge {\tilde\varepsilon}$.
\end{proof}

\begin{lemma}[Score-zero identity for each fixed $t$]\label{lem:score0}
For any index $t$ and any measurable $a:\mathbb R\to\mathbb R$ with $\E[a(t(w))^2]<\infty$,
\[
\E\Big[\phi_z'\big(q^\star(t)(w)\big)\,a\big(t(w)\big)\Big]=0.
\]
\end{lemma}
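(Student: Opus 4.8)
The plan is to reduce the identity to the defining property of $g^\star_t$ as a conditional expectation, via a single application of the tower property. First I would record well-definedness: under the standing model (\cref{asmp:div,asmp:realizability}), $z\mid w\sim\mathrm{Bernoulli}(\Psis(\tst(w)))$ with $\Psis$ strictly increasing on $[-M,M]$, so $\eta(w)=\E[z\mid w]\in[\varepsilon,1-\varepsilon]$ as in the proof of \cref{lem:margin}; since $q^\star(t)(w)=g^\star_t(t(w))$ is by definition a version of $\E[z\mid t(w)]=\E[\eta(w)\mid t(w)]$, it is a conditional average of a $[\varepsilon,1-\varepsilon]$-valued random variable and hence lies in $[\varepsilon,1-\varepsilon]$ almost surely. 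Therefore $|\phi_z'(q^\star(t)(w))|\le 1/\varepsilon$ by \cref{lem:phi-bounds}, and Cauchy--Schwarz together with $\E[a(t(w))^2]<\infty$ shows $\phi_z'(q^\star(t)(w))\,a(t(w))$ is integrable, so the asserted expectation is meaningful.

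Next I would compute the conditional expectation given the index. Write $p=q^\star(t)(w)=g^\star_t(t(w))$, which is $\sigma(t(w))$-measurable, and note $\phi_z'(p)=-z/p+(1-z)/(1-p)$ is affine in $z$. Using that $g^\star_t(t(w))$ is a version of $\E[z\mid t(w)]$,
\[
\E\left[\phi_z'\big(q^\star(t)(w)\big)\mid t(w)\right]
=-\frac{\E[z\mid t(w)]}{p}+\frac{1-\E[z\mid t(w)]}{1-p}
=-\frac{p}{p}+\frac{1-p}{1-p}=0 .
\]
Then, since $a(t(w))$ is $\sigma(t(w))$-measurable, the tower property gives
\[
\E\left[\phi_z'\big(q^\star(t)(w)\big)a\big(t(w)\big)\right]
=\E\left[a\big(t(w)\big)\,\E\left[\phi_z'\big(q^\star(t)(w)\big)\mid t(w)\right]\right]=0 ,
\]
which is the claim.

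I do not expect any substantive obstacle: the only points requiring care are the measurability and integrability bookkeeping in the first step (keeping $q^\star(t)$ strictly between $0$ and $1$ so that $\phi_z'$ is bounded and the product is integrable) and being precise that $g^\star_t(t(w))$ is a genuine version of $\E[z\mid t(w)]$. Conceptually, this identity is exactly the first-order stationarity of the per-index population risk in the ``magnitude'' direction --- i.e., with respect to perturbations of the link along functions of $t(w)$ --- evaluated at the true conditional expectation; it is the orthogonality property that the subsequent OSPO analysis (the proof of \cref{thm:ospo-rate}) exploits to get the quadratic dependence on the plug-in error.
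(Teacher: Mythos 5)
Your proof is correct and takes essentially the same approach as the paper: condition on $t(w)$, use that $q^\star(t)(w)$ is a version of $\E[z\mid t(w)]$ to see that the conditional score vanishes, and then apply the tower property. The only difference is that you make the integrability and range bookkeeping explicit, which the paper omits.
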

\begin{proof}
Write $p^\star(w)=q^\star(t)(w)=\E[z\mid t(w)]$.
Condition on $t(w)$:
\[
\E\big[\phi_z'(p^\star(w))\,a(t(w))\big]
=\E\Big[ a(t(w))\cdot \E\big[\phi_z'(p^\star(w))\mid t(w)\big]\Big].
\]
Since $z\mid t(w)$ is Bernoulli with mean $p^\star(w)$, direct substitution gives
$\E[\phi_z'(p^\star(w))\mid t(w)]=0$.
\end{proof}

\begin{lemma}[Oracle conditional mean is Lipschitz in index error]\label{lem:true-q-lip}
Under \cref{asmp:div,asmp:realizability,asmp:basic,asm:ospo-regularity}, there exists $L_\Psi<\infty$ such that, for every $t\in\Tcal$,
\[
\|q^\star(t)-q^\star(\tst)\|_2
\leq
L_\Psi\,\Ecal(t).
\]
\end{lemma}
\begin{proof}
Let $\eta(w)=\E[z\mid w]=\Psis(\tst(w))$. Then $q^\star(t)(w)=\E[\eta(w)\mid t(w)]$ and $q^\star(\tst)(w)=\eta(w)$.
By \cref{asm:ospo-regularity}, $\Psis$ is Lipschitz on $[-M,M]$; let $L_\Psi$ be a Lipschitz constant.
Let $r_t(w)=\E[\tst(w)\mid t(w)]$. Since $\tst\in[-M,M]$, also $r_t(w)\in[-M,M]$ almost surely. The function $\Psis(r_t(w))$ is measurable with respect to $t(w)$, so the $L_2$ projection property of conditional expectation gives
\[
\|q^\star(t)-q^\star(\tst)\|_2
=\|\E[\eta\mid t]-\eta\|_2
\leq \|\Psis(r_t)-\eta\|_2
\leq L_\Psi\|\tst-r_t\|_2.
\]
Since conditional expectation is also the $L_2$ projection of $\tst$ onto functions of $t(w)$, $\|\tst-r_t\|_2\leq \|\tst-a t\|_2$ for every $a\in\R$. Taking the infimum over $a$ proves the claim.
\end{proof}

\begin{lemma}[Oracle link minimizes log-loss conditional on $t(w)$]\label{lem:oracle-min}
For any index $t$ and any measurable $p(t(w))\in[{\tilde\varepsilon},1-{\tilde\varepsilon}]$,
\[
\E\big[\phi_z(p(t(w)))\big]\ \ge\ \E\big[\phi_z(q^\star(t)(w))\big].
\]
In particular, $L(t,\hat g)\le L(t,g^\star)$ for every $t\in\Tcal$.
\end{lemma}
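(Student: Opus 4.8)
The plan is to reduce the inequality to a pointwise‑in‑$t(w)$ optimization of the Bernoulli cross‑entropy and then integrate. The first step is to condition on $t(w)$: since $z\mid t(w)$ is Bernoulli with mean $q^\star(t)(w)=\E[z\mid t(w)]$, for any deterministic $q\in[\tilde\varepsilon,1-\tilde\varepsilon]$ we have $\E[\phi_z(q)\mid t(w)]=-\omega\bigl(q^\star(t)(w),q\bigr)$, where $\omega(p,q)=p\log q+(1-p)\log(1-q)$. The key elementary fact is that $q\mapsto\omega(p,q)$ is strictly concave on $(0,1)$ with derivative $p/q-(1-p)/(1-q)$ vanishing exactly at $q=p$, hence maximized at $q=p$ with value $\varpi(p)=\omega(p,p)$. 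Taking $q=p(t(w))$ therefore gives $\E[\phi_z(p(t(w)))\mid t(w)]=-\omega(q^\star(t)(w),p(t(w)))\ge -\varpi(q^\star(t)(w))=\E[\phi_z(q^\star(t)(w))\mid t(w)]$, and the outer expectation over $t(w)$ yields the first claim.

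For the ``in particular'' statement I would apply the first part with $p(t(w))=\hat q(t)(w)=\hat g_t(t(w))$, which requires checking that $\hat g_t$ lands in $[\tilde\varepsilon,1-\tilde\varepsilon]$ on the support of $t(w)$. Since $t^\star(w)\in[-M,M]$ a.s.\ and $\Psi^\star$ is increasing, $\Psi^\star(t^\star(w))\in[\varepsilon,1-\varepsilon]$ a.s.\ with $\varepsilon=\min\{\Psi^\star(-M),1-\Psi^\star(M),0.25\}$; hence $g^\star_\theta(u)=\E[\Psi^\star(t^\star(w))\mid t_\theta(w)=u]\in[\varepsilon,1-\varepsilon]$ for $u$ in the support of $t_\theta(w)$. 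The range part of \cref{asm:ospo-plugin} then forces $\hat g_\theta(u)\ge c_{\rm range}\varepsilon$ and $1-\hat g_\theta(u)\ge c_{\rm range}\varepsilon$, i.e.\ $\hat g_\theta(u)\in[\tilde\varepsilon,1-\tilde\varepsilon]$ because $\tilde\varepsilon=(1\wedge c_{\rm range})\varepsilon\le c_{\rm range}\varepsilon$. Recalling $L(t,g)=-\E[\phi_z(g_t(t(w)))]$, the first inequality then reads $L(t,\hat g)=-\E[\phi_z(\hat q(t)(w))]\le -\E[\phi_z(q^\star(t)(w))]=L(t,g^\star)$.

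There is no serious obstacle here: the heart of the argument is the textbook fact that the conditional mean minimizes cross‑entropy, and the lemma is purely a population identity with no empirical‑process or complexity input. The only place to be careful is the range bookkeeping — ensuring $g^\star_\theta\in[\varepsilon,1-\varepsilon]$ holds precisely on the support of $t_\theta(w)$ (where the expectations live) and carrying the constant $\tilde\varepsilon$ correctly through \cref{asm:ospo-plugin} — so that $\phi_z$ is only ever evaluated on an interval bounded away from $0$ and $1$. \Cref{lem:phi-bounds} is not needed for this lemma; it will matter only in the later curvature and orthogonality steps.
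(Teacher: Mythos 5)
Your proof is correct and follows essentially the same argument as the paper: condition on $t(w)$, observe that Bernoulli cross-entropy against the true conditional mean $q^\star(t)(w)$ is pointwise minimized by predicting that mean, and integrate. You add useful detail that the paper omits (spelling out the concavity of $q\mapsto\omega(p,q)$, and carefully verifying via \cref{asm:ospo-plugin} that $\hat g_\theta$ stays in $[\tilde\varepsilon,1-\tilde\varepsilon]$ so the ``in particular'' application is legitimate), but the substance is identical.
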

\begin{proof}
Fix $t$.
For any measurable $p(t(w))\in[{\tilde\varepsilon},1-{\tilde\varepsilon}]$,
$\E\big[\phi_z(p(t(w)))\mid t(w)\big]$
is the Bernoulli cross-entropy between the true conditional mean $\E[z\mid t(w)]=g_t^\star(t(w))$ and prediction $p(t(w))$.
For each realized $t(w)$, this is minimized at $p(t(w))=g_t^\star(t(w))$, i.e.\ at $p(t(w))=q^\star(t)(w)$.
Integrate over $t(w)$ to conclude.
\end{proof}

\begin{lemma}[Quadratic nuisance effect at a fixed $t$]\label{lem:quad-nuis}
For any $t$ such that $q^\star(t)(w),\hat q(t)(w)\in[{\tilde\varepsilon},1-{\tilde\varepsilon}]$ almost surely,
\[
0\ \le\ L(t,g^\star)-L(t,\hat g)\ \le\ \frac{1}{2{\tilde\varepsilon}^2}\,\E\big[(\Delta q(t)(w))^2\big]
\ \le\ \frac{1}{2{\tilde\varepsilon}^2}\,\norm{\Delta q(t)}_4^2.
\]
\end{lemma}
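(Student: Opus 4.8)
The plan is to express the excess quasi-likelihood as an expectation of a log-loss difference and then Taylor-expand $\phi_z$ to second order around the oracle conditional mean, using the orthogonality (score-zero) identity to eliminate the first-order term. First I would use $\ell(t,g;w,z)=-\phi_z\big(g_t(t(w))\big)$ to write
\[
L(t,g^\star)-L(t,\hat g)=\E\big[\phi_z\big(\hat q(t)(w)\big)-\phi_z\big(q^\star(t)(w)\big)\big].
\]
The nonnegativity on the left is then immediate from \cref{lem:oracle-min}: $\hat q(t)(w)=\hat g_t(t(w))$ is a measurable function of $t(w)$ taking values in $[\tilde\varepsilon,1-\tilde\varepsilon]$, so its conditional log-loss cannot beat that of the oracle conditional mean $q^\star(t)$.

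For the upper bound I would expand pointwise in $w$. Since $\phi_z$ is smooth on $(0,1)\supset[\tilde\varepsilon,1-\tilde\varepsilon]$, Taylor's theorem with Lagrange remainder gives
\[
\phi_z\big(\hat q(t)(w)\big)-\phi_z\big(q^\star(t)(w)\big)
=\phi_z'\big(q^\star(t)(w)\big)\,\Delta q(t)(w)+\tfrac12\,\phi_z''\big(\xi(w)\big)\,\Delta q(t)(w)^2,
\]
where $\xi(w)$ lies between $q^\star(t)(w)$ and $\hat q(t)(w)$; as both endpoints lie in the convex interval $[\tilde\varepsilon,1-\tilde\varepsilon]$ (note $\tilde\varepsilon\le\varepsilon\le\tfrac14<\tfrac12$), so does $\xi(w)$. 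Taking expectations, the first-order term vanishes: $\Delta q(t)(w)=\hat g_t(t(w))-g^\star_t(t(w))$ is, with the independent sample used to form $\hat g$ held fixed, a deterministic measurable function of the index $t(w)$ with finite second moment (by the range restriction), so \cref{lem:score0} yields $\E\big[\phi_z'(q^\star(t)(w))\,\Delta q(t)(w)\big]=0$. Bounding the quadratic remainder via $|\phi_z''|\le 1/\tilde\varepsilon^2$ on $[\tilde\varepsilon,1-\tilde\varepsilon]$ (\cref{lem:phi-bounds}) then gives $L(t,g^\star)-L(t,\hat g)\le\tfrac{1}{2\tilde\varepsilon^2}\,\E[\Delta q(t)(w)^2]$, which is the middle bound; the last inequality is just $\norm{\Delta q(t)}_2\le\norm{\Delta q(t)}_4$, valid because $P$ is a probability measure.

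I do not expect any genuinely hard step here. The only point requiring care is the vanishing of the first-order term, which hinges on $\hat g$ being independent of the evaluation sample so that $\Delta q(t)$ is truly a function of $t(w)$ — precisely the reason the OSPO analysis reserves a separate dataset for the plug-in. The remaining ingredients (the elementary derivative bounds of \cref{lem:phi-bounds}, convexity of the clipping interval, and $L_2\le L_4$ on a probability space) are routine.
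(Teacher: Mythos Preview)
Your proposal is correct and follows essentially the same argument as the paper: both invoke \cref{lem:oracle-min} for the lower bound, Taylor-expand $\phi_z$ around $q^\star(t)(w)$, kill the linear term via the score-zero identity (\cref{lem:score0}), bound the quadratic remainder by $|\phi_z''|\le 1/\tilde\varepsilon^2$ from \cref{lem:phi-bounds}, and finish with $\|\cdot\|_2\le\|\cdot\|_4$. The only cosmetic difference is that the paper writes the Taylor step as a direct quadratic upper bound $\phi_z(p)\le\phi_z(p')+\phi_z'(p')(p-p')+\tfrac{1}{2\tilde\varepsilon^2}(p-p')^2$ rather than carrying a Lagrange remainder point $\xi(w)$, and it conditions on $t(w)$ to kill the linear term rather than citing \cref{lem:score0} explicitly; these are the same argument.
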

\begin{proof}
Fix $t$ and write $p^\star(w)=q^\star(t)(w)$ and $\hat p(w)=\hat q(t)(w)$, so $\Delta q(t)=\hat p-p^\star$.
By Lemma~\ref{lem:phi-bounds}, for all $p,p'\in[{\tilde\varepsilon},1-{\tilde\varepsilon}]$,
\[
\phi_z(p)\le \phi_z(p')+\phi_z'(p')(p-p')+\frac{1}{2{\tilde\varepsilon}^2}(p-p')^2.
\]
Apply with $p=\hat p$ and $p'=p^\star$ and take conditional expectation given $t(w)$.
The linear term vanishes since $\E[\phi_z'(p^\star)\mid t(w)]=0$.
Taking expectations yields the upper bound.
The lower bound is Lemma~\ref{lem:oracle-min}.
Finally, $\E[(\Delta q)^2]\le \|\Delta q\|_4^2$.
\end{proof}

\begin{lemma}[Entry into the local region in $\mathcal E$]\label{lem:enter}
Under \cref{asm:ospo-plugin-range} and the conditions of \cref{lem:margin},
if
\[
\Errhatg+\frac{1}{2{\tilde\varepsilon}^2}\zeta^2 \ <\ \lambda_0\,\iota^2,
\]
then $\mathcal E(\hat t)<\iota$.
\end{lemma}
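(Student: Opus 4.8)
The goal is to show that if the excess quasi-likelihood gap $\Errhatg=L(t^\star,\hat g)-L(\hat t,\hat g)$ together with a quadratic nuisance error term is smaller than $\lambda_0\iota^2$, then the learner's index is already within $\iota$ of $t^\star$ in the scale-invariant pseudometric $\mathcal E$. The natural strategy is to decompose the population excess risk at $\hat t$ relative to $t^\star$ into (i) a piece governed by the \emph{oracle} quasi-likelihood $L(\cdot,g^\star)$, for which \Cref{lem:margin} gives a quadratic-in-$\mathcal E$ lower bound, and (ii) two nuisance-swap pieces that convert between $L(\cdot,\hat g)$ and $L(\cdot,g^\star)$ at the two indices $t^\star$ and $\hat t$, each controlled by \Cref{lem:quad-nuis}. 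I would then argue by contradiction: assume $\mathcal E(\hat t)\ge\iota$ and derive $\Errhatg+\tfrac{1}{2\tilde\varepsilon^2}\zeta^2\ge\lambda_0\iota^2$, contradicting the hypothesis.

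\textbf{Key steps in order.} First, I would write the chain
\[
L^{\rm ideal}(t^\star)-L^{\rm ideal}(\hat t)
= \bigl(L(t^\star,g^\star)-L(t^\star,\hat g)\bigr)
+ \bigl(L(t^\star,\hat g)-L(\hat t,\hat g)\bigr)
+ \bigl(L(\hat t,\hat g)-L(\hat t,g^\star)\bigr),
\]
i.e.\ $L^{\rm ideal}(t^\star)-L^{\rm ideal}(\hat t)=\bigl(L(t^\star,g^\star)-L(t^\star,\hat g)\bigr)+\Errhatg-\bigl(L(\hat t,g^\star)-L(\hat t,\hat g)\bigr)$. Second, I would apply \Cref{lem:quad-nuis} at $t=t^\star$ to bound $0\le L(t^\star,g^\star)-L(t^\star,\hat g)\le \tfrac{1}{2\tilde\varepsilon^2}\|\Delta q(t^\star)\|_4^2\le\tfrac{1}{2\tilde\varepsilon^2}\zeta^2$, and at $t=\hat t$ to note $L(\hat t,g^\star)-L(\hat t,\hat g)\ge 0$. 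Here I need the ranges $q^\star,\hat q\in[\tilde\varepsilon,1-\tilde\varepsilon]$ a.s., which is exactly what \cref{asm:ospo-plugin}'s $c_{\rm range}$ condition plus the definition $\tilde\varepsilon=(1\wedge c_{\rm range})\varepsilon$ and $\Psi^\star$'s range provide (the same bookkeeping used to define $\tilde\varepsilon$ at the top of the section). Combining, $L^{\rm ideal}(t^\star)-L^{\rm ideal}(\hat t)\le \tfrac{1}{2\tilde\varepsilon^2}\zeta^2+\Errhatg$. Third, I would invoke \Cref{lem:margin}: since its hypotheses (\cref{asmp:div,asmp:realizability,asmp:basic,asmp:ospo-margin,asm:ospo-regularity}) are exactly ``the conditions of \cref{lem:margin}'' being assumed here, we get $\E L_n^{\rm ideal}(\thetas)-\E L_n^{\rm ideal}(\hat\theta^{\rm OSPO})\ge\lambda_0\,\mathcal E(\hat t)^2$, and by linearity of expectation $\E L_n^{\rm ideal}(\theta)=L^{\rm ideal}(t_\theta)$, so $L^{\rm ideal}(t^\star)-L^{\rm ideal}(\hat t)\ge\lambda_0\,\mathcal E(\hat t)^2$. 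Fourth, chaining, $\lambda_0\,\mathcal E(\hat t)^2\le \Errhatg+\tfrac{1}{2\tilde\varepsilon^2}\zeta^2<\lambda_0\iota^2$, hence $\mathcal E(\hat t)<\iota$.

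\textbf{Main obstacle.} The only real subtlety is making sure the range condition needed to apply \Cref{lem:quad-nuis} holds at \emph{both} $t^\star$ and the data-dependent $\hat t$; this is where \cref{asm:ospo-plugin}'s uniform $c_{\rm range}$ bound is doing its work, guaranteeing $\hat q(t)\in[\tilde\varepsilon,1-\tilde\varepsilon]$ for \emph{all} $t\in\Tcal$ (in particular the random $\hat t$), not just at a fixed index. A secondary bookkeeping point is that the identity in \Cref{lem:margin} is phrased for the empirical objective's expectation $\E L_n^{\rm ideal}$, so I must note that $\E L_n^{\rm ideal}(\theta)=L^{\rm ideal}(t_\theta)=L(t_\theta,g^\star)$ exactly, which is immediate from the iid assumption. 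Beyond that, the argument is a short risk decomposition with no delicate estimates — the quadratic-in-nuisance structure and the margin lemma do all the heavy lifting.
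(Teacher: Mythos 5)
Your proposal is correct and follows essentially the same route as the paper's own proof: both write the same three-term risk decomposition (your explicit three-term version is just the paper's two-step rearrangement spelled out), drop the nonnegative term $L(\hat t,g^\star)-L(\hat t,\hat g)$ via Lemma~\ref{lem:oracle-min}, bound $L(t^\star,g^\star)-L(t^\star,\hat g)$ by $(2\tilde\varepsilon^2)^{-1}\zeta^2$ via Lemma~\ref{lem:quad-nuis}, and close with Lemma~\ref{lem:margin}. The contradiction framing you announce at the start is unnecessary and you in fact carry out the direct inequality chain, matching the paper exactly.
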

\begin{proof}
$L(\hat t,g^\star)\ge L(\hat t,\hat g)$ by Lemma~\ref{lem:oracle-min}, so
\[
L(t^\star,g^\star)-L(\hat t,g^\star)
\le \Errhatg + \big(L(t^\star,g^\star)-L(t^\star,\hat g)\big).
\]
By Lemma~\ref{lem:quad-nuis} at $t=t^\star$ and $\|\Delta q(t^\star)\|_4\le \zeta$, the last term is at most $(2{\tilde\varepsilon}^2)^{-1}\zeta^2$.
Therefore $L(t^\star,g^\star)-L(\hat t,g^\star)<\lambda_0\iota^2$. \Cref{lem:margin} completes the argument.
\end{proof}

\begin{lemma}[Lipschitzness of likelihood to nuisance difference]\label{lem:F-lip}
Assume \cref{asm:ospo-plugin-range}. Define $F(t)=L(t,\hat g)-L(t,g^\star)$.
Then for all $t$ with $\Ecal(t)\leq\iota$,
\[
|F(\tst)-F(t)|
\ \le\
C_F\,(\Gamma\zeta+\zeta^2)\,\Ecal(t),
\qquad
C_F=\frac{2}{{\tilde\varepsilon}^2}+\frac{2(L_\Psi+1)}{{\tilde\varepsilon}^3},
\]
where $L_\Psi$ is the constant from \cref{lem:true-q-lip}.
\end{lemma}
\begin{proof}
Write $\Delta=\Delta q(t)$ and $p=q^\star(t)$.
By the integral remainder form of Taylor's theorem,
\[
\phi_z(p+\Delta)-\phi_z(p)=\phi_z'(p)\Delta+\int_0^1(1-s)\phi_z''(p+s\Delta)\Delta^2\,ds.
\]
Taking expectation and using Lemma~\ref{lem:score0} with $a(\cdot)=\Delta(\cdot)$ gives
$\E[\phi_z'(p)\Delta]=0$, hence
\[
F(t)=L(t,\hat g)-L(t,g^\star)=-\int_0^1(1-s)\,\E\big[\phi_z''(p+s\Delta)\Delta^2\big]\,ds.
\]
Similarly for $\tst$ with $p'=q^\star(\tst)$ and $\Delta'=\Delta q(\tst)$.
Thus, by triangle inequality,
\begin{align*}
|F(t)-F(\tst)|
&\le \int_0^1(1-s)\,\E\Big|\phi_z''(p+s\Delta)\Delta^2-\phi_z''(p'+s\Delta')(\Delta')^2\Big|\,ds\\
&\le \int_0^1(1-s)\Big[
\E\big(|\phi_z''(p+s\Delta)|\,|\Delta^2-(\Delta')^2|\big)
+\E\big(|\phi_z''(p+s\Delta)-\phi_z''(p'+s\Delta')|\,(\Delta')^2\big)
\Big]ds.
\end{align*}
For the first term, Lemma~\ref{lem:phi-bounds} gives $|\phi_z''|\le {\tilde\varepsilon}^{-2}$, hence
\[
\E\big(|\phi_z''(p+s\Delta)|\,|\Delta^2-(\Delta')^2|\big)
\le \frac{1}{{\tilde\varepsilon}^2}\E\big(|\Delta-\Delta'|\cdot|\Delta+\Delta'|\big)
\le \frac{1}{{\tilde\varepsilon}^2}\|\Delta-\Delta'\|_2\,\|\Delta+\Delta'\|_2.
\]
Since $\|\Delta\|_4,\|\Delta'\|_4\le \zeta$, we have $\|\Delta+\Delta'\|_2\le 2\zeta$.
Also \cref{def:ospo-plugin-rate} implies $\|\Delta-\Delta'\|_2\le \Gamma\,\Ecal(t)$.
Therefore the first term is bounded by $(2/{\tilde\varepsilon}^2)\Gamma\zeta\Ecal(t)$.

For the second term, Lemma~\ref{lem:phi-bounds} implies $\phi_z''$ is $(2/{\tilde\varepsilon}^3)$-Lipschitz, so
\[
|\phi_z''(p+s\Delta)-\phi_z''(p'+s\Delta')|
\le \frac{2}{{\tilde\varepsilon}^3}\,| (p-p')+s(\Delta-\Delta')|.
\]
Hence, by Cauchy-Schwarz and $\|(\Delta')^2\|_2=\|\Delta'\|_4^2\le \zeta^2$,
\[
\E\big(|\phi_z''(p+s\Delta)-\phi_z''(p'+s\Delta')|\,(\Delta')^2\big)
\le \frac{2}{{\tilde\varepsilon}^3}\,\|(p-p')+s(\Delta-\Delta')\|_2\,\|(\Delta')^2\|_2
\le \frac{2\zeta^2}{{\tilde\varepsilon}^3}\,\|(p-p')+s(\Delta-\Delta')\|_2.
\]
Now
\[
\|p-p'\|_2
\le L_\Psi\Ecal(t),
\]
by \cref{lem:true-q-lip}. Thus
\[
\|(p-p')+s(\Delta-\Delta')\|_2\le (L_\Psi+\Gamma)\,\Ecal(t).
\]
Since $|\Delta q(t)(w)|\le 1$ a.s.\ by the range assumption (difference of two $[{\tilde\varepsilon},1-{\tilde\varepsilon}]$ quantities),
we have $\zeta\le 1$, and hence $\Gamma\zeta^2\le\Gamma\zeta$.
Therefore the second term is bounded by $(2/{\tilde\varepsilon}^3)(L_\Psi\zeta^2+\Gamma\zeta)\Ecal(t)$. Finally, integrate over $s\in[0,1]$ with weight $(1-s)$, noting $\int_0^1(1-s)\,ds=1/2$, and obtain the desired bound.
\end{proof}

\subsubsubsection{Proof of the theorem}

\begin{proposition}[Localized convergence of projected norm]\label{thm:ospo-localized}
Suppose \cref{asmp:div,asmp:realizability,asmp:basic,asmp:ospo-margin,asm:ospo-regularity,asm:ospo-plugin-range} hold.
Then, if
\[
\Errhatg+\frac{1}{2{\tilde\varepsilon}^2}\zeta^2 \ <\ \lambda_0\,\iota^2,
\]
then
\[
\mathcal E(\hat t) \le \iota,
\qquad\text{and}\qquad
\mathcal E(\hat t)^2
\ \le\ \frac{2}{\lambda_0}\,\Errhatg\ +\ \frac{C_F^2}{\lambda_0^2}\,(\Gamma\zeta+\zeta^2)^2.
\]
\end{proposition}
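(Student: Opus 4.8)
The plan is to chain the margin lower bound of \cref{lem:margin} with the orthogonality-based local Lipschitz estimate of \cref{lem:F-lip}, reducing everything to a scalar quadratic inequality in $\mathcal E(\hat t)$. The first assertion, $\mathcal E(\hat t)\le\iota$, is immediate: under the hypothesis $\Errhatg+\tfrac{1}{2\tilde\varepsilon^2}\zeta^2<\lambda_0\iota^2$ it is exactly the conclusion of \cref{lem:enter}, and being inside this local region is precisely what licenses the use of \cref{lem:F-lip} at $\hat t$.

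For the quantitative bound I would write $F(t)=L(t,\hat g)-L(t,g^\star)=L^{\rm OSPO}(t)-L^{\rm ideal}(t)$ and proceed in three steps. First, apply \cref{lem:margin} at $\theta=\hat\theta_{\rm OSPO}$; since $\E[t^\star]=\E[t_\theta]=0$ by the action-symmetry of $P$, the best intercept is zero, so $\inf_{f\in\Fcal_{\rm lin}}\|t^\star-f(\hat t)\|=\inf_{a\in\R}\|t^\star-a\hat t\|=\mathcal E(\hat t)$, which gives $L^{\rm ideal}(t^\star)-L^{\rm ideal}(\hat t)\ge\lambda_0\,\mathcal E(\hat t)^2$. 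Second, telescope via $L^{\rm ideal}=L^{\rm OSPO}-F$ and the definition $\Errhatg=L(t^\star,\hat g)-L(\hat t,\hat g)$:
\[
L^{\rm ideal}(t^\star)-L^{\rm ideal}(\hat t)=\Errhatg+F(\hat t)-F(t^\star)\le\Errhatg+|F(\hat t)-F(t^\star)|.
\]
Third, since $\mathcal E(\hat t)\le\iota$, invoke \cref{lem:F-lip} to get $|F(\hat t)-F(t^\star)|\le c\,\zeta^2\,\mathcal E(\hat t)$ with $c=\tfrac{c_{\rm Lip}'}{\tilde\varepsilon^2}+\tfrac{c_{\rm Lip}+c_{\rm Lip}'}{\tilde\varepsilon^3}$. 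Combining the three steps yields the scalar inequality $\lambda_0\,\mathcal E(\hat t)^2\le\Errhatg+c\,\zeta^2\,\mathcal E(\hat t)$.

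It then remains to solve this quadratic. Writing $E=\mathcal E(\hat t)$, the bound $\lambda_0 E^2-c\zeta^2 E-\Errhatg\le0$ forces $E\le\bigl(c\zeta^2+\sqrt{c^2\zeta^4+4\lambda_0\Errhatg}\bigr)/(2\lambda_0)$, and squaring with $(a+b)^2\le2a^2+2b^2$ gives $E^2\le\tfrac{c^2}{\lambda_0^2}\zeta^4+\tfrac{2}{\lambda_0}\Errhatg$, which is the asserted bound. The real content sits in the supporting lemmas; the points to watch when assembling them are that $F(\hat t)-F(t^\star)$ has no definite sign, so one must pass to absolute value before applying \cref{lem:F-lip}; that the Lipschitz estimate is phrased in the scale-invariant metric $\mathcal E$, which is exactly what lets it chain with the margin inequality even though $\hat t$ is identified only up to scale; and that the plug-in range condition in \cref{asm:ospo-plugin} keeps both $q^\star$ and $\hat q$ inside $[\tilde\varepsilon,1-\tilde\varepsilon]$ at $\hat t$ and at $t^\star$, which is what makes \cref{lem:phi-bounds,lem:F-lip} applicable there. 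The quadratic step is the only genuinely new computation, and it is routine — so there is no real obstacle here; the difficulty was front-loaded into establishing orthogonality (\cref{lem:score0,lem:F-lip}) and the margin (\cref{lem:margin}).
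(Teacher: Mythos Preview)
Your proof is correct and follows essentially the same approach as the paper: invoke \cref{lem:enter} for localization, telescope $L^{\rm ideal}(t^\star)-L^{\rm ideal}(\hat t)$ into $\Errhatg$ plus the $F$-difference, bound the latter via \cref{lem:F-lip}, and combine with the margin from \cref{lem:margin} to obtain the scalar inequality $\lambda_0\mathcal E(\hat t)^2\le\Errhatg+c\zeta^2\mathcal E(\hat t)$. The only cosmetic difference is in the last step: the paper absorbs the cross term via $ab\le a^2/2+b^2/2$ with $a=\sqrt{\lambda_0}\,\mathcal E(\hat t)$ and $b=c\zeta^2/\sqrt{\lambda_0}$, whereas you solve the quadratic explicitly and then apply $(a+b)^2\le2a^2+2b^2$; both routes land on the same bound $\mathcal E(\hat t)^2\le\tfrac{2}{\lambda_0}\Errhatg+\tfrac{c^2}{\lambda_0^2}\zeta^4$.
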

\begin{proof}
Under the premise, $\mathcal E(\hat t)<\iota$ by Lemma~\ref{lem:enter}.

By optimality and the definitions of $\Errhatg$ and of $F(t)$, we have
\begin{align*}
L(t^\star,g^\star)-L(\hat t,g^\star)
&=\big(L(t^\star,\hat g)-L(\hat t,\hat g)\big)\\
&\quad+\Big(\big[L(\hat t,\hat g)-L(\hat t,g^\star)\big]-\big[L(t^\star,\hat g)-L(t^\star,g^\star)\big]\Big)\\
&\le \Errhatg + \big|F(\hat t)-F(t^\star)\big|.
\end{align*}
By Lemma~\ref{lem:F-lip},
\[
|F(\hat t)-F(t^\star)|
\le C_F\,(\Gamma\zeta+\zeta^2)\,\mathcal E(\hat t).
\]
Hence
\[
L(t^\star,g^\star)-L(\hat t,g^\star)\le \Errhatg+C_F\,(\Gamma\zeta+\zeta^2)\,\mathcal E(\hat t).
\]

\Cref{lem:margin} gives
\[
\lambda_0\,\mathcal E(\hat t)^2\le L(t^\star,g^\star)-L(\hat t,g^\star).
\]
Combine and apply $ab\le a^2/2+b^2/2$ with
$a=\mathcal E(\hat t)\sqrt{\lambda_0}$ and $b=C_F(\Gamma\zeta+\zeta^2)/\sqrt{\lambda_0}$:
\[
\frac{\lambda_0}{2}\,\mathcal E(\hat t)^2\le \Errhatg+\frac{C_F^2}{2\lambda_0}\,(\Gamma\zeta+\zeta^2)^2.
\]
Rearranging yields the stated bound.
\end{proof}

\begin{proof}[Proof of \cref{thm:ospo-rate}]
Let
\[
R_\delta=(1+\Gamma)\prns{\varrho_n+\sqrt{\frac{\log(1/\delta)}{n}}}+\Gamma\zeta+\zeta^2.
\]
By taking the scaling in the definition of $\rho$ to zero and using the boundedness and coverage in \cref{asmp:basic}, there is a constant $B_\rho<\infty$ such that $\rho(s h_\theta)\le B_\rho$ for all $\theta\in\Theta$ and all scalars $s\in\RR$. We prove the result when $R_\delta<c'$ for a constant $c'>0$ chosen below. If instead $R_\delta\ge c'$, then $\rho(\hat s h_{\hat\theta_{\rm OSPO}})\le B_\rho\le (B_\rho/c')R_\delta$, so the theorem follows after enlarging the final constant.

Define the local Rademacher complexity
$$
\Rcal_n(\Fcal,\varrho)=\frac1{2^n}\sum_{\epsilon\in\{-1,1\}^n}\E_{w_1,\dots,w_n\sim P}\sup_{f\in\Fcal:\|f\|\leq \varrho}\abs{\frac1n\sum_{i=1}^n\epsilon_if(w_i)}.$$
Consider the star hull $\Tcal_\star=\{\beta(t_\theta-t^\star):\theta\in\Theta,\beta\in[0,1]\}$. 
We will first seek to bound the critical radius $\bar\varrho_n$, being the smallest solution to inequality
$\Rcal_n(\Tcal_\star,\varrho)\leq \varrho^2/M$. Toward that end, invoke the refined Dudley integral \citep{SrebroSridharanRefinedDudley,bartlett2005local}:
\[
\Rcal_n(\Tcal_\star,\varrho)
\lesssim
\inf_{\epsilon\in(0,\varrho]}
\left\{\epsilon+\frac1{\sqrt n}\int_\epsilon^\varrho
\sqrt{\log N_2(u,\Tcal_\star)}\,du\right\}.
\]
By lemma 4.5 of \citet{mendelson2002improving}, for each $u\in(0,\varrho]$,
\[
\log N_2(u,\Tcal_\star)
\lesssim
\log\frac2u+\log N_2(u/2,\Tcal-t^\star)
\lesssim
\log\frac2u+\log N_2(u/2,\Tcal).
\]
Altogether, we find that $\bar\varrho_n\lesssim\varrho_n$ for $\varrho_n$ as given in the statement of the theorem.

Next, given this critical radius bound, we apply lemma 12 of \citet{foster2023orthogonal} to $\Errhatg$, redoing the peeling with $\Ecal(\hat t)$ instead of $\|\hat t-\tst\|$. The required local Lipschitz modulus is controlled because \cref{lem:true-q-lip} and \cref{def:ospo-plugin-rate} give
\[
\|\hat q(t)-\hat q(\tst)\|_2
\le
\|q^\star(t)-q^\star(\tst)\|_2+\|\Delta q(t)-\Delta q(\tst)\|_2
\le
(L_\Psi+\Gamma)\,\Ecal(t)
\]
whenever $\Ecal(t)\le\iota$. Because $\Tcal_\star$ is star-shaped, $\varrho\mapsto\Rcal_n(\Tcal_\star,\varrho)/\varrho$ is nonincreasing, so multiplying the local Lipschitz modulus by $L_\Psi+\Gamma$ multiplies the corresponding critical radius by at most a constant times $1+\Gamma$. Combining this empirical-process bound with \cref{thm:ospo-localized} gives, for some $c_0>0$ independent of $\Gamma$ and $\zeta$,
$$
\textstyle \mathcal E(\hat t)\;\leq\; c_0\prns{
(1+\Gamma)\prns{\varrho_n+\sqrt{\frac{\log\log n}{n}}+\sqrt{\frac{\log(1/\delta)}{n}}}+\Gamma\zeta+\zeta^2
}.
$$
Since $\sqrt{\log\log n/n}$ is bounded by a constant multiple of $\varrho_n$ in all cases listed in the theorem, this yields
\[
\mathcal E(\hat t)
\le c\prns{
(1+\Gamma)\prns{\varrho_n+\sqrt{\frac{\log(1/\delta)}{n}}
}
+\Gamma\zeta+\zeta^2
}.
\]

It remains to fix the sign. Let $\mu=\E[\tst(w)z]$. Choose $c'>0$ small enough that $R_\delta<c'$ makes the right-hand side in the preceding display at most $\mu/2$. Thus we may intersect the preceding event with $\mathcal E(\hat t)\le \mu/2$.

If $\hat t=0$, then $\inf_{a\geq0}\|\tst-a\hat s \hat t\|=\mathcal E(\hat t)$ already. Otherwise, condition on the training data. The sign sample is independent, and \cref{lem:fix-sign} with $\gamma=1/2$ gives
\[
\mathbb P\left(\hat s\,\E[\hat t(w)\tst(w)]\le0\mid \hat t\right)
\le
\frac{4}{\abs{\Dcal}}\frac{\E[\tst(w)^2]}{\mu^2}.
\]
For sufficiently large $\abs{\Dcal}$, this probability is at most another constant fraction of $\delta$. On the complement of this sign-error event, $\hat s$ has the same sign as the best affine coefficient relating $\hat t$ to $\tst$, so
\[
\inf_{a\geq0}\|\tst-a\hat s \hat t\|=\mathcal E(\hat t).
\]
The result follows by a union bound, absorbing constant factors in $\delta$ into the universal constant, and by noting that $\inf_{a\geq0}\|\tst-a\hat s \hat t\|$ dominates $\rho(\hat s\,h_{\hat\theta_{\rm OSPO}})$ under the coverage in \cref{asmp:basic}.
\end{proof}

\subsubsection{Analysis of local-polynomial regression as plug-in for OSPO}\label{sec:analysis_of_local_polynomial_regression_as_plug_in_for_ospo}

The deferred assumptions from \cref{sub:local_polynomial_regression_plug_in_error} are below. The first condition below is the standard local-polynomial smoothness and design requirement \citep[Ch.~3]{fan1996local}; the second adds the uniformity over indices needed because OSPO optimizes over $\theta$ and we have one local polynomial regression per $\theta$.

Fix $\iota>0$ and let
\[
\Tcal_+=\Tcal\cup
\left\{\tst+\alpha(\bar t-\tst):
\alpha\in[0,1],\ \bar t=a t_\theta,\ \theta\in\Theta,\ \|\bar t-\tst\|_2\le\iota
\right\}.
\]

\begin{assumption}[Local-polynomial regression conditions]\label{asm:local-poly-regression}
The kernel $K$ is supported on $[-1,1]$, bounded, symmetric, and continuously differentiable. For $t\in\Tcal_+$, let $f_t$ be the density of $t(w)$ and $\mu_t(u)=g_t^\star(u)f_t(u)$. Uniformly over $t\in\Tcal_+$, $f_t$ is bounded above and below on $[-M,M]$, $\sigma(t)$ is bounded above and below away from zero, and $f_t,\mu_t$ are $s$-H\"older with a common constant. Writing $K_b(v)=b^{-1}K(v/b)$ and $b_t=h\sigma(t)$, the local moment matrices
\[
\E\left[
K_{b_t}(t(w)-u)
R_r\!\left(\frac{t(w)-u}{b_t}\right)
R_r\!\left(\frac{t(w)-u}{b_t}\right)^\top
\right]
\]
have smallest eigenvalues bounded below by a positive constant uniformly over $t\in\Tcal_+$, $u\in[-M,M]$, and all sufficiently small $h$.
\end{assumption}

\begin{assumption}[Uniform local-polynomial complexity]\label{asm:local-poly-uniform}
Whenever $t_\alpha=t+\alpha d$ stays in $\Tcal_+$ for $\alpha$ in a neighborhood of $0$, the pathwise derivatives $\dot f_{t,d}(u)=\partial_\alpha f_{t_\alpha}(u)|_{\alpha=0}$ and $\dot\mu_{t,d}(u)=\partial_\alpha\mu_{t_\alpha}(u)|_{\alpha=0}$ exist and are $(s-1)$-H\"older with a common constant for all unit directions $d$ in
\[
\Dcal_{\Tcal}=
\left\{w\mapsto
\frac{t(w)-t'(w)}{\|t-t'\|_2}:t,t'\in\Tcal_+,\ t\ne t'
\right\}.
\]
The class $\Dcal_{\Tcal}$ is uniformly bounded and has polynomial $\|\cdot\|_\infty$ entropy. Also, $\Tcal_+$ has $N_\infty(\epsilon,\Tcal_+)\le (A/\epsilon)^v$ for all $\epsilon\in(0,1]$, and, for every $j,\ell\in\{0,1\}$ and polynomial $P$ of degree at most $2r+1$, the class
\[
\begin{aligned}
\Big\{(w,z)\mapsto\;&
e(w)b_t^{-1-j}K^{(j)}\!\left(\frac{t(w)-u}{b_t}\right)
P\!\left(\frac{t(w)-u}{b_t}\right)z^\ell:
t\in\Tcal_+,\ e\in\Dcal_{\Tcal}\cup\{1\},\ u\in[-M,M]\Big\},
\end{aligned}
\]
where $K^{(0)}=K$ and $K^{(1)}=K'$, has logarithmic covering number at most $v\log(c/(\epsilon h))$ in sup norm, for constants $v,c$ independent of $h$.
\end{assumption}

\subsubsubsection{Proof of \cref{thm:local-poly-plugin}}

\begin{proof}
Write the local-polynomial estimator using normalized weights $K_b(v)=b^{-1}K(v/b)$; this does not change the minimizer. Let
\[
\hat S_t(u)=\frac1m\sum_i K_{b_t}(t(w_i)-u)R_i(u)R_i(u)^\top,\qquad
\hat T_t(u)=\frac1m\sum_i K_{b_t}(t(w_i)-u)R_i(u)z_i,
\]
where $R_i(u)=R_r((t(w_i)-u)/b_t)$, and define $S_t(u)=\E\hat S_t(u)$ and $T_t(u)=\E\hat T_t(u)$. The population local-polynomial coefficient is $a_t^\circ(u)=S_t(u)^{-1}T_t(u)$.

The uniform nonsingularity condition in \cref{asm:local-poly-regression} gives $\lambda_{\min}\{S_t(u)\}\ge c>0$ for all $t,u$ and all sufficiently small $h$. Since local polynomials reproduce polynomials of degree $r$, Taylor expansion of the $s$-H\"older functions $f_t$ and $\mu_t=g_t^\star f_t$ around $u$ gives
\[
\sup_{t,u}|e_1^\top a_t^\circ(u)-g_t^\star(u)|\le Ch^s,
\qquad
\sup_{t,u}\left|\partial_u\{e_1^\top a_t^\circ(u)-g_t^\star(u)\}\right|\le Ch^{s-1}.
\]
Applying the same polynomial-reproduction argument to the pathwise derivatives $\dot f_{t,d}$ and $\dot\mu_{t,d}$ in \cref{asm:local-poly-uniform} gives
\[
\sup_{t,u,d\in\Dcal_{\Tcal}}\left|
D_d\{e_1^\top a_t^\circ(u)-g_t^\star(u)\}
\right|\le Ch^{s-1},
\]
where $D_d$ denotes the derivative along $t_\alpha=t+\alpha d$ at $\alpha=0$. The derivative of $b_t=h\sigma(t)$ contributes only $O(h^s)$ to this deterministic bias derivative because $D_d\sigma(t)$ is uniformly bounded, so it is absorbed by the $h^{s-1}$ bound.

For the stochastic terms, apply Bernstein's inequality and a union bound over the entropy nets in \cref{asm:local-poly-uniform} to each entry of $\hat S_t-S_t$ and $\hat T_t-T_t$. The envelopes are $O(h^{-1})$ and the variances are $O(h^{-1})$, so with probability at least $1-\delta/2$,
\[
\sup_{t,u}\{\|\hat S_t(u)-S_t(u)\|_{\rm op}+\|\hat T_t(u)-T_t(u)\|\}
\le
C\left\{\sqrt{\frac{A_m(\delta)}{mh}}+\frac{A_m(\delta)}{mh}\right\}.
\]
For $u$-derivatives and pathwise derivatives indexed by $\Dcal_{\Tcal}$, each differentiated summand is a finite linear combination of the function classes in \cref{asm:local-poly-uniform}. The derivative of the scale factor is harmless because
\[
|D_d\sigma(t)|=\left|\frac{\operatorname{Cov}(t(w),d(w))}{\sigma(t)}\right|
\le C
\]
uniformly over $t\in\Tcal_+$ and $d\in\Dcal_{\Tcal}$. Thus these derivative classes have envelopes $O(h^{-2})$ and variances $O(h^{-3})$. Bernstein's inequality and the same net argument give, on the same event after enlarging $C$,
\[
\sup_{t,u}\{\|\partial_u(\hat S_t-S_t)(u)\|_{\rm op}+\|\partial_u(\hat T_t-T_t)(u)\|\}
\le
C\left\{\sqrt{\frac{A_m(\delta)}{mh^3}}+\frac{A_m(\delta)}{mh^2}\right\}.
\]
The identical bound holds for the directional derivatives $D_d(\hat S_t-S_t)$ and $D_d(\hat T_t-T_t)$, uniformly over $d\in\Dcal_{\Tcal}$. Since $mh^3\ge C_0A_m(\delta)$, the perturbation of $\hat S_t(u)$ is small enough that $\hat S_t(u)^{-1}$ exists and is uniformly bounded. Differentiating $e_1^\top\hat S_t(u)^{-1}\hat T_t(u)$, and using the identity $D(A^{-1})=-A^{-1}(DA)A^{-1}$, gives
\[
\sup_{t,u}|\tilde g_t(u)-g_t^\star(u)|\le C\zeta_m,
\qquad
\sup_{t,u,d\in\Dcal_{\Tcal}}\left(|\partial_u e_t(u)|+|D_d e_t(u)|\right)\le C\gamma_m,
\]
where $e_t(u)=\tilde g_t(u)-g_t^\star(u)$.

It remains to translate the derivative bound into stability in the index. Write $\Delta q_t(w)=e_t(t(w))$. Reduce $\iota$, if needed, so that $\iota<\|\tst\|_2/2$. For any local $\theta$, choose $a_\theta$ with $\|\tst-a_\theta t_\theta\|_2$ arbitrarily close to $\inf_a\|\tst-at_\theta\|_2$ and set $\bar t=a_\theta t_\theta$; then $a_\theta\ne0$. Because the bandwidth scales by $\sigma(t)$, the kernel is symmetric, and the polynomial basis only changes by an invertible diagonal sign matrix under negative rescaling, the untruncated fitted values $\tilde g_t(t(w))$ and errors $\Delta q_t(w)$ are invariant to nonzero scalar rescaling of $t$. Thus it is enough to compare $\bar t$ with $\tst$.

Let $t_\alpha=\tst+\alpha(\bar t-\tst)$, $0\le \alpha\le1$. Differentiating along this path and setting $d=(\bar t-\tst)/\|\bar t-\tst\|_2$ gives
\[
\left\|\frac{d}{d\alpha}e_{t_\alpha}(t_\alpha(w))\right\|_2
\le C\gamma_m\|\bar t-\tst\|_2
\]
uniformly in $\alpha$: the derivative is the sum of the pathwise derivative $D_d e_{t_\alpha}(t_\alpha(w))$ and the evaluation derivative $\partial_u e_{t_\alpha}(t_\alpha(w))d(w)$, multiplied by $\|\bar t-\tst\|_2$, and $\|d\|_2=1$. Integrating from $0$ to $1$ yields
\[
\|\Delta q_\theta-\Delta q_{\thetas}\|_2
=\|\Delta q_{\bar t}-\Delta q_{\tst}\|_2
\le C\gamma_m\|\bar t-\tst\|_2.
\]
Taking the infimum over admissible $a_\theta$ proves the stability display.

Finally, $\eta(w)=\Psis(\tst(w))\in[\varepsilon_g,1-\varepsilon_g]$, so every $g_t^\star$ also lies in this interval. If $C\zeta_m\le \varepsilon_g-\varepsilon_0$, the uniform sup-norm bound shows $\tilde g_t(u)\in[\varepsilon_0,1-\varepsilon_0]$ for all $t,u$, so clipping is inactive and the displayed bounds hold for the clipped estimator $\hat g_t$ as well. The clipping definition also gives \cref{asm:ospo-plugin-range} with $c_{\rm range}=\varepsilon_0$, since $\min\{\hat g_t(u),1-\hat g_t(u)\}\ge\varepsilon_0$. Thus the error terms in \cref{def:ospo-plugin-rate} obey $\zeta\le C\zeta_m$ and $\Gamma\le C\gamma_m$.
\end{proof}

\subsection{Theoretical Properties of RSPO}
\label{sec:rspo-theory}

\subsubsection{Proof of \cref{prop:auc-max}}

\subsubsubsection{Supporting lemmas}

\begin{lemma}[AUC as a pairwise identification success probability]\label{lem:pairwise}
Let $(w_0,z_0),(w_1,z_1)$ be i.i.d.\ and set $D=\{z_0\neq z_1\}$.
Define the success indicator
\[
S_t
=\indic{t(w_1)>t(w_0),\,z_1=1,z_0=0}
+\indic{t(w_0)>t(w_1),\,z_0=1,z_1=0}
+\frac12\,\indic{t(w_1)=t(w_0),\,D}.
\]
Then $\AUC(t)=\E[S_t\mid D]$.
\end{lemma}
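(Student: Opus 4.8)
The plan is to reduce the identity to exchangeability of the two i.i.d.\ draws $(w_0,z_0),(w_1,z_1)$. First I note that each of the three summands defining $S_t$ forces the event $D=\{z_0\neq z_1\}$ — the first two explicitly, since they fix $(z_0,z_1)$, and the third by construction — so $S_t=S_t\,\indic{D}$ and therefore $\E[S_t\mid D]=\E[S_t]/\PP(D)$. It thus suffices to evaluate $\E[S_t]$ and $\PP(D)$ and match them against the two terms of \cref{eq:auc}.

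For these quantities I would invoke the invariance of the law of $\bigl((w_0,z_0),(w_1,z_1)\bigr)$ under swapping the two draws. Under this swap the event $\{t(w_0)>t(w_1),\,z_0=1,z_1=0\}$ is carried onto $\{t(w_1)>t(w_0),\,z_1=1,z_0=0\}$, so these have equal probability; similarly $\{t(w_1)=t(w_0)\}$ is swap-invariant, so $\PP(t(w_1)=t(w_0),\,z_0=0,z_1=1)=\PP(t(w_1)=t(w_0),\,z_0=1,z_1=0)=\tfrac12\PP(t(w_1)=t(w_0),\,D)$, and $\PP(z_0=1,z_1=0)=\PP(z_0=0,z_1=1)=:\alpha$, hence $\PP(D)=2\alpha$. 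Substituting and collecting terms,
\[
\E[S_t]=2\,\PP\bigl(t(w_1)>t(w_0),\,z_0=0,z_1=1\bigr)+\PP\bigl(t(w_1)=t(w_0),\,z_0=0,z_1=1\bigr),
\]
and dividing by $\PP(D)=2\alpha$ turns each summand into a probability conditional on $\{z_0=0,z_1=1\}$, namely $\PP(t(w_1)>t(w_0)\mid z_0=0,z_1=1)+\tfrac12\PP(t(w_1)=t(w_0)\mid z_0=0,z_1=1)$, which is exactly $\AUC(t)$ as defined in \cref{eq:auc}.

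I do not expect a genuine obstacle here: this is a bookkeeping lemma and the entire argument rests on exchangeability. The only points needing a little care are keeping track of the factor $\tfrac12$ on the tie term so that its contributions to $\E[S_t]$ and to $\AUC(t)$ agree after normalization, and observing that $\AUC(t)$ in \cref{eq:auc} — and hence the claimed identity — is only meaningful when $\alpha=\PP(z_0=0,z_1=1)>0$, equivalently $0<\PP(z=1)<1$; we treat this as a standing assumption, as it is already implicit in the conditioning used to define \cref{eq:auc}, and then every denominator in the display above is strictly positive.
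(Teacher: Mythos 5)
Your proof is correct, and it rests on the same pivot as the paper's: exchangeability of the two i.i.d.\ draws. The only difference is bookkeeping — you first uncondition ($\E[S_t\mid D]=\E[S_t]/\PP(D)$) and simplify the numerator and denominator separately, whereas the paper splits $D$ into $\{z_0=0,z_1=1\}$ and $\{z_0=1,z_1=0\}$, notes each has conditional probability $1/2$ given $D$, and uses the swap to show the conditional expectation of $S_t$ on each half equals $\AUC(t)$. Both routes are elementary and interchangeable.
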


\begin{proof}
By exchangeability, $\PP(z_0=0,z_1=1\mid D)=\PP(z_0=1,z_1=0\mid D)=1/2$.
On $\{z_0=0,z_1=1\}$ we have $S_t=\phi(t(w_1)-t(w_0))$, and on $\{z_0=1,z_1=0\}$ we have
$S_t=\phi(t(w_0)-t(w_1))$.
Moreover, the conditional law of $(w_0,w_1)$ given $\{z_0=1,z_1=0\}$ equals the law of $(w_1,w_0)$ given $\{z_0=0,z_1=1\}$, hence
\[
\E\big[\phi(t(w_0)-t(w_1))\mid z_0=1,z_1=0\big]
=\E\big[\phi(t(w_1)-t(w_0))\mid z_0=0,z_1=1\big]=\AUC(t).
\]
Therefore $\E[S_t\mid D]=\AUC(t)$.
\end{proof}

\begin{lemma}[Monotone version lemma]\label{lem:mono-version}
Let $(X,Y)$ and $(X',Y')$ be i.i.d.\ real-valued pairs, and suppose $Y\in[L,U]$ a.s.\ for finite $L\leq U$.
If
\[
\PP(Y>Y',\,X\le X')=0,
\]
then there exists a nondecreasing Borel function $g:\R\to[L,U]$ such that $Y=g(X)$ a.s.
\end{lemma}

\begin{proof}
Fix $q\in\mathbb Q\cap[L,U]$ and write $A_q=\{Y>q\}$ and $B_q=\{Y\le q\}$.
Since $Y>q\ge Y'$ on $A_q\times B_q$, the hypothesis gives
\[
\PP\big(A_q,\ B_q',\ X\le X'\big)=0,
\]
where $B_q'$ denotes the event $\{Y'\le q\}$ for the independent copy.
We show that, up to null sets, $A_q$ is an upper set in $X$.
If $\PP(A_q)=0$, set $E_q=\emptyset$.
If $\PP(B_q)=0$, set $E_q=\R$.
Otherwise, let
\[
c_q=\operatorname*{ess\,sup}(X\mid B_q).
\]
This value is finite: since $\PP(A_q)>0$, there exists $a$ with $\PP(A_q,\ X\le a)>0$, and the preceding display then forces $\PP(B_q,\ X>a)=0$.
Then $\PP(B_q,\ X>c_q)=0$ and, for every $a<c_q$, $\PP(B_q,\ X>a)>0$.
If $\PP(A_q,\ X<c_q)>0$, then for some $a<c_q$ we have $\PP(A_q,\ X\le a)>0$; independence then gives positive probability to $A_q$, $B_q'$, and $X\le a<X'$, a contradiction.
Thus $\PP(A_q,\ X<c_q)=0$.
Also $\PP(A_q,\ X=c_q)$ and $\PP(B_q,\ X=c_q)$ cannot both be positive, since otherwise the independent pair has positive probability of satisfying $A_q$, $B_q'$, and $X=X'=c_q$.
If $\PP(A_q,\ X=c_q)>0$, set $E_q=[c_q,\infty)$; otherwise set $E_q=(c_q,\infty)$.
In either case,
\[
\indic{A_q}=\indic{\{X\in E_q\}}\qquad\text{a.s.}
\]

Intersecting the above equality over the countable set $\mathbb Q\cap[L,U]$, we obtain an event of probability one on which it holds for every such $q$.
Define
\[
g(x)=\sup\bigl(\{q\in\mathbb Q\cap[L,U]:x\in E_q\}\cup\{L\}\bigr).
\]
Each $E_q$ is an upper interval, so $g$ is nondecreasing.
Moreover, $\{g>r\}=\R$ for $r<L$, $\{g>r\}=\emptyset$ for $r\ge U$, and
\[
\{g>r\}=\bigcup_{q\in\mathbb Q\cap[L,U],\,q>r}E_q
\qquad (L\le r<U),
\]
so $g$ is Borel.
On the probability-one event above,
\[
g(X)=\sup\bigl(\{q\in\mathbb Q\cap[L,U]:Y>q\}\cup\{L\}\bigr)=Y.\qedhere
\]
\end{proof}

\subsubsubsection{Proof of the proposition}

\begin{proof}
Let $(w_0,z_0),(w_1,z_1)$ be i.i.d.\ copies and $D=\{z_0\neq z_1\}$.
Fix a measurable $t$.

\medskip
\noindent\textbf{Pointwise Bayes optimal comparison.}
Condition on $(w_0,w_1)=(u,v)$.
Write $\eta_u=\eta(u)$ and $\eta_v=\eta(v)$.
Since $z_0\mid w_0=u\sim \mathrm{Bernoulli}(\eta_u)$ and $z_1\mid w_1=v\sim \mathrm{Bernoulli}(\eta_v)$ are conditionally independent,
\[
\PP(z_1=1,z_0=0\mid w_0=u,w_1=v)=\eta_v(1-\eta_u),\qquad
\PP(z_1=0,z_0=1\mid w_0=u,w_1=v)=(1-\eta_v)\eta_u.
\]
On $D$, define (when the denominator is $0$, set $p(u,v)=1/2$; this case has $\PP(D\mid u,v)=0$ and is irrelevant):
\[
p(u,v)=\PP(z_1=1,z_0=0\mid w_0=u,w_1=v,D)
=\frac{\eta_v(1-\eta_u)}{\eta_v(1-\eta_u)+(1-\eta_v)\eta_u}.
\]
Therefore,
\[
p(u,v)>\tfrac12 \iff \eta_v>\eta_u,\qquad p(u,v)=\tfrac12 \iff \eta_v=\eta_u,\qquad p(u,v)<\tfrac12 \iff \eta_v<\eta_u.
\]
Given $(u,v,D)$, the success probability of the rule induced by $t$ (pick the larger score, break ties uniformly) equals
\[
\E[S_t\mid w_0=u,w_1=v,D]
=
\begin{cases}
p(u,v), & t(v)>t(u),\\
1-p(u,v), & t(v)<t(u),\\
\tfrac12, & t(v)=t(u).
\end{cases}
\]
Hence, pointwise,
\[
\E[S_t\mid w_0=u,w_1=v,D]\ \le\ \max\{p(u,v),1-p(u,v)\},
\]
with equality if and only if:
\begin{itemize}
\item if $\eta_v>\eta_u$, then $t(v)>t(u)$;
\item if $\eta_v<\eta_u$, then $t(v)<t(u)$;
\item if $\eta_v\ne \eta_u$, then $t(v)\ne t(u)$ (no tie across unequal $\eta$).
\end{itemize}

\medskip
\noindent\textbf{Global optimality and equality conditions.}
Combining with Lemma~\ref{lem:pairwise} gives
\[
\AUC(t)=\E[S_t\mid D]\ \le\ \E\!\left[\max\{p(w_0,w_1),1-p(w_0,w_1)\}\,\big|\,D\right].
\]
The right-hand side is achieved by any score that ranks $w$ by $\eta(w)$ (ties allowed only when $\eta$ ties), e.g.\ by $\eta$ itself.
Therefore,
\[
\sup_{t}\AUC(t)=\AUC(\eta).
\]
Since $\eta=\Psi^\star(t^\star)$ and $\Psi^\star$ is strictly increasing, $\AUC(\eta)=\AUC(t^\star)$, so $t^\star$ is a maximizer.
Moreover, a score $t$ is a maximizer if and only if it attains equality in the pointwise bound above for $(w_0,w_1)$ distributed as $(w_0,w_1)\mid D$.
Because $\eta(w)\in(0,1)$ a.s.\ (since $\Psi^\star:\R\to(0,1)$), $\PP(D\mid w_0,w_1)>0$ a.s., so equality holds for $(w_0,w_1)$ under $(w_0,w_1)\mid D$ if and only if it holds for i.i.d.\ $(w_0,w_1)$ under $P\otimes P$.
Thus, for a maximizer $t$, with i.i.d.\ $w,w'$,
\begin{equation}\label{eq:no-inversion}
\PP\big(\eta(w)>\eta(w'),\ t(w)\le t(w')\big)=0.
\end{equation}

\medskip
\noindent\textbf{Deduce $\eta(w)=g(t(w))$ with $g$ nondecreasing, hence $t^\star(w)=m(t(w))$.}
Since $M$ bounds the index class and $\tst\in\Tcal$, $\tst\in[-M,M]$ a.s., so $\eta(w)\in J=[\Psi^\star(-M),\Psi^\star(M)]$ a.s.
Applying Lemma~\ref{lem:mono-version} with $X=t(w)$, $Y=\eta(w)$, and $[L,U]=J$ yields a nondecreasing Borel $g:\R\to J$ with
\[
\eta(w)=g(t(w))\qquad\text{a.s.}
\]
Finally, since $\Psi^\star$ is strictly increasing, define the nondecreasing Borel inverse on $J$ by
\[
\Gamma(u)=\inf\{s\in[-M,M]:\Psi^\star(s)\ge u\},\qquad u\in J.
\]
Set $m=\Gamma\circ g$. Then $m$ is nondecreasing and, by strict monotonicity of $\Psi^\star$,
\[
t^\star(w)=\Gamma(\eta(w))=\Gamma(g(t(w)))=m(t(w))\qquad\text{a.s.}
\]
This shows that every AUC maximizer lies in the stated class.

\medskip
\noindent\textbf{Converse.}
Conversely, suppose $t^\star(w)=m(t(w))$ a.s.\ for some nondecreasing $m$.
Then $\eta(w)=\Psi^\star(t^\star(w))=(\Psi^\star\circ m)(t(w))$, where $\Psi^\star\circ m$ is nondecreasing.
On a $P\otimes P$-full set of pairs $(u,v)$,
\[
\eta(u)>\eta(v)\ \Rightarrow\ t^\star(u)>t^\star(v)\ \Rightarrow\ m(t(u))>m(t(v))\ \Rightarrow\ t(u)>t(v),
\]
and $t(u)=t(v)\Rightarrow t^\star(u)=t^\star(v)\Rightarrow \eta(u)=\eta(v)$.
Thus $t$ satisfies the pointwise equality conditions of Step~1, so $\AUC(t)=\AUC(\eta)=\AUC(t^\star)$, i.e.\ $t$ is an AUC maximizer.

Combining both directions proves the proposition.
\end{proof}

\subsubsection{Proof of \cref{thm:auc-margin}}

\begin{proof}
First we make some definitions based on the assumptions. Let $I=[\underline{t^\star},\overline{t^\star}]\subseteq[-M,M]$ be the support of $\tst$. Let $p$ its density satisfying $0<\underline p \le p(u)\le \overline p<\infty$ for all $u\in I$. Let $\eta(w)=\PP(z=1\mid w)=\PP(z=1\mid \tst(w))=\Psis(\tst(w))\in[\varepsilon,1-\varepsilon]$ for $\varepsilon=\min\{\Psi^\star(-M),1-\Psi^\star(M),0.25\}>0$. The derivative $(\Psi^\star)'$ satisfies $0<\underline\psi \le (\Psi^\star)'(u)\le \overline\psi<\infty$ for all $u\in[-M,M]$. Finally, define $\underline q = \underline p/\overline\psi$, $\lambda_0 = c_{\rm MSE}\,\underline q\,\underline\psi^{2}
= c_{\rm MSE}\,\frac{\underline p}{\overline\psi}\,\underline\psi^{2}$, $B_0=\frac{1-\varepsilon}{\varepsilon}\,\overline p$, $C_H = 6\,\varepsilon^{-1/3}\,B_0^{2/3}
=6\,\frac{(1-\varepsilon)^{2/3}}{\varepsilon}\,\overline p^{2/3}$, $\gamma_0 = \frac{1}{16}\,\underline q^{2}\left(\frac{\underline\psi}{\overline p}\right)^{3}
=\frac{\underline p^{2}\,\underline\psi^{3}}{16\,\overline\psi^{2}\,\overline p^{3}}$, and $\epsilon_0 = \min\Big\{\frac{\gamma_0}{2},\ \lambda_0\Big(\frac{\gamma_0}{2C_H}\Big)^3\Big\}$. Let $\mathrm{sgn}_0(u)=2\phi(u)-1\in\{-1,0,1\}$. And let $\pi_1=\PP(z=1)$ and $\pi_0=1-\pi_1$ (in our setting $\pi_0=\pi_1=1/2$ by the symmetry of $P$ but the proof actually does not use this and works more generally so we keep these as symbols).

\medskip
\noindent\textbf{A symmetrized representation of AUC and the Bayes score.}
Let $(w_0,z_0)$ and $(w_1,z_1)$ be i.i.d.\ copies of $(w,z)$ and set $\eta_i=\eta(w_i)$ and $t_i=t(w_i)$.
A direct conditioning on $(w_0,w_1)$ gives
\begin{equation}\label{eq:auc-cond}
\pi_0\pi_1\,\AUC(t)=\E\!\left[(1-\eta_0)\eta_1\,\phi(t_1-t_0)\right].
\end{equation}
Swapping indices $0\leftrightarrow 1$ and using $\phi(u)+\phi(-u)=1$ yields
\[
\pi_0\pi_1\,\AUC(t)
=\E\!\left[(1-\eta_1)\eta_0\,\phi(t_0-t_1)\right]
=\pi_0\pi_1-\E\!\left[(1-\eta_1)\eta_0\,\phi(t_1-t_0)\right].
\]
Adding this identity to \cref{eq:auc-cond} gives
\[
2\pi_0\pi_1\,\AUC(t)
=\pi_0\pi_1 + \E\!\left[(\eta_1-\eta_0)\,\phi(t_1-t_0)\right].
\]
Since $\E[\eta_1-\eta_0]=0$ and $\phi(u)=\tfrac12(1+\mathrm{sgn}_0(u))$, we obtain
\begin{equation}\label{eq:auc-sym}
\AUC(t)=\frac12+\frac{1}{4\pi_0\pi_1}\,\E\!\left[(\eta_1-\eta_0)\,\mathrm{sgn}_0(t_1-t_0)\right].
\end{equation}
Because $\Psi^\star$ is strictly increasing, $\mathrm{sgn}_0(t^\star_1-t^\star_0)=\mathrm{sgn}_0(\eta_1-\eta_0)$, hence
\begin{equation}\label{eq:auc-tstar}
\AUC(t^\star)=\frac12+\frac{1}{4\pi_0\pi_1}\,\E\!\left[|\eta_1-\eta_0|\right].
\end{equation}

\medskip
\noindent\textbf{AUC-regret controls the best nonlinear $L^2$ approximation error.}
Fix $t\in\mathcal T$ and define the posterior given the score
\[
\bar\eta(w)=\E[\eta(w)\mid t(w)].
\]
Since $(w_0,w_1)$ are independent and $\bar\eta_i$ is measurable w.r.t.\ $t_i$, we have
$\E[\eta_1-\eta_0\mid t_1,t_0]=\bar\eta_1-\bar\eta_0$.
Applying \cref{eq:auc-sym} to the score $\bar\eta$ and conditioning on $(t_1,t_0)$ gives
\begin{equation}\label{eq:auc-bareta}
\AUC(\bar\eta)
=\frac12+\frac{1}{4\pi_0\pi_1}\,\E\!\left[(\eta_1-\eta_0)\,\mathrm{sgn}_0(\bar\eta_1-\bar\eta_0)\right]
=\frac12+\frac{1}{4\pi_0\pi_1}\,\E\!\left[|\bar\eta_1-\bar\eta_0|\right].
\end{equation}
Furthermore, for any measurable $g:\R\to\R$, the score $g(t)$ is measurable w.r.t.\ $\sigma(t)$, and by \cref{eq:auc-sym},
the conditional expectation given $(t_1,t_0)$ is maximized by choosing the sign $\mathrm{sgn}_0(g(t_1)-g(t_0))$ equal to
$\mathrm{sgn}_0(\bar\eta_1-\bar\eta_0)$.
Hence $\AUC(g(t))\le \AUC(\bar\eta)$ for all $g$, in particular $\AUC(t)\le \AUC(\bar\eta)$.
Combining \cref{eq:auc-tstar,eq:auc-bareta} and $\AUC(t)\le\AUC(\bar\eta)$ yields
\begin{equation}\label{eq:regret-gini}
\AUC(t^\star)-\AUC(t)\ \ge\ \AUC(t^\star)-\AUC(\bar\eta)
=\frac{1}{4\pi_0\pi_1}\Big(\E|\eta_1-\eta_0|-\E|\bar\eta_1-\bar\eta_0|\Big).
\end{equation}
Since $\pi_0\pi_1\le 1/4$, we have $(4\pi_0\pi_1)^{-1}\ge 1$, so
\begin{equation}\label{eq:regret-gini-drop}
\AUC(t^\star)-\AUC(t)\ \ge\ \E|\eta_1-\eta_0|-\E|\bar\eta_1-\bar\eta_0|.
\end{equation}

We now lower bound the right-hand side by an $L^2$ projection error.

\begin{lemma}[Gini gap controls $L^2$ error]\label{lem:gini-gap}
Let $U$ be a real-valued random variable with support an interval $J=[a,b]$ and density $q$ satisfying
$q(u)\ge \underline q>0$ for all $u\in J$.
Let $\mathcal F$ be any sub-$\sigma$-field and set $V=\E[U\mid \mathcal F]$.
Let $(U,V)$ and $(U',V')$ be i.i.d.\ copies. Then
\[
\E|U-U'|-\E|V-V'|\ \ge\ \underline q\,\E[(U-V)^2].
\]
\end{lemma}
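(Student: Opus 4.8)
The plan is to pass to quantile functions and exploit the classical Gini mean difference identity, after which everything becomes one-dimensional calculus. Write $Q_U,Q_V$ for the quantile functions of $U$ and $V=\E[U\mid\mathcal F]$; since $U\in[a,b]$ a.s.\ and conditioning preserves this, both take values in $[a,b]$ (the degenerate case $a=b$ is trivial, both sides vanishing, so assume $a<b$). First I would record the identity $\E|X-X'|=2\int_0^1(2u-1)Q_X(u)\,du$, valid for any bounded $X$ with $X\overset{d}{=}Q_X(\xi)$, $\xi$ uniform on $(0,1)$ — a short computation using monotonicity of $Q_X$. Subtracting the instances for $U$ and $V$ gives $\E|U-U'|-\E|V-V'|=2\int_0^1(2u-1)\bigl(Q_U-Q_V\bigr)(u)\,du$, and integrating by parts (the weight $2u-1$ is affine, and the primitive $H(u):=\int_0^u(Q_V-Q_U)(s)\,ds$ vanishes at $u=0,1$, the latter since $\E V=\E U$) turns this into
\[
\E|U-U'|-\E|V-V'|\;=\;4\int_0^1 H(u)\,du .
\]

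Next I would establish two facts about $H$. The first is $H\ge 0$ on $[0,1]$: this is precisely $V\le_{\mathrm{cx}}U$ (conditional Jensen), equivalently the ordering of lower partial expectations $\int_0^u Q_X(s)\,ds=\inf\{\E[Xg]:0\le g\le 1,\ \E g=u\}$; indeed, writing $\tilde g=\E[g\mid\mathcal F]$ one has $\E[Vg]=\E[U\tilde g]\ge\int_0^u Q_U$ since $\tilde g$ is an admissible competitor, and taking the infimum over $g$ gives $\int_0^u Q_V\ge\int_0^u Q_U$. The second is that $Q_U$ is $(1/\underline q)$-Lipschitz: because $U$ has density $q\ge\underline q$ on its support $[a,b]$, its CDF increases there at rate at least $\underline q$, so the inverse $Q_U$ is absolutely continuous with $0\le Q_U'\le 1/\underline q$ a.e.

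Finally I would bound $\E[(U-V)^2]$. By the tower property $\E[UV]=\E[V^2]$, hence $\E[(U-V)^2]=\E[U^2]-\E[V^2]=\int_0^1\bigl(Q_U^2-Q_V^2\bigr)(u)\,du$ (using $\E[X^2]=\int_0^1 Q_X^2$). Writing $Q_U-Q_V=-H'$ and $Q_U+Q_V=2Q_U+H'$,
\[
\int_0^1\!\bigl(Q_U^2-Q_V^2\bigr)=-2\!\int_0^1\! H'Q_U-\!\int_0^1\!(H')^2\;\le\;-2\!\int_0^1\! H'Q_U=2\!\int_0^1\! H\,Q_U'\;\le\;\frac{2}{\underline q}\int_0^1\! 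H ,
\]
where the first inequality drops the nonpositive term $-\int(H')^2$, the middle equality is integration by parts against the absolutely continuous $Q_U$ (boundary terms vanish as $H(0)=H(1)=0$), and the last step uses $H\ge 0$ together with $Q_U'\le 1/\underline q$. Combining with the displayed identity for $\E|U-U'|-\E|V-V'|$ yields $\E|U-U'|-\E|V-V'|\ge 2\underline q\,\E[(U-V)^2]\ge\underline q\,\E[(U-V)^2]$, which is the claim (in fact with a factor of $2$ to spare).

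The main obstacle I anticipate is the potential irregularity of $Q_V$: since $V=\E[U\mid\mathcal F]$ may have atoms or gaps in its support, a direct integration by parts against $dQ_V$ would be delicate. The device above avoids this entirely — one only ever integrates by parts against the Lipschitz function $Q_U$ and simply discards the nonpositive curvature term $-\int(H')^2$. Apart from that, the only point requiring care is a clean justification of $H\ge 0$ (the convex-order inequality), which I would either cite as standard or derive via the infimum representation of $\int_0^u Q_X$ as above; the remaining steps are elementary once one works in quantile coordinates.
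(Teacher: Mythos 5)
Your argument is correct, and it is a genuinely different route from the paper's. The paper works directly with the strongly convex function $\varphi(r)=\E|U-r|$: it notes $\varphi''=2q\ge 2\underline q$, uses conditional Jensen once to discard the cross term $\E|V-U'|-\E|V-V'|\ge 0$, and then applies strong convexity together with the orthogonality $\E[U-V\mid V]=0$ to land on the quadratic. You instead pass to quantile coordinates: the Gini identity gives $\E|U-U'|-\E|V-V'|=4\int_0^1 H$ with $H(u)=\int_0^u(Q_V-Q_U)$, the convex-order inequality $V\le_{\mathrm{cx}} U$ (via the lower-partial-expectation variational formula) gives $H\ge 0$, and the density lower bound becomes a Lipschitz bound on $Q_U$ that you invoke only after integrating by parts against $Q_U$ and discarding $-\int_0^1(H')^2\le 0$. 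Both proofs are rigorous; what your version buys is a tighter constant ($2\underline q$ rather than $\underline q$), essentially because you carry the quadratic identity $Q_U^2-Q_V^2=(Q_U-Q_V)(Q_U+Q_V)$ exactly rather than using a one-sided strong-convexity bound and dropping a nonnegative piece. What the paper's version buys is brevity and fewer measure-theoretic checkpoints: it never needs the Gini/Lorenz identity, the variational characterization of $\int_0^u Q$, or the a.e.\ differentiability bookkeeping for $H$ and $Q_U$ — all of which you handle correctly, but which make the argument longer. Your observation that integrating by parts against $Q_U$ (rather than $Q_V$) sidesteps any irregularity of the distribution of $V=\E[U\mid\mathcal F]$ is exactly the right care to take, and the justification $\E[Vg]=\E[V\tilde g]=\E[U\tilde g]$ for the convex-order step is sound.
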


\begin{proof}
Define $\varphi(r)=\E|U-r|$. Since $U$ has density $q$ supported on $J=[a,b]$, for all $r\in(a,b)$,
\[
\varphi'(r)=\E[-\mathrm{sgn}(U-r)]=2F_U(r)-1,
\qquad
\varphi''(r)=2q(r)\ge 2\underline q,
\]
so $\varphi$ is $(2\underline q)$-strongly convex on $J$.
Equivalently, for all $u,v\in J$,
\begin{equation}\label{eq:strong-convex}
\varphi(u)\ \ge\ \varphi(v)+\varphi'(v)(u-v)+\underline q\,(u-v)^2.
\end{equation}

Next, since $V=\E[U\mid \mathcal F]$, conditional Jensen gives that for every convex $\psi$,
$\E[\psi(U)\mid \mathcal F]\ge \psi(\E[U\mid \mathcal F])=\psi(V)$, hence $\E\psi(U)\ge \E\psi(V)$.
Applying this with $\psi(\cdot)=|v-\cdot|$ (convex) shows that for every fixed $v\in J$,
\[
\E|v-U'|\ \ge\ \E|v-V'|.
\]
Taking $v=V$ (which is independent of $(U',V')$) and then expectation yields
\begin{equation}\label{eq:conv-order}
\E|V-U'|\ \ge\ \E|V-V'|.
\end{equation}
Finally, note that $\E|U-U'|=\E[\varphi(U)]$ and $\E|V-U'|=\E[\varphi(V)]$.
Therefore, by \cref{eq:conv-order},
\[
\E|U-U'|-\E|V-V'|\ \ge\ \E|U-U'|-\E|V-U'|
=\E[\varphi(U)-\varphi(V)].
\]
Apply \cref{eq:strong-convex} with $(u,v)=(U,V)$ and take conditional expectation given $V$.
Since $\E[U-V\mid V]=0$, the linear term vanishes and we get
$\E[\varphi(U)-\varphi(V)]\ge \underline q\,\E[(U-V)^2]$, which proves the claim.
\end{proof}

Under our assumptions, $\eta(w)=\Psi^\star(t^\star(w))$ has support an interval
$J=[\Psi^\star(\underline{t^\star}),\Psi^\star(\overline{t^\star})]$ and admits a density $q$ given by the change-of-variables formula
$q(s)=p(r)/(\Psi^\star)'(r)$ where $s=\Psi^\star(r)$.
Thus $q(s)\ge \underline p/\overline\psi=\underline q$ on $J$.
Applying Lemma~\ref{lem:gini-gap} with $U=\eta(w)$ and $\mathcal F=\sigma(t(w))$ gives
\[
\E|\eta_1-\eta_0|-\E|\bar\eta_1-\bar\eta_0|
\ \ge\ \underline q\,\E[(\eta-\bar\eta)^2]
= \underline q\,\inf_{h:\R\to\R}\ \|\eta-h(t)\|^2.
\]
Combining with \cref{eq:regret-gini-drop} yields
\begin{equation}\label{eq:regret-to-eta-mse}
\AUC(t^\star)-\AUC(t)\ \ge\ \underline q\,\inf_{h:\R\to\R}\ \|\eta-h(t)\|^2.
\end{equation}

We now pass from $\eta$ back to $t^\star$.
Since $\eta=\Psi^\star(t^\star)$ and $(\Psi^\star)'\ge \underline\psi$ on $[-M,M]$, the inverse $(\Psi^\star)^{-1}$ exists on $J$
and satisfies
\[
|(\Psi^\star)^{-1}(s)-(\Psi^\star)^{-1}(s')|\ \le\ \underline\psi^{-1}\,|s-s'|\quad (s,s'\in J).
\]
For any measurable $h:\R\to\R$, define the clipped map $\Pi(h)=\min\{\max\{h,\,\Psi^\star(\underline{t^\star})\},\,\Psi^\star(\overline{t^\star})\}$.
Then $\|\eta-\Pi(h)(t)\|\le \|\eta-h(t)\|$, so we may restrict to $h$ with range contained in $J$.
For such $h$, set $f=(\Psi^\star)^{-1}\circ h$. Then pointwise,
\[
|\eta-h(t)|=|\Psi^\star(t^\star)-\Psi^\star(f(t))|\ \ge\ \underline\psi\,|t^\star-f(t)|,
\]
hence $\|\eta-h(t)\|^2\ge \underline\psi^2\,\|t^\star-f(t)\|^2$.
Taking infima implies
\[
\inf_{h:\R\to\R}\ \|\eta-h(t)\|^2\ \ge\ \underline\psi^2\ \inf_{f:\R\to\R}\ \|t^\star-f(t)\|^2.
\]
Plugging into \cref{eq:regret-to-eta-mse} and using \cref{asmp:ospo-margin} yields the \emph{global} inequality
\begin{equation}\label{eq:global-unrestricted}
\AUC(t^\star)-\AUC(t)\ \ge\ \lambda_0\ \inf_{a\in\R}\ \|t^\star-a t\|^2,
\qquad \lambda_0=c_{\rm MSE}\,\underline q\,\underline\psi^2.
\end{equation}

\medskip
\noindent\textbf{Small AUC-regret forces the optimal scaling to be nonnegative.}
Write
\[
d(t)^2 = \inf_{a\in\R}\ \|t^\star-a t\|^2.
\]
Let $\hat a$ be a minimizer (it exists because $a\mapsto\|t^\star-a t\|^2$ is a quadratic polynomial).
Assume for contradiction that $\hat a<0$ and set $s=(-\hat a)\,t$.
Then $s$ is a positive multiple of $t$, so $\AUC(s)=\AUC(t)$, and $d(t)^2=\|t^\star+s\|^2$.

We need a continuity bound comparing $\AUC(s)$ and $\AUC(-t^\star)$.
First, the conditional density of $r=t^\star(w)$ given $z=1$ is
$p_1(r)=\eta(r)p(r)/\pi_1$, hence
\[
\|p_1\|_\infty \le \frac{1-\varepsilon}{\varepsilon}\,\overline p =B_0.
\]
Similarly, the conditional density $p_0$ of $t^\star(w)$ given $z=0$ satisfies $\|p_0\|_\infty\le B_0$.
Let $r^+\sim p_1$ and $r^-\sim p_0$ be independent; then $r^+-r^-$ has a density bounded by $B_0$, hence
\begin{equation}\label{eq:margin}
\PP\big(|r^+-r^-|\le u\big)\ \le\ 2B_0\,u,\qquad u\ge 0.
\end{equation}

\begin{lemma}[H\"older continuity of AUC in $L^2$]\label{lem:holder}
Assume \cref{eq:margin} holds for $r^+-r^-$ with constant $B_0$.
Then for any measurable scores $r,s:\mathcal W\to[-M,M]$,
\[
|\AUC(r)-\AUC(s)|\ \le\ C_H\ \|r-s\|^{2/3},
\qquad C_H = 6\,\varepsilon^{-1/3}\,B_0^{2/3}.
\]
\end{lemma}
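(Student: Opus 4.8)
\section*{Proof plan for \texorpdfstring{\cref{lem:holder}}{Lemma (Hölder continuity of AUC)}}

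The plan is to write $\AUC$ as an expectation over independent draws from the two class-conditional laws and then exploit that the step function $\phi$ changes value only on a set whose probability is pinned down by anti-concentration. Let $W^{+}$ and $W^{-}$ be independent with laws $P(\cdot\mid z=1)$ and $P(\cdot\mid z=0)$; then, by the definition \cref{eq:auc} (conditioning on $z_0=0,z_1=1$ makes the two demonstrations independent with exactly these laws), $\AUC(t)=\E\!\left[\phi\!\left(t(W^{+})-t(W^{-})\right)\right]$ for any score $t$. Writing $D_r=r(W^{+})-r(W^{-})$, $D_s=s(W^{+})-s(W^{-})$, and $\delta=D_r-D_s=(r-s)(W^{+})-(r-s)(W^{-})$, the first step is the elementary pointwise bound $\bigl|\phi(D_r)-\phi(D_s)\bigr|\le\indic{|D_r|\le|\delta|}$: the two values of $\phi$ can disagree only when $D_r$ and $D_r-\delta$ lie on opposite sides of $0$ (or one of them equals $0$), and one checks that in each such case $|D_r|\le|\delta|$. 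Taking expectations gives $|\AUC(r)-\AUC(s)|\le\PP(|D_r|\le|\delta|)$.

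Next I would split this probability at a free threshold $u>0$: $\PP(|D_r|\le|\delta|)\le\PP(|\delta|>u)+\PP(|D_r|\le u)$. The second term is controlled by the assumed anti-concentration \cref{eq:margin} applied to the reference score (in the use of the lemma, $r$ will be $\pm t^\star$, for which \cref{eq:margin} has just been established), giving $\PP(|D_r|\le u)\le 2B_0 u$. The first term is handled by Markov's inequality, $\PP(|\delta|>u)\le \E[\delta^2]/u^2$, where I bound $\E[\delta^2]$ using $\delta^2\le 2\bigl((r-s)(W^{+})\bigr)^2+2\bigl((r-s)(W^{-})\bigr)^2$ together with the change-of-measure bounds $dP(\cdot\mid z=1)/dP=\eta/\pi_1\le(1-\varepsilon)/\varepsilon$ and $dP(\cdot\mid z=0)/dP=(1-\eta)/\pi_0\le(1-\varepsilon)/\varepsilon$, valid since $\eta(w)=\Psi^\star(t^\star(w))\in[\varepsilon,1-\varepsilon]$ and $\pi_0,\pi_1\ge\varepsilon$. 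This yields $\E[\delta^2]\le \tfrac{4(1-\varepsilon)}{\varepsilon}\,\|r-s\|^2\le\tfrac{4}{\varepsilon}\,\|r-s\|^2$.

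Combining gives $|\AUC(r)-\AUC(s)|\le \E[\delta^2]\,u^{-2}+2B_0 u$ for every $u>0$; choosing $u=\bigl(\E[\delta^2]/B_0\bigr)^{1/3}$ balances the two terms and gives $|\AUC(r)-\AUC(s)|\le 3B_0^{2/3}\bigl(\E[\delta^2]\bigr)^{1/3}\le 3\cdot 4^{1/3}\,\varepsilon^{-1/3}B_0^{2/3}\,\|r-s\|^{2/3}\le C_H\|r-s\|^{2/3}$ with $C_H=6\,\varepsilon^{-1/3}B_0^{2/3}$, since $3\cdot 4^{1/3}<6$. The degenerate case $\E[\delta^2]=0$ is trivial, as then $D_r=D_s$ almost surely and the two AUCs coincide.

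The crux is getting the right H\"older exponent: $\phi$ is only of bounded variation, so without anti-concentration there is no modulus of continuity at all, and the exponent $2/3$ is precisely what comes out of balancing a second-moment (Markov) tail on the perturbation $\delta$ against a bound on the level set $\{|D_r|\le u\}$ of the reference score that is only linear in $u$. The remaining ingredients---the pointwise inequality for $\phi$ and the change-of-measure to the class-conditional laws---are routine.
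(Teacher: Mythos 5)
Your proof is correct and follows essentially the same strategy as the paper's: anti-concentration of the reference score difference via \cref{eq:margin}, a second-moment (Markov) tail on the perturbation, and optimization over a free threshold $u$ to extract the exponent $2/3$. The only cosmetic difference is that you work with the aggregated perturbation $\delta=D_r-D_s$ via the pointwise inequality $|\phi(D_r)-\phi(D_s)|\le\indic{|D_r|\le|\delta|}$ and split $\{|D_r|\le|\delta|\}\subseteq\{|\delta|>u\}\cup\{|D_r|\le u\}$, whereas the paper bounds the two individual perturbations $|(s-r)(w^\pm)|$ separately in a triple union bound with threshold $2u$; your aggregation is marginally cleaner and gives a slightly better constant ($3\cdot 4^{1/3}\approx 4.76$ vs.\ $6$), but both fall under the claimed $C_H$.
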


\begin{proof}
Fix $u>0$ and let $w^+,w^-$ be independent draws from $w\mid z=1$ and $w\mid z=0$, respectively.
If $|r(w^+)-r(w^-)|>2u$ and $|s(w^\pm)-r(w^\pm)|\le u$ for both $\pm$, then
\[
|(s(w^+)-s(w^-))-(r(w^+)-r(w^-))|\le |s(w^+)-r(w^+)|+|s(w^-)-r(w^-)|\le 2u,
\]
so $\mathrm{sgn}_0(s(w^+)-s(w^-))=\mathrm{sgn}_0(r(w^+)-r(w^-))$ and hence $\phi(s(w^+)-s(w^-))=\phi(r(w^+)-r(w^-))$.
Therefore,
\[
|\AUC(s)-\AUC(r)|
\le \PP\big(|r(w^+)-r(w^-)|\le 2u\big)
+ \PP\big(|(s-r)(w^+)|>u\big)
+ \PP\big(|(s-r)(w^-)|>u\big).
\]
The first term is bounded by $4B_0u$ using \cref{eq:margin}.
For the second term, Markov's inequality and $\PP(z=1)=\pi_1\ge \varepsilon$ give
\[
\PP\big(|(s-r)(w^+)|>u\big)
\le \frac{\E[(s-r)(w^+)^2]}{u^2}
= \frac{\E[(s-r)(w)^2\,\indic{z=1}]}{\pi_1 u^2}
\le \frac{\|s-r\|^2}{\varepsilon u^2},
\]
and similarly $\PP(|(s-r)(w^-)|>u)\le \|s-r\|^2/(\varepsilon u^2)$.
Thus
\[
|\AUC(s)-\AUC(r)|\le 4B_0u + \frac{2}{\varepsilon}\frac{\|s-r\|^2}{u^2}.
\]
Optimizing over $u>0$ yields
$|\AUC(s)-\AUC(r)|\le 6\,\varepsilon^{-1/3}B_0^{2/3}\,\|s-r\|^{2/3}$.
\end{proof}

Apply Lemma~\ref{lem:holder} with $r=-t^\star$ and $s$ as above.
Since $-t^\star(w^+)-(-t^\star(w^-))=-(r^+-r^-)$, the margin bound \cref{eq:margin} applies, hence
\begin{equation}\label{eq:auc-close-minus}
\AUC(t)=\AUC(s)\ \le\ \AUC(-t^\star)+C_H\,\|s-(-t^\star)\|^{2/3}
=\AUC(-t^\star)+C_H\,d(t)^{2/3}.
\end{equation}
By \cref{eq:global-unrestricted}, $d(t)^2\le (\AUC(t^\star)-\AUC(t))/\lambda_0$.
Let $\Delta=\AUC(t^\star)-\AUC(t)$; then $d(t)^{2/3}\le (\Delta/\lambda_0)^{1/3}$.
Using $\AUC(-t^\star)=1-\AUC(t^\star)$ (order reversal) in \cref{eq:auc-close-minus} gives
\[
\AUC(t)\le 1-\AUC(t^\star) + C_H\Big(\frac{\Delta}{\lambda_0}\Big)^{1/3}.
\]
Equivalently,
\begin{equation}\label{eq:sign-ineq}
\AUC(t^\star)-\Delta \ \le\ 1-\AUC(t^\star) + C_H\Big(\frac{\Delta}{\lambda_0}\Big)^{1/3}.
\end{equation}

We now lower bound the separation $\AUC(t^\star)-\tfrac12$ using only the constants in the assumptions.
Let $s=\eta(w)=\Psi^\star(t^\star(w))$.
Since $p(r)\le \overline p$ on $I$ and integrates to $1$, we have $|I|=\overline{t^\star}-\underline{t^\star}\ge 1/\overline p$.
Since $(\Psi^\star)'\ge \underline\psi$, the support $J=[\Psi^\star(\underline{t^\star}),\Psi^\star(\overline{t^\star})]$ has length at least
\[
|J| \ge \underline\psi\,|I|\ \ge\ \frac{\underline\psi}{\overline p}.
\]
Also $s$ has density $q\ge \underline q$ on $J$.
Let $J_L$ and $J_R$ be the leftmost and rightmost subintervals of $J$ of length $|J|/4$.
Then $\PP(s\in J_L)\ge \underline q\,|J|/4$ and similarly for $J_R$.
Hence, with $s_0,s_1$ i.i.d.\ copies of $s$,
\[
\E|s_1-s_0|
\ \ge\ \frac{|J|}{2}\,\PP(s_0\in J_L,s_1\in J_R)+\frac{|J|}{2}\,\PP(s_0\in J_R,s_1\in J_L)
\ \ge\ \frac{|J|}{2}\cdot 2\Big(\frac{\underline q\,|J|}{4}\Big)^2
\ =\ \frac{1}{16}\,\underline q^2\,|J|^3
\ \ge\ \gamma_0.
\]
By \cref{eq:auc-tstar} and $(4\pi_0\pi_1)^{-1}\ge 1$, we get
\[
\AUC(t^\star)-\frac12=\frac{1}{4\pi_0\pi_1}\E|\eta_1-\eta_0|\ \ge\ \E|\eta_1-\eta_0|\ \ge\ \gamma_0.
\]
Therefore $\AUC(t^\star)\ge \tfrac12+\gamma_0$ and $\AUC(-t^\star)=1-\AUC(t^\star)\le \tfrac12-\gamma_0$.

Now assume $\Delta\le \epsilon_0$.
Then $\AUC(t)=\AUC(t^\star)-\Delta\ge \tfrac12+\gamma_0-\epsilon_0\ge \tfrac12+\gamma_0/2$.
On the other hand, \cref{eq:sign-ineq} and $\Delta\le\epsilon_0\le \lambda_0(\gamma_0/(2C_H))^3$ give
\[
\AUC(t)\le \AUC(-t^\star)+C_H\Big(\frac{\Delta}{\lambda_0}\Big)^{1/3}
\le \Big(\tfrac12-\gamma_0\Big) + \frac{\gamma_0}{2}
= \tfrac12-\frac{\gamma_0}{2},
\]
a contradiction. Hence $\hat a\ge 0$ whenever $\Delta\le\epsilon_0$, and thus
\[
\inf_{a\in\R}\ \|t^\star-a t\|^2=\inf_{a\ge 0}\ \|t^\star-a t\|^2
\qquad\text{for all }t\in\mathcal T\text{ with }\AUC(t^\star)-\AUC(t)\le\epsilon_0.
\]

To complete the theorem, combine the above with \cref{eq:global-unrestricted}.
\end{proof}

\subsubsection{Proof of \cref{lemma:entropy}}

\begin{proof}
Define $R_t = \{(w,w'):\ t(w)-t(w')\ge 0\}$, $S\triangle S'=(S\backslash S')\cup(S'\backslash S)$, and $d_\triangle(S,S')=P^2(S\triangle S')$ for $S,S'\subset(\mathcal W)^2$. We will show that for all $t,s\in\Tcal$, we have $d_\triangle(R_t,R_s)\le c_{q}\,\|t-s\|_q^{\beta}$ for $c_{q}=2^{\,1+\beta}\,c_{\rm margin}^{\,q/(\alpha+q)}$.

Fix $t,s\in\mathcal T$ and write $\delta=t-s$.
Let $D_t=t(w)-t(w')$ and $D_s=s(w)-s(w')$.
On the symmetric difference $R_t\triangle R_s$, the quantities $D_t$ and $D_s$ have opposite negativity/nonnegativity,
hence $|D_t|\le |D_t-D_s|$. Since
\[
D_t-D_s \;=\; (t-s)(w)-(t-s)(w')\;=\;\delta(w)-\delta(w'),
\]
we have the inclusion
\[
R_t\triangle R_s \ \subseteq\ \bigl\{|D_t|\le |\delta(w)-\delta(w')|\bigr\}.
\]
Therefore, for any $\lambda>0$,
\[
R_t\triangle R_s
\subseteq
\bigl\{|D_t|\le \lambda\bigr\}\ \cup\ \bigl\{|\delta(w)-\delta(w')|>\lambda\bigr\},
\]
and thus
\begin{equation}\label{eq:split}
d_\triangle(R_t,R_s)
\le
P^2(|D_t|\le \lambda)\;+\;P^2(|\delta(w)-\delta(w')|>\lambda).
\end{equation}
The assumed margin condition gives $P^2(|D_t|\le \lambda)\le c_{\rm margin}\lambda^\alpha$.
For the second term, Markov's inequality and $|a-b|^q\le 2^{q-1}(|a|^q+|b|^q)$ yield
\[
P^2(|\delta(w)-\delta(w')|>\lambda)
\le \frac{\mathbb E|\delta(w)-\delta(w')|^q}{\lambda^q}
\le \frac{2^{q}\,\mathbb E|\delta(w)|^q}{\lambda^q}
= \frac{2^{q}\,\|t-s\|_q^q}{\lambda^q}.
\]
Plugging into \cref{eq:split} gives
\[
d_\triangle(R_t,R_s)
\le c_{\rm margin}\lambda^\alpha + \frac{2^q\|t-s\|_q^q}{\lambda^q}.
\]
Choose $\lambda = \bigl(2^q\|t-s\|_q^q/c_{\rm margin}\bigr)^{1/(\alpha+q)}$, which equalizes the two terms, to obtain
$
d_\triangle(R_t,R_s)
\le
2\,c_{\rm margin}^{q/(\alpha+q)}\,(2^q\|t-s\|_q^q)^{\alpha/(\alpha+q)}.
$
\end{proof}

\subsubsection{Proof of \cref{thm:rspo-rate}}

To prove \cref{thm:rspo-rate}, we first consider the implication of theorem 5 of \citet{clemenccon2008ranking} under symmetric difference entropy conditions in order obtain rates beyond the case of VC classes. Then we show that a Tsybakov-style margin condition implies their generic noise condition. Then we use these building blocks to prove \cref{thm:rspo-rate}.

\subsubsection{Guarantees for ranking empirical risk optimization under symmetric-difference entropy conditions}

For this subsection, we abandon all previously defined notation and conventions and adopt the notation of \citet{clemenccon2008ranking}.

\subsubsubsection{Setup and notation (following \citep{clemenccon2008ranking})}

Let $(X,Y)$ be a random pair taking values in $\Xcal\times\R$, and let $(X',Y')$ be an independent copy.
Write $Z=(Y-Y')/2$.
A (pairwise) ranking rule is a measurable function $r:\Xcal\times\Xcal\to\{-1,1\}$.
 Throughout we consider \emph{symmetric} ranking rules (as in the paper): $r(x,x')=-r(x',x)$, and extend the co-domain just for identical inputs in order to set $r(x,x)=0$.

The ranking risk is
\[
L(r)=\Pbb\{Z\,r(X,X')<0\}.
\]
Let $r^\star$ denote a risk minimizer over all measurable ranking rules and $L^\star=L(r^\star)$.

Given i.i.d.\ training data $D_n=(X_1,Y_1),\dots,(X_n,Y_n)$, define the empirical ranking risk (a $U$-statistic)
\[
L_n(r)=\frac{1}{n(n-1)}\sum_{i\neq j}\indic{Z_{i,j}\,r(X_i,X_j)<0},\qquad Z_{i,j}=(Y_i-Y_j)/2,
\]
and an empirical risk minimizer over $\Rcal$ by
\[
r_n\in\arg\min_{r\in\Rcal}L_n(r).
\]

In Section~4 of the paper the excess risk $\Lambda(r)=L(r)-L^\star$ is written as a $U$-statistic with kernel
\[
q_r\big((x,y),(x',y')\big)=\indic{(y-y')\,r(x,x')<0}-\indic{(y-y')\,r^\star(x,x')<0},
\]
so that $\Lambda(r)=\E\,q_r\big((X,Y),(X',Y')\big)$ and its empirical estimate $\Lambda_n(r)$ admits the Hoeffding decomposition
\[
\Lambda_n(r)-\Lambda(r)=2T_n(r)+W_n(r),
\]
where $T_n(r)=(1/n)\sum_{i=1}^n h_r(X_i,Y_i)$ and $W_n(r)$ is a \emph{degenerate} $U$-statistic with kernel $\widetilde h_r$ (see the paper for the exact definitions).

\begin{assumption}[Assumption 4 of \citet{clemenccon2008ranking}]\label{asm:clemenccon}
There exist constants $c>0$ and $\alpha\in[0,1]$ such that for all $r\in\Rcal$,
\[
\Var\big(h_r(X,Y)\big)\le c\,\Lambda(r)^\alpha.
\]
\end{assumption}

We next restate theorem~5 of \citet{clemenccon2008ranking} for reference. Let $\varepsilon_1,\dots,\varepsilon_n$ be i.i.d.\ Rademacher signs independent of the data.
Define the three (random) complexity quantities
\[
Z_\varepsilon=\sup_{r\in\Rcal}\sum_{i,j}\varepsilon_i\varepsilon_j\,\widetilde h_r\big((X_i,Y_i),(X_j,Y_j)\big),
\]
\[
U_\varepsilon=\sup_{r\in\Rcal}\sup_{\|\beta\|_2\le 1}\sum_{i,j}\varepsilon_i\beta_j\,\widetilde h_r\big((X_i,Y_i),(X_j,Y_j)\big),
\]
\[
M=\sup_{r\in\Rcal}\sup_{1\le k\le n}\sum_{i=1}^n \varepsilon_i\,\widetilde h_r\big((X_i,Y_i),(X_k,Y_k)\big).
\]
Also define the centered empirical process $\nu_n$ based on the ``loss'' $\ell$ in the paper:
\[
\ell(r,(x,y))=2\,\E\big[\indic{(y-Y)\,r(x,X)<0}\big]-L(r),
\qquad
\nu_n(r)=\frac{1}{n}\sum_{i=1}^n \ell(r,(X_i,Y_i))-L(r).
\]
Finally define the pseudo-distance
\[
d(r,r')=\Big(\E\big[\E\big[\indic{r(X,X')\neq r'(X,X')}\mid X\big]^2\big]\Big)^{1/2}.
\]
Let $\phi$ be any nondecreasing function such that $\phi(x)/x$ is nonincreasing, $\phi(1)\ge 1$, and
\[
\sqrt n\,\E\sup_{r'\in\Rcal:\,d(r,r')\le \sigma}\big|\nu_n(r)-\nu_n(r')\big|\le \phi(\sigma)
\quad\text{for all }r\in\Rcal,\;\sigma>0.
\]
Let $\rho>0$ be the unique solution of $\sqrt n\,\rho^2=\phi(\rho^\alpha)$.

\begin{theorem}[Theorem 5 of \citet{clemenccon2008ranking}]\label{thm:CLV5}
Assume \cref{asm:clemenccon} holds. Then there exists a universal constant $C$ such that for all $\delta\in(0,1)$, with probability at least $1-\delta$,
\[
L(r_n)-L^\star
\le 2\inf_{r\in\Rcal}\big(L(r)-L^\star\big)
+ C\left(
\frac{\E Z_\varepsilon}{n^2}
+\frac{\E U_\varepsilon}{n^2}\sqrt{\log\frac1\delta}
+\frac{\E M}{n^2}\log\frac1\delta
+\frac{1}{n}\log\frac1\delta
+\rho^2\log\frac1\delta
\right).
\]
\end{theorem}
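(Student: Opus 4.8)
Since the statement is a verbatim restatement of a theorem of \citet{clemenccon2008ranking}, the plan is to recall the structure of their argument. The starting point is the standard comparison inequality for empirical $U$-statistic minimization: because $r_n$ minimizes $L_n$ over $\Rcal$, for any $\bar r\in\Rcal$ that nearly attains $\inf_{r\in\Rcal}(L(r)-L^\star)$ we have $\Lambda(r_n)\le \Lambda(\bar r)+\big(\Lambda_n(r_n)-\Lambda(r_n)\big)-\big(\Lambda_n(\bar r)-\Lambda(\bar r)\big)$, so it suffices to control the centered excess-risk $U$-process $\Lambda_n(r)-\Lambda(r)$ uniformly over $r\in\Rcal$, but only \emph{locally}, i.e.\ with a bound that degrades gracefully as $\Lambda(r)$ shrinks. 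First I would insert the Hoeffding decomposition $\Lambda_n(r)-\Lambda(r)=2T_n(r)+W_n(r)$, separating the linear part $T_n(r)=\tfrac1n\sum_i h_r(X_i,Y_i)$ (a centered empirical mean, equal up to constants and centering to the process $\nu_n$ appearing in the statement) from the completely degenerate second-order $U$-process $W_n(r)$.

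For the degenerate part the plan is to invoke moment and concentration inequalities for degenerate $U$-processes indexed by a class: de la Pe\~na--Gin\'e decoupling, Rademacher symmetrization to a homogeneous chaos, and then the Adamczak / Gin\'e--Lata{\l}a--Zinn tail bound for such chaoses. This chain is exactly what produces the three complexity functionals in the theorem: $\E Z_\varepsilon$ governs the leading Rademacher-chaos term, $\E U_\varepsilon$ the mixed $\varepsilon\otimes\beta$ term carrying the $\sqrt{\log(1/\delta)}$ weight, and $\E M$ the one-dimensional ``worst column'' term carrying the $\log(1/\delta)$ weight, each normalized by $1/n^2$. In the regimes of interest these functionals are of lower order than $\sqrt n$, so $\sup_{r\in\Rcal}|W_n(r)|$ ultimately contributes only negligible terms.

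For the linear part I would run a peeling/localization argument. Slice $\Rcal$ into shells $\Rcal_k=\{r\in\Rcal:2^k\rho^2\le\Lambda(r)<2^{k+1}\rho^2\}$, together with the innermost ball $\Lambda(r)<\rho^2$. On $\Rcal_k$, \cref{asm:clemenccon} gives $\Var(h_r)\le c\,\Lambda(r)^\alpha\lesssim(2^k\rho^2)^\alpha$, and, relating the pseudo-distance $d(r,r^\star)$ to the excess risk through \cref{asm:clemenccon}, the modulus-of-continuity hypothesis yields $\sqrt n\,\E\sup_{r\in\Rcal_k}|\nu_n(r)-\nu_n(r^\star)|\le\phi\big((2^k\rho^2)^{\alpha/2}\big)$. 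Bousquet's form of Talagrand's inequality then upgrades each such expectation bound to a high-probability bound on the shell, adding terms of order $\sqrt{\Var(h_r)\log(1/\delta)/n}$ and $\log(1/\delta)/n$; a union bound over the $O(\log n)$ relevant shells costs only a constant factor, and the resulting series over $k$ converges because $\phi(x)/x$ is nonincreasing. The choice of $\rho$ as the unique solution of $\sqrt n\,\rho^2=\phi(\rho^\alpha)$ is precisely the fixed point that balances the localized deviation against the shell radius, so that on the relevant event $2T_n(r)\le\tfrac12\Lambda(r)+C\rho^2\log(1/\delta)+\ldots$ for every $r\in\Rcal$.

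Combining the two pieces, one plugs both bounds into the comparison inequality to get $\Lambda(r_n)\le\Lambda(\bar r)+\tfrac12\Lambda(r_n)+(\text{degenerate terms})+(\text{localized linear terms})$; solving for $\Lambda(r_n)$ produces the factor $2$ in front of the approximation error $\inf_{r\in\Rcal}(L(r)-L^\star)$ together with the stated collection of complexity terms, after a final union bound over the two events. The main obstacle is the degenerate $U$-process control: obtaining a concentration inequality whose three ``levels'' are exactly $\E Z_\varepsilon$, $\E U_\varepsilon$, $\E M$, with the correct powers of $n$ and the correct placement of the $\log(1/\delta)$ factors. A secondary difficulty is verifying that the peeling of the linear part closes cleanly at the fixed point $\rho$ despite the nonlinearity introduced by $\phi$ and the exponent $\alpha$, which is where the monotonicity properties imposed on $\phi$ are used.
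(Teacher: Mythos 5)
This statement is not proved in the paper at all: it is quoted verbatim as Theorem~5 of \citet{clemenccon2008ranking} and used as an external ingredient, so there is no internal proof to compare against. Your outline is a faithful reconstruction of the original argument in that reference --- comparison inequality for the empirical minimizer, Hoeffding decomposition $\Lambda_n(r)-\Lambda(r)=2T_n(r)+W_n(r)$, control of the degenerate $U$-process via decoupling/symmetrization and chaos concentration (which is exactly where the three functionals $\E Z_\varepsilon$, $\E U_\varepsilon$, $\E M$ with their respective $1$, $\sqrt{\log(1/\delta)}$, $\log(1/\delta)$ weights arise), and a variance-based localization of the linear part under \cref{asm:clemenccon} with the fixed point $\sqrt n\,\rho^2=\phi(\rho^\alpha)$ producing the $\rho^2\log(1/\delta)$ term and the factor $2$ on the approximation error. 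It remains a sketch: the two heavy ingredients (the moment/exponential inequality for degenerate $U$-processes yielding precisely those three complexity levels, and the Talagrand/Bousquet-type peeling closing at the fixed point) are invoked rather than established, but since the paper itself treats the result as a citation, that level of detail is appropriate and I see no step in your outline that would fail.
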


\subsubsubsection{Symmetric-difference covering numbers}

For a ranker $r\in\Rcal$, define the set $A_r=\{(x,x')\in\Xcal\times\Xcal:\ r(x,x')=1\}$.
Define the symmetric-difference pseudo-metric
\[
\Delta(r,r')=\Pbb\big(r(X,X')\neq r'(X,X')\big)=\Pbb\big((X,X')\in A_r\triangle A_{r'}\big).
\]
For $\varepsilon\in(0,1]$, define the symmetric-difference covering number $N_\triangle(\varepsilon,\Rcal)$
as the smallest $m$ such that there exist sets $S_1,\dots,S_m\subseteq \Xcal\times\Xcal$ with the property that
for each $r\in\Rcal$ there exists $j$ such that $\Pbb(A_r\triangle S_j)\le \varepsilon$. (Notice that the slightly different notation for $N_\triangle$ compared to the main text, since here $\Rcal$ is a class of indicators.)
Equivalently (up to the identification $r\leftrightarrow A_r$), $N_\triangle(\varepsilon,\Rcal)$ is the $\varepsilon$-covering number
of $\Rcal$ under the pseudo-metric $\Delta$.

We will consider either of the following entropy assumptions.

\subsubsubsection{From symmetric-difference entropy to the quantities in Theorem~\ref{thm:CLV5}}

We need to control the expectations $\E Z_\varepsilon$, $\E U_\varepsilon$, $\E M$ and the fixed point $\rho$.
The following reductions are elementary and isolate the role of the entropy.

\begin{lemma}[Comparing $\Delta$ and $d$]\label{lem:dDelta}
For all $r,r'\in\Rcal$ we have $d(r,r')^2\le \Delta(r,r')$.
Consequently, for all $\sigma\in(0,1]$,
\[
N\big(\sigma,\Rcal,d\big)\le N_\triangle(\sigma^2,\Rcal),
\qquad
\log N(\sigma,\Rcal,d)\le \log N_\triangle(\sigma^2,\Rcal),
\]
where $N(\sigma,\Rcal,d)$ is the $\sigma$-covering number with respect to the pseudo-metric $d$.
\end{lemma}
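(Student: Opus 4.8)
The plan is to reduce the whole statement to the elementary fact that $t^{2}\le t$ for $t\in[0,1]$.

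\textbf{Step 1: the inequality $d(r,r')^2\le\Delta(r,r')$.} For rankers $r,r'\in\Rcal$ one has $\{r(X,X')\ne r'(X,X')\}=\{(X,X')\in A_r\triangle A_{r'}\}$ exactly (both rankers vanish only on the diagonal $\{X=X'\}$, where they therefore agree, so this set identity holds without any null-set caveat). I would set $g(X)=\E\big[\indic{r(X,X')\ne r'(X,X')}\mid X\big]\in[0,1]$, so that by definition $d(r,r')^2=\E[g(X)^2]$ while $\Delta(r,r')=\Pbb\big(r(X,X')\ne r'(X,X')\big)=\E[g(X)]$. Since $0\le g(X)\le 1$ pointwise we have $g(X)^2\le g(X)$, and taking expectations gives $d(r,r')^2\le\Delta(r,r')$.

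\textbf{Step 2: the covering-number comparison.} The one point requiring a little care is that the cover elements $S_1,\dots,S_m$ appearing in the definition of $N_\triangle(\varepsilon,\Rcal)$ are arbitrary measurable subsets of $\Xcal\times\Xcal$, not necessarily of the form $A_r$; I would therefore first extend both $d$ and $\Delta$ to pairs of arbitrary measurable sets $A,B\subseteq\Xcal\times\Xcal$ via $\Delta(A,B)=\Pbb\big((X,X')\in A\triangle B\big)$ and $d(A,B)^2=\E_X\big[\big(\E_{X'}[\indic{(X,X')\in A\triangle B}]\big)^2\big]$, and note that the argument of Step 1 applies verbatim (the inner conditional expectation is again a $[0,1]$-valued function of $X$), giving $d(A,B)^2\le\Delta(A,B)$ for all such $A,B$. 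Now fix $\sigma\in(0,1]$ and take a minimal symmetric-difference $\sigma^2$-cover $S_1,\dots,S_m$ of $\Rcal$, so $m=N_\triangle(\sigma^2,\Rcal)$. For each $r\in\Rcal$ pick $j(r)$ with $\Pbb(A_r\triangle S_{j(r)})\le\sigma^2$, i.e.\ $\Delta(A_r,S_{j(r)})\le\sigma^2$; then $d(A_r,S_{j(r)})\le\sqrt{\Delta(A_r,S_{j(r)})}\le\sigma$, so $\{S_1,\dots,S_m\}$ is a $\sigma$-cover of $\Rcal$ in the pseudo-metric $d$. Hence $N(\sigma,\Rcal,d)\le m=N_\triangle(\sigma^2,\Rcal)$, and taking logarithms yields the second claim.

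\textbf{Main obstacle.} There is no substantial obstacle here—the lemma is essentially a bookkeeping exercise around $t^2\le t$. The only mild subtlety, handled above, is that the $\triangle$-cover sets $S_j$ need not be rankers, which forces the minor extension of $d$ and $\Delta$ to arbitrary sets. (If $N(\sigma,\Rcal,d)$ is read as an \emph{internal} covering number, the same bound holds after replacing $\sigma$ by $2\sigma$ via the standard external-to-internal net argument, which is immaterial for the subsequent chaining and fixed-point computations that invoke this lemma.)
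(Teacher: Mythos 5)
Your proof is correct and takes the same route as the paper's: write $d(r,r')^2=\E[g(X)^2]$ and $\Delta(r,r')=\E[g(X)]$ with $g(X)=\Pbb(r(X,X')\ne r'(X,X')\mid X)\in[0,1]$, and conclude from $g^2\le g$ pointwise; the covering-number bound is then the observation that $\Delta\le\sigma^2\Rightarrow d\le\sigma$. If anything you are more careful than the paper: you name the correct elementary fact $g^2\le g$ (the paper attributes it to ``Jensen's inequality,'' which actually gives the opposite direction $\E[g]^2\le\E[g^2]$), you state the cover-transfer implication with the correct orientation (the paper's last line has it reversed), and you note and handle the point that the $\triangle$-cover centers $S_j$ need not be of the form $A_r$ for $r\in\Rcal$.
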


\begin{proof}
Let $\Delta_X(r,r')=\Pbb\big(r(X,X')\neq r'(X,X')\mid X\big)$.
Then $d(r,r')^2=\E[\Delta_X(r,r')^2]\le \E[\Delta_X(r,r')]=\Delta(r,r')$ by Jensen's inequality.
The covering-number inequality follows because every $\sigma^2$-cover under $\Delta$ is a $\sigma$-cover under $d$.
\end{proof}

\begin{lemma}[Lipschitz control of the degenerate kernels]\label{lem:kernelLip}
There is a universal constant $B_0<\infty$ such that for all $r,r'\in\Rcal$:
\begin{enumerate}
\item[(i)] $\|\widetilde h_r\|_\infty\le B_0$,
\item[(ii)] $\|\widetilde h_r-\widetilde h_{r'}\|_{L_2(\Pbb_{(X,Y)}^2)}\le B_0\,\Delta(r,r')^{1/2}$.
\end{enumerate}
Consequently, the $L_2(\Pbb_{(X,Y)}^2)$ covering numbers of the kernel class
$\widetilde{\mathcal H}=\{\widetilde h_r:\ r\in\Rcal\}$ satisfy
\[
\log N_2(u,\widetilde{\mathcal H})
\;\le\;
\log N_\triangle\!\left(\frac{u^2}{B_0^2},\Rcal\right),
\qquad u\in(0,B_0],
\]
where $N_2$ denotes covering numbers in $L_2(\Pbb_{(X,Y)}^2)$.
\end{lemma}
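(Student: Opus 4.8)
The plan is to read off (i) and (ii) directly from the Hoeffding decomposition of $q_r$, and then obtain the covering-number bound as an immediate corollary of (ii). Recall that $\widetilde h_r$ is the second-order (degenerate) Hoeffding projection of the symmetric kernel $q_r$, i.e.
\[
\widetilde h_r\big((x,y),(x',y')\big)=q_r\big((x,y),(x',y')\big)-\Lambda(r)-h_r(x,y)-h_r(x',y'),\qquad h_r(x,y)=\E\big[q_r\big((x,y),(X',Y')\big)\big]-\Lambda(r).
\]
For (i), I would just note that $q_r$ is a difference of two indicators, hence takes values in $\{-1,0,1\}$, so $|q_r|\le 1$, $|\Lambda(r)|\le 1$, and $|h_r|\le \E|q_r|+|\Lambda(r)|\le 2$; the triangle inequality then gives $\|\widetilde h_r\|_\infty\le 1+1+2+2=6$, so (i) holds with any fixed $B_0\ge 6$.

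For (ii), the one observation to make is that the $r^\star$ terms cancel in the difference, leaving $q_r-q_{r'}=\indic{(y-y')r(x,x')<0}-\indic{(y-y')r'(x,x')<0}$, which vanishes whenever $r(x,x')=r'(x,x')$ (in particular when both vanish, under the convention $r(x,x)=0$, or when $y=y'$) and is bounded by $1$ in modulus otherwise. Hence $|q_r-q_{r'}|\le \indic{r(X,X')\ne r'(X,X')}$ pointwise, so $\|q_r-q_{r'}\|_{L_2(\Pbb^2)}^2\le \Delta(r,r')$ and, by Cauchy--Schwarz, $|\Lambda(r)-\Lambda(r')|=|\E(q_r-q_{r'})|\le \Delta(r,r')^{1/2}$. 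Since $h_r-h_{r'}$ is the first Hoeffding projection of $q_r-q_{r'}$, the conditional-expectation contraction (Jensen) gives $\|h_r-h_{r'}\|_{L_2(\Pbb)}^2\le \E\big[(q_r-q_{r'})^2\big]\le \Delta(r,r')$. Expanding $\widetilde h_r-\widetilde h_{r'}=(q_r-q_{r'})-(\Lambda(r)-\Lambda(r'))-(h_r-h_{r'})(x,y)-(h_r-h_{r'})(x',y')$ and applying the triangle inequality in $L_2(\Pbb^2)$ — noting that a one-argument function of $(x,y)$ has the same $L_2(\Pbb^2)$ norm as its $L_2(\Pbb)$ norm — bounds $\|\widetilde h_r-\widetilde h_{r'}\|_{L_2(\Pbb^2)}$ by $4\Delta(r,r')^{1/2}$, which is (ii) with $B_0=6$.

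Finally, (ii) says precisely that $r\mapsto\widetilde h_r$ is $B_0$-Lipschitz from $(\Rcal,\Delta^{1/2})$ into $L_2(\Pbb^2)$, so the image of any $\Delta$-cover of $\Rcal$ at radius $u^2/B_0^2$ is an $L_2(\Pbb^2)$-cover of $\widetilde{\mathcal H}$ at radius $u$, giving $\log N_2(u,\widetilde{\mathcal H})\le \log N_\triangle(u^2/B_0^2,\Rcal)$; any pedantic gap between internal and external $\Delta$-covers costs only a factor $2$ in the radius, which is harmless since $\log N_\triangle$ is used in the later fixed-point and chaining steps only up to constants. I do not expect a genuine obstacle here: the only steps that need care are (a) the cancellation of $r^\star$ and the resulting pointwise domination $|q_r-q_{r'}|\le\indic{r\ne r'}$, and (b) keeping the projection terms $h_r$ explicit in the algebra so that the Lipschitz constant in (ii) emerges as an \emph{absolute} constant, independent of $r,r'$, since that uniformity is exactly what makes the covering-number transfer valid over all of $\Rcal$.
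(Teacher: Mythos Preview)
Your proposal is correct and follows essentially the same approach as the paper: read off boundedness from $|q_r|\le 1$, get the pointwise domination $|q_r-q_{r'}|\le\indic{r\ne r'}$, and then propagate through the Hoeffding decomposition with the triangle inequality. The one minor difference is your treatment of $\|h_r-h_{r'}\|_{L_2}$: the paper routes through $|h_r(x,y)-h_{r'}(x,y)|\le \Delta_x(r,r')+\Delta(r,r')$ and then invokes $d(r,r')\le\Delta(r,r')^{1/2}$ to get $\|h_r-h_{r'}\|_{L_2}\le 2\Delta^{1/2}$, whereas you observe that $h_r-h_{r'}$ is the first-order Hoeffding projection of $q_r-q_{r'}$ and use the $L_2$ contraction of that projection to get $\|h_r-h_{r'}\|_{L_2}\le\Delta^{1/2}$ directly. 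Your route is slightly cleaner and self-contained (no need to invoke the auxiliary $d$-$\Delta$ comparison), at the cost of relying on the reader knowing that Hoeffding projections are $L_2$-contractions; the paper's route is more hands-on. Either way the constants differ only immaterially and the covering-number consequence follows identically.
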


\begin{proof}
We use only the definitions in Section~4 of \citet{clemenccon2008ranking}.
Since $q_r$ is a difference of indicators, $\|q_r\|_\infty\le 1$.
By definition,
\[
\widetilde h_r=q_r-\Lambda(r)-h_r(\cdot)-h_r(\cdot),
\]
so $\|\widetilde h_r\|_\infty\le 1+|\Lambda(r)|+2\|h_r\|_\infty\le 4$ because $|\Lambda(r)|\le 1$ and $\|h_r\|_\infty\le 1$.
Thus (i) holds with (say) $B_0=4$.

For (ii), observe that for any two rankers $r,r'$,
\[
\big|q_r\big((x,y),(x',y')\big)-q_{r'}\big((x,y),(x',y')\big)\big|
\le \indic{r(x,x')\neq r'(x,x')}.
\]
Therefore,
\[
\|q_r-q_{r'}\|_{L_2(\Pbb_{(X,Y)}^2)}^2
\le \Pbb\big(r(X,X')\neq r'(X,X')\big)=\Delta(r,r').
\]
Next, $h_r(x,y)=\E[q_r((x,y),(X',Y'))]-\Lambda(r)$, hence
\[
|h_r(x,y)-h_{r'}(x,y)|
\le \E\big|q_r((x,y),(X',Y'))-q_{r'}((x,y),(X',Y'))\big|+|\Lambda(r)-\Lambda(r')|
\le \Delta_x(r,r')+\Delta(r,r'),
\]
where $\Delta_x(r,r')=\Pbb(r(x,X')\neq r'(x,X'))$.
Taking $L_2(\Pbb_{(X,Y)})$ norms and using $\Delta(r,r')\le \Delta(r,r')^{1/2}$ (since $\Delta\le 1$) gives
\[
\|h_r-h_{r'}\|_{L_2(\Pbb_{(X,Y)})}
\le \|\Delta_X(r,r')\|_{L_2(\Pbb_X)}+\Delta(r,r')^{1/2}
= d(r,r')+\Delta(r,r')^{1/2}
\le 2\Delta(r,r')^{1/2},
\]
where we used Lemma~\ref{lem:dDelta} in the last step.
Finally, by the triangle inequality,
\[
\|\widetilde h_r-\widetilde h_{r'}\|_2
\le \|q_r-q_{r'}\|_2 + |\Lambda(r)-\Lambda(r')| + 2\|h_r-h_{r'}\|_2
\le \Delta^{1/2}+\Delta^{1/2}+4\Delta^{1/2}\le 6\Delta^{1/2}.
\]
Thus (ii) holds for $B_0=6$, and we may keep $B_0=6$ for the covering-number statement.
\end{proof}

\subsubsubsection{Complexity proxies}
For $n\ge 2$, define the entropy integral proxy
\begin{equation}\label{eq:JnDef}
\mathcal J_n
=
\inf_{0<\eta\le 1}
\left\{
\eta + \frac{1}{\sqrt n}\int_{\eta}^{1}\sqrt{\log N_\triangle(u^2,\Rcal)}\,du
\right\}.
\end{equation}
For the second-order chaos term, also define
\begin{equation}\label{eq:KnDef}
\mathcal K_n
=
\inf_{0<\eta\le 1}
\left\{
\eta + \frac{1}{n}\int_{\eta}^{1}\log N_\triangle(u^2,\Rcal)\,du
\right\}.
\end{equation}
(Any other fixed upper limit in $(0,\infty)$ is equivalent up to constants; the choice $1$ is only for notational convenience.)

\begin{lemma}[Controlling $\E Z_\varepsilon,\E U_\varepsilon,\E M$ via entropy]\label{lem:ZUM}
There exists a universal constant $C_0<\infty$ such that
\[
\E Z_\varepsilon \le C_0\,\bigl(n^2\mathcal K_n+n\bigr),\qquad
\E U_\varepsilon \le C_0\,\bigl(n^{3/2}\mathcal J_n+n\bigr),\qquad
\E M \le C_0\,n.
\]
\end{lemma}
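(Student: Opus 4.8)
The plan is to follow the template of the appendix of \citet{clemenccon2008ranking}, which bounds exactly these three quantities when $\Rcal$ is a VC class, and to replace the VC entropy bound there by the generic metric-entropy bound from Lemma~\ref{lem:kernelLip}. The common preliminary step is to condition on the data $D_n$ and recognize each of $Z_\varepsilon$, $U_\varepsilon$, $M$ as the supremum over $r\in\Rcal$ of a Rademacher functional indexed by the degenerate-kernel class $\widetilde{\mathcal H}=\{\widetilde h_r:r\in\Rcal\}$: a homogeneous chaos of order two for $Z_\varepsilon$, and Rademacher processes of order one for $U_\varepsilon$ and $M$. By Lemma~\ref{lem:kernelLip} this class is uniformly bounded by $B_0$ and satisfies $\log N_2(u,\widetilde{\mathcal H})\le\log N_\triangle(u^2/B_0^2,\Rcal)$ in $L_2(\Pbb_{(X,Y)}^2)$; the substitution $u\mapsto u^2$ turns every Dudley entropy integral $\int_\eta^{B_0}\sqrt{\log N_2(u,\widetilde{\mathcal H})}\,du$ into a constant multiple of $\int_{\eta'}^{1}\sqrt{\log N_\triangle(u^2,\Rcal)}\,du$, i.e.\ exactly the integral appearing in $\mathcal J_n$ in \eqref{eq:JnDef}.

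For $M$: conditionally on $D_n$, $k\mapsto\sum_i\varepsilon_i\widetilde h_r((X_i,Y_i),(X_k,Y_k))$ is a sub-Gaussian Rademacher process indexed by $\{1,\dots,n\}\times\Rcal$, whose increments in the $r$-direction are governed by $\|\widetilde h_r-\widetilde h_{r'}\|$-type distances and whose $k$-direction is a finite grid of size $n$. Dudley's bound gives $\E_\varepsilon M\lesssim\sqrt n\int_0^{B_0}\sqrt{\log n+\log N_2(u,\widetilde{\mathcal H})}\,du$; the $\log n$ is lower order and absorbed into the $\eta$-truncation, and taking data-expectation and applying the conversion above yields $\E M\le C_0\,n\,\mathcal J_n$. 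For $U_\varepsilon$: the inner supremum over $\|\beta\|_2\le1$ collapses to a Euclidean norm, $U_\varepsilon=\sup_{r}\|A_r^{\top}\varepsilon\|_2$ with $A_r=\bigl(\widetilde h_r((X_i,Y_i),(X_j,Y_j))\bigr)_{i,j}$; a chaining over $r$ in the operator-norm pseudometric, using that degeneracy of $\widetilde h_r$ keeps $\|A_r\|_{\mathrm{op}}$ at order $\sqrt n$ rather than $n$, produces a single entropy integral and the scaling $\E U_\varepsilon\le C_0\,n^{3/2}\,\mathcal J_n$.

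For $Z_\varepsilon=\sup_r\langle A_r,\varepsilon\varepsilon^{\top}\rangle$, I would first remove the diagonal (a deterministic, lower-order remainder), then decouple (de la Pe\~na--Montgomery--Smith) to $\E_{\varepsilon,\varepsilon'}\sup_r\varepsilon^{\top}A_r\varepsilon'$, and apply a two-layer chaining bound for degenerate $U$-processes of Arcones--Gin\'e type, as in \citet{clemenccon2008ranking}. The order-two structure is what produces the \emph{square} of the entropy integral and the factor $n^2$; the sub-exponential (operator-norm) component of the chaos tails is precisely what forces the linear $\eta$ term in $\mathcal J_n$, since below resolution $\eta$ increments are bounded by their supremum. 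Combining with the entropy conversion gives $\E Z_\varepsilon\le C_0\,n^2\,\mathcal J_n^2$.

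The main obstacle is the sharp bound on $\E Z_\varepsilon$: naively bounding the matrix operator norm by its Hilbert--Schmidt norm only yields $n^{3/2}$ times a single entropy integral, which is too weak, so one must exploit that $\widetilde h_r$ has conditional mean zero in each argument (hence $A_r$ behaves like a centered random matrix) and carry the decoupling and two-scale chaining through with the correct sub-Gaussian/sub-exponential split. A secondary, routine, point is that the pseudometrics arising after conditioning on $D_n$ are empirical rather than the population $L_2(\Pbb_{(X,Y)}^2)$ metric of Lemma~\ref{lem:kernelLip}; for the uniformly bounded class $\widetilde{\mathcal H}$ this comparison costs only universal constants and does not affect the stated rates.
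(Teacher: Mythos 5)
Your proposal is correct and follows essentially the same route as the paper's own proof: condition on the data, view $Z_\varepsilon,U_\varepsilon,M$ as (degree-two and degree-one) Rademacher chaos maxima indexed by the bounded degenerate-kernel class $\widetilde{\mathcal H}$, convert the symmetric-difference entropy of $\Rcal$ to $L_2$ entropy of $\widetilde{\mathcal H}$ via Lemma~\ref{lem:kernelLip}, and invoke Arcones--Gin\'e / de la Pe\~na--Gin\'e / Dudley entropy-integral maximal inequalities to produce, respectively, the squared entropy integral for the chaos and single integrals for the linear terms. Your write-up is actually a touch more explicit than the paper's (which handles the decoupling, diagonal removal, and the sub-Gaussian/sub-exponential split purely by citation), but this is an unpacking of the same argument rather than a different one.
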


\begin{proof}
Write $Z_i=(X_i,Y_i)$ and $\widetilde{\mathcal H}=\{\widetilde h_r:\ r\in\Rcal\}$.
Let $B_0=\sup_{h\in\widetilde{\mathcal H}}\|h\|_\infty<\infty$ be as in Lemma~\ref{lem:kernelLip} and, for $u>0$,
let $N_2(u)=N_2(u,\widetilde{\mathcal H})$ be the $L_2(\Pbb_{(X,Y)}^2)$ covering number of $\widetilde{\mathcal H}$.
By Lemma~\ref{lem:kernelLip} and the change of variables $v=u/B_0$,
\[
\inf_{0<\eta\le B_0}\left\{\eta+\frac{1}{\sqrt n}\int_\eta^{B_0}\sqrt{\log N_2(u)}\,du\right\}
\lesssim \mathcal J_n
\]
and
\[
\inf_{0<\eta\le B_0}\left\{\eta+\frac{1}{n}\int_\eta^{B_0}\log N_2(u)\,du\right\}
\lesssim \mathcal K_n.
\]

\medskip
\noindent\textbf{Bounding $\E Z_\varepsilon$.}
The off-diagonal part of $Z_\varepsilon$ is an order-$2$ Rademacher chaos indexed by the bounded canonical kernels
$\widetilde{\mathcal H}$. The order-$2$ chaos moment bound of \citet[Prop.~2.2]{ArconesGine93}
(see also \citep[Cor.~3.2.7]{deLaPenaGine99}) gives $\psi_1$ increments with metric
$n\|\widetilde h-\widetilde h'\|_{L_2(\Pbb_{(X,Y)}^2)}$. Applying the stopped-chain bound of
\citet[Thm.~3.2]{Dirksen2015Tail} with $p=n$ and the entropy estimate for $\gamma_{1,n}$
\citep[Eq.~(4)]{Dirksen2015Tail} yields
\[
\begin{aligned}
\E Z_\varepsilon
\le C\Bigg[
&n^2\inf_{0<\eta\le B_0}\left\{\eta+\frac{1}{n}\int_\eta^{B_0}\log N_2(u)\,du\right\}\\
+n
\Bigg].
\end{aligned}
\]
The last $n$ term accounts for the diagonal and for the value of the chaos at a fixed kernel. The covering comparison
above gives $\E Z_\varepsilon\lesssim n^2\mathcal K_n+n$.

\medskip
\noindent\textbf{Bounding $\E U_\varepsilon$.}
Likewise, $U_\varepsilon$ is the Hilbert-valued Rademacher process associated to $\widetilde{\mathcal H}$.
Applying \citet[Thm.~3.2 and Eq.~(4)]{Dirksen2015Tail} in the subgaussian case, with metric
$n\|\widetilde h-\widetilde h'\|_{L_2(\Pbb_{(X,Y)}^2)}$ and $p=n$, gives
\[
\E U_\varepsilon
\le
C\left[
n^{3/2}\inf_{0<\eta\le B_0}\left\{\eta+\frac{1}{\sqrt n}\int_\eta^{B_0}\sqrt{\log N_2(u)}\,du\right\}
+n
\right]
\lesssim n^{3/2}\mathcal J_n+n.
\]
The additive $n$ term is the contribution of any fixed kernel before chaining over increments.

\medskip
\noindent\textbf{Bounding $\E M$.}
For every $r$ and $k$, $\big|\sum_i\varepsilon_i\widetilde h_r(Z_i,Z_k)\big|\le nB_0$, hence $M\le nB_0$ deterministically.
Collecting the three estimates proves the lemma.
\end{proof}

\subsubsubsection{A usable bound for the modulus function $\phi$ and the fixed point $\rho$}

We will use the following symmetrization + chaining estimate for the ordinary empirical process $\nu_n$.
Let $\mathcal L=\{\ell(r,\cdot):r\in\Rcal\}$ be the loss class in the notation of Theorem~\ref{thm:CLV5}.

\begin{lemma}[A choice of $\phi$ from metric entropy]\label{lem:phi}
There exists a universal constant $C_1<\infty$ such that for every $n\ge 2$, one may take
\[
\phi(\sigma)=
C_1\inf_{0<\eta\le \sigma}\left\{
\eta\sqrt{\log N(\eta,\Rcal,d)}+\int_{\eta}^{\sigma}\sqrt{\log N(u,\Rcal,d)}\,du
\right\},
\qquad \sigma\in(0,1],
\]
so that the condition in Theorem~\ref{thm:CLV5} holds.
In particular, using Lemma~\ref{lem:dDelta}, one has the crude upper bound
\[
\phi(\sigma)
\le
C_1\inf_{0<\eta\le \sigma}\left\{
\eta\sqrt{\log N_\triangle(\eta^2,\Rcal)}+\int_{\eta}^{\sigma}\sqrt{\log N_\triangle(u^2,\Rcal)}\,du
\right\}.
\]
\end{lemma}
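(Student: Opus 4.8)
The key realization is that $\nu_n$ is (twice) the centered empirical process indexed by the uniformly bounded class $\{\psi_r:r\in\Rcal\}$, where $\psi_r(x,y)=\E[\indic{(y-Y)\,r(x,X)<0}]$ is the inner conditional expectation appearing in the definition of $\ell$. Indeed, $\ell(r,(x,y))=2\psi_r(x,y)-L(r)$ and $L(r)=\E[\psi_r]$, so, writing $\mathbb P_n$ for the empirical measure and $\mathbb G_n=\sqrt n(\mathbb P_n-P)$, we get $\nu_n(r)=2(\mathbb P_n-P)\psi_r$ and hence, \emph{exactly}, $\sqrt n\,(\nu_n(r)-\nu_n(r'))=2\,\mathbb G_n(\psi_r-\psi_{r'})$. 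Thus it suffices to bound $2\,\E\sup_{r':\,d(r,r')\le\sigma}|\mathbb G_n(\psi_r-\psi_{r'})|$, uniformly over the fixed center $r$, by a $\phi$ of the asserted form.

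The one genuine computation is the metric comparison. The integrand $\indic{(y-Y)r(x,X)<0}-\indic{(y-Y)r'(x,X)<0}$ vanishes unless $r(x,X)\neq r'(x,X)$, so pointwise $|\psi_r(x,y)-\psi_{r'}(x,y)|\le\Delta_x(r,r')$, the conditional probability from Lemma~\ref{lem:dDelta} evaluated at $X=x$ (note this does not depend on $y$); hence $\|\psi_r-\psi_{r'}\|_{L_2(P)}^2=\E[(\psi_r-\psi_{r'})^2(X,Y)]\le\E[\Delta_X(r,r')^2]=d(r,r')^2$. Moreover $0\le\psi_r\le1$. In particular the $L_2(P)$-covering numbers of $\{\psi_r:r\in\Rcal\}$ are dominated by the $d$-covering numbers of $\Rcal$: $\log N(u,\{\psi_r:r\in\Rcal\},L_2(P))\le\log N(u,\Rcal,d)$ for all $u>0$.

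Next I would invoke a standard symmetrization-and-chaining (refined Dudley) maximal inequality for the modulus of continuity of a uniformly bounded empirical process: for a class of functions bounded by $1$ whose $L_2(P)$ pseudometric is dominated by an index pseudometric $\rho$, one has $\sqrt n\,\E\sup_{\rho(r,r')\le\sigma}|(\mathbb P_n-P)(\psi_r-\psi_{r'})|\lesssim\inf_{0<\eta\le\sigma}\{\eta\sqrt{\log N(\eta,\rho)}+\int_\eta^\sigma\sqrt{\log N(u,\rho)}\,du\}$, with the $\eta$-term absorbing the residual of the coarsest $\eta$-net and the integral being the chaining sum. Taking $\rho=d$ and inserting the entropy comparison from the previous paragraph yields the stated bound with $N(\cdot,\Rcal,d)$, all universal constants folded into $C_1$; the ``crude upper bound'' displayed after the statement is then immediate from $N(\sigma,\Rcal,d)\le N_\triangle(\sigma^2,\Rcal)$ (Lemma~\ref{lem:dDelta}). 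Finally, the requirements of Theorem~\ref{thm:CLV5} that $\phi$ be nondecreasing with $\phi(x)/x$ nonincreasing and $\phi(1)\ge1$ are met after a harmless majorization of the entropy-integral expression (using monotonicity of $\log N(\cdot,\Rcal,d)$), which inflates only universal constants.

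The main obstacle is making that third step precise in exactly the stated form. The delicate point is reconciling the random $L_2(\mathbb P_n)$ metric that governs the sub-Gaussian increments of the symmetrized Rademacher process with the deterministic pseudometric $d$ in which the conclusion is phrased; one either invokes a maximal inequality already cast in terms of $L_2(P)$ entropy for uniformly bounded classes, or controls the empirical $L_2(\mathbb P_n)$ ball and its covering numbers by their population counterparts, which is legitimate here precisely because $d$ was constructed to dominate the $L_2(P)$ distance on the loss class and that class is uniformly bounded. This is the single ``empirical process bound'' the paper takes off the shelf; the metric identity, the entropy comparison, and the monotonicity bookkeeping are all routine.
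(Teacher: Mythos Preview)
Your proposal is correct and follows essentially the same route as the paper: recognize $\nu_n(r)-\nu_n(r')$ as a centered empirical process over a uniformly bounded increment class, apply symmetrization plus a Dudley-type chaining bound, and transfer entropy from $L_2(P)$ to $d$ via the pointwise domination $|\psi_r-\psi_{r'}|\le\Delta_X(r,r')$. Your treatment is in fact slightly more careful than the paper's, which simply asserts the Lipschitz step and does not explicitly verify the monotonicity/sublinearity requirements on $\phi$ that you flag.
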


\begin{proof}
Fix $r\in\Rcal$ and consider the increment class
$\mathcal F_{r,\sigma}=\{\ell(r,\cdot)-\ell(r',\cdot):\ r'\in\Rcal,\ d(r,r')\le \sigma\}$.
Then
\[
\nu_n(r)-\nu_n(r')=(\Pbb_n-\Pbb)\big(\ell(r,\cdot)-\ell(r',\cdot)\big),
\]
so by symmetrization,
\[
\E\sup_{d(r,r')\le \sigma}|\nu_n(r)-\nu_n(r')|
\le
2\E\sup_{f\in\mathcal F_{r,\sigma}}\left|\frac{1}{n}\sum_{i=1}^n \varepsilon_i f(X_i,Y_i)\right|.
\]
By Dudley's entropy integral bound for sub-Gaussian processes \citep[Thm.~5.22]{wainwright2019high},
for a universal constant $C_1$,
\[
\E\sup_{f\in\mathcal F_{r,\sigma}}\left|\frac{1}{n}\sum_{i=1}^n \varepsilon_i f(X_i,Y_i)\right|
\le
\frac{C_1}{\sqrt n}\inf_{0<\eta\le \sigma}\left\{
\eta\sqrt{\log N_2(\eta,\mathcal F_{r,\sigma})}+\int_{\eta}^{\sigma}\sqrt{\log N_2(u,\mathcal F_{r,\sigma})}\,du
\right\}.
\]
Finally, the map $r'\mapsto \ell(r,\cdot)-\ell(r',\cdot)$ is Lipschitz in $d$ (in $L_2(\Pbb)$) up to a universal constant,
so $N_2(u,\mathcal F_{r,\sigma})\le N(u,\Rcal,d)$ for all $u\le \sigma$, again up to constants.
Absorbing constants into $C_1$ yields the displayed formula for $\phi$.
The last displayed inequality follows from Lemma~\ref{lem:dDelta}.
\end{proof}

\subsubsubsection{Ranking suboptimality bound under symmetric-difference entropy}

We now state and prove the entropy-based instantiation of theorem~5 of \citet{clemenccon2008ranking}.

\begin{theorem}[Entropy-based instantiation of \cref{thm:CLV5}]\label{cor:main}
Assume \cref{asm:clemenccon} holds and let $r_n$ be an empirical minimizer of $L_n(r)$ over $\Rcal$.
There exists a constant $C<\infty$, depending only on the universal constants in Theorem~\ref{thm:CLV5}, on the constant $c$ from \cref{asm:clemenccon}, and on the fixed entropy exponents,
such that, for every $\delta\in(0,0.5)$ and all sufficiently large $n$, with probability at least $1-\delta$,
\[
L(r_n)-L^\star
\le 2\inf_{r\in\Rcal}\big(L(r)-L^\star\big)+ C\,\mathfrak R_{n,\delta},
\]
where $\mathfrak R_{n,\delta}$ can be taken as follows.

\medskip
\noindent\textbf{Case 1:} Assume $\log N_\triangle(\varepsilon,\Rcal)\le A\varepsilon^{-p}$ for some $A>0$ and $p>0$.
Then:
\begin{enumerate}
\item[$(1a)$] If $0<p<1$, then
\[
\mathfrak R_{n,\delta}
=
\left(\frac{A}{n}\right)^{\! \frac{1}{2-\alpha(1-p)}}\log\frac1\delta
+\frac{A+\log(1/\delta)}{n}.
\]
\item[$(1b)$] If $p=1$, then
\[
\mathfrak R_{n,\delta}
=
\frac{\sqrt A\,\log n}{\sqrt n}\,\log\frac1\delta
+\frac{A\log^2 n+\log(1/\delta)}{n}.
\]
\item[$(1c)$] If $p>1$, then
\[
\mathfrak R_{n,\delta}
=
A^{\frac{1}{2p}}\,n^{-\frac{1}{2p}}\log\frac1\delta
+A^{\frac{1}{p}}\,n^{-\frac{1}{p}}
+\frac{\log(1/\delta)}{n}.
\]
\end{enumerate}

\medskip
\noindent\textbf{Case 2:} Assume $N_\triangle(\varepsilon,\Rcal)\le (A/\varepsilon)^v$ for some $A\ge 1$ and $v\ge 1$.
Then
\[
\mathfrak R_{n,\delta}
=
\left(\frac{v\,\log(An)}{n}\right)^{\!\frac{1}{2-\alpha}}\,\log\frac1\delta
+\frac{v+\log(1/\delta)}{n}.
\]
\end{theorem}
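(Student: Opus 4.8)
The plan is to obtain \cref{cor:main} as a direct corollary of \cref{thm:CLV5}: once the five complexity terms there are bounded, everything else — in particular the approximation term $2\inf_{r\in\Rcal}(L(r)-L^\star)$ — is carried through verbatim. Those terms are $\E Z_\varepsilon/n^2$, $(\E U_\varepsilon/n^2)\sqrt{\log(1/\delta)}$, $(\E M/n^2)\log(1/\delta)$, $n^{-1}\log(1/\delta)$, and $\rho^2\log(1/\delta)$. The first three are dispatched uniformly by \cref{lem:ZUM}, which bounds them by $C_0\,\mathcal J_n^2$, $C_0\,n^{-1/2}\mathcal J_n\sqrt{\log(1/\delta)}$, and $C_0\,n^{-1}\mathcal J_n\log(1/\delta)$, with $\mathcal J_n$ the entropy-integral proxy of \cref{eq:JnDef}. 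The residual work is therefore (i) evaluating $\mathcal J_n$ under each of the two entropy assumptions, and (ii) producing an admissible modulus $\phi$ via \cref{lem:phi,lem:dDelta} (which let us replace the $d$-entropy by $N_\triangle(\cdot^2,\Rcal)$) and solving the fixed point $\sqrt n\,\rho^2=\phi(\rho^\alpha)$. With both in hand one substitutes back, checks that $\rho^2\log(1/\delta)$ is the dominant complexity contribution, and collects the rest into the lower-order pieces of $\mathfrak R_{n,\delta}$ (the $n^{-1}\log(1/\delta)$, $(A+\log(1/\delta))/n$, $A\log^2 n/n$, or $A^{1/p}n^{-1/p}$ terms), using an AM--GM split for the $\E U_\varepsilon$ and $\E M$ contributions, which are always strictly smaller.

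For $\mathcal J_n$ in the polynomial case, plug $\sqrt{\log N_\triangle(u^2,\Rcal)}\le \sqrt A\,u^{-p}$ into \cref{eq:JnDef}. When $0<p<1$ the inner integral $\int_\eta^1 u^{-p}\,du$ stays bounded as $\eta\downarrow0$, so $\mathcal J_n\asymp\sqrt{A/n}$; when $p=1$ it equals $\log(1/\eta)$, optimized at $\eta\asymp\sqrt{A/n}$ to give $\mathcal J_n\asymp\sqrt A\,n^{-1/2}\log n$; when $p>1$ it is of order $\eta^{1-p}$, optimized at $\eta\asymp(A/n)^{1/(2p)}$ to give $\mathcal J_n\asymp(A/n)^{1/(2p)}$. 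In the VC case $\sqrt{\log N_\triangle(u^2,\Rcal)}\lesssim\sqrt{v\log(A/u^2)}$ is slowly varying and integrable at $0$, so the integral is, up to constants, of order $\sqrt{v\log(A/\eta)}$; balancing against $\eta$ gives $\mathcal J_n\asymp\sqrt{v\log(An)/n}$.

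For the modulus, \cref{lem:phi} permits $\phi(\sigma)$ to be the corresponding \emph{localized} entropy integral (upper limit $\sigma$). In the VC case this is $\phi(\sigma)\asymp\sigma\sqrt{v\log(A/\sigma)}$, which is nondecreasing with $\phi(\sigma)/\sigma$ nonincreasing, and solving $\sqrt n\,\rho^2=\phi(\rho^\alpha)$ self-consistently (using $\log(1/\rho)\asymp\log n$) yields $\rho^2\asymp(v\log(An)/n)^{1/(2-\alpha)}$. For $0<p<1$ the localized integral is $\phi(\sigma)\asymp\sqrt A\,\sigma^{1-p}$, again admissible, and the fixed point gives $\rho^2\asymp(A/n)^{1/(2-\alpha(1-p))}$. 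For $p\ge1$, however, $\sigma^{1-p}$ is not nondecreasing, so the localized integral is no longer an admissible modulus; here we instead use the \emph{flat} modulus $\phi\equiv\sqrt n\,\mathcal J_n$ — a positive constant, hence trivially nondecreasing with $\phi(\sigma)/\sigma$ nonincreasing — whereupon the fixed point forces $\rho^2\asymp\mathcal J_n$, i.e.\ $\rho^2\asymp\sqrt A\,n^{-1/2}\log n$ when $p=1$ and $\rho^2\asymp(A/n)^{1/(2p)}$ when $p>1$.

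Assembling, in every case the term $\rho^2\log(1/\delta)$ is the leading contribution to $\mathfrak R_{n,\delta}$: it gives $(v\log(An)/n)^{1/(2-\alpha)}\log(1/\delta)$ in Case 2, $(A/n)^{1/(2-\alpha(1-p))}\log(1/\delta)$ when $0<p<1$, $\sqrt A\,n^{-1/2}(\log n)\log(1/\delta)$ when $p=1$, and $A^{1/(2p)}n^{-1/(2p)}\log(1/\delta)$ when $p>1$; meanwhile $\mathcal J_n^2$ supplies the companion remainders ($A/n$, $A\log^2 n/n$, $A^{1/p}n^{-1/p}$), $n^{-1}\log(1/\delta)$ is already there, and the $\E U_\varepsilon,\E M$ pieces are dominated. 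The main obstacle is the $p\ge1$ regime: one must recognize that the localized modulus is inadmissible there, justify the flat-modulus substitution, and verify that the resulting $\rho^2\asymp\mathcal J_n$ still reproduces the sharp exponent $1/(2p)$ (and the $\log n$ factor at the boundary $p=1$) rather than the naive extrapolation $1/(2-\alpha(1-p))$, which breaks down once $p\ge1$. The remaining care is in the VC case, making the self-consistent $\log(An)$ emerge with the correct power $(2-\alpha)^{-1}$, and in tracking how the constant $C$ depends only on the universal constants and on $c$ from \cref{asm:clemenccon}, with a mild degradation as $p\uparrow1$.
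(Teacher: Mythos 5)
Your proposal is correct and follows the paper's own proof: apply \cref{thm:CLV5}, use \cref{lem:ZUM} together with an AM--GM split to reduce the five complexity terms to $\mathcal J_n^2 + n^{-1}\log(1/\delta) + \rho^2\log(1/\delta)$, evaluate $\mathcal J_n$ from \cref{eq:JnDef}, and solve the fixed point $\sqrt n\,\rho^2=\phi(\rho^\alpha)$ after choosing an admissible modulus via \cref{lem:phi,lem:dDelta}, arriving at the same $\mathfrak R_{n,\delta}$ in every regime. The one slip is your claim that $\sigma^{1-p}$ is not nondecreasing for all $p\ge1$: at $p=1$ it is constant (hence weakly nondecreasing), and taking $\eta=\sigma$ in the infimum of \cref{lem:phi} in fact yields the admissible constant modulus $\phi\equiv\sqrt A$; only for $p>1$ does the localized modulus genuinely break monotonicity. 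Since you (like the paper, whose own $p=1$ rationale is also somewhat loose) nonetheless switch to the flat modulus $\phi\equiv C\sqrt n\,\mathcal J_n$, which is certainly admissible, the final bound is unaffected.
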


\begin{proof}
We apply Theorem~\ref{thm:CLV5}. By Lemma~\ref{lem:ZUM},
\[
\begin{aligned}
\frac{\E Z_\varepsilon}{n^2}
&\le C_0\left(\mathcal K_n+\frac1n\right),\\
\frac{\E U_\varepsilon}{n^2}\sqrt{\log\frac1\delta}
&\le C_0\left(\frac{\mathcal J_n}{\sqrt n}+\frac1n\right)\sqrt{\log\frac1\delta},\\
\frac{\E M}{n^2}\log\frac1\delta
&\le C_0\frac{\log(1/\delta)}{n}.
\end{aligned}
\]
The terms $1/n$ and $n^{-1}\sqrt{\log(1/\delta)}$ are absorbed by $C\log(1/\delta)/n$ because $\delta<0.5$.
Likewise, by $ab\le (a^2+b^2)/2$,
\[
\frac{\mathcal J_n}{\sqrt n}\sqrt{\log\frac1\delta}
\le \frac12\mathcal J_n^2+\frac12\frac{\log(1/\delta)}{n}.
\]
Thus Theorem~\ref{thm:CLV5} implies, with probability at least $1-\delta$,
\begin{equation}\label{eq:mainbound1}
L(r_n)-L^\star
\le
2\inf_{r\in\Rcal}\big(L(r)-L^\star\big)
+ C\left(\mathcal K_n+\mathcal J_n^2+\frac{\log(1/\delta)}{n}+\rho^2\log\frac1\delta\right),
\end{equation}
for a possibly enlarged universal constant $C$.

It remains to control $\mathcal J_n$, $\mathcal K_n$, and $\rho$ in each entropy regime.

\subparagraph*{Bounding $\mathcal J_n$.}
Under Case 1, we have $\log N_\triangle(u^2,\Rcal)\le A u^{-2p}$ for $u\in(0,1]$, hence
\[
\mathcal J_n
\le
\inf_{0<\eta\le 1}\left\{\eta+\frac{\sqrt A}{\sqrt n}\int_\eta^1 u^{-p}\,du\right\}.
\]
If $0<p<1$, the integral is bounded by $(1-p)^{-1}$, so $\mathcal J_n^2\lesssim A/n$.
If $p=1$, the integral equals $\log(1/\eta)$ and optimizing at $\eta\asymp \sqrt A/\sqrt n$ yields $\mathcal J_n\lesssim (\sqrt A/\sqrt n)\log n$ and thus $\mathcal J_n^2\lesssim A\log^2 n/n$.
If $p>1$, the integral behaves like $\eta^{1-p}$ and optimizing at $\eta\asymp ( \sqrt A/\sqrt n)^{1/p}$ yields $\mathcal J_n\lesssim A^{1/(2p)}n^{-1/(2p)}$ and thus $\mathcal J_n^2\lesssim A^{1/p}n^{-1/p}$.

Under Case 2, we have $\log N_\triangle(u^2,\Rcal)\le v\log(A/u^2)$ and the integral
$\int_0^1\sqrt{\log(A/u^2)}\,du$ is finite (as $\int_0^1\sqrt{\log(1/u)}\,du<\infty$), so $\mathcal J_n\lesssim \sqrt{v/n}$ and hence $\mathcal J_n^2\lesssim v/n$.

\subparagraph*{Bounding $\mathcal K_n$.}
Under Case 1,
\[
\mathcal K_n
\le
\inf_{0<\eta\le 1}\left\{\eta+\frac{A}{n}\int_\eta^1 u^{-2p}\,du\right\}.
\]
Thus $\mathcal K_n\lesssim A/n$ if $0<p<1/2$, $\mathcal K_n\lesssim A\log n/n$ if $p=1/2$, and
$\mathcal K_n\lesssim A^{1/(2p)}n^{-1/(2p)}$ if $p>1/2$.
For $0<p<1$, these bounds are dominated by
$C\{(A/n)^{1/(2-\alpha(1-p))}+A/n\}$ for sufficiently large $n$.
For $p=1$, $\mathcal K_n\lesssim \sqrt A/\sqrt n$, and for $p>1$,
$\mathcal K_n\lesssim A^{1/(2p)}n^{-1/(2p)}$.
Under Case 2,
\[
\mathcal K_n
\le
\inf_{0<\eta\le 1}\left\{\eta+\frac{v}{n}\int_\eta^1\log(A/u^2)\,du\right\}
\lesssim \frac{v\log(An)}{n},
\]
which is dominated by $C(v\log(An)/n)^{1/(2-\alpha)}$ for sufficiently large $n$.

\subparagraph*{Bounding $\rho$.}
We use Lemma~\ref{lem:phi} to select an admissible modulus $\phi$ and then solve (or upper bound) the fixed point equation
$\sqrt n\,\rho^2=\phi(\rho^\alpha)$.

\smallskip
\noindent\emph{Case 1 with $0<p<1$.}
Lemma~\ref{lem:phi} and Lemma~\ref{lem:dDelta} imply that we may take $\phi(\sigma)\lesssim \int_0^\sigma \sqrt{\log N_\triangle(u^2,\Rcal)}\,du$.
Under Case 1, this gives
\[
\phi(\sigma)\lesssim \sqrt A\int_0^\sigma u^{-p}\,du \;\asymp\; \sqrt A\,\sigma^{1-p}.
\]
Thus the fixed point condition $\sqrt n\,\rho^2=\phi(\rho^\alpha)$ yields
\[
\sqrt n\,\rho^2 \;\lesssim\; \sqrt A\,(\rho^\alpha)^{1-p}=\sqrt A\,\rho^{\alpha(1-p)},
\]
equivalently $\rho^{2-\alpha(1-p)}\lesssim \sqrt A\,n^{-1/2}$, hence
\[
\rho^2\lesssim \left(\frac{A}{n}\right)^{\frac{1}{2-\alpha(1-p)}}.
\]

\smallskip
\noindent\emph{Case 1 with $p=1$.}
When $p=1$ the local entropy integral diverges at $0$; a convenient admissible choice is to upper bound local increments by global ones:
\[
\sup_{d(r,r')\le \sigma}|\nu_n(r)-\nu_n(r')|\le 2\sup_{r\in\Rcal}|\nu_n(r)|.
\]
By symmetrization and a truncated entropy-integral bound for Rademacher averages (as in the definition \cref{eq:JnDef}),
$\sqrt n\,\E\sup_{r}|\nu_n(r)|\lesssim \sqrt A \log n$ under $p=1$.
Thus we may take $\phi(\sigma)\equiv C\sqrt A\log n$ (constant in $\sigma$), which is admissible.
The fixed point equation then gives $\sqrt n\,\rho^2\lesssim \sqrt A\log n$, i.e.
\[
\rho^2\lesssim \frac{\sqrt A\,\log n}{\sqrt n}.
\]

\smallskip
\noindent\emph{Case 1 with $p>1$.}
Similarly, for $p>1$, the global (truncated) entropy integral implies
$\sqrt n\,\E\sup_{r}|\nu_n(r)|\lesssim A^{1/(2p)}\,n^{1/2-1/(2p)}$.
Thus we may take $\phi(\sigma)\equiv C A^{1/(2p)}\,n^{1/2-1/(2p)}$, yielding
\[
\rho^2\lesssim A^{1/(2p)}\,n^{-1/(2p)}.
\]

\smallskip
\noindent\emph{Case 2.}
Under Case 2 and Lemma~\ref{lem:phi}, we may bound
\[
\phi(\sigma)\lesssim \int_0^\sigma \sqrt{v\log(A/u^2)}\,du
\;\lesssim\;
\sigma\sqrt{v\log(A/\sigma^2)}\qquad (\sigma\in(0,1]),
\]
using the standard estimate $\int_0^\sigma \sqrt{\log(1/u)}\,du\lesssim \sigma\sqrt{\log(1/\sigma)}$.
Hence the fixed point condition implies
\[
\sqrt n\,\rho^2 \lesssim \rho^\alpha \sqrt{v\log\!\left(\frac{A}{\rho^{2\alpha}}\right)}.
\]
Set $\beta=1/(2-\alpha)$ and define $\bar\rho^2=D\,(v\log(An)/n)^\beta$ with $D$ large enough.
A direct substitution shows $\sqrt n\,\bar\rho^2 \gtrsim \bar\rho^\alpha \sqrt{v\log(A/\bar\rho^{2\alpha})}$, since
$\log(A/\bar\rho^{2\alpha})\lesssim \log(An)$ for $n\ge 2$ (and fixed $A,v,\alpha$).
By monotonicity of $\sigma\mapsto \sqrt n\,\sigma^2-\phi(\sigma^\alpha)$, the unique solution $\rho$ satisfies $\rho\le \bar\rho$, that is,
\[
\rho^2\lesssim \left(\frac{v\log(An)}{n}\right)^{\frac{1}{2-\alpha}}.
\]

\subparagraph*{Assemble the bounds.}
Plugging the bounds for $\mathcal K_n$, $\mathcal J_n^2$, and $\rho^2$ into \cref{eq:mainbound1} yields the four displayed forms of $\mathfrak R_{n,\delta}$.
\end{proof}

\subsubsubsection{A bipartite margin condition is sufficient for \cref{asm:clemenccon}}

We next show that the following margin condition is sufficient for \cref{asm:clemenccon} in the case of bipartite ranking. Recall that in the bipartite ranking setting, we define $\eta(X)=\PP(Y=1\mid X)$.
\begin{assumption}[Margin condition]\label{asm:tsybmargin}
There exist constants $c_{\rm margin}>0$ and $\alpha\in[0,1]$ such that for all $x\in\Xcal$ and $t>0$, we have
$\PP(\abs{\eta(X)-\eta(x)}\leq t)\leq ct^\alpha$.
\end{assumption}
This is similar to the standard classification margin condition \citep{Tsybakov2004,tsybakov2007fast}, except it holds around every feasible threshold $\eta(x)$ of the conditional probability $\eta(X)$ instead of just around $0$, which is the only threshold that matters for classification.

For a scoring function $s:\mathcal{X}\to\mathbb{R}$, denote by $L(s)$ the ranking
risk induced by $r_s(x,x')=2\indic{s(x)\ge s(x')}-1$ and by $L^*$ the Bayes ranking
risk. As in example~1 of \citet{clemenccon2008ranking}, the excess ranking risk is
\[
\Ecal(s)=L(s)-L^*=\E\Big[\,|\eta(X)-\eta(X')|\,\indic{(s(X)-s(X'))(\eta(X)-\eta(X'))<0}\Big].
\]
Let $h_s$ be the first-order term in the Hoeffding decomposition of the excess-risk
U-statistic $\Ecal_n(s)-\Ecal(s)$ as in section~4 of \citet{clemenccon2008ranking}.
\begin{lemma}\label{lem:noiseimplication}
Under \Cref{asm:tsybmargin}, for every scoring function $s$,
\[
\mathrm{Var}(h_s(X,Y)) \;\le\; (c_{\rm margin}+1)^2\,\Ecal(s)^\alpha.
\]
In particular, \cref{asm:clemenccon} holds with the same exponent $\alpha$.
\end{lemma}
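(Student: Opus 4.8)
The plan is to combine a crude, tie-agnostic bound $\mathrm{Var}(h_s)\le 1$ with a refined bound that converts the margin condition into a contraction on the excess-risk scale. Recall from Section~4 of \citet{clemenccon2008ranking} that the first Hoeffding projection of the excess-risk $U$-statistic is $h_s(x,y)=H_s(x,y)-\Ecal(s)$, where $H_s(x,y)=\E\!\left[q_s\big((x,y),(X',Y')\big)\mid X=x,Y=y\right]$ and $q_s\big((x,y),(x',y')\big)=\indic{(y-y')r_s(x,x')<0}-\indic{(y-y')r^\star(x,x')<0}$, with $r^\star$ the Bayes rule (ranking by $\eta$). Since $H_s$ is a conditional expectation of the $[-1,1]$-valued kernel $q_s$, it is itself $[-1,1]$-valued, giving both $\mathrm{Var}(h_s)=\mathrm{Var}\big(H_s(X,Y)\big)\le 1$ and $\mathrm{Var}(h_s)\le\E\big[H_s(X,Y)^2\big]$. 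Because $q_s$ vanishes whenever $Y=Y'$, conditioning on $(X',Y')$ writes $H_s(x,y)$ as an $\E_{X'}$-average of a weight in $[0,1]$ times a $\{-1,0,1\}$-valued disagreement kernel; discarding the weight gives, off the $P$-null set $\{\eta(X')=\eta(x)\}$ (null for every $x$ by the margin condition when $\alpha>0$), the pointwise bound $|H_s(x,y)|\le\E_{X'}\big[\indic{(s(x)-s(X'))(\eta(x)-\eta(X'))<0}\big]=:P_s(x)$, the conditional misranking probability, which is the same bound for $y=0$ and $y=1$.

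I would then control $P_s(x)$ for each fixed $x$ by splitting on the margin: for any $t>0$, the margin condition bounds the event $|\eta(X')-\eta(x)|\le t$, and Markov's inequality bounds the complementary piece of the misranking event because there the integrand of the conditional excess risk $e_s(x):=\E_{X'}\big[|\eta(x)-\eta(X')|\,\indic{(s(x)-s(X'))(\eta(x)-\eta(X'))<0}\big]$ exceeds $t$; hence
\[
P_s(x)\ \le\ \PP_{X'}\big(|\eta(X')-\eta(x)|\le t\big)+\tfrac1t\,e_s(x)\ \le\ c_{\rm margin}\,t^{\alpha}+\tfrac1t\,e_s(x).
\]
Balancing the two terms with $t=(e_s(x)/c_{\rm margin})^{1/(1+\alpha)}$ yields $|H_s(x,y)|\le P_s(x)\le 2\,c_{\rm margin}^{1/(1+\alpha)}\,e_s(x)^{\alpha/(1+\alpha)}$. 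Since $\E[H_s(X,Y)^2]=\E_X\big[\eta(X)H_s(X,1)^2+(1-\eta(X))H_s(X,0)^2\big]$ is a per-$x$ convex combination of two quantities each at most $4\,c_{\rm margin}^{2/(1+\alpha)}e_s(X)^{2\alpha/(1+\alpha)}$, and $2\alpha/(1+\alpha)\le 1$ since $\alpha\le 1$, Jensen's inequality and $\E_X[e_s(X)]=\Ecal(s)$ (immediate from independence of $X,X'$) give
\[
\mathrm{Var}(h_s)\ \le\ \E\big[H_s(X,Y)^2\big]\ \le\ 4\,c_{\rm margin}^{2/(1+\alpha)}\,\Ecal(s)^{2\alpha/(1+\alpha)}.
\]

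The last step is to replace $2\alpha/(1+\alpha)$ by $\alpha$ and pin down the constant. The case $\alpha=0$ is immediate from $\mathrm{Var}(h_s)\le 1\le(c_{\rm margin}+1)^2$. For $\alpha\in(0,1]$, use $\Ecal(s)=L(s)-L^\star\le 1$ and $2\alpha/(1+\alpha)-\alpha=\alpha(1-\alpha)/(1+\alpha)\ge 0$: if $\Ecal(s)^{\alpha}\ge(c_{\rm margin}+1)^{-2}$ then $\mathrm{Var}(h_s)\le 1\le (c_{\rm margin}+1)^2\Ecal(s)^{\alpha}$; otherwise $\Ecal(s)<(c_{\rm margin}+1)^{-2/\alpha}$ and, using the elementary inequality $(c+1)^2/c\ge 4$ together with $2/(1+\alpha)\ge 1$, one checks $4\,c_{\rm margin}^{2/(1+\alpha)}\,\Ecal(s)^{2\alpha/(1+\alpha)-\alpha}\le (c_{\rm margin}+1)^2$, so again $\mathrm{Var}(h_s)\le(c_{\rm margin}+1)^2\Ecal(s)^{\alpha}$. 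The final sentence of the lemma is then immediate, since the displayed bound is precisely \cref{asm:clemenccon} with exponent $\alpha$ and constant $(c_{\rm margin}+1)^2$.

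The step I expect to be the main obstacle is obtaining the exponent $\alpha$ rather than $\alpha/(1+\alpha)$: the naive chain $\E[h_s^2]\le\E[q_s^2]=\PP(\text{misrank})\le c_{\rm margin}t^{\alpha}+\Ecal(s)/t$, optimized over $t$, only gives $\mathrm{Var}(h_s)\lesssim\Ecal(s)^{\alpha/(1+\alpha)}$, which is strictly weaker than $\Ecal(s)^{\alpha}$. The fix is to square the pointwise bound on $|H_s(x,y)|$ \emph{before} integrating and then invoke Jensen, which produces $\Ecal(s)^{2\alpha/(1+\alpha)}$ with $2\alpha/(1+\alpha)\ge\alpha$, after which boundedness of $\Ecal(s)$ and of $h_s$ closes the remaining gap and fixes the constant. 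A secondary technical point is the bookkeeping of ties in $s$ and $\eta$, but the margin condition eliminates $\eta$-ties and the standard $\tfrac12$-weight convention handles the rest.
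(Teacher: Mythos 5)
Your proof is correct and follows essentially the same line as the paper: start from $\mathrm{Var}(h_s)\le\E[g(X)^2]$ with $g(x)=\E_{X'}\indic{\mathcal A_s(x,X')}$, split on the margin, apply Markov with the local excess risk $e(x)$, optimize the threshold $t$, and finally invoke Jensen with $\E e(X)=\Ecal(s)$. The one substantive difference is the order in which you fix the exponent: the paper uses $e(x)\in[0,1]$ and $2\alpha/(1+\alpha)\ge\alpha$ to replace $e(x)^{2\alpha/(1+\alpha)}$ by $e(x)^\alpha$ \emph{pointwise before integrating} (choosing $t=e(x)^{1/(1+\alpha)}$, yielding the constant $(c_{\rm margin}+1)^2$ directly), whereas you integrate and apply Jensen first to get $4c_{\rm margin}^{2/(1+\alpha)}\Ecal(s)^{2\alpha/(1+\alpha)}$ and then run a two-case argument on the size of $\Ecal(s)$ to upgrade the exponent to $\alpha$ while absorbing the constant. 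Your case argument is valid — the needed inequality reduces to $4^{(1+\alpha)/2}c\le(c+1)^2$, which follows from AM--GM since $(1+\alpha)/2\le1$ — but the paper's ordering avoids it entirely and is cleaner; you may want to adopt it, as it removes the split-by-cases and the AM--GM bookkeeping.
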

\begin{proof}
Define the misranking event
\[
\mathcal{A}_s(x,x')=\{(s(x)-s(x'))(\eta(x)-\eta(x'))<0\}.
\]
The first line in the proof of proposition~7 in \citet{clemenccon2008ranking} yields the generic bound
\begin{equation}\label{eq:prop7start}
\mathrm{Var}(h_s(X,Y))
\;\le\;
\E\Big[\big(\E_{X'}\,\indic{\mathcal{A}_s(X,X')}\big)^2\Big].
\end{equation}
Fix $x\in\mathcal{X}$ and set
\[
g(x)=\E_{X'}\,\indic{\mathcal{A}_s(x,X')},\qquad
e(x)=\E_{X'}\Big[|\eta(x)-\eta(X')|\,\indic{\mathcal{A}_s(x,X')}\Big].
\]
Note that $0\le e(x)\le 1$ since $|\eta(\cdot)-\eta(\cdot)|\le 1$.
For any $t>0$, with $\Delta=|\eta(x)-\eta(X')|$,
\[
g(x)\le \PP(\Delta\le t)+\E_{X'}\big[\indic{\mathcal{A}_s(x,X')}\indic{\Delta>t}\big]
\le c_{\rm margin} t^\alpha + \frac{1}{t}\,e(x),
\]
because on $\{\Delta>t\}$ we have
$\indic{\mathcal{A}_s}\le \Delta\,\indic{\mathcal{A}_s}/t$.
Choose $t=e(x)^{1/(1+\alpha)}$ (if $e(x)=0$, the bound is trivial). Then
\[
g(x)\le (c_{\rm margin}+1)\,e(x)^{\alpha/(1+\alpha)} \quad\Longrightarrow\quad
g(x)^2 \le (c_{\rm margin}+1)^2\,e(x)^{2\alpha/(1+\alpha)}.
\]
Since $\alpha\in[0,1]$, one has $2\alpha/(1+\alpha)\ge \alpha$, and since $e(x)\in[0,1]$,
$e(x)^{2\alpha/(1+\alpha)}\le e(x)^\alpha$. Therefore
\[
g(x)^2 \le (c_{\rm margin}+1)^2\,e(x)^\alpha.
\]
Taking expectation over $X$ and using \cref{eq:prop7start} gives
\[
\mathrm{Var}(h_s(X,Y)) \le (c_{\rm margin}+1)^2\,\E\big[e(X)^\alpha\big].
\]
Because $u\mapsto u^\alpha$ is concave on $[0,\infty)$ for $\alpha\in[0,1]$, Jensen's
inequality yields $\E[e(X)^\alpha]\le (\E e(X))^\alpha$. Finally, by the definition
of $\Ecal(s)$, $\E e(X)=\Ecal(s)$, so
$\mathrm{Var}(h_s(X,Y)) \le (c_{\rm margin}+1)^2\,\Ecal(s)^\alpha$.
\end{proof}

\subsubsection{Proof of \cref{thm:rspo-rate}}

\begin{proof}
First, the density conditions in \cref{asm:rspo-regularity} imply \cref{asm:tsybmargin} with $\alpha=1$. Hence, by \cref{lem:noiseimplication}, \cref{asm:clemenccon} holds with $\alpha=1$. Apply \cref{cor:main} to bound $\mathrm{AUC}(\tst)-\mathrm{AUC}(t_{\hat\theta_{\rm RSPO}})$, which is the quantity denoted $L(r_n)-L^\star$ there. The approximation term $\inf_{r\in\Rcal}(L(r)-L^\star)$ is zero because $\tst\in\Tcal$ by \cref{asmp:div,asmp:realizability}. Enlarging the constant and using $\delta\le0.5$ leaves only the leading terms displayed in the theorem. For sufficiently large $n$, this AUC-suboptimality bound is smaller than the local radius $\iota$ in \cref{thm:auc-margin} with the stated probability, so \cref{thm:auc-margin} bounds $\inf_{a\geq0}\|\tst-a t_{\hat\theta_{\rm RSPO}}\|^2$. Taking square roots gives the displayed rate, since $\inf_{a\geq0}\|\tst-a t_{\hat\theta_{\rm RSPO}}\|$ dominates $\rho(h_{\hat\theta_{\rm RSPO}})$ under the coverage in \cref{asmp:basic}.
\end{proof}

\subsection{Proof of \cref{thm:empirical-beta-primitive}}\label{sec:proof:empirical-beta-primitive}
\begin{proof}
We proceed to prove this for each $\theta$ satisfying the conditions with $\theta$-independent constant. Fix $\theta$. Write $h=h_\theta$, $\beta_\star=\beta_{\kappa,\theta}$, and suppress the $\theta$ dependence of $\hat\Phi$ and $\hat\beta$.

Without loss of generality assume also $f'(1)=0$ (replace $f(t)$ by $f(t)-f'(1)(t-1)$, which leaves $D_f$ unchanged). 
Also, $\pi_{\beta,h}$ and $Z_\beta(x)$ are unchanged if we replace $h(x,y)$ by $h(x,y)-b(x)$ and $\lambda_\beta(x)$ by $\lambda_\beta(x)-b(x)$; the variance $v$ is unchanged as well, and values outside $\mathrm{supp}(\pi_{\rm ref}(\cdot\mid x))$ never enter the sums. We therefore use this invariance to center $h$ within each $x$. By the bounded-index and coverage conditions in \cref{asmp:basic}, for $P_x$-almost every $x$ and all $y,y'$ with $\pi_{\rm ref}(y\mid x)\pi_{\rm ref}(y'\mid x)>0$,
\[
\abs{h(x,y)-h(x,y')}=\abs{t_\theta(x;y,y')}\le M .
\]
Choosing $b(x)$ as the midpoint of the range of $h(x,\cdot)$ on $\mathrm{supp}(\pi_{\rm ref}(\cdot\mid x))$, we may assume throughout this proof that $\abs{h(x,y)}\le M/2\le M$ whenever $\pi_{\rm ref}(y\mid x)>0$.
Define
\[
Z_\beta(x)=D_f\!\bigl(\pi_{\beta,h}(\cdot\mid x)\,\big\|\,\pi_{\mathrm{ref}}(\cdot\mid x)\bigr),
\qquad
\Phi(\beta)=\mathbb E_x Z_\beta(x).
\]
Set $g=(f')^{-1}:\mathbb R\to(0,\infty)$, which under our assumptions is well-defined, $C^1$, and increasing.
Define
\[
c_-=g\!\left(-\frac{2M}{\underline\beta}\right),\qquad
c_+=g\!\left(\frac{2M}{\underline\beta}\right),
\]
and
\[
m_f=\inf_{t\in[c_-,c_+]} f''(t),\qquad
M_f=\sup_{t\in[c_-,c_+]} f''(t),\qquad
D_{\max}=\sup_{t\in[c_-,c_+]} f(t).
\]
Under our assumptions, these are finite and satisfy $0<m_f\le M_f<\infty$.

\textbf{Existence/uniqueness of $\lambda_\beta(x)$ and ratio bounds.}
Fix $x$ and $\beta$. Let $u_\beta(y)=\beta^{-1}(h(x,y)-\lambda)$ and define
\[
F(\lambda)=\sum_{y\in\mathcal Y}\pi_{\mathrm{ref}}(y\mid x)\,g(u_\beta(y)) - 1.
\]
Since $g$ is continuous and increasing, $F$ is continuous and strictly decreasing in $\lambda$.
Moreover, as $\lambda\to+\infty$, $u_\beta(y)\to-\infty$ so $g(u_\beta(y))\to 0$ and $F(\lambda)\to-1$; as $\lambda\to-\infty$, $u_\beta(y)\to+\infty$ so $g(u_\beta(y))\to+\infty$ and $F(\lambda)\to+\infty$.
Thus there exists a unique $\lambda_\beta(x)$ with $F(\lambda_\beta(x))=0$.

Let $h_{\max}(x)=\max_{y:\pi_{\rm ref}(y\mid x)>0} h(x,y)$ and $h_{\min}(x)=\min_{y:\pi_{\rm ref}(y\mid x)>0} h(x,y)$.
Because $g(0)=1$ (since $f'(1)=0$), if $\lambda=h_{\max}(x)$ then $u_\beta(y)\le 0$ for all $y$ with $\pi_{\rm ref}(y\mid x)>0$, hence $g(u_\beta(y))\le 1$ on the support and $\sum_y\pi_{\mathrm{ref}}(y\mid x) g(u_\beta(y))\le 1$; similarly if $\lambda=h_{\min}(x)$ then $\sum_y\pi_{\mathrm{ref}}(y\mid x) g(u_\beta(y))\ge 1$.
By monotonicity of $F$, we obtain
\[
h_{\min}(x)\ \le\ \lambda_\beta(x)\ \le\ h_{\max}(x).
\]
By the centered bound above, this implies $|\lambda_\beta(x)|\le M$ and therefore, on the support of $\pi_{\rm ref}(\cdot\mid x)$,
\[
|h(x,y)-\lambda_\beta(x)|\le 2M
\quad\Rightarrow\quad
u_\beta(x,y)\in\Bigl[-\frac{2M}{\beta},\frac{2M}{\beta}\Bigr]
\subseteq \Bigl[-\frac{2M}{\underline\beta},\frac{2M}{\underline\beta}\Bigr]
\quad\text{for all }\beta\in\mathcal B.
\]
Hence the ratio
\[
r_\beta(x,y)=\frac{\pi_{\beta,h}(y\mid x)}{\pi_{\mathrm{ref}}(y\mid x)}=g(u_\beta(x,y))
\]
lies in $[c_-,c_+]$ on the support of $\pi_{\rm ref}$ for all $\beta\in\mathcal B$.

\textbf{Boundedness of $Z_\beta(x)$.}
Since $r_\beta(x,y)\in[c_-,c_+]$ and $Z_\beta(x)=\sum_y\pi_{\mathrm{ref}}(y\mid x)f(r_\beta(x,y))$ with $\sum_y\pi_{\mathrm{ref}}(y\mid x)=1$,
\[
0\le Z_\beta(x)\le D_{\max}
\qquad\text{for all $x$ and $\beta\in\mathcal B$}.
\]

\textbf{Differentiability and bounds for $\Phi'(\beta)$ and Lipschitzness of $Z_\beta(\cdot)$.}
Fix $x$ and abbreviate $\lambda_\beta=\lambda_\beta(x)$, $u(y)=\beta^{-1}(h(x,y)-\lambda_\beta)$, $r(y)=g(u(y))$, and
\[
D_x(\beta)=Z_\beta(x)=\sum_y\pi_{\mathrm{ref}}(y\mid x)\,f(r(y)).
\]
The constraint $\sum_y\pi_{\mathrm{ref}}(y\mid x)\,r(y)=1$ can be written as
\[
G(\beta,\lambda)=\sum_y\pi_{\mathrm{ref}}(y\mid x)\,g\!\left(\beta^{-1}(h(x,y)-\lambda)\right)-1=0.
\]
Since $g$ is $C^1$ and $g'>0$, we have
\[
\partial_\lambda G(\beta,\lambda)
=-\frac{1}{\beta}\sum_y \pi_{\mathrm{ref}}(y\mid x)\,g'(u(y))<0,
\]
so by the implicit function theorem $\beta\mapsto \lambda_\beta$ is $C^1$ on $(0,\infty)$.

Differentiate $D_x(\beta)$ using $f'(r(y))=u(y)$ and $r'(y)=g'(u(y))u'(y)$:
\[
D_x'(\beta)
=\sum_y \pi_{\mathrm{ref}}(y\mid x)\,f'(r(y))\,r'(y)
=\sum_y w(y)\,u(y)\,u'(y),
\qquad
w(y)=\pi_{\mathrm{ref}}(y\mid x)\,g'(u(y)).
\]
Differentiating $u(y)=\beta^{-1}(h(x,y)-\lambda_\beta)$ gives
$u'(y)=-\beta^{-2}(h(x,y)-\lambda_\beta)-\beta^{-1}\lambda_\beta'$.
Differentiating the normalization $\sum_y\pi_{\mathrm{ref}}(y\mid x)r(y)=1$ yields $\sum_y w(y)u'(y)=0$, which implies
\[
\lambda_\beta'
=-\frac{1}{\beta}\frac{\sum_y w(y)\,(h(x,y)-\lambda_\beta)}{\sum_y w(y)}.
\]
Substituting this identity and simplifying (a standard variance algebra) gives
\begin{equation}\label{eq:Dxprime}
D_x'(\beta)
=-\frac{A(\beta)}{\beta^3}\,\mathrm{Var}_{p_\beta}\bigl(h(x,\cdot)\bigr),
\qquad
A(\beta)=\sum_y w(y),
\quad
p_\beta(y)=\frac{w(y)}{A(\beta)}.
\end{equation}

On $[c_-,c_+]$ we have $m_f\le f''\le M_f$, hence
\[
g'(u(y))=\frac{1}{f''(r(y))}\in\Bigl[\frac{1}{M_f},\frac{1}{m_f}\Bigr],
\]
so $A(\beta)\in[1/M_f,\,1/m_f]$ and thus from \cref{eq:Dxprime} and the centered bound on $h$,
\[
|D_x'(\beta)|
\le \frac{1}{m_f}\frac{1}{\underline\beta^{\,3}}\cdot \mathrm{Var}_{p_\beta}(h(x,\cdot))
\le \frac{1}{m_f}\frac{1}{\underline\beta^{\,3}}\cdot \mathbb E_{p_\beta}[h(x,\cdot)^2]
\le \frac{M^2}{m_f\,\underline\beta^{\,3}}.
\]
Therefore $\beta\mapsto Z_\beta(x)$ is Lipschitz on $\mathcal B$ with constant
\[
L_Z=\frac{M^2}{m_f\,\underline\beta^{\,3}}.
\]
Averaging \cref{eq:Dxprime} over $x$ yields $\Phi'(\beta)=\mathbb E_x D_x'(\beta)$.

To lower bound $-\Phi'$, note that $p_\beta(y)=\pi_{\mathrm{ref}}(y\mid x)\,g'(u(y))/A(\beta)$ and $g'(u(y))\ge 1/M_f$ while $A(\beta)\le 1/m_f$, hence
\[
p_\beta(y)\ \ge\ \frac{m_f}{M_f}\,\pi_{\mathrm{ref}}(y\mid x)
\qquad\text{for all $y$}.
\]
Thus for any $c\in\mathbb R$,
$\sum_y p_\beta(y)(h(x,y)-c)^2 \ge \frac{m_f}{M_f}\sum_y\pi_{\mathrm{ref}}(y\mid x)(h(x,y)-c)^2$,
and minimizing over $c$ gives
\[
\mathrm{Var}_{p_\beta}\bigl(h(x,\cdot)\bigr)
\ge \frac{m_f}{M_f}\,\mathrm{Var}_{\pi_{\mathrm{ref}}(\cdot\mid x)}\bigl(h(x,\cdot)\bigr).
\]
Using $A(\beta)\ge 1/M_f$ in \cref{eq:Dxprime}, for all $\beta\in\mathcal B$,
\[
-\Phi'(\beta)
=\mathbb E_x\frac{A(\beta)}{\beta^3}\mathrm{Var}_{p_\beta}(h(x,\cdot))
\ge \frac{1}{M_f}\frac{1}{\overline\beta^{\,3}}
     \cdot \frac{m_f}{M_f}\,
     \mathbb E_x\mathrm{Var}_{\pi_{\mathrm{ref}}(\cdot\mid x)}(h(x,\cdot))
=\ c_\Phi,
\]
where
\[
c_\Phi
=\frac{m_f}{M_f^2\,\overline\beta^{\,3}}\,v
\ >\ 0.
\]

\textbf{Existence and uniqueness of $\beta_\star$.}
By Step~3, $\Phi$ is $C^1$ on $(0,\infty)$ with $\Phi'(\beta)\le -c_\Phi<0$ on $\mathcal B$, hence strictly decreasing.
As $\beta\to\infty$, Step~1 implies $u_\beta(x,y)\to 0$ uniformly and thus $r_\beta(x,y)\to g(0)=1$, whence $Z_\beta(x)\to f(1)=0$ and $\Phi(\beta)\to 0$ by dominated convergence.
As $\beta\downarrow 0$, a standard vanishing-regularization argument shows $\Phi(\beta)\to\Psi$ (the unique minimum-divergence greedy limit), hence $\lim_{\beta\downarrow 0}\Phi(\beta)=\Psi>\kappa$.
Therefore, by continuity and strict monotonicity, there is a unique $\beta_\star\in(0,\infty)$ with $\Phi(\beta_\star)=\kappa$.
(And by construction $\beta_\star\in(\underline\beta,\overline\beta)$.)

\textbf{Lipschitzness of the policy map in $\beta$.}
Fix $x$ and $y$ and write $\pi_\beta=\pi_{\beta,h}$.
Since $\pi_\beta(y\mid x)=\pi_{\mathrm{ref}}(y\mid x)\,r_\beta(x,y)$ with $r_\beta=g(u_\beta)$, and $g'(u)\le 1/m_f$ on $\mathcal B$, it suffices to bound $\partial_\beta u_\beta$.
From $u_\beta(x,y)=\beta^{-1}(h(x,y)-\lambda_\beta(x))$ and $|h-\lambda_\beta|\le 2M$ (Step~1),
\[
\left|\partial_\beta u_\beta(x,y)\right|
\le \frac{|h(x,y)-\lambda_\beta(x)|}{\beta^2} + \frac{|\lambda_\beta'(x)|}{\beta}
\le \frac{2M}{\underline\beta^{\,2}}+\frac{|\lambda_\beta'(x)|}{\underline\beta}.
\]
Using the explicit formula for $\lambda_\beta'$ above and $|h-\lambda_\beta|\le 2M$,
\[
|\lambda_\beta'(x)|
=\frac{1}{\beta}\left|\frac{\sum_y w(y)(h(x,y)-\lambda_\beta(x))}{\sum_y w(y)}\right|
\le \frac{2M}{\underline\beta}.
\]
Thus $\sup_{\beta\in\mathcal B,x,y}|\partial_\beta u_\beta(x,y)|\le 4M/\underline\beta^{\,2}$ and hence
\[
\sup_{\beta\in\mathcal B,x,y}|\partial_\beta r_\beta(x,y)|
\le \frac{1}{m_f}\cdot \frac{4M}{\underline\beta^{\,2}}
=\frac{4M}{m_f\,\underline\beta^{\,2}}.
\]
Therefore for any $\beta,\beta'\in\mathcal B$ and any $x$,
\[
\sum_{y}\pi_{\mathrm{ref}}(y\mid x)\,
|\pi_\beta(y\mid x)-\pi_{\beta'}(y\mid x)|
=\sum_y \pi_{\mathrm{ref}}(y\mid x)^2\,|r_\beta-r_{\beta'}|
\le \sum_y \pi_{\mathrm{ref}}(y\mid x)\,|r_\beta-r_{\beta'}|
\le \frac{4M}{m_f\,\underline\beta^{\,2}}|\beta-\beta'|.
\]
Averaging over $x$ gives
\begin{equation}\label{eq:Lpi}
\mathbb E_x\sum_{y}\pi_{\mathrm{ref}}(y\mid x)\,
|\pi_{\beta,h}(y\mid x)-\pi_{\beta',h}(y\mid x)|
\le L_\pi|\beta-\beta'|,
\qquad
L_\pi=\frac{4M}{m_f\,\underline\beta^{\,2}}.
\end{equation}

\textbf{Uniform concentration of $\hat\Phi_m$ on $\mathcal B$.}
Let $N=m$ and $\Delta=(\overline\beta-\underline\beta)/N$, and consider the grid $\beta_k=\underline\beta+k\Delta$ for $k=0,\dots,N$.
Since $Z_\beta(x)$ is $L_Z$-Lipschitz in $\beta$ (Step~3), the standard discretization argument gives
\[
\sup_{\beta\in\mathcal B}|\hat\Phi_m(\beta)-\Phi(\beta)|
\le \max_{0\le k\le N}|\hat\Phi_m(\beta_k)-\Phi(\beta_k)| + 2L_Z\Delta.
\]
By Step~2, $0\le Z_{\beta_k}(x)\le D_{\max}$, so Hoeffding's inequality and a union bound yield that with probability at least $1-\delta$,
\[
\max_{0\le k\le N}|\hat\Phi_m(\beta_k)-\Phi(\beta_k)|
\le D_{\max}\sqrt{\frac{\log(2(N+1)/\delta)}{2m}}
= D_{\max}\sqrt{\frac{\log(2(m+1)/\delta)}{2m}}.
\]
Hence on this event,
\[
\varepsilon
=\sup_{\beta\in\mathcal B}|\hat\Phi_m(\beta)-\Phi(\beta)|
\le
D_{\max}\sqrt{\frac{\log(2(m+1)/\delta)}{2m}}
+2L_Z\frac{\overline\beta-\underline\beta}{m}.
\]

\textbf{Convert uniform calibration error to $\beta$-error to policy error.}
Because $\hat\beta$ minimizes $|\hat\Phi_m(\beta)-\kappa|$ over $\mathcal B$ and $\Phi(\beta_\star)=\kappa$,
\[
|\hat\Phi_m(\hat\beta)-\kappa|
\le |\hat\Phi_m(\beta_\star)-\kappa|
=|\hat\Phi_m(\beta_\star)-\Phi(\beta_\star)|
\le \varepsilon.
\]
Thus $|\Phi(\hat\beta)-\kappa|\le 2\varepsilon$, hence
$|\Phi(\hat\beta)-\Phi(\beta_\star)|\le 2\varepsilon$.
By the mean value theorem and $-\Phi'(\beta)\ge c_\Phi$ on $\mathcal B$ (Step~3),
\[
|\hat\beta-\beta_\star|\le \frac{2\varepsilon}{c_\Phi}.
\]
Finally, applying \cref{eq:Lpi} with $\beta=\hat\beta$ and $\beta'=\beta_\star$ yields
\[
\mathbb E_x\sum_y\pi_{\mathrm{ref}}(y\mid x)\,
|\pi_{\hat\beta,h}(y\mid x)-\pi_{\beta_\star,h}(y\mid x)|
\le L_\pi|\hat\beta-\beta_\star|
\le \frac{2L_\pi}{c_\Phi}\,\varepsilon.
\]
Substituting $L_\pi=\frac{4M}{m_f\underline\beta^{\,2}}$, $c_\Phi=\frac{m_f}{M_f^2\overline\beta^{\,3}}v$, and
$L_Z=\frac{M^2}{m_f\underline\beta^{\,3}}$ gives the bound
\[
\mathbb E_x\sum_{y\in\mathcal Y}\pi_{\mathrm{ref}}(y\mid x)\,
\bigl|\pi_{\kappa,h}(y\mid x)-\pi_{\hat\beta,h}(y\mid x)\bigr|
\ \le\
\frac{8M M_f^2\,\overline\beta^{\,3}}{m_f^{\,2}\,\underline\beta^{\,2}\,v}
\left(
D_{\max}\sqrt{\frac{\log(2(m+1)/\delta)}{2m}}
\;+\;
\frac{2M^2(\overline\beta-\underline\beta)}{m_f\,\underline\beta^{\,3}\,m}
\right).
\]
Noting that $\delta\leq0.5$ means the statement is true for some $c>0$.
\end{proof}

\section{Additional detail on experiments}\label{sec:additional_detail_on_experiments}

\subsection{Synthetic experiment}\label{apx:synth_exp}

We consider preference observations $(x,y_0,y_1,z)$ with $x\in\mathbb{R}^{20}$ drawn i.i.d.\ from a standard Gaussian and $y_0,y_1\in\{1,\dots,10\}$ sampled independently from the uniform reference policy $\pi_{\mathrm{ref}}(y\mid x)=0.1$. A teacher policy $\pi_{\theta^\star}$ is a randomly chosen feedforward network with two hidden layers (32 units each, ReLU activations) and a 10-way softmax output. The reward is defined as
$r^\star(x,y)=10 \log(\pi_{\theta^\star}(y\mid x)/\pi_{\mathrm{ref}}(y\mid x))$.
Given a pair $(y_0,y_1)$ we set $u=r^\star(x,y_1)-r^\star(x,y_0)$ and draw $z\sim\mathrm{Bern}(g(u))$ where
$g(u)=0.5\,\sigma(4(u-s))+0.5\,\sigma(4(u+s))$, $\sigma$ is the logistic link, and $s\in\{0,0.25,0.5,0.75,1.0,1.25,1.5\}$ controls misspecification.

Policies $\pi_\theta$ are parameterized by the same architecture as the teacher and trained by batched SGD (Adam, learning rate $2\times 10^{-3}$, 100 epochs, batch size 128). DPO minimizes the standard logistic loss on the implied reward index.
PSPO alternates between (i) isotonic regression for the nondecreasing link function via PAVA and (ii) SGD updates of $\theta$ using the fitted link; we use a short DPO warm-start and then 4 outer iterations with 2 inner epochs.
OSPO estimates the link by kernel regression of $z$ on the index $t_\theta(x,y_0,y_1)$ using a Gaussian kernel with bandwidth of $n^{-1/5}$ and an in-memory sample of all training points; we use a short DPO warm-start and align the sign of the implicit reward by the empirical covariance between $t_\theta$ and $z$ on the training data.
RSPO uses the PoP-DPO loss computed within batches (so a batch of $B$ preference observations involves $B^2$ loss terms).

For each learned policy $\pi_\theta$, we evaluate the family
$\pi_{\beta,\theta}(y\mid x)\propto \pi_{\mathrm{ref}}(y\mid x)\exp(\beta^{-1}h_\theta(x,y))$
with $h_\theta(x,y)=\log(\pi_\theta(y\mid x)/\pi_{\mathrm{ref}}(y\mid x))$ on an independent evaluation set of size $m=2000$. We scan a fixed grid of $\beta$ values and report the resulting average reward and KL divergence. For reward-vs-$s$ plots, we fix a KL budget $\kappa=0.2$ and, for each run, pick the two $\beta$ values that yield KL closest to $\kappa$ and interpolate linearly between them. We average the results over 1000 seeds.

\subparagraph*{Computer resources.}
The experiments from this subsection are run on a g5.48xlarge AWS instance. 

\subsection{Qwen3 alignment experiment}\label{apx:llm_exp}

\subparagraph*{Dataset and prompts.}
We use the UltraFeedback dataset (\texttt{openbmb/UltraFeedback}), split \texttt{train}. For each example we extract a prompt using the first available field among \texttt{prompt}, \texttt{instruction}, \texttt{query}, or \texttt{question}. We shuffle the dataset with a fixed seed and take a prefix of size $n=5000$ for each experiment.

\subparagraph*{Reference model and SFT.}
The base model is Qwen3-0.6B. The reference policy $\pi_{\mathrm{ref}}$ is a light SFT model trained as follows:
\begin{itemize}
  \item Data: 2,000 UltraFeedback examples, shuffled.
  \item Target response: the completion with the highest \texttt{overall\_score} (fallback to \texttt{fine-grained\_score}).
  \item Training: 1 epoch, learning rate $2\times 10^{-5}$, max length 1024, per-device batch size 1, BF16.
\end{itemize}

\subparagraph*{Reward model and preference generation.}
We use Skywork-Reward-V2-Qwen3-1.7B as the proxy reward model. For each prompt, we sample two responses from $\pi_{\mathrm{ref}}$ using nucleus sampling (temperature 0.7, top-$p$ 0.95) with a max generation length of 128 tokens. The reward model scores each prompt-response pair by concatenating prompt and response and taking the sequence classification logit.

We define the reward difference $\Delta r = r_1 - r_0$ and rescale it to have standard deviation 1 across the dataset. Preferences are sampled using the shifted-mixture logistic link function
\[
g(u) = \tfrac{1}{2}\sigma\left({u-s}\right) + \tfrac{1}{2}\sigma\left({u+s}\right),
\]
with shift $s$, where $\sigma$ is the logistic sigmoid. The absence of the factor $4$ used in the synthetic link reflects the unit-standard-deviation rescaling of $\Delta r$ here. We then draw $z \sim \mathrm{Bern}(g(\Delta r))$.

\subparagraph*{Policy parameterization and log-probabilities.}
All optimizers train a LoRA-adapted policy on top of the base model. We use LoRA with $r=16$, $\alpha=32$, and dropout 0.05. The reference model is kept frozen. Log-probabilities are computed only on response tokens by concatenating prompt and response, masking prompt tokens, and summing log-probs over response tokens.

\subparagraph*{Optimization methods.}
We train with AdamW (learning rate $5\times 10^{-6}$, weight decay 0), batch size 4, gradient accumulation 4, and a cosine schedule with 10 warmup steps. Each alignment run uses one GPU, so batch size 4 is the global minibatch and the effective optimizer batch size is 16. We clip gradient norm at 1.0 and skip updates on non-finite losses. Default training length is 100 minibatch steps (25 optimizer updates) in the main experiments.

\subparagraph*{DPO.}
We implement the standard DPO loss with temperature 1:
\[
\ell = \log\left(1 + \exp\left(\log\frac{\pi_\theta(y_{\mathrm{other}}\mid x)}{\pi_{\mathrm{ref}}(y_{\mathrm{other}}\mid x)} - \log\frac{\pi_\theta(y_{\mathrm{pref}}\mid x)}{\pi_{\mathrm{ref}}(y_{\mathrm{pref}}\mid x)}\right)\right).
\]

\subparagraph*{PSPO.}
We alternate between gradient updates on $\theta$ and isotonic regression to estimate $\Psi$.
At each update of $\Psi$, we compute $t_\theta(x,y_0,y_1)$ over the dataset and fit an isotonic regression with the Pool Adjacent Violators Algorithm (PAVA). The isotonic fit is updated every 50 gradient steps. To allow gradients through $\Psi$, we use a soft nearest-neighbor interpolation (softmax over distances with temperature 0.1) and mix in a small logistic term (5\%) to avoid flat regions. Outputs are clipped to $[10^{-6},1-10^{-6}]$.

\subparagraph*{OSPO.}
We estimate $g^\star_\theta(u)$ by kernel regression over a memory buffer of past $(t_\theta, z)$ values. The memory is a FIFO buffer with maximum size 4096. We normalize by the standard deviation of stored $t_\theta$ values, use a Gaussian kernel, and bandwidth 0.5. Stored indices are the detached values computed when they enter the buffer; they are not recomputed at later iterates. We clip $\hat g$ to $[10^{-4},1-10^{-4}]$ and mix with $0.05\cdot\sigma(t_\theta)$, where $\sigma$ is the logistic sigmoid, to avoid flat gradients. After training, we align the sign of $t_\theta$ using the sign of the empirical covariance between $t_\theta$ and $z$.

The 5\% sigmoid mixtures in Qwen3 PSPO and OSPO are large-scale optimization stabilizers; they are not used in the synthetic PSPO/OSPO experiments.

\subparagraph*{RSPO (PoP-DPO).}
We implement the pairs-of-pairs DPO variant by forming all pairwise sums within a minibatch:
\[
\ell = \frac{1}{B^2}\sum_{i,j} \log\left(1+\exp\left(-\left(s_i+s_j\right)\right)\right),
\quad
s_i = \log\frac{\pi_\theta(y_{i,\mathrm{pref}}\mid x_i)}{\pi_{\mathrm{ref}}(y_{i,\mathrm{pref}}\mid x_i)} - \log\frac{\pi_\theta(y_{i,\mathrm{other}}\mid x_i)}{\pi_{\mathrm{ref}}(y_{i,\mathrm{other}}\mid x_i)}.
\]

\subparagraph*{Evaluation and KL interpolation.}
For each trained policy and each $\beta$ on a grid, we evaluate on 64 prompts from the same shuffled UltraFeedback prefix used to construct the preference data, generating fresh responses from the aligned policy. We use the token-level geometric-mixture approximation to the KL-calibrated policy discussed in \cref{sec:realizing-calibrated-policy}. Concretely, at every generation step we mix token logits as
\[
\ell_\beta = (1 - s\beta^{-1})\ell_{\mathrm{ref}} + s\beta^{-1}\ell_\theta
=\ell_{\mathrm{ref}}+s\beta^{-1}(\ell_\theta-\ell_{\mathrm{ref}}),
\]
where $s=1$ except for OSPO, for which $s$ is the empirical sign alignment. Thus a negative OSPO sign uses $(1+\beta^{-1})\ell_{\mathrm{ref}}-\beta^{-1}\ell_\theta$, corresponding to negating $\ell_\theta-\ell_{\mathrm{ref}}$. Sampling uses temperature 0.7 and top-$p$ 0.95 with max 128 new tokens. We compute the average reward using the reward model and estimate $D_{\mathrm{KL}}(\pi_\beta\|\pi_{\mathrm{ref}})$ as the mean difference of log-probabilities between $\pi_\beta$ and $\pi_{\mathrm{ref}}$ over generated tokens.

For convergence curves at a target KL $\kappa$, we linearly interpolate the reward between the two neighboring $\beta$ values that bracket $\kappa$ on the KL axis. 
If $\kappa$ is outside the observed range, we use the nearest point.

\subparagraph*{Computer resources.}
The experiments from this subsection are run on a p4d.24xlarge AWS instance. 

\subparagraph*{Licenses.}
Qwen3-0.6B and Skywork-Reward-V2-Qwen3-1.7B are distributed under the Apache-2.0 license, and UltraFeedback is distributed under the MIT license, according to their Hugging Face model and dataset cards.

\end{document}